\newcommand{\argmax}{\mathop{\mathrm{argmax}}}
\theoremstyle{plain}
\newtheorem{theorem}{Theorem}[section]
\newtheorem{proposition}[theorem]{Proposition}
\newtheorem{lemma}[theorem]{Lemma}
\theoremstyle{definition}
\newtheorem{definition}[theorem]{Definition}
\newtheorem{condition}[theorem]{Condition}
\theoremstyle{remark}
\newtheorem{remark}[theorem]{Remark}
\newenvironment{Msg}[1]
  {\mdfsetup{
    frametitle={\colorbox{white}{\space \large #1\space}},
    innertopmargin=-3pt,
    innerbottommargin=7pt,
    innerrightmargin=7pt,
    innerleftmargin=7pt,
    frametitleaboveskip=-\ht\strutbox,
    frametitlealignment=\center,
    linewidth=1pt
    }
  \begin{mdframed}%
  }
{\end{mdframed}}
\icmltitlerunning{Provable In-Context Vector Arithmetic via Retrieving Task Concepts}
  \def\textsuperscript#1{}%
  \def\textdagger{}%
\begin{document}

\twocolumn[
\icmltitle{Provable In-Context Vector Arithmetic via Retrieving Task Concepts}




\begin{icmlauthorlist}
\icmlauthor{Dake Bu\textsuperscript{\normalfont\textdagger}}{cityu,riken}
\icmlauthor{Wei Huang}{riken}
\icmlauthor{Andi Han}{riken,usyd}
\icmlauthor{Atsushi Nitanda}{ihpc,cfar,ntu}
\icmlauthor{Qingfu Zhang}{cityu}
\icmlauthor{Hau-San Wong}{cityu}
\icmlauthor{Taiji Suzuki}{utokyo,riken}
\end{icmlauthorlist}

\icmlaffiliation{cityu}{Department of Computer Science, City University of Hong Kong, Hong Kong SAR}
\icmlaffiliation{riken}{Center for Advanced Intelligence Project, RIKEN, Japan}
\icmlaffiliation{ihpc}{Institute of High Performance Computing (IHPC), Agency for Science, Technology and Research (A$\star$STAR), Singapore}
\icmlaffiliation{cfar}{Centre for Frontier AI Research (CFAR), Agency for Science, Technology and Research (A$\star$STAR), Singapore}
\icmlaffiliation{ntu}{College of Computing and Data Science, Nanyang Technological University, Singapore}
\icmlaffiliation{utokyo}{Department of Mathematical Informatics, The University of Tokyo, Japan}
\icmlaffiliation{usyd}{School of Mathematics and Statistics, The University of Sydney, Australia}
\icmlcorrespondingauthor{Wei Huang}{wei.huang.vr@riken.jp}
\icmlcorrespondingauthor{Hau-San Wong}{cshswong@cityu.edu.hk}

\icmlkeywords{Machine Learning, ICML}

\vskip 0.3in
]


\printAffiliationsAndNotice{\hspace{1pt}\textsuperscript{\normalfont\textdagger}Work completed during internship at RIKEN.}


\begin{abstract}
In-context learning (ICL) has garnered significant attention for its ability to grasp functions/tasks from demonstrations. Recent studies suggest the presence of a latent \textbf{task/function vector} in LLMs during ICL. \citet{merullo2024w2varithmetic} showed that LLMs leverage this vector alongside the residual stream for Word2Vec-like vector arithmetic, solving factual-recall ICL tasks. Additionally, recent work empirically highlighted the key role of Question-Answer data in enhancing factual-recall capabilities. Despite these insights, a theoretical explanation remains elusive. To move one step forward, we propose a theoretical framework building on empirically grounded \textit{hierarchical} concept modeling. We develop an optimization theory, showing how nonlinear residual transformers trained via gradient descent on cross-entropy loss perform factual-recall ICL tasks via vector arithmetic. We prove 0-1 loss convergence and show the strong generalization, including robustness to concept recombination and distribution shifts. These results elucidate the advantages of transformers over static embedding predecessors. Empirical simulations corroborate our theoretical insights.
\end{abstract}

\vspace{-1.5\baselineskip}
\section{Introduction}
\vspace{-.5\baselineskip}

Transformer-based Large Language Models (LLMs) \citep{vaswani2018attention} have ushered in a new era of foundation models. A growing academic perspective highlights that the core strength of LLMs lies in their remarkable In-Context Learning (ICL) capability \citep{lu2023emergent}, enabling them to infer underlying tasks or functions from input demonstration pairs \citep{minegishi2025inductionheadsincontextmeta}—akin to the concept of \textit{function references} in programming \citep{bernays1936alonzo}. For example, given the prompt \texttt{Japan Tokyo France Paris} \texttt{China}, the expected output ``Beijing'' reflects the underlying function ``Country’s Capital'' applied to the final query.

\vspace{-.2\baselineskip}
\textbf{Role of Task Vector}. A growing body of research seeks to uncover the internal mechanisms underpinning ICL, shedding light on how LLMs retrieve function/task from demonstration. Recent studies employing advanced probing techniques have identified the emergence of a \textbf{task vector} \citep{hendel2023taskvector, merullo2024w2varithmetic,yang2025taskvectorsincontextlearning} (or \textbf{function vector} \citep{todd2024functionvectorslargelanguage, kahardipraja2025atlasincontextlearningattention}) within the latent representations of LLMs during ICL deduction. This vector appears around the 15-19th layers in models like GPT-J-6B \citep{Wang2021GPTJ6B}. Formally, \citet{hendel2023taskvector, merullo2024w2varithmetic} propose that, given the prompt $\mathbf{T} = [\mathbf{x}_1, f(\mathbf{x}_1), \mathbf{x}_2, f(\mathbf{x}_2), \cdots, \mathbf{x}_{\text{query}}]$, the LLM $\boldsymbol{\theta}$ constructs the \textbf{task vector} in the earlier layers, denoted as $\mathbf{a}_{\boldsymbol{\theta}}^f(\mathbf{T})$, based on its understanding of the task/function message in $\mathbf{T}$. Furthermore, \citet{merullo2024w2varithmetic} revealed that, certain factual-recall ICL tasks indeed correspond to some latent representation $f(\mathbf{x}_{\text{query}})=\mathbf{a}_{\boldsymbol{\theta}}^f(\mathbf{T})+\mathbf{b}_{\boldsymbol{\theta}}^{\text{query}}(\mathbf{x}_{\text{query}})$, where $\mathbf{b}_{\boldsymbol{\theta}}^{\text{query}}(\mathbf{x}_{\text{query}})$ is the \textbf{query-encoded residual stream} in deeper layers. That is, the model $\boldsymbol{\theta}$ can execute the ICL task through a simple \textbf{vector addition}:
\begin{figure*}[t!]
\centering
\begin{subfigure}[t]{0.24\textwidth} 
    \includegraphics[width=\linewidth]{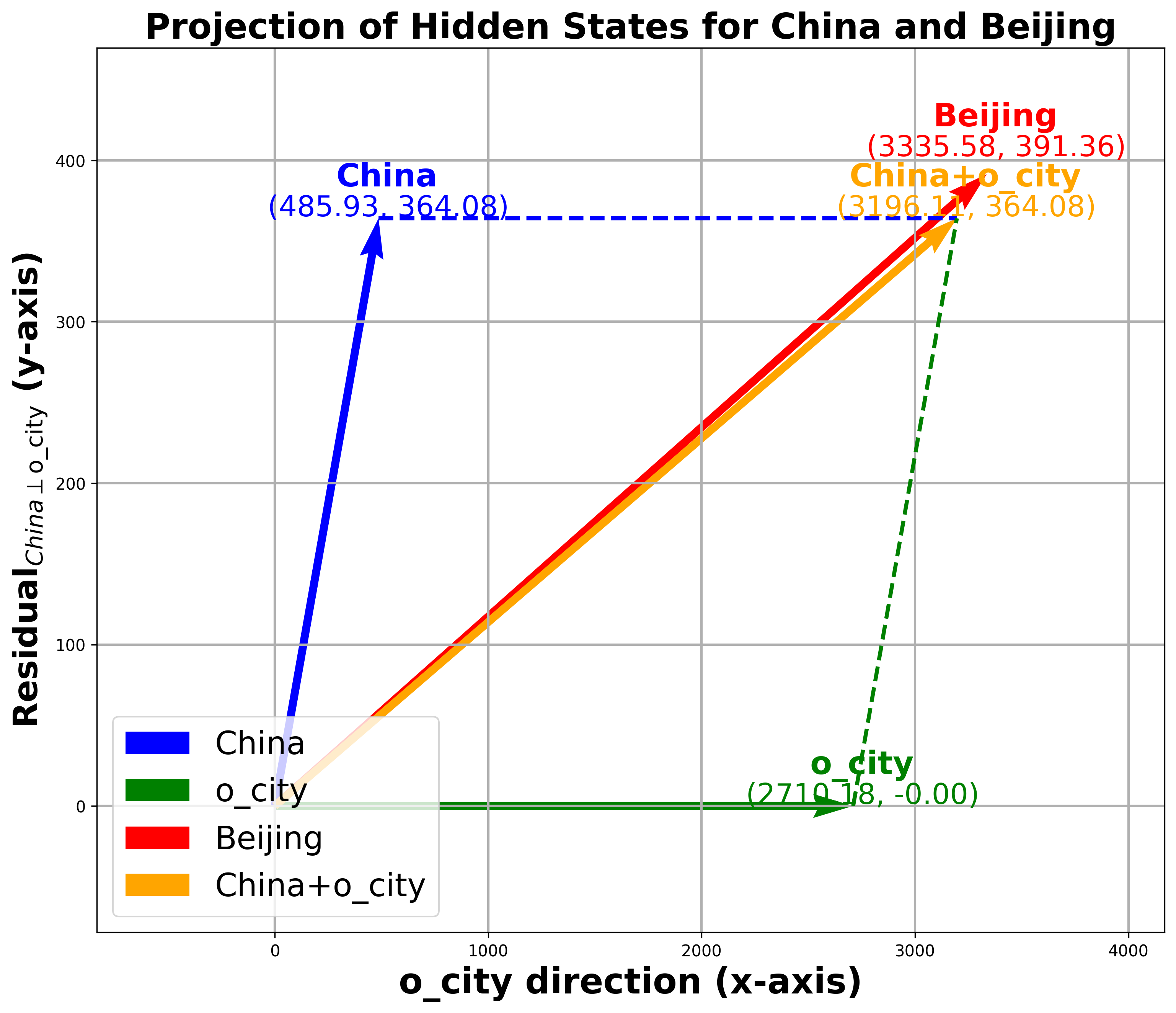}
    \caption{Country-Capital (2D).}
\end{subfigure}
\hfill 
\begin{subfigure}[t]{0.24\textwidth} 
    \includegraphics[width=\linewidth]{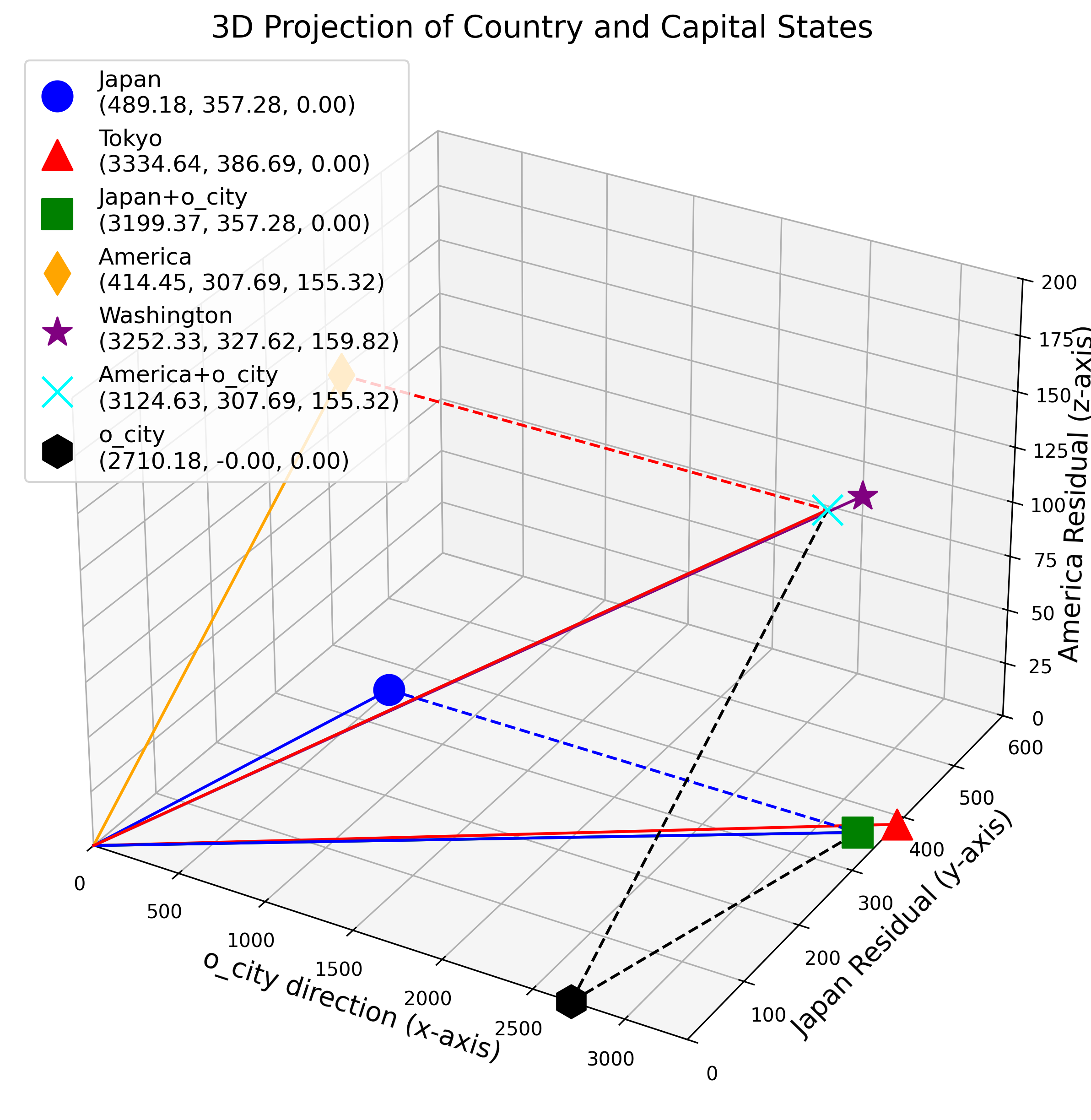}
    \caption{Country-Capital (3D).}
\end{subfigure}
\hfill 
\begin{subfigure}[t]{0.24\textwidth} 
    \includegraphics[width=\linewidth]{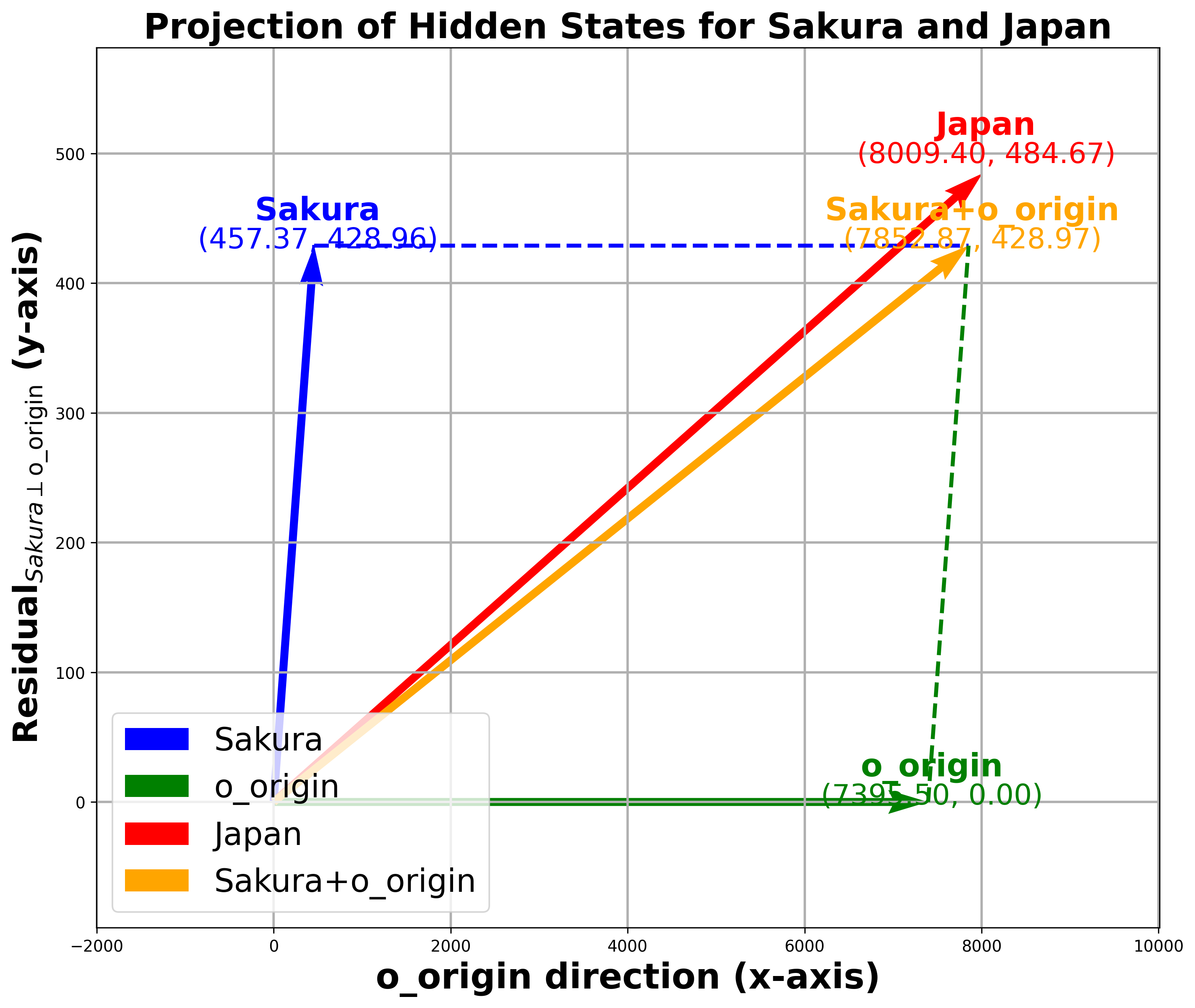}
    \caption{Symbol-Origin (2D).}
\end{subfigure}
\hfill 
\begin{subfigure}[t]{0.24\textwidth} 
    \includegraphics[width=\linewidth]{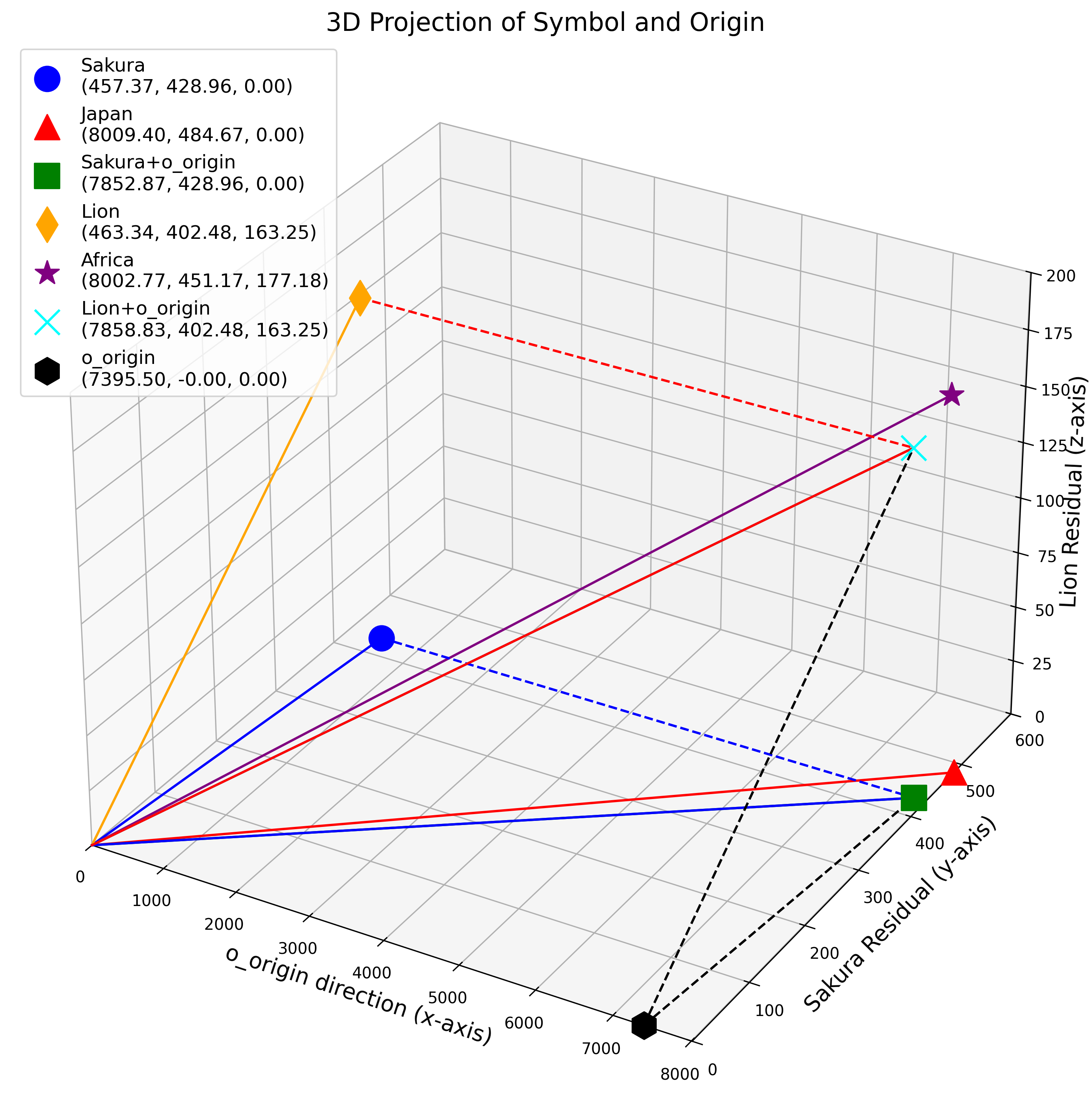}
    \caption{Symbol-Origin (3D).}
\end{subfigure}
\caption{Visualization of task vector and word embedding. The 22nd layer's vector embeddings of GPT2-medium in the 2-D and 3-D projection spaces are shown, where the task vector ('o\_task') is extracted from some internal layer, following \citet{merullo2024w2varithmetic}. For the task of retrieving a country's capital (a-b) and the task of retrieving a symbolic creature's origin (c-d), the latent embeddings of the LLM demonstrate the approximate relationship 'x + o\_task = y', with 'x' containing little components aligned with 'o\_task'.}
\label{fig:concept_geometry}
\end{figure*}
\vspace{-.2\baselineskip}
\begin{equation} 
\begin{aligned}
    p_{\boldsymbol{\theta}}(f(\mathbf{x}_{\text{query}})=\cdot \mid\mathbf{T}) = \int p_{\boldsymbol{\theta}} &(\mathbf{a}_{\boldsymbol{\theta}} + \mathbf{b}_{\boldsymbol{\theta}}^{\text{query}} \mid \mathbf{a}_{\boldsymbol{\theta}}, \mathbf{T})\\
    &  \cdot p_{\boldsymbol{\theta}}(\mathbf{a}_{\boldsymbol{\theta}} \mid \mathbf{T})d(\mathbf{a}_{\boldsymbol{\theta}}), 
\end{aligned}
\label{eq:w2v}\end{equation}
\vspace{-.2\baselineskip}
where $p_{\boldsymbol{\theta}}(\mathbf{a}_{\boldsymbol{\theta}}\mid \mathbf{T})$ denotes the model's confidence in recognizing $\mathbf{a}_{\boldsymbol{\theta}}$ as the task vector $\mathbf{a}_{\boldsymbol{\theta}}^f(\mathbf{T})$. The formulation in Eq.~(\ref{eq:w2v}) highlights the pivotal role of residual streams. However, existing theories on ICL either overlook the residual term entirely \citep{tianyuandongscansnap, kim2024MFD, bu2024multiconcept} or handle it in an unnatural manner \citep{nichani2024transformerslearncausalstructure}. Additionally, while some studies suggest that Question-Answer (QA) training data are crucial for enabling LLMs to retrieve factual knowledge \citep{allenzhu2024physicslanguagemodels31, zhang2025understandingfactrecalllanguage}, there is currently no theoretical framework to substantiate this claim, nor an explanation of how it facilitates factual-recall ICL. These gaps naturally raise the following research question.

\begin{Msg}{Research Questions 1}
    How does a non-linear residual transformer, trained via gradient methods with a realistic cross-entropy loss on QA data, naturally perform factual-recall ICL in the vector arithmetic style described by Eq.~(\ref{eq:w2v})?
\end{Msg}

\textbf{Analogy to Word2Vec}. As noted by \cite{merullo2024w2varithmetic}, the vector arithmetic in Eq.~(\ref{eq:w2v}) mirrors that of Word2Vec, a static word embedding model \citep{mikolov2013w2v}. For instance, the representation arithmetic ``$\texttt{France\ -\ Paris\ +\ Poland\ =\ Warsaw}$'' can be interpreted as ``$\texttt{France\ -\ Paris}$'' representing the task vector $\mathbf{a}_{\boldsymbol{\theta}}^f$, which performs the function ``$\texttt{get\_capital($\cdot$)}$,'' while representations of countries like ``$\texttt{Poland}$'' or ``$\texttt{Japan}$'' act as $\mathbf{b}_{\boldsymbol{\theta}}^{\text{query}}$.

Recent work by \citet{wibisono2023bidirectionalattentionmixturecontinuous} examined the relationship between one-layer bidirectional attention optimized via Masked Language Model loss through reparameterization and the Word2Vec method. However, they show that such BERT-like models function as an inexact approximation of the Word2Vec method, exhibiting non-trivial errors. Their analysis failed to delve into the the optimization process, recognize the task vector mechanisms, or clarify the comparative advantages of transformer over Word2Vec. This raises the following question for further investigation.

\begin{Msg}{Research Question 2}
    In the context of Eq.(\ref{eq:w2v}), what are the primary strengths of transformers over their Word2Vec predecessors?
\end{Msg}

To address these questions theoretically, we first observe that the latent geometry of LLMs exhibits intriguing properties, as illustrated in Figure~\ref{fig:concept_geometry}. In factual-recall tasks, answer/label vectors $\boldsymbol{y}$ tend to align more strongly with the task vector $\mathbf{a}_{\boldsymbol{\theta}}^f$ than query word vectors $\boldsymbol{x}$, satisfying $\boldsymbol{x} + \mathbf{a}_{\boldsymbol{\theta}}^f \approx \boldsymbol{y}$. Here, $\mathbf{a}_{\boldsymbol{\theta}}^f$ can be interpreted as the representation of high-level task concepts (aligned with the x-axis in Figure~\ref{fig:concept_geometry}), while the components of $\boldsymbol{x}$ orthogonal to $\mathbf{a}_{\boldsymbol{\theta}}^f$, represent low-level concepts (aligned with the y/z-axes in Figure~\ref{fig:concept_geometry}). This interpretation is consistent with \citet{parkgeometry_2024}, which demonstrates that LLMs exhibit hierarchical linear concept geometry after generative pretraining, with word representations residing within polytopes. These insights inspire the modeling approaches in Section~\ref{sec:model}.
\vspace{-.2\baselineskip}

Our contributions are summarized as follows:
\vspace{-.4\baselineskip}
\begin{enumerate}[leftmargin=*]
    \item We develop an optimization theory demonstrating that transformers with nonlinear softmax attention, MLP, layer normalization, and residual connections—trained via Gradient Descent (GD) with cross-entropy loss—can effectively perform factual-recall ICL in a vector arithmetic manner, grounded in empirically motivated data modeling. Our analysis shows that the transformer retrieves the high-level task/function concept through attention-MLP, which, when combined with \textit{any embedded query vector within the same high-level task concept}, yields the correct corresponding answer vector. Despite the inherent non-convexity of the learning problem, we establish the asymptotic behaviors of the optimization dynamics and prove the convergence of ICL test losses.
\vspace{-.2\baselineskip}
    \item We demonstrate a clear learning scenario separation between training on QA data and on Word-Label demo-pair ICL data, both framed within hierarchical concept modeling. While prior theoretical works assume both training and testing on Word-Label pair ICL data \citep{zhang2023trained, kim2024MFD, chen2024multihead, bu2024multiconcept}, we show that this approach fails to retrieve the high-level task vector and instead leads to harmful memorization of low-level features. In contrast, we prove that training on QA data enables the model to effectively learn the task vector, achieving arbitrarily small error when tested on either Word-Label pair ICL or QA-ICL distributions.
\vspace{-.2\baselineskip}
    \item We provide theoretical guarantees for compositional generalization in in-context learning, focusing on task vector emergence and OOD robustness. Specifically: (i) we show that transformers can regress task vectors directly from demonstration pairs, without requiring explicit query, and can apply these vectors compositionally at test time via arithmetic manipulation; (ii) we prove that models trained on structured QA data can generalize to unseen task prompts or dictionaries by leveraging conic combinations of learned high-level task vectors and novel orthogonal low-level and task-irrelevant features; and (iii) we establish the model's adaptability to distribution shifts in prompt content and length. These bonuses shows transformer's superiority over traditional methods.
\end{enumerate}

\vspace{-.2\baselineskip}
\begin{remark}
    \textbf{Humble Remark}. We humbly acknowledge that the scope of this study is confined to illustrating the merits of task vector arithmetic mechanisms observed in some single-token factual-recall ICL tasks, modeled based on prior empirical and theoretical observations. We do not claim this as a universal explanation for how LLMs handle complex multi-token or factual tasks. Nevertheless, to the best of our knowledge, this is the first theoretical work to \textbf{explicitly} consider the pivotal role of the residual stream and layer-wise normalization in ICL, contributing to the growing body of research on transformers and ICL. These nonlinearities, alongside softmax attention and cross-entropy loss, introduce significant challenges in analyzing optimization dynamics.
\end{remark}
\vspace{-.2\baselineskip}

\subsection{Related Work}
\vspace{-.2\baselineskip}
\textbf{Task Vector Mechanism}. This line of research empirically validated the emergence of task vectors \citep{hendel2023taskvector,todd2024functionvectorslargelanguage, liu2024incontextvectorsmakingcontext,yang2025taskvectorsincontextlearning}. Especially, \citet{merullo2024w2varithmetic} revealed that large language models (LLMs) implement certain single-token factual-recall tasks through vector arithmetic. However, no existing study explains why and how gradient-update transformer models naturally implement this mechanism.
\vspace{-.2\baselineskip}

\textbf{Storage and Retrieval of Factual Knowledge}. Recent research has explored how LLMs perform factual recall or associative memory tasks \citep{ nichani2024understandingfactualrecalltransformers, cabannes2024learningassociativememoriesgradient, allenzhu2024physics33}. \citet{allenzhu2024physicslanguagemodels31,zhang2025understandingfactrecalllanguage} empirically show the importance of QA data to improve LLM's capability to retrieve factual knowledge. Additionally, \citet{park2023linearhypothesis, jiang2024origins, marks2024geometrytruthemergentlinear} provided both theoretical and empirical evidence that LLMs tend to represent independent concepts and facts in a linear manner. 
\vspace{-.2\baselineskip}

\textbf{In-Context Learning Theory.}  
Recent work has studied how transformers perform in-context learning across various settings~\citep{zhang2023trained,chen2024unveilinginductionheadsprovable,kim2024MFD,nichani2024transformerslearncausalstructure,chen2024multihead}. However, these analyses often overlook the crucial role of the residual stream and the phenomenon of in-context vector arithmetic. Moreover, they typically rely on unrealistic assumptions, such as linearized or combined QK attention~\citep{kim2024MFD,nichani2024transformerslearncausalstructure}, or on impractical loss functions like squared or hinge loss~\citep{chen2024multihead}.

\vspace{-.2\baselineskip}

Additional Related Work could be found in Appendix \ref{app:add_related_work}. 
\vspace{-.2\baselineskip}

\section{Problem Design and Intuition}
\vspace{-.2\baselineskip}
\textbf{Notation}. We denote the Bernoulli distribution with parameter \(p\), which represents a discrete distribution over \(\{0, 1\}\), as \(\operatorname{Ber}(p)\). We use $\mathbb{R}^{d},\mathbb{R}^{d\times d}$ to denote vector and matrix space, and utilize $\mathbb{I}\in\mathbb{R}^{d\times d}$ to denote identity matrix. For two sequences \(a_n\) and \(b_n\), \(a_n = O(b_n)\) indicates that there exist constants \(C > 0\) and \(N > 0\) such that \(|a_n| \leq C|b_n|\) for all \(n \geq N\). Similarly, \(a_n = \Omega(b_n)\) means \(b_n = O(a_n)\), and \(a_n = \Theta(b_n)\) signifies both \(a_n = O(b_n)\) and \(a_n = \Omega(b_n)\). We define \(\text{span}(v_1, v_2, \ldots, v_k)\) as the linear space spanned by vectors \(v_1, v_2, \ldots, v_k\), and \(\text{conic}(v_1, v_2, \ldots, v_k)\) as the conic hull, which includes all non-negative linear combinations of these vectors. We use \(\|\cdot\|\) for the \(l_2\) norm and \(\|\cdot\|_F\) for the Frobenius norm.

\vspace{-.2\baselineskip}

\subsection{Hierarchical Data Modeling}\label{sec:model}
\vspace{-.2\baselineskip}
In this section, we present our data modeling based on the observations of task vector arithmetic in factual-recall ICL illustrated in Figure \ref{fig:concept_geometry}. We found the near-orthogonal properties in Figure \ref{fig:concept_geometry} coincide with \citet{parkgeometry_2024}, which suggests that LLMs encode high- and low-level concepts in an approximately orthogonal manner. Specifically, we treat the task vector as a high-level concept representation, while orthogonal components represent task-specific low-level concepts. Details are delayed to Appendix \ref{app: details of model}.

\vspace{-.2\baselineskip}

\textbf{High-Level Task Concept Vector}. There are $K$ high-level binary concepts, each denoted by $z_k \in \{0, 1\}$. We define the steering vectors $\boldsymbol{a}_k \in \mathbb{R}^{d}$ to represent $z_k = 1$ over $z_k = 0$, where all $K$ vectors are mutually orthogonal. This orthogonality ensures that for any distinct $k_1, k_2, k_3, k_4 \in [K]$, $\boldsymbol{a}_{k_1} - \boldsymbol{a}_{k_2} \perp \boldsymbol{a}_{k_3} - \boldsymbol{a}_{k_4}$, signifying the independence between concepts, consistent with findings in recent work \citep{park2023linearhypothesis, parkgeometry_2024, jiang2024origins, marconato2025all, liu2025ipredictiam}. For example, the concept of ``capital'' can be considered independent of ``gender''. 

\vspace{-.2\baselineskip}
\textbf{Low-Level Task-Specific Concept Vector}. For each high-level concept $z_k, k \in [K]$, there is an associated low-level binary concept $w_k \in \{0,1\}$, represented by two semantically opposite vectors $-\boldsymbol{b}_k$ and $\boldsymbol{b}_k \in \mathbb{R}^{d}$, where $w_k=0$ corresponds to $-\boldsymbol{b}_k$ and $w_k=1$ to $\boldsymbol{b}_k$. As noted in \citep{parkgeometry_2024} in the latent representation of LLM, the high-level task concept vectors are orthogonal to the low-level ones, i.e., $\boldsymbol{a}_{k_1} \perp \boldsymbol{b}_{k_2}$ for all $k_1, k_2 \in [K]$, and $\boldsymbol{b}_{k}$ are mutual-orthogonal. 

\vspace{-.2\baselineskip}

\textbf{Word-Label Pair ICL Prompt Distribution} ($\mathcal{P}_{\mathbf{T}}$). The prompt $\mathbf{T} := [\mathbf{x}_1, \mathbf{y}_1, \cdots, \mathbf{x}_J, \mathbf{y}_J, \mathbf{x}_{J+1}] \in \mathbb{R}^{d \times (2J+1)}$ and label $\mathbf{y}_{J+1}$ are generated as follows. Each prompt's co-task concept, which is shared within the demo-pairs within the prompt, is sampled as $k_{\mathbf{T}} \sim \operatorname{Unif}[K]$, with label indicator $y_{k_{\mathbf{T}}, l} \sim \operatorname{Unif}\{\pm 1\}$ and noise terms $\boldsymbol{\xi}_{l,\mathbf{x}}, \boldsymbol{\xi}_{l,\mathbf{y}} \sim \mathcal{N}(\mathbf{0}, \sigma_p^{2} \mathbb{I})$. Word-label pairs in demonstrations and queries follow:
\vspace{-.5\baselineskip}
\begin{equation}
\begin{aligned}
    & \mathbf{x}_{l} := \sum_{k \in \mathcal{X}_{\mathbf{T},l}} (x_a \cdot \boldsymbol{a}_{k} + y_{k, l} \cdot \boldsymbol{b}_{k}) +\boldsymbol{\xi}_{l,\mathbf{x}},\\
    & \mathbf{y}_{l} := \sum_{k \in \mathcal{Y}_{\mathbf{T},l}} (\boldsymbol{a}_{k} + y_{k, l} \cdot \boldsymbol{b}_{k}) +\boldsymbol{\xi}_{l,\mathbf{y}},
\end{aligned}
\label{eq:def_word_label_pair}\end{equation}
\vspace{-.2\baselineskip}
for $l \in [J]$. To model word polysemy, we define $\mathcal{X}_{\mathbf{T},l}, \mathcal{Y}_{\mathbf{T},l} \subset [K]$ as the sets of latent concepts associated with $\mathbf{x}_l$ and $\mathbf{y}_l$, respectively; each set includes the shared co-task concept $k_{\mathbf{T}}$ along with possibly task-specific but contextually irrelevant concepts. The noise term $\boldsymbol{\xi}_l$ captures semantic variation, and the task anchor $x_a \cdot \boldsymbol{a}_k$ facilitates concept retrieval, as illustrated in Figure~\ref{fig:concept_geometry}, with $x_a$ set to $0.1$ (see Appendix~\ref{app: details of model}). The query word is defined as $\mathbf{x}_{J+1} = \sum_{k \in \mathcal{X}_{\mathbf{T},J+1}} (x_a \cdot \boldsymbol{a}_k + y_{k, J+1} \cdot \boldsymbol{b}_k) + \boldsymbol{\xi}_{J+1,\mathbf{x}}$, following Eq.~\eqref{eq:def_word_label_pair}. The expected label, by contrast, is given by $\mathbf{y}_{J+1} = \boldsymbol{a}_{k_{\mathbf{T}}} + y_{k_{\mathbf{T}}, J+1} \cdot \boldsymbol{b}_{k_{\mathbf{T}}}$, since only semantics tied to the shared co-task concept $k_{\mathbf{T}}$ contribute to prediction under task $k_{\mathbf{T}}$. An illustrative example with $J=2$ to show our modeling intuition is 
\vspace{-.2\baselineskip}
\[
\underset{\mathbf{x}_{1}}{\texttt{Japan}}\quad \underset{\mathbf{y}_{1}}{\texttt{Sakura}}\quad\underset{\mathbf{x}_{2}}{\texttt{France}}\quad \underset{\mathbf{y}_{2}}{\texttt{Rooster}}\quad \underset{\mathbf{x}_{3}}{\texttt{China}}
\]
\vspace{-.2\baselineskip}
where the shared co-task concept $k_{\mathbf{T}}$ in this context is ``National Symbol'', and the expected label vector is $\mathbf{y}_{3} = \texttt{Panda}$—the symbol associated with China under the task $k_{\mathbf{T}}$. While the words ``$\texttt{China}$'' and ``$\texttt{Panda}$'' may carry semantics relevant to other tasks (e.g., country capitals or animal categories), only the ``National Symbol'' meaning is expected to contribute to $\mathbf{y}_3$, illustrating how the model selectively attends to task-relevant semantics in the presence of polysemy. If the prompt is modified to
\vspace{-.2\baselineskip}
\[
\underset{\mathbf{x}_{1}}{\texttt{Japan}}\quad \underset{\mathbf{y}_{1}}{\texttt{Sakura}}\quad\underset{\mathbf{x}_{2}}{\texttt{France}}\quad \underset{\mathbf{y}_{2}'}{\texttt{Iris}}\quad \underset{\mathbf{x}_{3}}{\texttt{China}}
\]
\vspace{-.2\baselineskip}
then the expected label vector becomes $\mathbf{y}_{3}' = \texttt{Peony}$, as the shared co-task concept $k_{\mathbf{T}}'$ now corresponds to ``National Flower''. This illustrates how the shared co-concept steers correctness during inference.
\vspace{-.2\baselineskip}

\textbf{ICL vs. QA Training}. Indeed, training and testing solely on the ICL data is unrealistic, despite its popularity in theoretical studies \citep{zhang2023trained, kim2024MFD, chen2024multihead, bu2024multiconcept}. A more practical approach involves generative pretraining or Question-Answer (QA) pretraining or fine-tuning \citep{allenzhu2024physicslanguagemodels31, zhang2025understandingfactrecalllanguage}. Notably, our concept data modeling \textbf{naturally arises} from generative pretraining \citep{park2023linearhypothesis, jiang2024origins, parkgeometry_2024}. Furthermore, as highlighted by \citet{allenzhu2024physicslanguagemodels31, zhang2025understandingfactrecalllanguage}, QA data plays a crucial role in \textbf{enhancing} a transformer's ability to retrieve relevant factual knowledge. Additionally, \citet{merullo2024w2varithmetic} leverage QA data to extract task vectors. Motivated by these insights, we incorporate QA data into our training framework, which we formalize as follows.
\vspace{-.2\baselineskip}

\textbf{QA Sentence Distribution}\footnote{Indeed, the formula of the sentence can be a factual statement (e.g. $\mathbf{S}=[\mathbf{x}^{\text{QA}}, \  \mathbf{y}]=[\underset{\boldsymbol{\nu}_{1}}{\text{The}},\ {\boldsymbol{a}_{k}},\ \underset{\boldsymbol{\nu}_{4}}{\text{of}},\ \mathbf{x}, \  \mathbf{y}]$) other than a QA.} ($\mathcal{P}_{\text{QA}}$). We model our QA-type sentence $\mathbf{S} := [\mathbf{x}^{\text{QA}}, \mathbf{y}]$ as follows. Each sentence is associated with a task concept $k_{\mathbf{S}} \sim \operatorname{Unif}[K]$. The word $\mathbf{x}$ and label vector $\mathbf{y}$ are constructed similarly to the word-label pair distribution $\mathcal{P}_{\mathbf{T}}$. The QA prefix $\mathbf{x}^{\text{QA}}$ is created by combining common tokens $\boldsymbol{\nu}_{n,{1:M}}$ ($M$ task-irrelevant common tokens) and the task vector $\boldsymbol{a}_{k_{\mathbf{S}}}$, which can appear anywhere before $\mathbf{x}$ as the last column of $\mathbf{x}^{\text{QA}}$. To capture semantic variability, noise vectors $\boldsymbol{\xi}_{1:M},\boldsymbol{\xi}_{\mathbf{x}} \sim \mathcal{N}(\mathbf{0}, \sigma_p^{2} \mathbb{I})$ are added to the common tokens. Formal details are provided in Appendix \ref{app: details of model}. An illustrative example with $M=5$ to show our modeling intuition on concept $k$ is:
\vspace{-.2\baselineskip}
\[
\resizebox{0.48\textwidth}{!}{$\underset{\substack{\boldsymbol{\nu}_{n,{1}}\\+\boldsymbol{\xi}_{1}}}{\texttt{What}}\quad \underset{\substack{\boldsymbol{\nu}_{n,{2}}\\+\boldsymbol{\xi}_{2}}}{\texttt{is}}\quad\underset{\substack{\boldsymbol{\nu}_{n,{3}}\\+\boldsymbol{\xi}_{3}}}{\texttt{the}}\quad \underset{\substack{\boldsymbol{a}_{k_{\mathbf{S}}}\\+\boldsymbol{\xi}_{4}}}{\texttt{capital}}\quad \underset{\substack{\boldsymbol{\nu}_{n,{5}}\\+\boldsymbol{\xi}_{5}}}{\texttt{of}}\quad \underset{\substack{x_a\cdot\boldsymbol{a}_{k_{\mathbf{S}}}+e \boldsymbol{b}_{k_{\mathbf{S}}} \\ +\boldsymbol{\xi}_{\mathbf{x}}+ \cdots}}{\mathbf{x}} \quad  \underset{\substack{\boldsymbol{a}_{k_{\mathbf{S}}}+e \boldsymbol{b}_{k_{\mathbf{S}}}}}{\mathbf{y}}$}
\]
\vspace{-.2\baselineskip}
Here, we assume that a specific position $m_{\mathbf{S}}=4 \in [M]$ encodes task vector $\boldsymbol{a}_{k_{\mathbf{S}}}$ encoding the high-level task message. This formulism is inspired by the empirical findings in \citet{allenzhu2024physicslanguagemodels31} and serves a role similar to the relation token described in \citet{nichani2024understandingfactualrecalltransformers}, but ours differs in its empirically-supported concept modeling.
\vspace{-.2\baselineskip}

\textbf{QA-ICL Prompt Distribution}. ($\mathcal{P}_{\text{QA}}^{\mathbf{T}}$). A natural extension of the above two distributions is that we can replace the word-based demonstration by QA-based demonstration in the task-specific ICL prompt, namely $\mathbf{T}_{\text{QA}}:=[\mathbf{x}^{\text{QA}}_{1},\mathbf{y}_{1}, \cdots, \mathbf{x}^{\text{QA}}_{J},\mathbf{y}_{J}, \mathbf{x}^{\text{QA}}_{J+1}]\in\mathbb{R}^{d \times (J+1)(M+2)}\sim \mathcal{P}_{\mathbf{T}_{\text{QA}}}$.

\vspace{-.2\baselineskip}
\subsection{Residual-Layernom Transformer Model}
\vspace{-.2\baselineskip}
The structured ``up-word-down-label'' embedding 
$
\mathbf{E}(\mathbf{T}) = \begin{pmatrix}
\boldsymbol{x}_1, & \cdots & \boldsymbol{x}_J, & \boldsymbol{x}_{\text {query }} \\
\boldsymbol{y}_1, & \cdots & \boldsymbol{y}_J, & \mathbf{0}
\end{pmatrix}
$
proposed in prior theoretical work \citep{Baialgorithmselection, zhang2023trained, huang2023incontext, kim2024MFD, chen2024multihead, bu2024multiconcept} is designed for simplified, structured, and residual-free transformers, \textit{where words are only available to attention matrices and labels are restricted to value/combine matrices}. This residual-free setup deviates from real-world scenarios. In contrast, we consider the case where the prompt $\mathbf{T} = [\boldsymbol{x}_1, \boldsymbol{y}_1,\cdots,\boldsymbol{y}_J,\boldsymbol{x}_{\text {query }}]=[\mathbf{T}_{1},\cdots \mathbf{T}_{L}] \in \mathbb{R}^{d \times L}$ $(L=2J+1)$ is \textit{directly processed} by a non-linear transformer with a residual stream as follows.
\vspace{-.2\baselineskip}

\[
\begin{aligned}
    & \mathbf{h}_{\boldsymbol{\theta},0}(\mathbf{T}) =  \sum_{l=1}^{L-1}\mathbf{W}_{V}\mathbf{T}_{l}\sigma_{S}\Big((\mathbf{W}_{K}\mathbf{T}_{l})^{\top}(\mathbf{W}_{Q}  \mathbf{T}_{L})\Big) \in \mathbb{R}^{d},\\
    & \mathbf{h}_{\boldsymbol{\theta}}=  \mathbf{W}_{O}\text{LN}(\mathbf{h}_{\boldsymbol{\theta},0}(\mathbf{T}))+\mathbf{T}_{L},
\end{aligned}
\]
\vspace{-.2\baselineskip}
where $\sigma_{S}(\cdot)$ denotes the column-wise softmax operation, $\mathbf{W}_{O} := \mathbb{I}_{d\times d}$, $\text{LN}(\mathbf{z}):=\mathbf{z}/\|\mathbf{z}\|_2$ is the $l_2$ layer-wise normalization, and $\mathbf{h}_{\boldsymbol{\theta}}$ is the vector output of transformer.
\vspace{-.2\baselineskip}

\textbf{Connection to Word2Vec Arithmetic}. When $\|\boldsymbol{a}_{k}\| = \|\boldsymbol{b}_{k'}\|$ for all $k, k' \in [K]$, and both the cardinalities of $\mathcal{X}_{\mathbf{T},l}, \mathcal{Y}_{\mathbf{T},l}$ and the noise magnitude are sufficiently bounded, we have an approximate identity: 
\vspace{-.2\baselineskip}
\begin{equation}
\begin{aligned}
    \mathbf{y}_{J+1}\approx \boldsymbol{a}_{k_{\mathbf{T}}} + \mathbf{x}_{J+1},
\end{aligned}
\label{eq:vector_arith}\end{equation}
\vspace{-.2\baselineskip}
where components unrelated to the current co-concept $k_{\mathbf{T}}$ are negligible compared to the dominant term $1.1\boldsymbol{a}_{k_{\mathbf{T}}} \pm \boldsymbol{b}_{k_{\mathbf{T}}}$, which governs the logits when the model attends to the vocabulary dictionary. Therefore, if the transformer $\boldsymbol{\theta}$ can extract the high-level task vector $\mathbf{h}_{\boldsymbol{\theta},0}(\mathbf{T})$, then adding it to \textbf{any word vector within the same task concept} should yield the task-specific label vector in the context of argmax sampling. This aligns with empirical findings by \citet{merullo2024w2varithmetic}, which show that adding various embedded query words, such as ``$\texttt{Poland}$'' or ``$\texttt{China}$'', to the vector $\vec{o}_{\textit{city}}$—which captures the function $\texttt{get\_capital(·)}$ in the latent space—produces the correct capital city.
\vspace{-.2\baselineskip}

\textbf{Training Setups}. Define $\boldsymbol{\theta}:=\{\mathbf{W}_{K},\mathbf{W}_{Q},\mathbf{W}_{V} \}$. We minimize the $L_2$-regularized cross-entropy loss by gradient descent (GD) at each time step
\vspace{-.2\baselineskip}
\begin{equation}
L_{\mathcal{P}^{\text{tr}}}(\boldsymbol{\theta})=-\mathbb{E}_{\mathcal{P}^{\text{tr}}} \Big[\log(\dfrac{\exp{(\mathbf{u}_{k_{\mathbf{y} }}^{\top} \mathbf{h}_{\boldsymbol{\theta}})}}{\sum_{k \in [7K+K^{\prime}]}\exp{(\mathbf{u}_{k}^{\top} \mathbf{h}_{\boldsymbol{\theta}})}}) \Big],
\label{eq:train_loss}\end{equation}
\vspace{-.2\baselineskip}

where $K$ denotes the number of tasks, $K^{\prime}$ the number of task-irrelevant tokens, and $\mathbf{u}_{k_{\mathbf{y}}}$ refers to the vector corresponding to $\mathbf{y}$ in the token embedding matrix $\mathbf{U}$, which contains $K$ both high- and bi-label low-level concepts, noise-free and single-task-specific word and label tokens, as well as $K^{\prime}$ irrelevant tokens--the number ``$7K+K^{\prime}$'' is thus by $K$ sets of $\{\boldsymbol{a}_{k}\pm\boldsymbol{b}_{k}, 0.1\boldsymbol{a}_{k}\pm\boldsymbol{b}_{k}, \pm\boldsymbol{b}_{k}, \boldsymbol{a}_{k}\}$ as well as $K^{\prime}$ irrelevant tokens. Notably, our analysis focuses exclusively on the semantics associated with the prompt's shared co-task encoded in a given word or label token.\footnote{While it is tractable to assume that all dictionary entries follow the structure in Eq.~(\ref{eq:def_word_label_pair}), doing so would require additional assumptions and care regarding how fixed polysemous words relate to their corresponding labels.}
Normalization ensures that each token $\mathbf{u}_k$ has the same length, enabling fair comparisons during sampling. For simplicity, we assume $\|\boldsymbol{a}_k\| = \|\boldsymbol{a}\| = \|\boldsymbol{b}_k\| = \|\boldsymbol{b}\| = \|\boldsymbol{\nu}_m\| = 1$, a common setup for theoretical studies \citep{tianyuandongscansnap, tian2024joma}. Formal details of $\mathbf{U}$ are provided in Appendix \ref{app: details of model}. 

\vspace{-.2\baselineskip}
The training data $\mathcal{P}^{\text{tr}}$ is sampled from training distribution $\mathcal{P}$, with a sample size of $N$. To address the scale difference between the gradients of attention and MLP, we adopt a smaller learning step for MLP characterized by a scale factor $q_{V}$. 

The initial weight matrices $\mathbf{W}_{Q}^{(0)}$ and $\mathbf{W}_{K}^{(0)}$ are sampled independently from a Gaussian distribution $\mathcal{N}(\mathbf{0}, \sigma_0^2 \cdot \mathbb{I})$, a more realistic choice than scaled identity or overly constrained initializations used in prior work~\citep{li2023how, bu2024multiconcept, chen2024multihead}. Similarly, $\mathbf{W}_{V}^{(0)}$ is initialized as $\mathcal{N}(\mathbf{0}, \sigma_1^2 \cdot \mathbb{I})$, consistent with standard practice in recent theoretical analyses of Transformers~\citep{tianyuandongscansnap, jiang2024unveil, li2024signGDtransformer, yang2024cooccur}.

\vspace{-.2\baselineskip}
\textbf{Test Setup}. During testing, we examine the probability that the label vector is the most-likely token to be selected from the disrupted token dictionary on the test prompt distribution $\mathcal{P}^{\star}$ where the noises are sampled from $\mathcal{N}(\mathbf{0}, {\sigma_{p}^{\star}}^2 \cdot \mathbb{I})$ (either word-based prompt or QA sentence-based prompt)
\vspace{-.2\baselineskip}
\begin{equation}
\resizebox{0.435\textwidth}{!}{$\begin{aligned}
L_{\mathcal{P}^{\star}}&= \mathbb{E}_{{\mathcal{P}^{\star}}} [
\mathbf{1}(k_{\mathbf{y}}\neq\argmax_{{k}}( \frac{\exp{({\mathbf{u}_{k}}^{\top} \mathbf{h}_{\boldsymbol{\theta}})}}{\sum_{k \in [7K+K^{\prime}]}\exp{(\mathbf{u}_{k}^{\top} \mathbf{h}_{\boldsymbol{\theta}})}}))] \\
&  = \mathbb{E}_{{\mathcal{P}^{\star}}} [
\mathbf{1}(k_{\mathbf{y}}\neq\argmax_{{k}}\mathbf{u}_{k}^{\top}\mathbf{h}_{\boldsymbol{\theta}})],
\end{aligned}$}
\label{eq:test_loss}\end{equation}
\vspace{-.2\baselineskip}
where $k_{\mathbf{y}}$ is the index of $\mathbf{y}$ in the total dictionary $\mathbf{U}$. The whole procedure is in Algorithm \ref{alg:main}.
\vspace{-.2\baselineskip}
\begin{algorithm}[ht]
\caption{Training algorithm}
\begin{algorithmic}\label{alg:main}
\STATE \textbf{Input:} Training distribution $\mathcal{P}$, Test distribution $\mathcal{P}^{\star}$, Training size $N$, step size $\eta q_{V}$, scaled parameter $q_{V}$, stopping criterion $\varepsilon$ and total epochs $T$.
\STATE Initialize the model ${\boldsymbol{\theta}}^{(0)}=\{\mathbf{W}_V^{(0)},\mathbf{W}_K^{(0)},\mathbf{W}_Q^{(0)}\}$.
\STATE Sample training data $\mathcal{P}^{\text{tr}} \sim \mathcal{P}$.
\FOR{$t=0,1,\ldots, T -1 $}
\STATE If $L_{\mathcal{P}^{\star}}^{0-1}(\boldsymbol{\theta}^{(t)}) \leq \varepsilon$ stop else continue.
\STATE Update model parameters: \\
${\mathbf{W}_V}^{(t+1)} = {\mathbf{W}_V}^{(t)} - \eta q_{V} \nabla_{\mathbf{W}_V^{(t)}} L_{\mathcal{B}_{t}}({\boldsymbol{\theta}}^{(t)})$, \\
${\mathbf{W}_K}^{(t+1)} = {\mathbf{W}_K}^{(t)} - \eta {\nabla_{\mathbf{W}_K^{(t)}} L_{\mathcal{B}_{t}}({\boldsymbol{\theta}}^{(t)})}$, \\ 
${\mathbf{W}_Q}^{(t+1)} = {\mathbf{W}_Q}^{(t)} - \eta {\nabla_{\mathbf{W}_Q^{(t)}} L_{\mathcal{B}_{t}}({\boldsymbol{\theta}}^{(t)})}$
\ENDFOR
\end{algorithmic}
\end{algorithm}

\vspace{-.8\baselineskip}
\section{Theoretical Results}
\begin{figure*}[t]
    \centering
    \begin{subfigure}[b]{0.24\linewidth}
        \includegraphics[width=\linewidth]{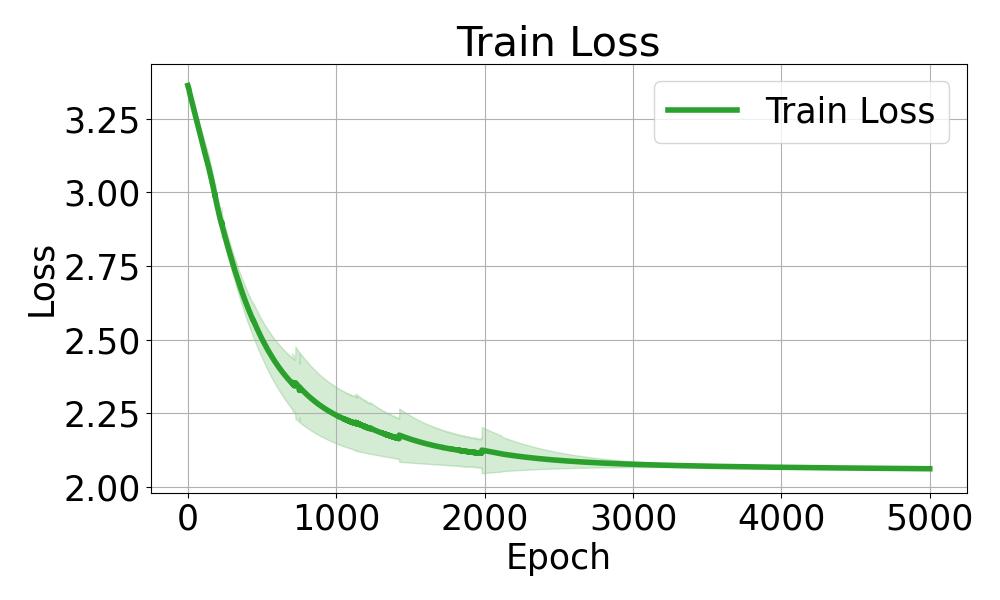}
        \caption{Training loss}
    \end{subfigure}
    \hfill
    \begin{subfigure}[b]{0.24\linewidth}
        \includegraphics[width=\linewidth]{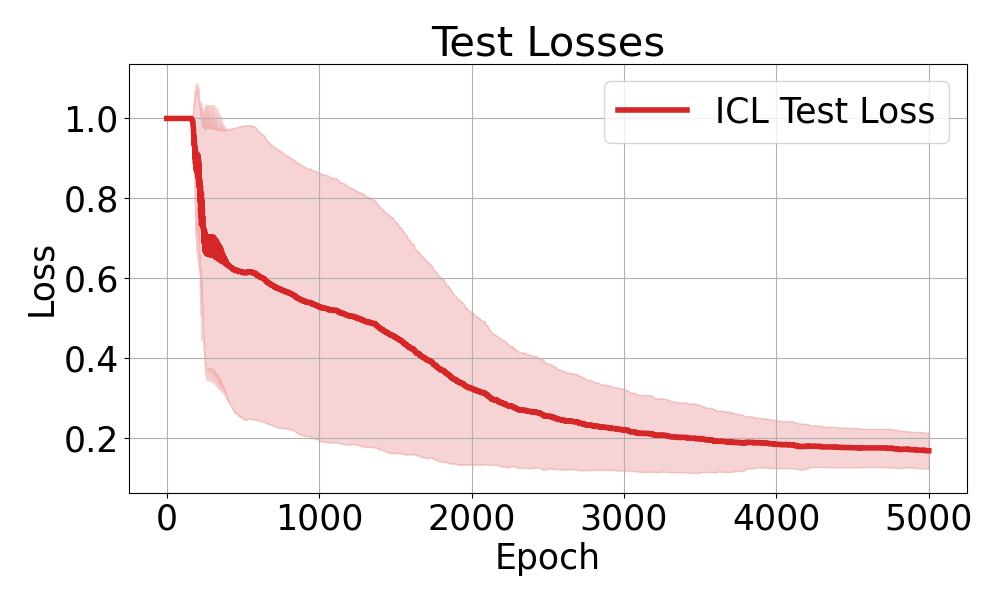}
        \caption{Test loss}
    \end{subfigure}
    \hfill
    \begin{subfigure}[b]{0.24\linewidth}
        \includegraphics[width=\linewidth]{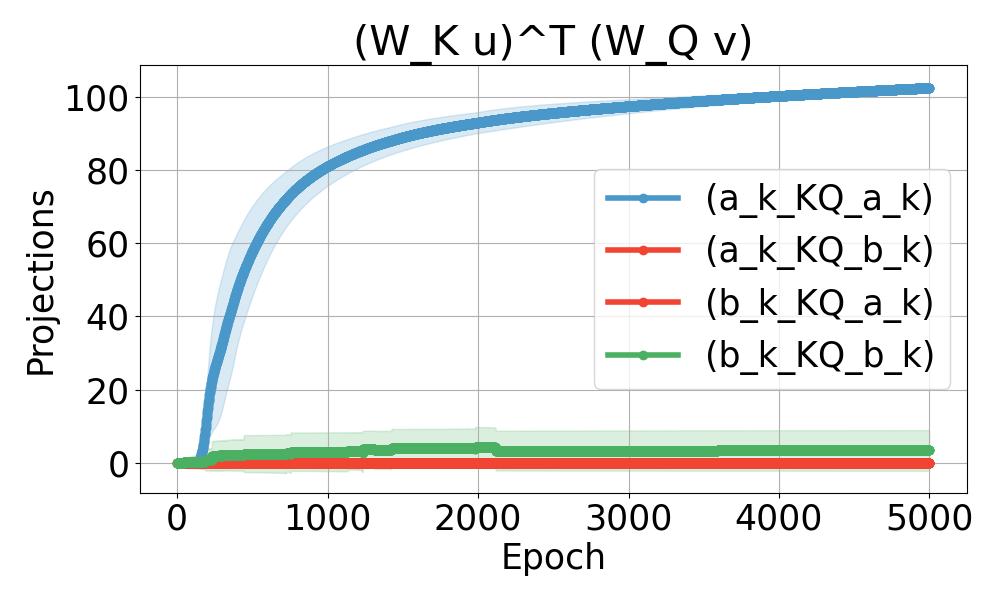}
        \caption{$\boldsymbol{u}^{\top}\mathbf{W}_{K}^{\top}\mathbf{W}_{Q}\boldsymbol{v}$}
    \end{subfigure}
    \hfill
    \begin{subfigure}[b]{0.24\linewidth}
        \includegraphics[width=\linewidth]{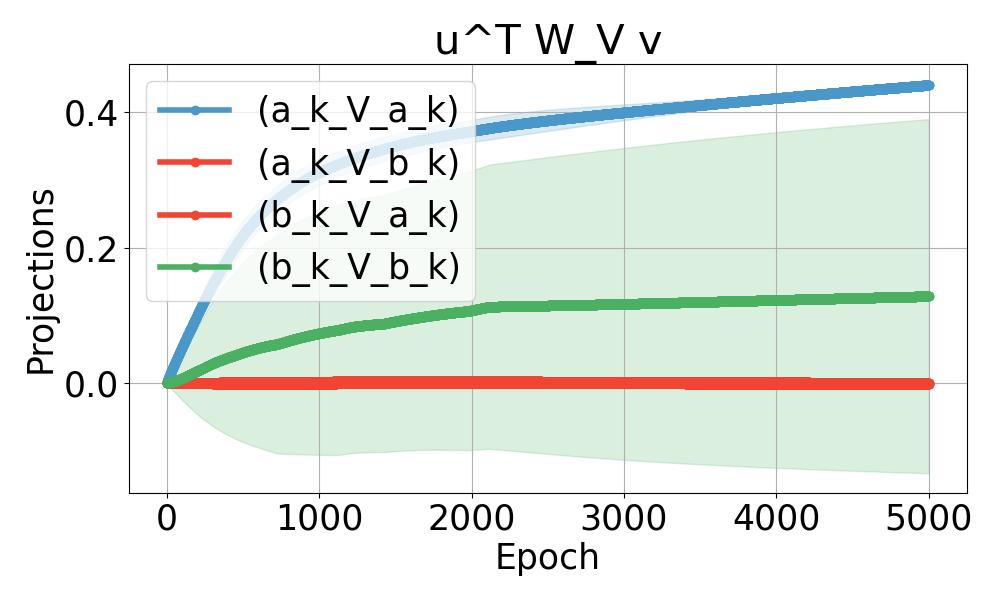}
        \caption{$\boldsymbol{u}^{\top}\mathbf{W}_{V}\boldsymbol{v}$}
    \end{subfigure}
    \caption{Training dynamics over the ICL prompt training distribution. (a–b) Training and test losses; (c–d) projection values of key-query and value matrices. In (d), $\mathbf{W}_{V}$ overfits to low-level features $\boldsymbol{b}_{k}$, resulting in persistently constant ($0.2$) test error shown in (b).}
    \label{fig:ICL_trained_dynamics}
\end{figure*}
\vspace{-.2\baselineskip}
In this section, we present our main theoretical results.
\vspace{-.2\baselineskip}
\begin{condition}
    Suppose the following holds for some sufficiently large constant $C>0$:
    \vspace{-1\baselineskip}
    \begin{enumerate}
        \item Dimension $d$ satisfies $d\geq C M^{2}\log(\frac{{K^{\prime}}^2N^2M^2}{\delta})$.\vspace{-.2\baselineskip}
        \item Training sample size $N$ is sufficiently large $N \geq C \max \{K\log(\frac{1}{\delta}), \frac{K K^{\prime} \log(\frac{1}{\delta})}{M}\}$.\vspace{-.5\baselineskip}
        \item Dictionary size $K, K^{\prime}$ satisfies $K \geq C \log(\frac{1}{\delta})$, $K^{\prime}\geq C \max \{M, K\}$.\vspace{-.5\baselineskip}
        \item The standard deviations of Gaussian initializations satisfies $\sigma_{1} \leq \min\{\frac{d^{-\frac{1}{2}}}{C}, \frac{\sqrt{q_V(\log(\frac{{K^{\prime}}^{2}}{\delta}))}}{C}\}$, and $\sigma_{0}\leq \min\{\frac{d^{-\frac{1}{4}}}{C}(\log (\frac{(K^{\prime})^2}{\delta}))^{-\frac{1}{4}}(\log(M))^{\frac{1}{2}}, \frac{d^{-\frac{1}{2}}}{C}\}$.\vspace{-.5\baselineskip}
        \item The noise level $\sigma_{p}$ satisfies $\sigma_{p} \leq \frac{d^{-\frac{1}{2}}}{C}$.\vspace{-.5\baselineskip}
        \item Scaled parameter $q_{V}$ and learning rate $\eta$ satisfy $\frac{C\sigma_{1}^{2}d}{\log(\sigma_{0}^{-2}d^{-1}\log(\frac{M-1}{0.06}))}\leq q_{V} \leq \frac{\sigma_{1}^{2}d}{C\log(d^{-\frac{1}{2}}\sqrt{\log(\frac{{K^{\prime}}^{2}}{\delta})})}  $, $\eta \leq \min\{ \frac{\sigma_{1}^{2}d^{\frac{1}{2}}K \sqrt{\log(\frac{{K^{\prime}}^{2}}{\delta})}}{q_{V}C}, \frac{\sigma_{1}d^{\frac{1}{2}}M^{4}}{C(M-1)^{2}}\}$.
    \end{enumerate}
\label{con:main body}\end{condition}
\vspace{-.5\baselineskip}

The conditions on \( d, N, K \) ensure that certain concentration inequalities hold and that the learning problem is adequately overparameterized \citep{chatterji2021finitesampleanalysisinterpolatinglinear, frei2022bonoisylineardata, cao2022benign, kou2023benign}. The condition on \( K^{\prime} \) controls the impact of contributions from answer-irrelevant dictionary tokens, though it can be relaxed at the cost of a more intricate analysis. The conditions on \( \sigma_{0} \) and \( \sigma_{1} \) regulate the model's initial bias and ensure that gradient descent updates the model effectively. The condition on \( \sigma_{p} \) guarantees that the gradient flow is only mildly affected by noise, which is reasonable given the typically high signal-to-noise ratio in language data. Finally, the conditions on \( \eta \) and \( q_{V} \) are technical assumptions necessary for the optimization analysis.
\vspace{-.2\baselineskip}

The followings present our primary results regarding the retrieval of task vectors and the convergence of test loss.
\vspace{-.2\baselineskip}

\begin{theorem}[\textbf{Task Vector Retrieval}]
    Under Condition \ref{con:main body}, let test ICL prompt distributions include Word-Label ICL Prompt $\mathcal{P}_{\mathbf{T}}^{\star}$ and QA-ICL Prompt ${\mathcal{P}_{\text{QA}}^{\mathbf{T}}}^{\star}$. Then, for the gradient descent iterates in Algorithm \ref{alg:main}, with probability at least $1 - \delta$, there exists $t = \Omega((\eta q_{V})^{-1} \sigma_{1}^{2} d K$,  such that:
    \vspace{-1.5\baselineskip}
    \begin{itemize}[leftmargin=*]
        \item \textbf{Training on $\mathcal{P}_{\mathbf{T}}$ or $\mathcal{P}_{\text{QA}}^{\mathbf{T}}$}: When testing on a sample with task concept $k^{\star} \in [K]$, before adding the residual stream, the model generates a hybrid vector with both high- and low-level components:
        \vspace{-.2\baselineskip}
        \begin{equation}
        \resizebox{0.41\textwidth}{!}{$\operatorname{cos}\langle \mathbf{h}_{\boldsymbol{\theta}^{(t)}, 0}, \boldsymbol{a}_{k^{\star}}\rangle = \Theta(1), \quad \operatorname{cos}\langle \mathbf{h}_{\boldsymbol{\theta}^{(t)}, 0}, \boldsymbol{b}_{k^{\star}}\rangle = \Theta(1).$}
        \label{eq:fail_cos}\end{equation}\vspace{-.5\baselineskip}
        \item \textbf{Training on $\mathcal{P}_{\text{QA}}$}: When testing on a sample with task concept $k^{\star} \in [K]$, the model approximately retrieves the appropriate task vector before adding the residual stream:\vspace{-.5\baselineskip}
        \begin{equation}
        \resizebox{0.41\textwidth}{!}{$\operatorname{cos}\langle \mathbf{h}_{\boldsymbol{\theta}^{(t)}, 0}, \boldsymbol{a}_{k^{\star}}\rangle = \Theta(1), \quad \operatorname{cos}\langle \mathbf{h}_{\boldsymbol{\theta}^{(t)}, 0}, \boldsymbol{u}\rangle = o(1),$}
        \label{eq:sucess_cos}\end{equation}\vspace{-.8\baselineskip}
        for all $\boldsymbol{u} \in \{ \boldsymbol{a}_{s}\}_{s \neq {k^{\star}} \in [K]} \cup\{\boldsymbol{b}_{s}\}_{s \in [K]} \cup\{\boldsymbol{\nu}_{k^{\prime}}\}_{k^{\prime}\in[K^{\prime}]}$. 
    \end{itemize}
\label{thm:main_thm}
\end{theorem}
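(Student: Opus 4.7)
My plan is to track the evolution of $\mathbf{W}_V$ and $\mathbf{W}_K^{\top}\mathbf{W}_Q$ under gradient descent by decomposing each matrix along the orthogonal concept basis $\{\boldsymbol{a}_k, \boldsymbol{b}_k, \boldsymbol{\nu}_m\}$ together with a complementary residual subspace. Under Condition~\ref{con:main body}, initialization places all projections at scale $O(\sigma_0 \sqrt{d})$ and $O(\sigma_1 \sqrt{d})$, so attention is nearly uniform and $\mathbf{h}_{\boldsymbol{\theta},0}$ is negligible at $t=0$. The cross-entropy gradient of $\mathbf{W}_V$ takes the form of outer products $(\mathbf{u}_{k_{\mathbf{y}}} - \sum_{k} p_k \mathbf{u}_k) \otimes \sum_l s_l \mathbf{T}_l$ (modulo the layer-norm Jacobian), and the gradient of $\mathbf{W}_K^{\top}\mathbf{W}_Q$ couples softmax derivatives to the value-projected logit gradient and to $\mathbf{T}_l \otimes \mathbf{T}_L$. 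Substituting the data model expresses each update as a sum of outer products of concept basis vectors weighted by random signs, attention scores, and independent noise terms.

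The core separation between training regimes follows from whether the random sign $y_{k,l}\in\{\pm 1\}$ of low-level features survives under expectation over the data distribution. Under $\mathcal{P}_{\text{QA}}$, the designated task-anchor column carries a clean $\boldsymbol{a}_{k_{\mathbf{S}}}$ signal with no $\boldsymbol{b}$ component, and the common tokens $\boldsymbol{\nu}_m$ are task-irrelevant; after averaging over $y_{k_{\mathbf{S}}} = \pm 1$, the only coherent outer-product contributions to both $\mathbf{W}_V$ and $\mathbf{W}_K^{\top}\mathbf{W}_Q$ lie in the $\boldsymbol{a}_k \boldsymbol{a}_k^{\top}$ subspace. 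Under $\mathcal{P}_{\mathbf{T}}$ or $\mathcal{P}_{\text{QA}}^{\mathbf{T}}$, by contrast, each demo label $\mathbf{y}_l$ carries $y_{k_{\mathbf{T}},l}\boldsymbol{b}_{k_{\mathbf{T}}}$ while the query $\mathbf{x}_{J+1}$ carries the correlated sign $y_{k_{\mathbf{T}},J+1}\boldsymbol{b}_{k_{\mathbf{T}}}$; any positive seed at initialization allows $\mathbf{W}_K^{\top}\mathbf{W}_Q$ to develop $\boldsymbol{b}_k\boldsymbol{b}_k^{\top}$ structure, which sharpens attention onto matching-sign demos and breaks the sign-cancellation so that $\mathbf{W}_V$ can accumulate $\boldsymbol{b}_k\boldsymbol{b}_k^{\top}$ of order $\Theta(1)$. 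I would formalize this via a three-stage induction: (I) $\boldsymbol{a}_k\boldsymbol{a}_k^{\top}$ projections grow from seed to $\Theta(1)$ under near-uniform attention and sign concentration; (II) attention sharpens onto task-anchor tokens (QA) or onto matching-sign demos (ICL/QA-ICL); (III) $\mathbf{W}_V$ either stabilizes at the pure $\boldsymbol{a}\boldsymbol{a}^{\top}$ projection or additionally accumulates $\boldsymbol{b}\boldsymbol{b}^{\top}$. At termination $t = \Omega((\eta q_V)^{-1}\sigma_1^2 d K)$, which matches the time needed for the $\Theta(\sigma_1\sqrt{d})$ seed to reach $\Theta(1)$ under per-step gains scaled by $1/K$ (task sampling) and $\eta q_V$ (value step size), I would evaluate $\mathbf{h}_{\boldsymbol{\theta}^{(t)},0}$ on the test prompts: the attention-weighted $\mathbf{W}_V$-projection lies in $\text{span}(\boldsymbol{a}_{k^\star})$ for QA training, giving Eq.~\eqref{eq:sucess_cos}, and in $\text{span}(\boldsymbol{a}_{k^\star},\boldsymbol{b}_{k^\star})$ with both coefficients $\Theta(1)$ for ICL/QA-ICL training, giving Eq.~\eqref{eq:fail_cos}.

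The main technical obstacle is the coupled, non-linear interaction in Stage II of the ICL/QA-ICL regime: a chicken-and-egg loop in which the $\boldsymbol{b}$-projection of $\mathbf{W}_V$ and the $\boldsymbol{b}\boldsymbol{b}^{\top}$-projection of $\mathbf{W}_K^{\top}\mathbf{W}_Q$ each require the other to be positive before either can grow. My approach is to show via anti-concentration of the product of two independent Gaussian initializations that a strictly positive common seed of magnitude $\Omega(\sigma_0 \sigma_1)$ exists with probability $1-\delta$, then prove that once seeded, the pair satisfies a coupled near-exponential growth dynamics that dominates ambient noise of order $\sigma_p\sqrt{d}$ (controlled by Condition~\ref{con:main body}). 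Secondary challenges include: bounding the layer-normalization denominator, which remains $\Theta(1)$ once signals exceed noise; controlling the contribution of $K'$ task-irrelevant tokens and the Gaussian noise terms $\boldsymbol{\xi}$ via union-bound arguments enabled by $K' \geq C\max\{M,K\}$ and $d \geq C M^2 \log(\cdot)$; and tracking the softmax logits over $7K+K'$ competing tokens simultaneously through all three stages, ensuring that the $\boldsymbol{a}_s$ directions for $s \neq k^\star$, all $\boldsymbol{b}_s$ directions in the QA case, and all irrelevant $\boldsymbol{\nu}_{k'}$ directions remain at cosine similarity $o(1)$ throughout.
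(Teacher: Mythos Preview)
Your treatment of the $\mathcal{P}_{\text{QA}}$ success case is aligned with the paper: both isolate $\boldsymbol{a}_k^\top\mathbf{W}_V\boldsymbol{a}_k$ and $(\mathbf{W}_Q\boldsymbol{a}_k)^\top(\mathbf{W}_K\boldsymbol{a}_k)$ as the only growing projections, the structural point being that the attended QA-prefix tokens carry no $\boldsymbol{b}_k$ component. On the technical side the paper uses a two-phase analysis (linear $\mathbf{W}_V$ growth then square-root growth once $\boldsymbol{a}_k^\top\mathbf{W}_V\boldsymbol{a}_k$ dominates $\|\mathbf{W}_V\mathbf{S}\boldsymbol{\pi}\|$) and sandwiches the discrete updates between six explicit continuous-flow surrogates (Lemmas~\ref{lem:ODE-1}--\ref{lem:ODE-6}); this is its main device for the time-varying layer-norm denominator in Eq.~\eqref{eq:gradients_mainbody}, which you do not mention but will need.

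Your mechanism for the $\mathcal{P}_{\mathbf{T}}$/$\mathcal{P}_{\text{QA}}^{\mathbf{T}}$ failure has a genuine gap. You attribute the $\boldsymbol{b}$-contamination to an attention--MLP feedback loop: an $\Omega(\sigma_0\sigma_1)$ initialization seed lets $\mathbf{W}_K^\top\mathbf{W}_Q$ develop $\boldsymbol{b}_k\boldsymbol{b}_k^\top$ structure, which sharpens attention onto same-sign demos and breaks sign cancellation in the $\mathbf{W}_V$ update. The paper shows the opposite: $(\mathbf{W}_Q\boldsymbol{b}_s)^\top(\mathbf{W}_K\boldsymbol{b}_s)$ remains at initialization scale $O(\sigma_0^2\sqrt{d\log(\cdot)})$ throughout training even under ICL data (Lemma~\ref{lem:update_attn_first_phase} and its ICL analogue Lemma~\ref{lem:update_attn_first_phase_ICLTr}). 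Your loop cannot ignite because the attention-update coefficient $\iota_{\mathbf{u}_{k_{\mathbf{y}}},j}$ scales with the $\mathbf{W}_V$-projection of the attended token, and $\boldsymbol{b}_k^\top\mathbf{W}_V\boldsymbol{b}_k=O(\sigma_1)$ is dominated by $\boldsymbol{a}_k^\top\mathbf{W}_V\boldsymbol{a}_k=\Theta(\sigma_1\sqrt{d})$ already in Phase~1, so the $\boldsymbol{b}$-driven attention growth is suppressed by a factor $\sqrt{d}$ and never catches up. The paper's actual mechanism is confined to $\mathbf{W}_V$ alone and rests on \emph{finite-sample label imbalance}: by an anti-concentration argument over the $K$ binomials (Lemma~\ref{lem: enough data concept k}), with probability $1-\delta$ there exists $k_y\in[K]$ with $\bigl|\,|\mathcal{N}_{k_y}^{+}|-|\mathcal{N}_{k_y}^{-}|\,\bigr|\ge \Omega(N/K)$. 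Since ICL demo tokens carry $\pm\boldsymbol{b}_{k_y}$ (and the QA prefix does not), the empirical $\boldsymbol{b}_{k_y}^\top\mathbf{W}_V\boldsymbol{b}_{k_y}$ gradient fails to cancel across this imbalanced sum and grows at a $\Theta(1)$ fraction of the $\boldsymbol{a}_{k_y}^\top\mathbf{W}_V\boldsymbol{a}_{k_y}$ rate (Eq.~\eqref{eq:upperbound_first_phase_ICLTr_main}, Lemma~\ref{lem:Wv_destiny_ICLTr}), which is what produces Eq.~\eqref{eq:fail_cos}. You should replace the attention-coupling story with this empirical label-imbalance argument.
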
\vspace{-.2\baselineskip}

The first result shows that under our setting, when training and testing on ICL-type data, the transformer fails to reliably retrieve high-level co-task information from ICL or QA-ICL prompts. Consequently, the resulting $\mathbf{h}_{\boldsymbol{\theta}^{(t)}}$ becomes a hybrid vector, contaminated by low-level features that \textit{disrupt} task vector addition with residuals, as validated in Figure~\ref{fig:ICL_trained_dynamics}(d). Unlike \textit{harmful overfitting} in vision models \citep{frei2022bonoisylineardata, cao2022benign}, which stems from \textit{noise memorization}, the harmful overfitting here arises from \textit{low-level feature memorization}, due to their unintended co-occurrence in ICL-style data. In contrast, the second result shows that training on QA data enables more accurate retrieval of co-task vectors, which is inherently due to the absence of $\boldsymbol{b}_k$ in $\mathbf{x}_{\text{QA}}$, as illustrated in Figure~\ref{fig:QAtrain_imbalanced_M30_ID_10_2loss.jpg}.
\vspace{-.2\baselineskip}

The following proposition further validates the consequences of harmful vs. benign task vector retrieval.\vspace{-.2\baselineskip}

\begin{proposition}[\textbf{Test Losses Disparity}]
    For $\forall \varepsilon > 0$, under Condition \ref{con:main body}, for any test distribution $\mathcal{P}^{\star} \in \{\mathcal{P}_{\mathbf{T}}^{\star}, {\mathcal{P}_{\text{QA}}^{\mathbf{T}}}^{\star}\}$, with the prompt length\footnote{This condition ensures that other task concepts within the prompt do not interfere with recognizing the prompt's co-task. It can be relaxed by assuming a lower probability of words being associated with task concepts outside the prompt's co-task concept or by limiting the number of task concepts per word.} satisfying $J^{\star} = \Omega (\frac{\log(1/\varepsilon)}{2\log(K)} )$ and the noise level $\sigma_{p}^{\star} = O ((K \log (\frac{L}{\varepsilon} ))^{-\frac{1}{2}} )$, with probability at least $1 - \delta$, there exists $t = O(\eta^{-1} q_{V}^{-1} \sigma_{1}^{2} d^{2}K M  \log(\frac{1}{\varepsilon}))$, it holds that\vspace{-.2\baselineskip}
    \begin{itemize}[leftmargin=*]
        \item \textbf{Training on $\mathcal{P}_{\mathbf{T}}$ or $\mathcal{P}_{\text{QA}}^{\mathbf{T}}$}: The test loss satisfies $
        L_{\mathcal{P}^{\star}}(\boldsymbol{\theta}^{(t)}) = \Theta(1). 
        $
        \item \textbf{Training on $\mathcal{P}_{\text{QA}}$}: The test loss satisfies $L_{\mathcal{P}^{\star}}(\boldsymbol{\theta}^{(t)}) \leq \varepsilon$.
        
        Furthermore, we have two bonuses
        \begin{itemize}
        
        \item \textbf{Task Vector Regression from Demonstrations}. The transformer could approximately retrieve the task vector (i.e. satisfies Eq.(\ref{eq:sucess_cos})) solely from demo-pairs, without requiring a query word.
        \item \textbf{Task Vector Arithmetic in Factual Recall}. Given a test sample with co-concept $k^{\star}$, the intermediate output $\mathbf{h}_{\boldsymbol{\theta}^{(t)}, 0}$ can be added to any query with co-concept $k^{\star}$, which also leads to $L_{\mathcal{P}^{\star}}(\boldsymbol{\theta}^{(t)}) \leq \varepsilon$. 
        
        \end{itemize}
    \end{itemize}
\label{prop:main_task_vector}
\end{proposition}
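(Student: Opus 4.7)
The plan is to propagate the cosine-similarity characterizations of $\mathbf{h}_{\boldsymbol{\theta}^{(t)},0}$ established in Theorem \ref{thm:main_thm} through the layer normalization and the residual stream to obtain the final logit margins, and then to decode the argmax via the approximate identity in Eq.~(\ref{eq:vector_arith}). Since the decoder is $\mathbf{h}_{\boldsymbol{\theta}} = \mathrm{LN}(\mathbf{h}_{\boldsymbol{\theta}^{(t)},0}) + \mathbf{T}_L$ with $\mathbf{T}_L = \mathbf{x}_{J+1}$, the proof reduces to comparing $\mathbf{u}_k^\top \mathrm{LN}(\mathbf{h}_{\boldsymbol{\theta}^{(t)},0}) + \mathbf{u}_k^\top \mathbf{x}_{J+1}$ across the $7K+K'$ dictionary tokens.

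For the \textbf{failure case} (training on $\mathcal{P}_{\mathbf{T}}$ or $\mathcal{P}_{\text{QA}}^{\mathbf{T}}$), I would use Eq.~(\ref{eq:fail_cos}) to isolate a $\Theta(1)$ component of $\mathbf{h}_{\boldsymbol{\theta}^{(t)},0}$ along $\boldsymbol{b}_{k^{\star}}$ whose sign is fixed by the training-time low-level memorization and therefore independent of the test-time indicator $y_{k^{\star},J+1} \sim \mathrm{Unif}\{\pm 1\}$. After LN, this component remains macroscopic; adding the residual $\mathbf{x}_{J+1}$ whose $\boldsymbol{b}_{k^{\star}}$-sign is the \emph{true} test label means that the argmax over $\{\boldsymbol{a}_{k^{\star}} \pm \boldsymbol{b}_{k^{\star}}\}$ matches the test label only when the two signs agree, which occurs with probability exactly $1/2$ by symmetry, and similar cancellation/reinforcement happens for the $0.1\boldsymbol{a}_{k^{\star}}\pm\boldsymbol{b}_{k^{\star}}$ and $\pm\boldsymbol{b}_{k^{\star}}$ tokens. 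A simple union bound against the $K^{-1}$-scale contributions from other concepts then yields $L_{\mathcal{P}^{\star}} = \Theta(1)$.

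For the \textbf{success case} (training on $\mathcal{P}_{\text{QA}}$), I would first exploit Eq.~(\ref{eq:sucess_cos}): after LN, $\mathrm{LN}(\mathbf{h}_{\boldsymbol{\theta}^{(t)},0})$ is a unit vector that is $(1-o(1))$-aligned with $\boldsymbol{a}_{k^{\star}}$ and has only $o(1)$ mass on every other $\boldsymbol{a}_s$, $\boldsymbol{b}_s$, or $\boldsymbol{\nu}_{k'}$. Next, I would verify the hypotheses of Eq.~(\ref{eq:vector_arith}) on the test prompt: using the prompt-length condition $J^{\star} = \Omega(\log(1/\varepsilon)/(2\log K))$ together with a Chernoff/union bound over the random polysemy sets $\mathcal{X}_{\mathbf{T},l}, \mathcal{Y}_{\mathbf{T},l}$, the probability that any competing concept $k' \neq k^{\star}$ appears in \emph{every} demo position is at most $K \cdot K^{-2J^{\star}} \le \varepsilon/L$; the noise condition $\sigma_p^{\star} = O((K\log(L/\varepsilon))^{-1/2})$ then controls the Gaussian parts uniformly across the $L$ positions. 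Combining these, $\mathbf{h}_{\boldsymbol{\theta}} \approx \boldsymbol{a}_{k^{\star}} + \mathbf{x}_{J+1} \approx \mathbf{y}_{J+1}$ up to $o(1)$ error, so the inner product with the correct token $\mathbf{u}_{k_{\mathbf{y}}} \propto \boldsymbol{a}_{k^{\star}} + y_{k^{\star},J+1}\boldsymbol{b}_{k^{\star}}$ exceeds that of every competing token by an $\Omega(1)$ margin, giving $L_{\mathcal{P}^{\star}} \le \varepsilon$. For the two bonuses, note that Eq.~(\ref{eq:sucess_cos}) involves \emph{only} $\mathbf{h}_{\boldsymbol{\theta}^{(t)},0}$: since the attention was trained on QA prefixes to key on the $\boldsymbol{a}_{k_{\mathbf{S}}}$-bearing token and produce a value aligned with $\boldsymbol{a}_{k_{\mathbf{S}}}$, removing the query word simply renormalizes the softmax weights over remaining anchor tokens, preserving alignment up to $o(1)$; and because the output no longer depends on the query, substituting any alternative query with co-concept $k^{\star}$ into the residual slot reproduces the same arithmetic $\boldsymbol{a}_{k^{\star}} + \mathbf{x}' \approx \mathbf{y}'$.

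The main obstacle will be the quantitative control of the layer-normalization step in tandem with the softmax attention on a \emph{noisy} test prompt of length $L = (J^{\star}+1)(M+2)$ (for QA-ICL) or $2J^{\star}+1$ (for word-label ICL): one must show that $o(1)$ cosines on the low-level and distractor directions translate into $o(\varepsilon)$ contributions to \emph{every} logit after normalization, uniformly over the random polysemy sets and over all $7K+K'$ dictionary tokens. A related subtlety is that the noise bound $\sigma_p^{\star}$ must be tight enough to preserve both the attention score ordering and the post-LN direction; this forces the chain of concentration inequalities (Hanson--Wright for quadratic forms in the noise, Lipschitz control of LN away from the origin, and uniform control of softmax denominators) to be tracked with the $\log(L/\varepsilon)$ factor baked into $\sigma_p^{\star}$, which is precisely the reason this constant appears in the statement.
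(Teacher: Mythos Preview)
Your overall architecture---propagate the task-vector alignment through LN and the residual, then compare logits---matches the paper's. Two substantive differences are worth flagging.

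\textbf{Quantitative precision.} You plan to use Eq.~(\ref{eq:sucess_cos}) as a black box and then ``show that $o(1)$ cosines \ldots\ translate into $o(\varepsilon)$ contributions.'' That step does not close: the $\Theta(1)/o(1)$ dichotomy in Theorem~\ref{thm:main_thm} is stated at the earlier time $\Omega((\eta q_V)^{-1}\sigma_1^2 dK)$ and carries no $\varepsilon$-dependence. The paper does \emph{not} route through Theorem~\ref{thm:main_thm}; it proves both results together, invoking the Phase-2 dynamics lemma directly: at $T^\star=\Omega(\eta^{-1}q_V^{-1}\sigma_1^2 d^2 KM\log(1/\varepsilon))$ one has $\boldsymbol{a}_k^\top\mathbf{W}_V^{(t)}\boldsymbol{a}_k \ge \Omega\bigl((1+15\sigma_p^\star\sqrt{\log(4/\varepsilon)})\,\sigma_1 d^{1/2}\bigr)$, and it is this $\varepsilon$-scaled growth (together with $\|\mathbf{W}_V^{(t)}\|_F=\Theta(\sqrt{K}\,\boldsymbol{a}_k^\top\mathbf{W}_V^{(t)}\boldsymbol{a}_k)$) that produces the margin. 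Your ``main obstacle'' paragraph correctly locates the difficulty but does not name its resolution.

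\textbf{Competitor and polysemy.} The paper does not argue an abstract $\Omega(1)$ margin against all $7K+K'$ tokens. It pins down the single tight competitor---the word token $(0.1\boldsymbol{a}_{k_\mathbf{T}}+y\boldsymbol{b}_{k_\mathbf{T}})/\sqrt{1.01}$---and notes that on the residual $\mathbf{x}_{J+1}$ alone this token actually \emph{beats} the label token ($\sqrt{1.01}$ versus $1.1/\sqrt{2}$), so the LN term must supply a deficit of roughly $0.22$. This yields the explicit scalar inequality in the paper's proof, after which three ratios are bounded: noise-to-signal, distractor-mass-to-co-task-mass, and $\|\mathbf{W}_V^{(0)}\boldsymbol{a}_{k_\mathbf{T}}\|/\boldsymbol{a}_{k_\mathbf{T}}^\top\mathbf{W}_V^{(t)}\boldsymbol{a}_{k_\mathbf{T}}$. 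Your polysemy event (``$k'$ appears in \emph{every} demo position'') is not the relevant one; the paper bounds the probability that the \emph{total} count of non-co-task appearances exceeds $1.1J$, which is what controls $\sum_{k_l\neq k_\mathbf{T}}\pi_{\boldsymbol{a}_{k_l}}^{(t)}/\pi_{\boldsymbol{a}_{k_\mathbf{T}}}^{(t)}$, and the $J^\star=\Omega(\log(1/\varepsilon)/(2\log K))$ condition is calibrated to that event.

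For the failure case your symmetry intuition is correct, but the sign of the $\boldsymbol{b}_{k^\star}$-component of $\mathbf{h}_{\boldsymbol{\theta},0}$ is the product of the training-time sign of $\boldsymbol{b}_{k^\star}^\top\mathbf{W}_V\boldsymbol{b}_{k^\star}$ \emph{and} the test-time demo majority, not training alone; the paper phrases it as ``with probability $1/2$ the test prompt has imbalanced $\pm\boldsymbol{b}_{k_y}$,'' making $\boldsymbol{b}_{k_\mathbf{T}}^\top\mathbf{W}_V^{(t)} i_\mathbf{T}=\Theta(\boldsymbol{a}_k^\top\mathbf{W}_V^{(t)}\boldsymbol{a}_k)$ and forcing constant error. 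The bonuses follow as you outline.
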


 \vspace{-1\baselineskip}
 
The first bonus parallels findings in \citet{hendel2023taskvector, yang2025taskvectorsincontextlearning}, demonstrating that \textit{transformers can regressively infer task vectors from demo-pairs}. The second aligns with empirical studies \citet{hendel2023taskvector, merullo2024w2varithmetic}, where \citet{merullo2024w2varithmetic} add extracted task vectors from current prompt to queries outside the prompt and still obtain accurate predictions. \vspace{-.2\baselineskip}

The following proposition examines the model's capability to address OOD unseen ICL tasks.
\vspace{-.2\baselineskip}

\begin{proposition}[\textbf{OOD Generalization}]
\label{prop:OOD_main}
Under the conditions of Theorem \ref{thm:main_thm}, suppose the transformer is trained on $\mathcal{P}_{\text{QA}}$. At test time, for any distribution $\mathcal{P}^{\star} \in \{\mathcal{P}_{\mathbf{T}}^{\star}, {\mathcal{P}_{\text{QA}}^{\mathbf{T}}}^{\star}\}$, the model satisfies:\vspace{-1\baselineskip}
\begin{enumerate}
    \item \textbf{Dictionary Shift Adaptability}. The model generalizes to unseen dictionaries $\mathbf{U}^{\textit{new}}$ with high-level ($K_{a}^{\star} \leq K$), low-level ($K_{b}^{\star} \leq d - K_{a}^{\star}$), and irrelevant ($K_{\nu}^{\star} \leq d - K_{a}^{\star} - K_{b}^{\star}$) concepts, where $\boldsymbol{a}_{k_{a}}^{\star} \in \operatorname{conic}(\{\boldsymbol{a}_{k}\}_{k\in[K]})$, and $\|\boldsymbol{a}_{k_{a}}^{\star}\| = \|\boldsymbol{b}_{k_{b}}^{\star}\| = \|\boldsymbol{\nu}_{k_{\nu}}^{\star}\| = 1$ for all $k_{a} \in K_{a}^{\star}$, $k_{b} \in K_{b}^{\star}$, and $k_{\nu} \in K_{\nu}^{\star}$, with $\{\boldsymbol{a}_{k_{a}}^{\star}\}$, $\{\boldsymbol{b}_{k_{b}}^{\star}\}$, and $\{\boldsymbol{\nu}_{k_{\nu}}^{\star}\}$ mutually orthogonal.\vspace{-.2\baselineskip}
    \item \textbf{Distribution Shift Adaptability}. The model accommodates test prompts containing multiple co-task concepts $\mathcal{K} \subset [K]$ with $|\mathcal{K}| < 4$. The test prompt length $J^{\star}$ and the distributions over $\mathcal{X}_{\mathbf{T},l}$ and $\mathcal{Y}_{\mathbf{T},l}$ for all $l \in [J+1]$ are arbitrary, provided that $\sum_{l} |\mathcal{X}_{\mathbf{T},l}|, \sum_{l} |\mathcal{Y}_{\mathbf{T},l}| = o(J^{\star}/10)$.\vspace{-.2\baselineskip}
\end{enumerate}
\end{proposition}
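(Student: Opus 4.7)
The plan is to leverage the structural convergence of the trained parameters from Theorem \ref{thm:main_thm} and Proposition \ref{prop:main_task_vector}. Training on $\mathcal{P}_{\text{QA}}$ produces a value matrix $\mathbf{W}_V^{(t)}$ whose image is concentrated along $\operatorname{span}(\{\boldsymbol{a}_k\}_{k\in[K]})$ with negligible components along low-level or irrelevant directions, and a key-query product $(\mathbf{W}_K^{(t)})^{\top}\mathbf{W}_Q^{(t)}$ oriented so that the softmax attention from any query with co-concept $k^{\star}$ concentrates on positions carrying $\boldsymbol{a}_{k^{\star}}$. Both OOD claims therefore reduce to showing that this learned structure still produces an $\mathbf{h}_{\boldsymbol{\theta}^{(t)},0}$ well-aligned with the relevant task direction on the shifted prompt; the residual-addition plus argmax argument in the proof of Proposition \ref{prop:main_task_vector} then transfers with essentially the same constants.

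For the dictionary shift, I would expand each novel task vector as $\boldsymbol{a}^{\star}_{k_a}=\sum_{k\in[K]}c_{k_a,k}\,\boldsymbol{a}_k$ with $c_{k_a,k}\geq 0$. Using linearity of $\mathbf{W}_V^{(t)}$ and $(\mathbf{W}_K^{(t)})^{\top}\mathbf{W}_Q^{(t)}$ in the token input, the attention logits and value outputs on the new prompt become conic combinations of the corresponding quantities on the trained dictionary; the learned $\Theta(1)$ alignment on each $\boldsymbol{a}_k$ yields $\Theta(1)$ alignment on $\boldsymbol{a}^{\star}_{k_a}$. The novel $\boldsymbol{b}^{\star}_{k_b}$ and $\boldsymbol{\nu}^{\star}_{k_\nu}$ live in the orthogonal complement of $\operatorname{span}(\{\boldsymbol{a}_k\})$ (ensured by the dimension budgets $K^{\star}_b\leq d-K^{\star}_a$, $K^{\star}_\nu\leq d-K^{\star}_a-K^{\star}_b$ together with the stated mutual orthogonality), so their contributions to both QK scores and $\mathbf{W}_V^{(t)}$-values are bounded by the initialization-level quantities $O(\sigma_0^2 d)$ and $O(\sigma_1)$ from Condition \ref{con:main body}, and are thus sub-dominant. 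Layer normalization then extracts the direction $\boldsymbol{a}^{\star}_{k^{\star}}$, and I would close the loop by rerunning the logit-gap computation against $\mathbf{U}^{\textit{new}}$.

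For the distribution shift, the prompt hosts anchors $\{\boldsymbol{a}_k\}_{k\in\mathcal{K}}$ for multiple co-tasks, while the query $\mathbf{T}_L$ contains concepts only in $\mathcal{X}_{\mathbf{T},J+1}\subset\mathcal{K}$. The trained QK product boosts $\exp(\mathbf{T}_l^{\top}(\mathbf{W}_K^{(t)})^{\top}\mathbf{W}_Q^{(t)}\mathbf{T}_L)$ on demonstration positions carrying a concept shared with the query. My plan is to use the hypothesis $\sum_l|\mathcal{X}_{\mathbf{T},l}|,\sum_l|\mathcal{Y}_{\mathbf{T},l}|=o(J^{\star}/10)$ to bound the aggregate attention mass on positions tied to competing concepts in $\mathcal{K}\setminus\{k^{\star}\}$ by $o(1)$ after softmax normalization, and to use $|\mathcal{K}|<4$ to keep the normalizer bounded independently of $J^{\star}$. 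The attended average then remains dominated by $\boldsymbol{a}_{k^{\star}}$, and the concentration arguments underlying Proposition \ref{prop:main_task_vector} apply with only $O(1)$ constant inflation.

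The main obstacle, common to both parts, is the softmax-denominator perturbation analysis on OOD prompts where novel directions or competing anchors enter the inner products. I will need to bound every untrained QK score by $o(1)$ relative to the trained logits using the initialization scales from Condition \ref{con:main body} together with the orthogonality assumptions; only once this sharpness is certified does the $\Theta(1)$-versus-$o(1)$ cosine dichotomy of Eq.~(\ref{eq:sucess_cos}) transfer verbatim to the OOD $\mathbf{h}_{\boldsymbol{\theta}^{(t)},0}$, reducing the desired $L_{\mathcal{P}^{\star}}\leq\varepsilon$ guarantee to a dressed-up version of the in-distribution argmax argument already established.
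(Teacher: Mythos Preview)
Your proposal is correct and takes essentially the same approach as the paper: the paper's proof of Proposition~\ref{prop:OOD_main} is simply ``The proof follows proof of Theorem 3.2 and Theorem 3.3, grounded on the learned knowledge on task vector $\boldsymbol{a}_{k}, \forall k \in [K]$. We here omit the proofs for brevity.'' Your sketch is in fact more detailed than what the paper supplies, but it rests on precisely the same structural facts (the $\mathbf{W}_V$ and QK projections learned under $\mathcal{P}_{\text{QA}}$) and the same argmax/logit-gap machinery from the proof of Proposition~\ref{prop:main_task_vector}.
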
\vspace{-.2\baselineskip}

This proposition contributes to the ongoing discussion of LLMs' linear latent geometry and OOD extrapolation, particularly in relation to Question 5.1.4 of \citet{reizinger2024position}, which ask whether the linear latent geometry aids the OOD extrapolation, superior to static embedding methods. Prior works \citep{li2023how, bu2024multiconcept} explore OOD benefits in multi-task classification but assume new label features share a consistent signal (e.g., $\pm 1$). In contrast, our framework allows diverse low-level features (e.g., $\boldsymbol{b}_{k_b}^{\star} = \boldsymbol{b}_{k_2^{\star}} - \boldsymbol{b}_{k_3^{\star}}$), highlighting the role of task vector arithmetic in \textit{retrieving task message rather than memorizing label-level features}, consistent with \citet{todd2024functionvectorslargelanguage} on compositional task vectors. Under a Dictionary Shift scenario where $\boldsymbol{b}^{\star} \perp \mathbf{U}$, word-label pairs involving unseen low-level task concepts (i.e., $\mathbf{x} = x_a \boldsymbol{a}_{k} + \boldsymbol{b}^{\star} + \boldsymbol{\xi}_{\mathbf{x}}$ and $\mathbf{y} = \boldsymbol{a}_{k} + \boldsymbol{b}^{\star} + \boldsymbol{\xi}_{\mathbf{y}}$) benefit from the model's high-level task retrieval capability. This phenomenon resonates with the ``celebrity helps minority'' effect in \citet{allenzhu2024physicslanguagemodels31}, where the presence of high-frequency entities enables the model’s \textit{fact retrieval capability} to generalize to rare entries. While their settings emphasize \textit{fact retrieval}, our framework centers on the retrieval of \textit{task vectors}, a single token subcase of theirs. Indeed, despite constrained on factual-recall task, the results offer a foundational perspective on why simple task vector arithmetic holds potential for a range of downstream applications, including concept erasure \citep{ilharco2023editing}, mitigation of forgetting \citep{jiang2025unlocking}, model editing \citep{li2025when}, and model merging \citep{wortsman2022model, matena2022merging, yang2024adamerging, lee2025dynamicfisher, yoshida2025mastering, cao2025paramdelta, he2025localizeandstitch}. \vspace{-.2\baselineskip}

For Distribution Shift, when multiple co-task concepts appear in a prompt, the transformer would likely form a hybrid task vector:
\vspace{-0.2\baselineskip}
\begin{equation}
    \mathbf{h}_{\boldsymbol{\theta}^{(t)}, 0} \approx \sum_{k \in \mathcal{K}} w_{\boldsymbol{\theta}, k} \boldsymbol{a}_{k}, \quad w_{\boldsymbol{\theta}, k} > 0, \quad \sum_{k \in \mathcal{K}} w_{\boldsymbol{\theta}, k} = 1.
\label{eq:conic_multi_concept}\end{equation}\vspace{-0.8\baselineskip}

This suggests a soft, weighted integration of concepts, where $w_{\boldsymbol{\theta}, k}$ denotes the model's inferred likelihood of task $k$ by softmax operation. Such behavior closely resembles Bayesian Model Averaging (BMA) in topic models--an interpretation that recent work has proposed as an underlying mechanism of ICL in LLMs~\citep{Bleilatent, xie2022explanationincontextlearningimplicit, wang2023llmtopicimplicit}:\vspace{-.2\baselineskip}
\begin{equation}
\resizebox{0.435\textwidth}{!}{$
p_{\boldsymbol{\theta}}(f_{\boldsymbol{\theta}}(\mathbf{x}_{\text{query}}) = \cdot \mid \mathbf{T}) = \int_{\mathbf{z}} p_{\boldsymbol{\theta}}(\cdot \mid \mathbf{z}, \mathbf{T}) \, p_{\boldsymbol{\theta}}(\mathbf{z} \mid \mathbf{T}) \, d\mathbf{z}.
$}
\label{eq:bma}
\end{equation}\vspace{-1.5\baselineskip}

Here, $\mathbf{z}$ represents the latent concept variables inferred by the model, linking Eq.~(\ref{eq:bma}) to Eq.~(\ref{eq:w2v}) in the factual-recall ICL setting. The integral over $\mathbf{z}$ corresponds to a conic combination of task vectors $\{\boldsymbol{a}_{k}\}_{k \in \mathcal{K}}$ as described in Eq.~(\ref{eq:conic_multi_concept}), computed by the transformer. Moreover, $f_{\boldsymbol{\theta}}(\mathbf{x}_{\text{query}})$ aligns with $\mathbf{h}_{\boldsymbol{\theta}^{(t)}, 0} + \mathbf{x}_{\text{query}}$ in our framework.

\vspace{-.2\baselineskip}
\section{Proof Idea}
\vspace{-.2\baselineskip}
The high-level intuition of proof is as follows:
\vspace{-.2\baselineskip}
\begin{itemize}
\vspace{-.5\baselineskip}
    \item For any $\varepsilon > 0$, to establish the convergence of the test loss defined by argmax sampling in Eq.~\eqref{eq:test_loss}, the primary task is to prove Eq.~\eqref{eq:fail_cos} and Eq.~\eqref{eq:sucess_cos} in terms of constant disparity. Additionally, the required successful cosine similarity (i.e., how well $\mathbf{h}_{\boldsymbol{\theta}^{(t)}, 0}$ approximates the task vector) must scale with $\sigma_{p}^{\star}\sqrt{\log(\varepsilon^{-1})}$ to mitigate noise disruption during testing.\vspace{-.5\baselineskip}
    
    \item The cosine similarity in Eq.~\eqref{eq:sucess_cos} is closely tied to the learning dynamics of matrix projections along the directions of task vector $\boldsymbol{a}_{k}$ for $k \in [K]$. For QA training, the remaining task is to characterize the learning dynamics along these directions (specifically, $\boldsymbol{a}_{k}^{\top}\mathbf{W}_{V}^{(t)}\boldsymbol{a}_{k}$ and $(\mathbf{W}_{Q}^{(t)}\boldsymbol{a}_{k})^{\top}(\mathbf{W}_{K}^{(t)}\boldsymbol{a}_{k})$) and the negligible updates of other projections.\vspace{-.5\baselineskip}
    
    \item For ICL-type training, we show that there is only one key difference: the unexpected co-occurrence-induced non-negligible growth of some $\lvert\boldsymbol{b}_{k}^{\top}\mathbf{W}_{V}^{(T_{1})}\boldsymbol{b}_{k}\rvert$. This drives MLP falsely memorize low-level features, which results in imperfect task vector retrieval and is responsible for the constant-level test error.
\label{item:proof_idea}\end{itemize}\vspace{-.5\baselineskip}

\textbf{Main Challenges}. We begin by observing that the gradients of $\mathbf{W}_{K}$, $\mathbf{W}_{Q}$, and $\mathbf{W}_{V}$ can be computed as follows:\vspace{-.2\baselineskip}
\begin{equation}\label{eq:gradients_mainbody}
    \resizebox{0.48\textwidth}{!}{$
    \begin{aligned}
     &\mathbb{E}_{\mathcal{P}_{\text{QA}}^{\text{tr}}}[\partial_{\mathbf{W}_{K}}L_{\mathcal{{B}}}(\boldsymbol{\theta})] = -\mathbb{E}_{\mathcal{P}_{\text{QA}}^{\text{tr}}}\left[\sum_{j=1}^{M}\iota_{\mathbf{u}_{k_{\mathbf{y}}}, j}\cdot(\mathbf{W}_{Q}\mathbf{S}\boldsymbol{e}_{L})\cdot (\mathbf{S}(\boldsymbol{e}_{j}-\boldsymbol{\pi}))^{\top}\right]
      \\
     &\mathbb{E}_{\mathcal{P}_{\text{QA}}^{\text{tr}}}[\partial_{\mathbf{W}_{Q}}L_{\mathcal{{B}}}(\boldsymbol{\theta})] = -\mathbb{E}_{\mathcal{P}_{\text{QA}}^{\text{tr}}}\left[\sum_{j=1}^{M}\iota_{\mathbf{u}_{k_{\mathbf{y}}}, j} \cdot \mathbf{W}_{K}\mathbf{S}(\boldsymbol{e}_{j}-\boldsymbol{\pi}) \cdot (\mathbf{S}\boldsymbol{e}_{L})^{\top}\right] \\
     &\mathbb{E}_{\mathcal{P}_{\text{QA}}^{\text{tr}}}[\partial_{\mathbf{W}_{V}}L_{\mathcal{{B}}}(\boldsymbol{\theta})] = -\mathbb{E}_{\mathcal{P}_{\text{QA}}^{\text{tr}}}\left[ \frac{\mathbf{\Pi}_{(\mathbf{W}_{V}\mathbf{S}\boldsymbol{\pi})^{\perp}}^{\mathbf{u}_{k_{\mathbf{y}}}-\sum_{k \in [7K+K^{\prime}]}\omega_{k}\mathbf{u}_{k}}(\mathbf{S}\boldsymbol{\pi})^{\top}}{\|\mathbf{W}_{V}\mathbf{S}\boldsymbol{\pi}\|}\right]
    \end{aligned}$}
\end{equation}\vspace{-.2\baselineskip}
where $
\mathbf{\Pi}_{\boldsymbol{u}^{\perp}}^{\boldsymbol{v}} = \boldsymbol{v} - \left(\boldsymbol{v}^{\top}\frac{\boldsymbol{u}}{\|\boldsymbol{u}\|}\right) \cdot \frac{\boldsymbol{u}}{\|\boldsymbol{u}\|}$, $\iota_{\mathbf{u}_{k_{\mathbf{y}}}, j}$ is a coefficient defined in Eq.~\eqref{eq:dictionary_vector_weight}, $\boldsymbol{\pi}$ represents the concatenated softmax weights for positions $1, \dots, L-1$ in the sentence, and $\omega_{k}$ denotes the likelihood of the model's output matching the $k$-th dictionary token $\mathbf{u}_{k}$ (See Eq.~\eqref{eq:dictionary_vector_weight}). 
\begin{figure*}[t]
    \centering
    \begin{subfigure}[b]{0.24\linewidth}
        \includegraphics[width=\linewidth]{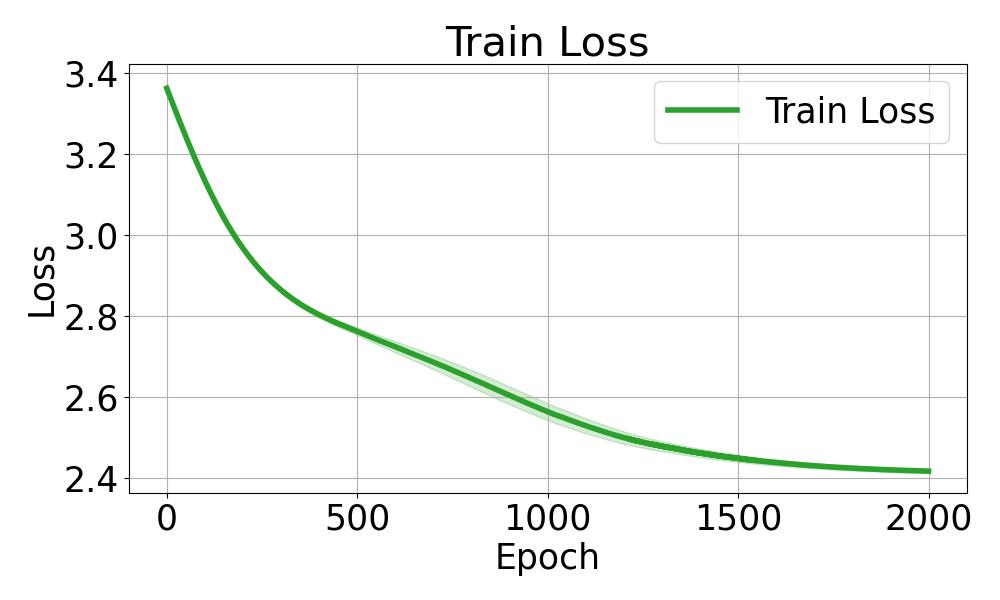}
        \caption{Training loss}
    \end{subfigure}
    \hfill
    \begin{subfigure}[b]{0.24\linewidth}
        \includegraphics[width=\linewidth]{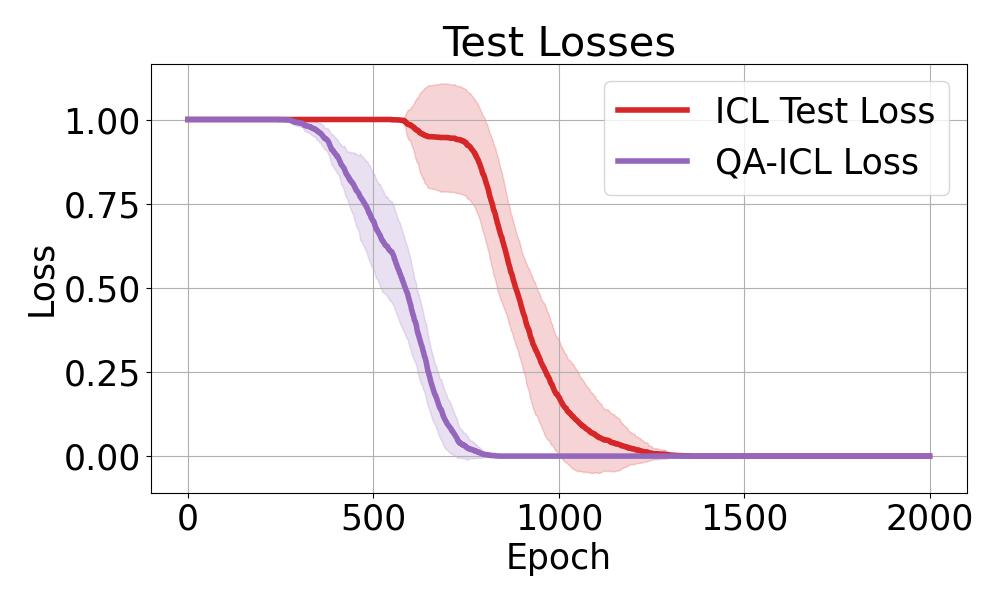}
        \caption{Test losses}
    \end{subfigure}
    \hfill
    \begin{subfigure}[b]{0.24\linewidth}
        \includegraphics[width=\linewidth]{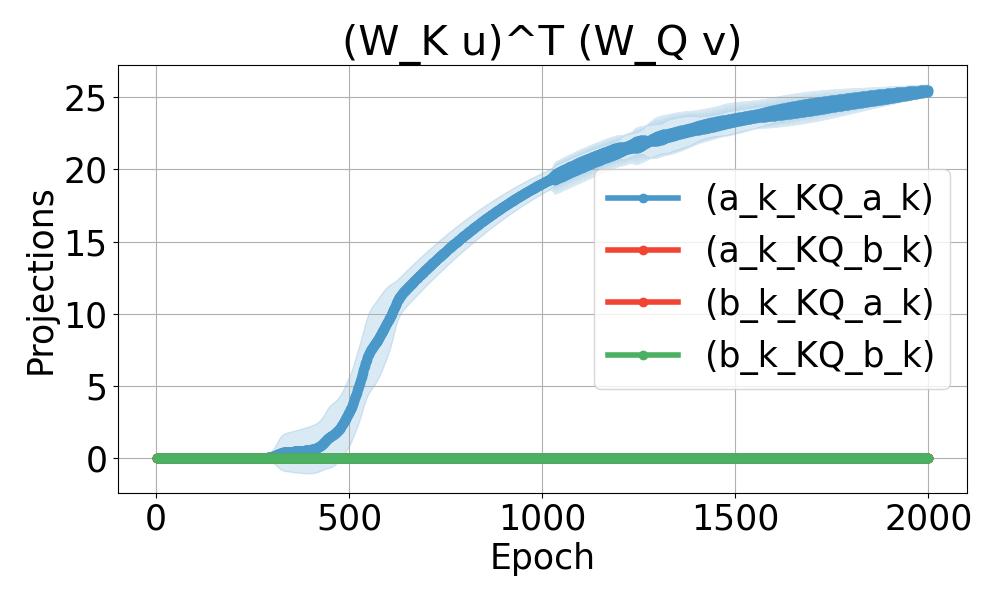}
        \caption{$\boldsymbol{u}^{\top}\mathbf{W}_{K}^{\top}\mathbf{W}_{Q}\boldsymbol{v}$}
    \end{subfigure}
    \hfill
    \begin{subfigure}[b]{0.24\linewidth}
        \includegraphics[width=\linewidth]{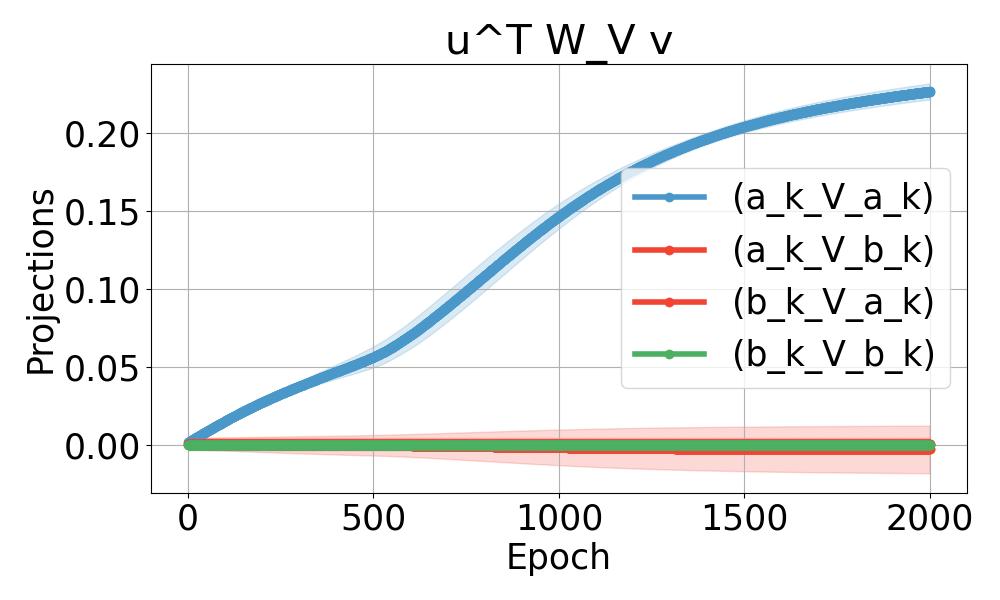}
        \caption{$\boldsymbol{u}^{\top}\mathbf{W}_{V}\boldsymbol{v}$}
    \end{subfigure}
    \caption{Training dynamics over the QA sentence training distribution. Legends are consistent with those in Figure~\ref{fig:ICL_trained_dynamics}. Compared to the ICL-trained model in Figure~\ref{fig:ICL_trained_dynamics}, $\mathbf{W}_{V}$ trained on QA data does not overfit to low-level features $\boldsymbol{b}_{k}$ (see (d)), leading to improved generalization and reduced test errors in (b).}
    \label{fig:QAtrain_imbalanced_M30_ID_10_2loss.jpg}
\end{figure*}
The primary challenge in our analysis stems from the fact that many components in the gradient computation, such as $\iota_{\mathbf{u}_{k_{\mathbf{y}}}, j}$ and $\|\mathbf{W}_{V}\mathbf{S}\boldsymbol{\pi}\|$, exhibit \textit{intricate monotonicity properties} and varying rates of evolution across different phases. This complexity arises inherently from the \textit{non-linear interactions} between the softmax operation, layer normalization, residual connections, and the cross-entropy loss, making it challenging to precisely characterize the dynamics. To address this, we derive separate upper and lower bounds for the discrete gradients of both the attention and MLP components in each phase. Specifically, we introduce a set of \textit{six differential equations} (see Appendix~\ref{sec:flows}) as continuous surrogates, which effectively capture the upper and lower bounds on the evolution rates of the attention and MLP components across different phases.

\vspace{-.2\baselineskip}
\subsection{Phase 1: Linear Growth of MLP and Accelerating Growth of Attention}
\vspace{-.2\baselineskip}
In this phase, we analyze the regime where $\boldsymbol{a}_{k}^{\top}\mathbf{W}_{V}^{(t)}\boldsymbol{a}_{k} = o(\|\mathbf{W}_{V}\mathbf{S}\boldsymbol{\pi}\|)$ and $\|\mathbf{W}_{V}\mathbf{S}\boldsymbol{\pi}\| = \Theta(\sigma_1 d)$. Given the known order of the denominator in the update rule for $\mathbf{W}_V$ (Eq.~\ref{eq:gradients_mainbody}), the update of $\boldsymbol{a}_{k}^{\top}\mathbf{W}_{V}^{(t)}\boldsymbol{a}_{k}$ can be \textit{linearly} bounded by update $a_{t+1} = a_t + b$, where $b$ is constant. The accumulated update is further upper bounded by the integral of its continuous-time counterpart, as characterized in Lemma~\ref{lem:ODE-1}. When $\mathcal{P}^{\mathrm{tr}} \sim \mathcal{P}_{\mathrm{QA}}$, the terms $\lvert\boldsymbol{b}_{k}^{\top}\mathbf{W}_{V}^{(T_1)}\boldsymbol{b}_{k}\rvert$ remain negligible, since $\mathbf{x}^{\mathrm{QA}}$ contains no $\boldsymbol{b}_{k}$. In contrast, under $\mathcal{P}^{\mathrm{tr}} \sim \mathcal{P}_{\mathbf{T}}$ or $\mathcal{P}_{\mathrm{QA}}^{\mathbf{T}}$, co-occurrence asymmetries between demo-pairs and query inputs with respect to some $\pm \boldsymbol{b}_k$ induce nontrivial growth of some $\lvert\boldsymbol{b}_{k}^{\top}\mathbf{W}_{V}^{(T_1)}\boldsymbol{b}_{k}\rvert$. These behaviors are formally summarized below.

\begin{lemma}
Under Condition \ref{con:main body}, during \( t \leq T_{1}\leq t_{2}^{\star} \), where $t_{2}$ is defined in Lemma \ref{lem:evolution_aWa_norm_projection}. Consider $\mathcal{P}^{\text{tr}}$ sampled from either $\mathcal{P}_{\text{QA}}$, $\mathcal{P}_{\mathbf{T}}$ or $\mathcal{P}_{\text{QA}}^{\mathbf{T}}$. Then for \(\forall k \in [K]\) and $n \in \mathcal{N}_{k}$, there exists some $C_{1-2}, C_{1-2}^{\prime} ,\hat{C}_{1-2}>0$ such that
\vspace{-.2\baselineskip}
\begin{equation}
    \begin{aligned}
         &\resizebox{0.42\textwidth}{!}{$\boldsymbol{a}_{k}^{\top}\mathbf{W}_{V}^{(t)}\boldsymbol{a}_{k}\geq   \dfrac{C_{1}  \eta q_{V} t}{ \sigma_{1}d^{1/2} MK} - \sqrt{2\log(\frac{8(2K+K^{\prime})^{2}}{\delta})}\sigma_{1}$},\\
         &\resizebox{0.40\textwidth}{!}{$\boldsymbol{a}_{k}^{\top}\mathbf{W}_{V}^{(t)}\boldsymbol{a}_{k} \leq  \dfrac{C_{2}\eta q_{V} t}{\sigma_{1}d^{1/2} K} + \sqrt{2\log(\frac{8(2K+K^{\prime})^{2}}{\delta})}\sigma_{1}$}.
    \end{aligned}
\label{eq:bounds_of_aVa_1_mainbody}\end{equation}\vspace{-.2\baselineskip}
Furthermore, when $\mathcal{P}^{\text{tr}}\sim\mathcal{P}_{\mathbf{T}}$ or $\mathcal{P}^{\text{tr}}\sim\mathcal{P}_{\text{QA}}^{\mathbf{T}}$, for $\forall y \in \{\pm 1\}$, there exists $k_{y} \in [K]$ and some constant $\underline{C}_{4},\underline{C}_{5}>0$ such that\vspace{-.2\baselineskip}
    \begin{equation}
        \begin{aligned}
        &\resizebox{0.42\textwidth}{!}{$\lvert\boldsymbol{b}_{{k}_{+}}^{\top}\mathbf{W}_{V}^{(T_{1})}\boldsymbol{b}_{{k}_{+}}\rvert \geq \dfrac{\underline{C}_{4} \sigma_{1} d^{1/2}}{ M} - \sqrt{2\log(\frac{8(2K+K^{\prime})^{2}}{\delta})}\sigma_{1}$},\\
        &\resizebox{0.42\textwidth}{!}{$\lvert\boldsymbol{b}_{{k}_{-}}^{\top}\mathbf{W}_{V}^{(T_{1})}\boldsymbol{b}_{{k}_{-}}\rvert \geq \dfrac{\underline{C}_{5} \sigma_{1} d^{1/2}}{ M} - \sqrt{2\log(\frac{8(2K+K^{\prime})^{2}}{\delta})}\sigma_{1}$}.\\
       \end{aligned}
\label{eq:upperbound_first_phase_ICLTr_main}\end{equation}\vspace{-.2\baselineskip}
\label{lem:evolution_aWa_norm_projection_mainbody}\end{lemma}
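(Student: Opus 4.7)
The plan is to translate the update into a controlled discrete recursion for the scalar quantities $\boldsymbol{a}_k^\top \mathbf{W}_V^{(t)} \boldsymbol{a}_k$ and $\boldsymbol{b}_k^\top \mathbf{W}_V^{(t)} \boldsymbol{b}_k$ using the gradient formula in Eq.~(\ref{eq:gradients_mainbody}), and then to bound the per-step increments above and below by expressions that integrate cleanly into the claimed linear bounds over the Phase~1 horizon $t \leq T_1$. Taking the bilinear form $\boldsymbol{a}_k^\top(\cdot)\boldsymbol{a}_k$ on the $\mathbf{W}_V$-gradient factorizes the update into three scalars: (i) the attention-weighted presence of $\boldsymbol{a}_k$ in the prefix, $\boldsymbol{a}_k^\top \mathbf{S}\boldsymbol{\pi}$; (ii) the residual-projected output signal $\boldsymbol{a}_k^\top \mathbf{\Pi}_{(\mathbf{W}_V \mathbf{S}\boldsymbol{\pi})^\perp}^{\mathbf{u}_{k_{\mathbf{y}}} - \sum_k \omega_k \mathbf{u}_k}$; and (iii) the denominator $\|\mathbf{W}_V \mathbf{S}\boldsymbol{\pi}\|$. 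For QA data, softmax attention at initialization distributes mass roughly uniformly over the $M$ prefix tokens, so the position $m_{\mathbf{S}}$ carrying $\boldsymbol{a}_{k_{\mathbf{S}}}$ contributes weight $\Theta(1/M)$ and hence $\boldsymbol{a}_k^\top \mathbf{S}\boldsymbol{\pi} = \Theta(1/M)$ conditional on $k_{\mathbf{S}}=k$. Because the $\omega_k$'s remain near-uniform on the dictionary during Phase~1, the projector preserves a $\Theta(1)$ component of $\mathbf{u}_{k_{\mathbf{y}}}$ along $\boldsymbol{a}_k$ (the removed direction $\mathbf{W}_V \mathbf{S}\boldsymbol{\pi}$ is nearly isotropic Gaussian, so only an $O(1/d)$ fraction of $\boldsymbol{a}_k$ lies in it). The denominator is $\Theta(\sigma_1 d^{1/2})$ by Gaussian concentration on $\mathbf{W}_V^{(0)}$, and by definition $T_1$ is chosen before it drifts by a constant factor.

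Multiplying these estimates, conditioning on $k_{\mathbf{S}}=k$ (probability $1/K$), and telescoping over $t$ steps gives a lower bound of order $\eta q_V t/(\sigma_1 d^{1/2} MK)$, matching the first inequality in Eq.~(\ref{eq:bounds_of_aVa_1_mainbody}). For the upper bound I would use $\|\mathbf{\Pi}^{\boldsymbol{v}}\| \leq \|\boldsymbol{v}\|$ and the fact that the attention weight on any single position cannot exceed $1$, replacing the $1/M$ factor by a universal constant and yielding $\eta q_V t/(\sigma_1 d^{1/2} K)$. The residual initialization bias $\sqrt{2\log(8(2K+K')^2/\delta)}\,\sigma_1$ in both bounds comes from a union bound on $\boldsymbol{a}_k^\top \mathbf{W}_V^{(0)} \boldsymbol{a}_k$ across the $O(K+K')$ relevant direction pairs. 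To make the telescoping rigorous even though the denominator, softmax weights, and $\omega_k$'s all drift mildly during the phase, I would compare the discrete recursion against the two continuous-time envelopes introduced in Appendix~\ref{sec:flows} via the ODE comparison Lemma~\ref{lem:ODE-1}, integrating the upper and lower per-step ODEs independently. The same template applies to all three training distributions, since for the $\boldsymbol{a}_k$-direction the term $\boldsymbol{a}_k^\top \mathbf{S}\boldsymbol{\pi}$ has the same order under each of them.

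The new ingredient is Eq.~(\ref{eq:upperbound_first_phase_ICLTr_main}). I would analyse the analogous scalar $\boldsymbol{b}_k^\top(\mathbf{W}_V^{(t+1)} - \mathbf{W}_V^{(t)})\boldsymbol{b}_k$, whose decisive product is $\boldsymbol{b}_k^\top \mathbf{S}\boldsymbol{\pi}\cdot \boldsymbol{b}_k^\top(\mathbf{u}_{k_{\mathbf{y}}} - \sum_k \omega_k \mathbf{u}_k)$. Under $\mathcal{P}_{\mathbf{T}}$ or $\mathcal{P}_{\text{QA}}^{\mathbf{T}}$, the demo tokens $\mathbf{x}_l,\mathbf{y}_l$ carry $\pm\boldsymbol{b}_{k_{\mathbf{T}}}$ with signs $y_{k_{\mathbf{T}},l}$, so $\boldsymbol{b}_{k_{\mathbf{T}}}^\top \mathbf{S}\boldsymbol{\pi}$ is a signed linear combination of $y_{k_{\mathbf{T}},1},\ldots,y_{k_{\mathbf{T}},J}$ whose empirical cross-correlation with the label sign $y_{k_{\mathbf{T}},J+1}$ has zero population mean but nonzero sample-level value. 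A Hoeffding bound over $N$ training prompts combined with a maximum over the $K$ task indices produces, with probability $\geq 1-\delta$, indices $k_+$ and $k_-$ (one for each label sign) such that the accumulated empirical sum reaches $\Omega(\sigma_1 d^{1/2}/M)$ by $t = T_1$, yielding Eq.~(\ref{eq:upperbound_first_phase_ICLTr_main}). Under $\mathcal{P}_{\text{QA}}$ by contrast, the prefix $\mathbf{x}^{\text{QA}}$ carries no $\boldsymbol{b}_k$ components, so the product vanishes in expectation and only the $\sqrt{2\log(\cdot)}\,\sigma_1$ initialization fluctuation survives, explaining why the ICL-specific lower bound is absent from the QA-training branch.

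The main obstacle is the inductive coupling between $\|\mathbf{W}_V^{(t)}\mathbf{S}\boldsymbol{\pi}\|$, the softmax weights $\boldsymbol{\pi}$ (driven by $\mathbf{W}_K,\mathbf{W}_Q$), and the output weights $\omega_k$ (driven by the full model through the loss). All three evolve simultaneously, and the elementary per-step bound only holds when each of them stays inside its designated regime. I would close the induction jointly with Lemma~\ref{lem:evolution_aWa_norm_projection} on the attention side, using the phase boundary $T_1 \leq t_2^\star$ precisely so that $\|\mathbf{W}_V \mathbf{S}\boldsymbol{\pi}\|$ stays at order $\sigma_1 d^{1/2}$ throughout. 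Because along the orthogonal directions $\boldsymbol{a}_k$ and $\boldsymbol{b}_k$ the scalar updates decouple at first order, the six ODE surrogates in Appendix~\ref{sec:flows} can then be integrated independently to produce the upper and lower envelopes claimed in Eqs.~(\ref{eq:bounds_of_aVa_1_mainbody}) and~(\ref{eq:upperbound_first_phase_ICLTr_main}).
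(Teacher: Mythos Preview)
Your treatment of the $\boldsymbol{a}_k^\top\mathbf{W}_V^{(t)}\boldsymbol{a}_k$ bounds matches the paper's argument essentially step for step: both compute the per-step increment from Lemma~\ref{lem: gradients of QKV}, pin the denominator at $\Theta(\sigma_1 d^{1/2})$ via Lemma~\ref{lem:firstphase_natural_bound}, use the Phase-1 attention-weight range $\pi_{m_{\mathbf{S}}}\in[\Theta(1/M),\Theta(1)]$ to get the $1/M$ gap between lower and upper rates, absorb the task-conditioning factor $1/K$ from Lemma~\ref{lem: enough data concept k}, apply the linear ODE envelope Lemma~\ref{lem:ODE-1}, and add the initialization offset from Lemma~\ref{lem:initialization}. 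Your inductive-coupling remark is also how the paper proceeds, closing the loop jointly with Lemma~\ref{lem:induction_first_phase}.

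For Eq.~(\ref{eq:upperbound_first_phase_ICLTr_main}) your mechanism differs slightly from the paper's. You attribute the growth to the \emph{within-prompt} empirical cross-correlation between the demo signs $y_{k_{\mathbf{T}},1},\dots,y_{k_{\mathbf{T}},J}$ and the query sign $y_{k_{\mathbf{T}},J+1}$. The paper instead invokes Lemma~\ref{lem: enough data concept k}, which guarantees that for each $y\in\{\pm1\}$ some task $k_y$ has a \emph{sample-count} imbalance $\lvert\,|\mathcal{N}_{k_y}^{y}|-N/(2K)\,\rvert\geq cN/K$, and uses this net excess to produce a signed per-step increment of the same order as the $\boldsymbol{a}_k$ update (Eq.~(\ref{eq:gradients_of_bWa_ICLTr})). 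Both routes rest on the same anti-concentration idea---take a maximum over $K$ independent tasks so that at least one exhibits a constant-fraction deviation---but the specific random variable whose deviation is exploited is different. One caution: what you call a ``Hoeffding bound'' here is really the anti-concentration direction (lower-bounding the maximum), and the paper's proof of Lemma~\ref{lem: enough data concept k} makes this explicit by upper-bounding $\mathbb{P}(\cap_k A_k)\leq c^K$ rather than applying a tail upper bound.
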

\vspace{-3mm}
Additionally, Lemma~\ref{lem:induction_first_phase} ensures that other types of projections remain constrained near their initialization scale. Also, the update of attention projections along key directions satisfies the recurrence order $f_{t+1} = f_t + a(t - t_3) f_t$, where $a$ is a constant. The accumulated outcome of such updates can be upper bounded by an exponential function and lower bounded by a quadratic function, as demonstrated via integration of its continuous-time analogue in Lemma~\ref{lem:ODE-4}. The precise statement is provided below. 
\vspace{-.2\baselineskip}

\begin{lemma}
    Under Condition \ref{con:main body}, suppose Eq.~(\ref{eq:induction_first_phase}) holds at iteration \( t \leq T_{1} \). Consider $\mathcal{P}^{\text{tr}}$ sampled from either $\mathcal{P}_{\text{QA}}$, $\mathcal{P}_{\mathbf{T}}$ or $\mathcal{P}_{\text{QA}}^{\mathbf{T}}$. Then for $\forall k \in [K]$, there exist some positive constants $C_{3}, C_{4}$, during $[t_{4}, T_{1}]$, where $t_{4}$ is defined in Lemma \ref{lem:update_attn_first_phase}, it holds that
    \vspace{-.2\baselineskip}
        \[
        \resizebox{0.48\textwidth}{!}{$\begin{aligned}
           &(\mathbf{W}_{Q}^{(t)}\boldsymbol{a}_{k} )^{\top}(\mathbf{W}_{K}^{(t)}\boldsymbol{a}_{k})\geq  \dfrac{C_{3} \eta^{2} q_{V} \sigma_{0}^{2} [(t-t_{4} - 1)^{2} - 1]}{\sigma_{1}^{2}  M^2K^{2} }\\
           &\phantom{-4 \log(16(2K+K^{\prime}}-4\sqrt{ \log(16(2K+K^{\prime})^{2}/\delta)} \sigma_{0}^{2} d^{1/2},\\
            &(\mathbf{W}_{Q}^{(t)}\boldsymbol{a}_{k} )^{\top}(\mathbf{W}_{K}^{(t)}\boldsymbol{a}_{k}) \leq C_{4}( (4\sqrt{ \log(16(2K+K^{\prime})^{2}/\delta)} \sigma_{0}^{2} d^{1/2}\\
            &\phantom{-4 \log(16(2K+}+\frac{3\sigma_{0}^{2} d}{2})+ \frac{3\sigma_{0}^{2} d}{2} \exp(\dfrac{\hat{C_{2}} C_{2}\eta^{2} q_{V}  t^{2}}{\sigma_{1}^{2} d K^{2}} ) ).
        \end{aligned}$}
        \]\vspace{-.2\baselineskip}
\label{lem:update_attn_first_phase_mainbody}\end{lemma}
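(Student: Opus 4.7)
\textbf{Proof proposal for Lemma \ref{lem:update_attn_first_phase_mainbody}.} The plan is to set up a one-step recurrence for the scalar quantity $\alpha_{k}^{(t)} := (\mathbf{W}_{Q}^{(t)}\boldsymbol{a}_{k})^{\top}(\mathbf{W}_{K}^{(t)}\boldsymbol{a}_{k})$ and then bound it above and below by the solutions of continuous ODEs using Lemma~\ref{lem:ODE-4}. Writing $\mathbf{W}_{Q}^{(t+1)}\boldsymbol{a}_{k} = \mathbf{W}_{Q}^{(t)}\boldsymbol{a}_{k} - \eta\,(\nabla_{\mathbf{W}_{Q}}L)\boldsymbol{a}_{k}$ and similarly for $\mathbf{W}_{K}$, and expanding $\alpha_{k}^{(t+1)} - \alpha_{k}^{(t)}$, I would plug in the gradient formulas from Eq.~\eqref{eq:gradients_mainbody}. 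The update has two kinds of contributions: (i) a ``driving'' term in which $\mathbf{S}\boldsymbol{e}_{L}$ and $\mathbf{S}(\boldsymbol{e}_{j}-\boldsymbol{\pi})$ both project onto $\boldsymbol{a}_{k}$ (this happens when the last-column token and the $m_{\mathbf{S}}$-th QA token carry the same task anchor), which produces a term proportional to $\iota_{\mathbf{u}_{k_{\mathbf{y}}},m_{\mathbf{S}}}\cdot\|\mathbf{W}_{Q}^{(t)}\boldsymbol{a}_{k}\|^{2}$, and (ii) cross terms involving projections onto $\{\boldsymbol{a}_{k'}\}_{k'\neq k}$, $\{\boldsymbol{b}_{k'}\}$, $\{\boldsymbol{\nu}_{m}\}$, and the noise subspace.

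Next, I would estimate the coefficient $\iota_{\mathbf{u}_{k_{\mathbf{y}}},m_{\mathbf{S}}}$ using Lemma~\ref{lem:evolution_aWa_norm_projection_mainbody}: since $\boldsymbol{a}_{k}^{\top}\mathbf{W}_{V}^{(t)}\boldsymbol{a}_{k}$ grows linearly in $t$ while $\|\mathbf{W}_{V}\mathbf{S}\boldsymbol{\pi}\|=\Theta(\sigma_{1}d^{1/2})$ in Phase~1, the effective driving coefficient scales like $c\cdot(t-t_{4})$ for an explicit constant $c=\Theta(\eta q_{V}/(\sigma_{1}d^{1/2}MK))$ once $t$ exceeds the phase-start $t_{4}$ introduced in Lemma~\ref{lem:update_attn_first_phase}. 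Multiplying by the extra factor of $\eta$ from the $\mathbf{W}_{K},\mathbf{W}_{Q}$ updates and by $\|\mathbf{W}_{Q}^{(t)}\boldsymbol{a}_{k}\|^{2}=\Theta(\sigma_{0}^{2}d)$ at the leading order, I obtain a discrete recurrence of the shape
\begin{equation}
\alpha_{k}^{(t+1)} - \alpha_{k}^{(t)} \;=\; \frac{\hat{C}_{2}\,\eta^{2} q_{V}}{\sigma_{1}^{2}dK^{2}}\,(t-t_{4})\,\alpha_{k}^{(t)} \;+\; R_{k}^{(t)},
\end{equation}
where $R_{k}^{(t)}$ collects all cross terms and the additive driving contributions from $\sigma_{0}^{2}d$-scale initial projections. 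The upper bound then follows by dropping $R_{k}^{(t)}$ into a positive multiple of $\alpha_{k}^{(t)}$ and comparing with the ODE $f'(t)=a(t-t_{4})f(t)$, yielding $\alpha_{k}^{(t)}\lesssim \alpha_{k}^{(t_{4})}\exp(a(t-t_{4})^{2}/2)$. The lower bound is obtained by first showing that in a subphase where $\alpha_{k}^{(t)}\approx\alpha_{k}^{(t_{4})}$, the multiplicative factor is still small enough that the discrete increment acts essentially additively: integrating $a(s-t_{4})\alpha_{k}^{(t_{4})}\,ds$ from $t_{4}$ to $t$ gives a $(t-t_{4})^{2}$ lower term, which dominates the Gaussian-initialization error $\Theta(\sigma_{0}^{2}d^{1/2}\sqrt{\log(1/\delta)})$ after a controllable warmup.

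The main obstacle, and the step that warrants the most care, is bounding the cross-term residual $R_{k}^{(t)}$. Each $\mathbf{S}\boldsymbol{e}_{L}$ decomposes into $x_{a}\boldsymbol{a}_{k}+\boldsymbol{b}_{k}+\boldsymbol{\xi}_{\mathbf{x}}+\text{(other polysemy concepts)}$, and each key projection $\mathbf{S}(\boldsymbol{e}_{j}-\boldsymbol{\pi})$ contains analogous off-$\boldsymbol{a}_{k}$ components together with averages over $\boldsymbol{\pi}$. I plan to exploit: the induction hypothesis Eq.~\eqref{eq:induction_first_phase} (from Lemma~\ref{lem:induction_first_phase}) keeping all non-task projections near initialization scale $\sigma_{0}^{2}d^{1/2}$; the orthogonality of $\{\boldsymbol{a}_{k}\},\{\boldsymbol{b}_{k}\},\{\boldsymbol{\nu}_{m}\}$ to annihilate most inner products in expectation; standard Gaussian concentration for the noise-subspace contributions across $N$ samples; and the fact that $\iota_{\mathbf{u}_{k'},j}$ for $k'\neq k_{\mathbf{y}}$ is $O(1/K')$ by the softmax normalization over the $7K+K'$ dictionary tokens. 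Chaining these bounds shows $|R_{k}^{(t)}|$ is lower order than the driving term, which closes both the upper and the lower ODE comparisons and yields the stated $\exp(\cdot)$ upper bound and quadratic lower bound, with the Gaussian initialization offset $-4\sqrt{\log(16(2K+K')^{2}/\delta)}\sigma_{0}^{2}d^{1/2}$ appearing exactly as in Lemma~\ref{lem:evolution_aWa_norm_projection_mainbody}.
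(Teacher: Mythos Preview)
Your overall architecture---expand the one-step difference, isolate a driving term, apply Lemma~\ref{lem:ODE-4}---matches the paper. But there is a genuine gap in the recurrence you write down. You claim
\[
\alpha_{k}^{(t+1)} - \alpha_{k}^{(t)} \;=\; \frac{\hat{C}_{2}\,\eta^{2} q_{V}}{\sigma_{1}^{2}dK^{2}}\,(t-t_{4})\,\alpha_{k}^{(t)} \;+\; R_{k}^{(t)},
\]
but the increment of $\alpha_k^{(t)}$ is \emph{not} proportional to $\alpha_k^{(t)}$ itself. Expanding $(\mathbf{W}_{K}^{(t)}\boldsymbol{a}_{k})^{\top}(\mathbf{W}_{Q}^{(t+1)}-\mathbf{W}_{Q}^{(t)})\boldsymbol{a}_{k}$ via Eq.~\eqref{eq:gradients_mainbody}, the dominant piece is $(\mathbf{W}_{K}^{(t)}\boldsymbol{a}_{k})^{\top}\mathbf{W}_{K}^{(t)}\mathbf{S}(\boldsymbol{e}_{m_{\mathbf{S}}}-\boldsymbol{\pi})\approx (1-\pi_{m_{\mathbf{S}}})\|\mathbf{W}_{K}^{(t)}\boldsymbol{a}_{k}\|^{2}$, and the symmetric term gives $\|\mathbf{W}_{Q}^{(t)}\boldsymbol{a}_{k}\|^{2}$. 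So the driver is
\[
\beta_{k}^{(t)}\;:=\;\|\mathbf{W}_{Q}^{(t)}\boldsymbol{a}_{k}\|^{2}+\|\mathbf{W}_{K}^{(t)}\boldsymbol{a}_{k}\|^{2},
\]
not $\alpha_{k}^{(t)}$. You even correctly identify $\|\mathbf{W}_{Q}^{(t)}\boldsymbol{a}_{k}\|^{2}=\Theta(\sigma_{0}^{2}d)$ in the sentence immediately before, then silently replace it by $\alpha_{k}^{(t)}$ in the displayed recurrence. These quantities are not comparable at $t_{4}$: by Lemma~\ref{lem:initialization of qk}, $|\alpha_{k}^{(t_{4})}|\le 4\sqrt{\log(16(2K+K^{\prime})^{2}/\delta)}\,\sigma_{0}^{2}d^{1/2}$ (and may be negative), whereas $\beta_{k}^{(t_{4})}=\Theta(\sigma_{0}^{2}d)$.

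This breaks both bounds as you have written them. For the lower bound, your ``additive subphase'' argument integrates $a(s-t_{4})\alpha_{k}^{(t_{4})}$, which yields a term of size $O(\sigma_{0}^{2}d^{1/2})\cdot(t-t_{4})^{2}$ in absolute value---off by a $\sqrt{d}$ factor from the stated $\sigma_{0}^{2}$-scale coefficient, and with the wrong sign if $\alpha_{k}^{(t_{4})}<0$. For the upper bound, Lemma~\ref{lem:ODE-4} requires the recurrence to be self-referential with positive initial value; your $\alpha_{k}$ does not satisfy this closed form. The paper fixes both issues by tracking $\beta_{k}^{(t)}$ instead: one computes (Eq.~\eqref{eq:evolve_of_QQ_KK}) that $\beta_{k}^{(t+1)}-\beta_{k}^{(t)}$ has the \emph{same} driving coefficient but multiplied by $\beta_{k}^{(t)}$, so $\beta_{k}$ genuinely satisfies $f_{t+1}=f_{t}+a(t-t_{4})f_{t}$ with $f_{t_{4}}\ge\sigma_{0}^{2}d/4$. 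Lemma~\ref{lem:ODE-4} then gives both the exponential upper bound and the quadratic lower bound on $\beta_{k}$, and the relation $\alpha_{k}^{(t+1)}-\alpha_{k}^{(t)}=\Theta\bigl(\tfrac{1}{2}(\beta_{k}^{(t+1)}-\beta_{k}^{(t)})\bigr)$ (Eq.~\eqref{eq:relation_aKQa_aKKa_aQQa}) transfers these to $\alpha_{k}$ up to the initialization offset $-4\sqrt{\log(16(2K+K^{\prime})^{2}/\delta)}\,\sigma_{0}^{2}d^{1/2}$. Your residual-control strategy for $R_{k}^{(t)}$ is fine and matches the paper's; the missing piece is solely the switch from $\alpha_{k}$ to $\beta_{k}$ as the quantity on which the ODE comparison runs.
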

\vspace{-.2\baselineskip}

\subsection{Phase 2: Decelerating Growth of MLP and Attention}\vspace{-.2\baselineskip}
In this phase, $\boldsymbol{a}_{k}^{\top}\mathbf{W}_{V}^{(t)}\boldsymbol{a}_{k}$ determines the order of $\|\mathbf{W}_{V}\mathbf{S}\boldsymbol{\pi}\|$, which, as the denominator in the gradient, leads to a deceleration in the updates of both $\boldsymbol{a}_{k}^{\top}\mathbf{W}_{V}^{(t)}\boldsymbol{a}_{k}$ and the attention projections. As in the previous phase, there is nontrivial growth in some $\lvert\boldsymbol{b}_{k}^{\top}\mathbf{W}_{V}^{(T_1)}\boldsymbol{b}_{k}\rvert$ when trained on ICL-type data (i.e., $\mathcal{P}^{\mathrm{tr}} \sim \mathcal{P}_{\mathbf{T}}$ or $\mathcal{P}_{\mathrm{QA}}^{\mathbf{T}}$) due to co-occurrence asymmetries. The MLP dynamics slow to a sublinear rate, with their analogue captured in Lemma~\ref{lem:ODE-2}. In contrast, the evolution of the attention projections is bounded between $e^{-1/t}$ and $e^t$ as in Lemma~\ref{lem:ODE-5} and Lemma~\ref{lem:ODE-6}. The formal statement is as below.
\vspace{-.2\baselineskip}
\begin{lemma}
    Under Condition \ref{con:main body}, during $t_{1}\leq t\leq T^{\star}=\Omega(\eta^{-1} q_{V}^{-1}\sigma_{1}^{2}KMd^{2}\log(\frac{1}{\epsilon}))$ where $t_{1}\leq T_{1}$. Consider $\mathcal{P}^{\text{tr}}$ sampled from either $\mathcal{P}_{\text{QA}}$, $\mathcal{P}_{\mathbf{T}}$ or $\mathcal{P}_{\text{QA}}^{\mathbf{T}}$. Then for $\forall k \in [K]$, there exists some $C_{7-10}$ such that
    \[
        \resizebox{0.48\textwidth}{!}{$\begin{aligned}
            & \boldsymbol{a}_{k}^{\top}\mathbf{W}_{V}^{(t)}\boldsymbol{a}_{k} \geq  \sqrt{C_{7} \frac{\eta q_{V} (t-t_{1})}{MK} + (\boldsymbol{a}_{k}^{\top}\mathbf{W}_{V}^{(t_{1})}\boldsymbol{a}_{k})^{2} },\\
            & \boldsymbol{a}_{k}^{\top}\mathbf{W}_{V}^{(t)}\boldsymbol{a}_{k}\leq \sqrt{ C_{8} \frac{\eta q_{V} (t-t_{1})}{  K} + (\boldsymbol{a}_{k}^{\top}\mathbf{W}_{V}^{(t_{1})}\boldsymbol{a}_{k})^{2}} + \frac{C_{8} \eta q_{V}}{K(\boldsymbol{a}_{k}^{\top}\mathbf{W}_{V}^{(t_{1})}\boldsymbol{a}_{k})},\\
             &(\mathbf{W}_{Q}^{(t)}\boldsymbol{a}_{\hat{k}} )^{\top}(\mathbf{W}_{K}^{(t)}\boldsymbol{a}_{\hat{k}})   \leq C_{9} \sigma_{0}^{2} d e^{ q_{V}^{-1}\sigma_{1}^{2}d}[\frac{\eta q_{V} t}{\sigma_{1}^{2}dKM} ]^{\frac{M}{16q_{V}}},\\
            &(\mathbf{W}_{Q}^{(t)}\boldsymbol{a}_{\hat{k}} )^{\top}(\mathbf{W}_{K}^{(t)}\boldsymbol{a}_{\hat{k}}) \geq C_{10} \frac{\sigma_{0}^{2}\sigma_{1}^{2}d^{2}}{M^2 q_{V}}e^{2 q_{V}^{-1} K\triangle^{2}(1-\triangle)^{2}(\frac{-1 }{\frac{\eta q_{V}t}{2\sigma_{1}^{2}dK} + 1}+  1 )} \\
            &\phantom{(\mathbf{W}_{Q}^{(t)}\boldsymbol{a}_{\hat{k}} )^{\top}(\mathbf{W}_{K}^{(t)}\boldsymbol{a}_{\hat{k}}}-\frac{C_{3}\eta^{2} q_{V} \sigma_{0}^{2}}{\sigma_{1}^{2}  M^2K^{2} } -4C_{3}\sqrt{ \log(16(2K+K^{\prime})^{2}/\delta)} \sigma_{0}^{2} d^{1/2},
        \end{aligned}$}
    \]
    where $\triangle$ is defined in Lemma \ref{lem:WQK_destiny}. 
    Furthermore, when $\mathcal{P}^{\text{tr}}\sim\mathcal{P}_{\mathbf{T}}$ or $\mathcal{P}^{\text{tr}}\sim\mathcal{P}_{\text{QA}}^{\mathbf{T}}$, for $\forall y \in \{\pm 1\}$, there exists $k_{y} \in [K]$, for $\forall k\in[K]$ we have\vspace{-.2\baselineskip}
    \begin{equation}
        \begin{aligned}
        &\resizebox{0.38\textwidth}{!}{$\lvert\boldsymbol{b}_{{k}_{+}}^{\top}\mathbf{W}_{V}^{(t)}\boldsymbol{b}_{{k}}\rvert,\ \lvert\boldsymbol{b}_{{k}_{-}}^{\top}\mathbf{W}_{V}^{(t)}\boldsymbol{b}_{{k}_{-}}\rvert = \Theta(\boldsymbol{a}_{k}^{\top}\mathbf{W}_{V}^{(t)}\boldsymbol{a}_{k})$}.
       \end{aligned}
\label{eq:upperbound_first_phase_ICLTr_main_mainbody}\end{equation}\vspace{-.5\baselineskip}
\label{lem:WQK_destiny_mainbody}\end{lemma}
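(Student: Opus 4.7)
My plan is to start from the phase-1 landing conditions and track the coupled dynamics of $\boldsymbol{a}_k^{\top} \mathbf{W}_V^{(t)} \boldsymbol{a}_k$ and $(\mathbf{W}_Q^{(t)}\boldsymbol{a}_{\hat k})^{\top}(\mathbf{W}_K^{(t)}\boldsymbol{a}_{\hat k})$ by passing from the discrete GD recurrences in Eq.~\eqref{eq:gradients_mainbody} to continuous-time surrogates, then invoking the ODE comparison estimates of Lemmas~\ref{lem:ODE-2}, \ref{lem:ODE-5}, and \ref{lem:ODE-6}. The entry point is that, by Lemma~\ref{lem:evolution_aWa_norm_projection_mainbody}, at $t=t_{1}$ the signal $\boldsymbol{a}_k^{\top}\mathbf{W}_V^{(t_{1})}\boldsymbol{a}_k$ already dominates the $\sigma_{1} d^{1/2}$ initialization floor, so the denominator $\|\mathbf{W}_V \mathbf{S}\boldsymbol{\pi}\|$ in the $\mathbf{W}_V$ gradient is from this point on controlled by this very quantity, up to the softmax concentration on the task-carrying column.

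For the $\mathbf{W}_V$ bounds, I would sandwich the $\mathbf{W}_V$ update in Eq.~\eqref{eq:gradients_mainbody} by $\boldsymbol{a}_k$ on both sides, keeping only the leading contribution: once the softmax on $\mathbf{x}^{\mathrm{QA}}$ has saturated on the relation position, the projector $\mathbf{\Pi}_{(\mathbf{W}_V\mathbf{S}\boldsymbol{\pi})^{\perp}}$ delivers a numerator of constant order along $\boldsymbol{a}_k$, while averaging $\iota_{\mathbf{u}_{k_{\mathbf{y}}}, j}$ over the $\operatorname{Unif}[K]$ sampling of the task concept injects the $1/K$ factor. This produces a discrete recursion of the form $a_{t+1}-a_{t} \asymp \eta q_{V}/(K a_{t})$ for $a_{t}:=\boldsymbol{a}_k^{\top}\mathbf{W}_V^{(t)}\boldsymbol{a}_k$. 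Squaring gives $a_{t+1}^{2}-a_{t}^{2}\asymp 2\eta q_{V}/K$ plus a per-step residual of order $\eta^{2}q_{V}^{2}/(K^{2}a_{t}^{2})$; Lemma~\ref{lem:ODE-2} then turns the upper and lower envelopes of this telescoping sum into the stated $\sqrt{C\eta q_{V}(t-t_{1})/K + a_{t_{1}}^{2}}$ bounds, with the additive correction $C_{8}\eta q_{V}/(K a_{t_{1}})$ absorbing the one-step discrete overshoot at the start of the phase.

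For the attention projections, I would use that in phase~2 the softmax weight on the relation column is already within $\triangle$ of its limit (the parameter $\triangle$ is set by Lemma~\ref{lem:WQK_destiny}), so the $\mathbf{W}_{K}, \mathbf{W}_{Q}$ gradients in Eq.~\eqref{eq:gradients_mainbody} act rank-one along $\boldsymbol{a}_{\hat k}$ with magnitude proportional to $\triangle(1-\triangle)/a_{t}$. Combined with $a_{t}\asymp\sqrt{\eta q_{V}(t-t_{1})/K}$ from the previous step, this gives a multiplicative recursion $f_{t+1}=f_{t}\bigl(1+c(t+t_{0})^{-1}\bigr)$ for $f_{t}:=(\mathbf{W}_{Q}^{(t)}\boldsymbol{a}_{\hat k})^{\top}(\mathbf{W}_{K}^{(t)}\boldsymbol{a}_{\hat k})$. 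Lemma~\ref{lem:ODE-5} turns this into the polynomial upper envelope $\bigl(\eta q_{V}t/(\sigma_{1}^{2}dKM)\bigr)^{M/(16 q_{V})}$, while Lemma~\ref{lem:ODE-6} supplies the matching lower envelope whose exponent integrates to $2 q_{V}^{-1} K\triangle^{2}(1-\triangle)^{2}\bigl(1 - 1/(\eta q_{V}t/(2\sigma_{1}^{2}dK)+1)\bigr)$; the $\sigma_{0}^{2}d^{1/2}$-scale additive corrections are inherited from the Phase~1 seeds in Lemma~\ref{lem:update_attn_first_phase_mainbody}. Finally, for Eq.~\eqref{eq:upperbound_first_phase_ICLTr_main_mainbody} under ICL-type training, Eq.~\eqref{eq:upperbound_first_phase_ICLTr_main} seeds $|\boldsymbol{b}_{k_{\pm}}^{\top}\mathbf{W}_V^{(T_{1})}\boldsymbol{b}_{k_{\pm}}|$ at the same $\Theta(\sigma_{1}d^{1/2}/M)$ scale as the $\boldsymbol{a}_k$ signal, and because both axes share the same denominator $\|\mathbf{W}_V \mathbf{S}\boldsymbol{\pi}\|$, the same ODE comparison forces the $\Theta$-equivalence.

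The hardest step, and where I would invest most of the care, is closing the coupling between the softmax concentration $1-\triangle$ and the MLP denominator $\|\mathbf{W}_V\mathbf{S}\boldsymbol{\pi}\|$ throughout $[t_{1}, T^{\star}]$: the softmax saturation is not a frozen constant but itself evolves with the attention projection, and $\|\mathbf{W}_V\mathbf{S}\boldsymbol{\pi}\|$ mixes signal with leaked low-level and noise components. I would close this by a bootstrap on the window $[t_{1}, T^{\star}]$: first posit the envelopes above as an induction hypothesis, verify that they imply both $\|\mathbf{W}_V\mathbf{S}\boldsymbol{\pi}\| = \Theta(a_{t})$ and $\triangle(1-\triangle) = \Theta(1)$ so that the constants $C_{7}$--$C_{10}$ do not degenerate, and then show the resulting GD step stays inside the envelopes, thereby closing the induction up to the stopping time $T^{\star} = \Omega(\eta^{-1} q_{V}^{-1}\sigma_{1}^{2}KMd^{2}\log(1/\varepsilon))$.
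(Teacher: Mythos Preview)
Your approach is essentially the paper's: reduce the $\mathbf{W}_V$ update to $a_{t+1}-a_t \asymp \eta q_V/(Ka_t)$ and invoke Lemma~\ref{lem:ODE-2}, then feed the resulting $a_t^2 \asymp t$ scaling into the attention recursion and apply Lemmas~\ref{lem:ODE-5}--\ref{lem:ODE-6}. Two technical points deserve tightening.

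First, your claim that the $\mathbf{W}_Q,\mathbf{W}_K$ gradients have magnitude $\propto \triangle(1-\triangle)/a_t$ is off by one power of $a_t$. In Phase~2 the layer-norm projection factor $\bigl(1-\tfrac{(\mathbf{W}_V\mathbf{S}\boldsymbol{\pi})^\top \mathbf{W}_V\mathbf{S}\pi_j\boldsymbol{e}_j}{\|\mathbf{W}_V\mathbf{S}\boldsymbol{\pi}\|^2}\bigr)$ itself decays like $\Theta\bigl(\pi(1-\pi)\sigma_1^2 d / a_t^2\bigr)$, because the cross terms $(\mathbf{W}_V\mathbf{S}\pi_m\boldsymbol{e}_m)^\top(\mathbf{W}_V\mathbf{S}\pi_{m_{\mathbf{S}}}\boldsymbol{e}_{m_{\mathbf{S}}})$ for $m\neq m_{\mathbf{S}}$ remain at initialization scale while the norm has grown to $\Theta(a_t)$. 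Combined with the leading factor $\boldsymbol{a}_k^\top\mathbf{W}_V\mathbf{S}\pi_{m_{\mathbf{S}}}\boldsymbol{e}_{m_{\mathbf{S}}}/\|\mathbf{W}_V\mathbf{S}\boldsymbol{\pi}\|=\Theta(\pi)$ and the $(1-\pi)$ from $(\boldsymbol{e}_{m_{\mathbf{S}}}-\boldsymbol{\pi})^\top\boldsymbol{a}_k$, the update of $f_t$ is $\Theta\bigl(\eta\,\pi^2(1-\pi)^2/(K a_t^2)\bigr)\cdot(\|\mathbf{W}_Q\boldsymbol{a}_k\|^2+\|\mathbf{W}_K\boldsymbol{a}_k\|^2)$. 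Your target recursion $f_{t+1}=f_t(1+c(t+t_0)^{-1})$ is still correct, but only after substituting $a_t^2\asymp t$, not $a_t\asymp t$.

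Second, and more substantively, you cite Lemma~\ref{lem:ODE-6} for the lower envelope without explaining how the recursion acquires the form $g_{t+1}=g_t+ag_t/(bt+c)^2$ that lemma requires. The upper envelope on $a_t$ from Lemma~\ref{lem:ODE-2} is $\sqrt{C_8\eta q_V(t-t_1)/K + a_{t_1}^2}+\text{const}$, whose square is not $(bt+c)^2$. The paper closes this by first replacing the concave $\sqrt{\cdot}$ upper bound on $a_t$ with its tangent line at $t_1$, i.e.\ $a_t\le \tfrac{C_8\eta q_V}{2K}\sigma_1^{-1}d^{-1/2}t+\sigma_1 d^{1/2}$, so that $a_t^2\le(bt+c)^2$ with $b=\tfrac{C_8\eta q_V}{2K}\sigma_1^{-1}d^{-1/2}$ and $c=\sigma_1 d^{1/2}$; only then does Lemma~\ref{lem:ODE-6} apply and produce the exponent $2q_V^{-1}K\triangle^2(1-\triangle)^2\bigl(1-1/(\eta q_V t/(2\sigma_1^2 dK)+1)\bigr)$. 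This linearization step is the missing bridge in your plan.
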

\vspace{-1.3\baselineskip}
After a sufficient number of epochs beyond $T^{\star}$, for models trained on $\mathcal{P}_{\text{QA}}$, the task vector $\boldsymbol{a}_{k^{\star}}$ becomes the dominant direction in $\mathbf{h}_{\boldsymbol{\theta}^{(t)}, 0} = \operatorname{LN}(\mathbf{W}_{V}^{(t)}\mathbf{T}\boldsymbol{\pi}^{(t)})$ during testing when encountering ICL prompts with $k^{\star}$ as the co-task concept. In contrast, for models trained on $\mathcal{P}^{\text{tr}} \sim \mathcal{P}_{\mathbf{T}}$ and $\mathcal{P}^{\text{tr}} \sim \mathcal{P}_{\text{QA}}^{\mathbf{T}}$, the components $\pm \boldsymbol{b}_{k_{y}}$ remain negligible, which supports Theorem~\ref{thm:main_thm}. Building on these learned projections, the proofs for the propositions follow.

\vspace{-.2\baselineskip}
\section{Experiments}
\vspace{-.2\baselineskip}
In this section, we present simulations of Algorithm \ref{alg:main}. For comparison, we use the same parameters for both ICL-trained and QA-trained models and plot the 1-sigma error dynamics, as illustrated in Figures \ref{fig:ICL_trained_dynamics} and \ref{fig:QAtrain_imbalanced_M30_ID_10_2loss.jpg}: $K=2$, $K^{\prime}=100$, $d = 3000$, $n=200$, $M=30$, $L=L^{\star}=30$, $\eta=5$, $q_{V}=10^{-5}$, $\sigma_0=10^{-3}$, $\sigma_1 = 5\times10^{-3}$, $\sigma_p= \sigma_p^{\star}= 10^{-2}$. The QA-trained model is trained for $T=2000$ epochs, while the ICL-trained model undergoes a longer training process with $T=5000$ epochs. Despite the extended number of iterations, the ICL-trained and QA-ICL-trained model (Figure \ref{fig:ICL_trained_dynamics} and Figure \ref{fig:QAICL_trained_dynamics} in Appendix \ref{app:additional_exp}) maintain a constant test error, whereas the QA-trained model (Figure \ref{fig:QAtrain_imbalanced_M30_ID_10_2loss.jpg}) converges to zero rapidly. These findings validate our main theorems.

\vspace{-.2\baselineskip}
\section{Conclusion, Limitations, and Future Work}
\vspace{-.2\baselineskip}
This work presents an optimization theory for non-linear residual transformer's task vector arithmetic mechanisms under empirically-motivated hierarchical modelings. Our analysis framework provides notable merits for such mechanisms, including flexible out-of-distribution generalization and the ability to handle multi-concept words, superior to traditional static Word2Vec predecessors lacking the flexible generalization potential.

\vspace{-.2\baselineskip}

However, our framework is currently limited to single-token settings, whereas real-world factual retrieval often requires multi-token reasoning, demanding more sophisticated probing techniques to uncover internal mechanisms \textit{beyond simple geometric relationships} \citep{allenzhu2024physicslanguagemodels31, zhang2025understandingfactrecalllanguage}. Moreover, while our analysis is grounded in empirical modeling, it remains idealized and does not investigate how task vectors naturally emerge in the deeper layers of transformer models \citep{knittel2024gpt2lensvectorsymbolic, yang2025taskvectorsincontextlearning}. In practice, the representation of high-level concepts is not strictly equivalent to the corresponding task vectors—e.g., the $\vec{o}_{\textit{city}}$ in \citet{merullo2024w2varithmetic} does not exactly match the latent representation of ``\texttt{Capital}''. Additionally, real-world vocabulary dictionaries consist of output embeddings of natural language tokens (e.g., words or labels) that are inherently polysemous and evolve over the course of training, rather than remaining static. This dynamic evolution of semantic representations likely co-occurs with the emergence of retrieval capabilities, potentially in a self-reinforcing manner. 

\vspace{-.2\baselineskip}

Besides, LLMs often rely on more sophisticated or unexplainable operations on task vectors for complex tasks, beyond simple vector arithmetic \citep{hendel2023taskvector, todd2024functionvectorslargelanguage, zhang2025understandingfactrecalllanguage}. While the ground-truth task-vector-leveraging function $f(x,\boldsymbol{a}_{k^{\star}})$ \citep{hendel2023taskvector} may lack a closed-form, the \textit{softmax-layernorm-residual} architecture in transformers may implicitly approximate such a function. Future work could explore how these components enable LLMs to interact with task vectors across broader tasks.

\vspace{-.2\baselineskip}


\vspace{-.2\baselineskip}
\section*{Impact Statement}
This paper presents work whose goal is to advance the field of Machine Learning. There are many potential societal consequences of our work, none which we feel must be specifically highlighted here.

\vspace{-.2\baselineskip}
\section*{Acknowledgements}
We thank the anonymous reviewers for their instrumental comments. DB and HW are supported in part by the Research Grants Council of the Hong Kong Special Administrative Region (Project No. CityU 11206622). WH was supported by JSPS KAKENHI Grant Number 24K20848. TS was partially supported by JSPS KAKENHI (24K02905) and JST CREST (JPMJCR2115).
This research is supported by the National Research Foundation, Singapore, Infocomm Media Development Authority under its Trust Tech Funding Initiative, and the Ministry of Digital Development and Information under the AI Visiting Professorship Programme (award number AIVP-2024-004). Any opinions, findings and conclusions or recommendations expressed in this material are those of the author(s) and do not reflect the views of National Research Foundation, Singapore, Infocomm Media Development Authority, and the Ministry of Digital Development and Information.

\bibliography{example_paper}

\begin{thebibliography}{85}
\providecommand{\natexlab}[1]{#1}
\providecommand{\url}[1]{\texttt{#1}}
\expandafter\ifx\csname urlstyle\endcsname\relax
  \providecommand{\doi}[1]{doi: #1}\else
  \providecommand{\doi}{doi: \begingroup \urlstyle{rm}\Url}\fi

\bibitem[Ahuja \& Lopez-Paz(2023)Ahuja and Lopez-Paz]{ahuja2023closer}
Ahuja, K. and Lopez-Paz, D.
\newblock A closer look at in-context learning under distribution shifts.
\newblock \emph{arXiv preprint arXiv:2305.16704}, 2023.

\bibitem[Allen-Zhu \& Li(2024)Allen-Zhu and Li]{allenzhu2024physicslanguagemodels31}
Allen-Zhu, Z. and Li, Y.
\newblock Physics of language models: Part 3.1, knowledge storage and extraction.
\newblock In \emph{Proceedings of the 41st International Conference on Machine Learning}, volume 235, pp.\  1067--1077, 2024.

\bibitem[Allen-Zhu \& Li(2025)Allen-Zhu and Li]{allenzhu2024physics33}
Allen-Zhu, Z. and Li, Y.
\newblock Physics of language models: Part 3.3, knowledge capacity scaling laws.
\newblock In \emph{The Thirteenth International Conference on Learning Representations}, 2025.
\newblock URL \url{https://openreview.net/forum?id=FxNNiUgtfa}.

\bibitem[Bai et~al.(2023)Bai, Chen, Wang, Xiong, and Mei]{Baialgorithmselection}
Bai, Y., Chen, F., Wang, H., Xiong, C., and Mei, S.
\newblock Transformers as statisticians: Provable in-context learning with in-context algorithm selection.
\newblock In \emph{Advances in Neural Information Processing Systems}, volume~36, pp.\  57125--57211, 2023.

\bibitem[Bernays(1936)]{bernays1936alonzo}
Bernays, P.
\newblock Alonzo church. an unsolvable problem of elementary number theory. american journal of mathematics, vol. 58 (1936), pp. 345--363.
\newblock \emph{The Journal of Symbolic Logic}, 1\penalty0 (2):\penalty0 73--74, 1936.

\bibitem[Bietti et~al.(2023)Bietti, Cabannes, Bouchacourt, Jegou, and Bottou]{bietti2023birthtransformermemoryviewpoint}
Bietti, A., Cabannes, V., Bouchacourt, D., Jegou, H., and Bottou, L.
\newblock Birth of a transformer: A memory viewpoint.
\newblock In \emph{Advances in Neural Information Processing Systems}, volume~36, pp.\  1560--1588, 2023.

\bibitem[Blei et~al.(2001)Blei, Ng, and Jordan]{Bleilatent}
Blei, D., Ng, A., and Jordan, M.
\newblock Latent dirichlet allocation.
\newblock In Dietterich, T., Becker, S., and Ghahramani, Z. (eds.), \emph{Advances in Neural Information Processing Systems}, 2001.

\bibitem[Bu et~al.(2024{\natexlab{a}})Bu, Huang, Han, Nitanda, Suzuki, Zhang, and Wong]{bu2024multiconcept}
Bu, D., Huang, W., Han, A., Nitanda, A., Suzuki, T., Zhang, Q., and Wong, H.-S.
\newblock Provably transformers harness multi-concept word semantics for efficient in-context learning.
\newblock In \emph{The Thirty-eighth Annual Conference on Neural Information Processing Systems}, 2024{\natexlab{a}}.

\bibitem[Bu et~al.(2024{\natexlab{b}})Bu, Huang, Suzuki, Cheng, Zhang, Xu, and Wong]{bu2024active}
Bu, D., Huang, W., Suzuki, T., Cheng, J., Zhang, Q., Xu, Z., and Wong, H.-S.
\newblock Provably neural active learning succeeds via prioritizing perplexing samples.
\newblock In \emph{Proceedings of the 41st International Conference on Machine Learning}, volume 235, pp.\  4642--4695, 2024{\natexlab{b}}.

\bibitem[Cabannes et~al.(2024)Cabannes, Simsek, and Bietti]{cabannes2024learningassociativememoriesgradient}
Cabannes, V., Simsek, B., and Bietti, A.
\newblock Learning associative memories with gradient descent.
\newblock In \emph{International Conference on Machine Learning}, pp.\  5114--5134. PMLR, 2024.

\bibitem[Cao et~al.(2025)Cao, Wu, Prasad, Tian, and Liu]{cao2025paramdelta}
Cao, S., Wu, M., Prasad, K., Tian, Y., and Liu, Z.
\newblock Param\${\textbackslash}delta\$ for direct mixing: Post-train large language model at zero cost.
\newblock In \emph{The Thirteenth International Conference on Learning Representations}, 2025.
\newblock URL \url{https://openreview.net/forum?id=vqbd2OQnGp}.

\bibitem[Cao et~al.(2022)Cao, Chen, Belkin, and Gu]{cao2022benign}
Cao, Y., Chen, Z., Belkin, M., and Gu, Q.
\newblock Benign overfitting in two-layer convolutional neural networks.
\newblock In \emph{Advances in Neural Information Processing Systems}, volume~35, pp.\  25237--25250, 2022.

\bibitem[Chang et~al.(2025)Chang, Li, Kara, Oymak, and Roy-Chowdhury]{chang2025provable}
Chang, X., Li, Y., Kara, M., Oymak, S., and Roy-Chowdhury, A.
\newblock Provable benefits of task-specific prompts for in-context learning.
\newblock In \emph{The 28th International Conference on Artificial Intelligence and Statistics}, 2025.
\newblock URL \url{https://openreview.net/forum?id=IWxPi5Ha7C}.

\bibitem[Chatterji \& Long(2021)Chatterji and Long]{chatterji2021finitesampleanalysisinterpolatinglinear}
Chatterji, N.~S. and Long, P.~M.
\newblock Finite-sample analysis of interpolating linear classifiers in the overparameterized regime.
\newblock \emph{Journal of Machine Learning Research}, 22\penalty0 (129):\penalty0 1--30, 2021.
\newblock URL \url{http://jmlr.org/papers/v22/20-974.html}.

\bibitem[Chen et~al.(2024{\natexlab{a}})Chen, Sheen, Wang, and Yang]{chen2024multihead}
Chen, S., Sheen, H., Wang, T., and Yang, Z.
\newblock Training dynamics of multi-head softmax attention for in-context learning: Emergence, convergence, and optimality.
\newblock In \emph{COLT}, 2024{\natexlab{a}}.

\bibitem[Chen et~al.(2024{\natexlab{b}})Chen, Sheen, Wang, and Yang]{chen2024unveilinginductionheadsprovable}
Chen, S., Sheen, H., Wang, T., and Yang, Z.
\newblock Unveiling induction heads: Provable training dynamics and feature learning in transformers.
\newblock In \emph{Advances in Neural Information Processing Systems}, volume~37, pp.\  66479--66567. Curran Associates, Inc., 2024{\natexlab{b}}.

\bibitem[Fan et~al.(2024)Fan, Yadlowsky, Papailiopoulos, and Lee]{fan2024transformers}
Fan, Y., Yadlowsky, S., Papailiopoulos, D., and Lee, K.
\newblock Transformers can learn meta-skills for task generalization in in-context learning.
\newblock In \emph{NeurIPS 2024 Workshop on Compositional Learning: Perspectives, Methods, and Paths Forward}, 2024.

\bibitem[Frei et~al.(2022)Frei, Chatterji, and Bartlett]{frei2022bonoisylineardata}
Frei, S., Chatterji, N.~S., and Bartlett, P.
\newblock Benign overfitting without linearity: Neural network classifiers trained by gradient descent for noisy linear data.
\newblock In \emph{Proceedings of Thirty Fifth Conference on Learning Theory}, volume 178, pp.\  2668--2703, 2022.

\bibitem[Garg et~al.(2022)Garg, Tsipras, Liang, and Valiant]{garg2022what}
Garg, S., Tsipras, D., Liang, P., and Valiant, G.
\newblock What can transformers learn in-context? {A} case study of simple function classes.
\newblock In \emph{Advances in Neural Information Processing Systems}, 2022.
\newblock URL \url{https://openreview.net/forum?id=flNZJ2eOet}.

\bibitem[Ghosal et~al.(2024)Ghosal, Hashimoto, and Raghunathan]{ghosal2024understandingfinetuningfactualknowledge}
Ghosal, G.~R., Hashimoto, T., and Raghunathan, A.
\newblock Understanding finetuning for factual knowledge extraction.
\newblock In \emph{Proceedings of the 41st International Conference on Machine Learning}, volume 235, pp.\  15540--15558, 2024.

\bibitem[He et~al.(2025)He, Hu, Lin, Zhang, and Zhao]{he2025localizeandstitch}
He, Y., Hu, Y., Lin, Y., Zhang, T., and Zhao, H.
\newblock Localize-and-stitch: Efficient model merging via sparse task arithmetic.
\newblock \emph{Transactions on Machine Learning Research}, 2025.
\newblock ISSN 2835-8856.
\newblock URL \url{https://openreview.net/forum?id=9CWU8Oi86d}.

\bibitem[Hendel et~al.(2023)Hendel, Geva, and Globerson]{hendel2023taskvector}
Hendel, R., Geva, M., and Globerson, A.
\newblock In-context learning creates task vectors.
\newblock In \emph{Findings of the Association for Computational Linguistics: EMNLP 2023}, pp.\  9318--9333, 2023.

\bibitem[Huang \& Ge(2024)Huang and Ge]{huang2024task}
Huang, R. and Ge, R.
\newblock Task descriptors help transformers learn linear models in-context.
\newblock In \emph{ICML 2024 Workshop on In-Context Learning}, 2024.
\newblock URL \url{https://openreview.net/forum?id=4SfCI1DJhr}.

\bibitem[Huang et~al.(2024{\natexlab{a}})Huang, Cheng, and Liang]{huang2023incontext}
Huang, Y., Cheng, Y., and Liang, Y.
\newblock In-context convergence of transformers.
\newblock In \emph{Proceedings of the 41st International Conference on Machine Learning}, pp.\  19660--19722, 2024{\natexlab{a}}.

\bibitem[Huang et~al.(2024{\natexlab{b}})Huang, Wen, Chi, and Liang]{huang2024MIM}
Huang, Y., Wen, Z., Chi, Y., and Liang, Y.
\newblock Transformers provably learn feature-position correlations in masked image modeling.
\newblock \emph{arXiv preprint arXiv: 2403.02233}, 2024{\natexlab{b}}.

\bibitem[Ilharco et~al.(2023)Ilharco, Ribeiro, Wortsman, Schmidt, Hajishirzi, and Farhadi]{ilharco2023editing}
Ilharco, G., Ribeiro, M.~T., Wortsman, M., Schmidt, L., Hajishirzi, H., and Farhadi, A.
\newblock Editing models with task arithmetic.
\newblock In \emph{The Eleventh International Conference on Learning Representations}, 2023.
\newblock URL \url{https://openreview.net/forum?id=6t0Kwf8-jrj}.

\bibitem[Jiang et~al.(2025)Jiang, JIANG, Li, Xue, ZHOU, Song, Lian, and Wei]{jiang2025unlocking}
Jiang, G., JIANG, C., Li, Z., Xue, S., ZHOU, J., Song, L., Lian, D., and Wei, Y.
\newblock Unlocking the power of function vectors for characterizing and mitigating catastrophic forgetting in continual instruction tuning.
\newblock In \emph{The Thirteenth International Conference on Learning Representations}, 2025.
\newblock URL \url{https://openreview.net/forum?id=gc8QAQfXv6}.

\bibitem[Jiang et~al.(2024{\natexlab{a}})Jiang, Huang, Zhang, Suzuki, and Nie]{jiang2024unveil}
Jiang, J., Huang, W., Zhang, M., Suzuki, T., and Nie, L.
\newblock Unveil benign overfitting for transformer in vision: Training dynamics, convergence, and generalization.
\newblock In \emph{The Thirty-eighth Annual Conference on Neural Information Processing Systems}, volume~37, pp.\  135464--135625, 2024{\natexlab{a}}.

\bibitem[Jiang et~al.(2024{\natexlab{b}})Jiang, Rajendran, Ravikumar, Aragam, and Veitch]{jiang2024origins}
Jiang, Y., Rajendran, G., Ravikumar, P.~K., Aragam, B., and Veitch, V.
\newblock On the origins of linear representations in large language models.
\newblock In \emph{Proceedings of the 41st International Conference on Machine Learning}, volume 235, pp.\  21879--21911, 2024{\natexlab{b}}.

\bibitem[Jin et~al.(2023)Jin, Ren, Preotiuc-Pietro, and Cheng]{jin2023dataless}
Jin, X., Ren, X., Preotiuc-Pietro, D., and Cheng, P.
\newblock Dataless knowledge fusion by merging weights of language models.
\newblock In \emph{The Eleventh International Conference on Learning Representations}, 2023.
\newblock URL \url{https://openreview.net/forum?id=FCnohuR6AnM}.

\bibitem[Kahardipraja et~al.(2025)Kahardipraja, Achtibat, Wiegand, Samek, and Lapuschkin]{kahardipraja2025atlasincontextlearningattention}
Kahardipraja, P., Achtibat, R., Wiegand, T., Samek, W., and Lapuschkin, S.
\newblock The atlas of in-context learning: How attention heads shape in-context retrieval augmentation.
\newblock \emph{arXiv preprint arXiv: 2505.15807}, 2025.

\bibitem[Kim \& Suzuki(2024)Kim and Suzuki]{kim2024MFD}
Kim, J. and Suzuki, T.
\newblock Transformers learn nonlinear features in context: Nonconvex mean-field dynamics on the attention landscape.
\newblock In \emph{International Conference on Machine Learning}, pp.\  24527--24561. PMLR, 2024.

\bibitem[Knittel et~al.(2024)Knittel, Gangavarapu, Strobelt, and Pfister]{knittel2024gpt2lensvectorsymbolic}
Knittel, J., Gangavarapu, T., Strobelt, H., and Pfister, H.
\newblock Gpt-2 through the lens of vector symbolic architectures.
\newblock \emph{arXiv preprint arXiv: 2412.07947}, 2024.

\bibitem[Kossen et~al.(2024)Kossen, Gal, and Rainforth]{kossen2024incontext}
Kossen, J., Gal, Y., and Rainforth, T.
\newblock In-context learning learns label relationships but is not conventional learning.
\newblock In \emph{The Twelfth International Conference on Learning Representations}, 2024.
\newblock URL \url{https://openreview.net/forum?id=YPIA7bgd5y}.

\bibitem[Kou et~al.(2023)Kou, Chen, Chen, and Gu]{kou2023benign}
Kou, Y., Chen, Z., Chen, Y., and Gu, Q.
\newblock Benign overfitting in two-layer re{LU} convolutional neural networks.
\newblock In \emph{Proceedings of the 40th International Conference on Machine Learning}, volume 202, pp.\  17615--17659, 2023.

\bibitem[Kwon et~al.(2025)Kwon, Xu, Yaras, Balzano, and Qu]{kwon2025outofdistributiongeneralizationincontextlearning}
Kwon, S.~M., Xu, A.~S., Yaras, C., Balzano, L., and Qu, Q.
\newblock Out-of-distribution generalization of in-context learning: A low-dimensional subspace perspective.
\newblock \emph{arXiv preprint arXiv:2505.14808}, 2025.

\bibitem[Lee et~al.(2025)Lee, Liu, Wang, Wang, Cai, and Wu]{lee2025dynamicfisher}
Lee, S., Liu, J., Wang, Q., Wang, J., Cai, X., and Wu, Y.
\newblock Dynamic fisher-weighted model merging via {B}ayesian optimization.
\newblock In \emph{Proceedings of the 2025 Conference of the Nations of the Americas Chapter of the Association for Computational Linguistics: Human Language Technologies (Volume 1: Long Papers)}, pp.\  4923--4935, 2025.

\bibitem[Li et~al.(2025{\natexlab{a}})Li, Huang, Han, Zhou, Suzuki, Zhu, and Chen]{li2024signGDtransformer}
Li, B., Huang, W., Han, A., Zhou, Z., Suzuki, T., Zhu, J., and Chen, J.
\newblock On the optimization and generalization of two-layer transformers with sign gradient descent.
\newblock In \emph{The Thirteenth International Conference on Learning Representations}, 2025{\natexlab{a}}.
\newblock URL \url{https://openreview.net/forum?id=97rOQDPmk2}.

\bibitem[Li et~al.(2023{\natexlab{a}})Li, Wang, Liu, and Chen]{li2023visiontransformer}
Li, H., Wang, M., Liu, S., and Chen, P.-Y.
\newblock A theoretical understanding of shallow vision transformers: Learning, generalization, and sample complexity.
\newblock In \emph{The Eleventh International Conference on Learning Representations}, 2023{\natexlab{a}}.
\newblock URL \url{https://openreview.net/forum?id=jClGv3Qjhb}.

\bibitem[Li et~al.(2025{\natexlab{b}})Li, Zhang, Zhang, Chen, Liu, and Wang]{li2025when}
Li, H., Zhang, Y., Zhang, S., Chen, P.-Y., Liu, S., and Wang, M.
\newblock When is task vector provably effective for model editing? a generalization analysis of nonlinear transformers.
\newblock In \emph{The Thirteenth International Conference on Learning Representations}, 2025{\natexlab{b}}.
\newblock URL \url{https://openreview.net/forum?id=vRvVVb0NAz}.

\bibitem[Li et~al.(2023{\natexlab{b}})Li, Li, and Risteski]{li2023how}
Li, Y., Li, Y., and Risteski, A.
\newblock How do transformers learn topic structure: Towards a mechanistic understanding.
\newblock In \emph{Proceedings of the 40th International Conference on Machine Learning}, pp.\  19689--19729, 2023{\natexlab{b}}.

\bibitem[Li et~al.(2024)Li, Rawat, and Oymak]{li2024fine}
Li, Y., Rawat, A.~S., and Oymak, S.
\newblock Fine-grained analysis of in-context linear estimation: Data, architecture, and beyond.
\newblock In \emph{Advances in Neural Information Processing Systems}, volume~37, pp.\  138324--138364, 2024.

\bibitem[Liang et~al.(2025)Liang, Balasubramanian, and Lai]{liang2024transformershandleendogeneityincontext}
Liang, H., Balasubramanian, K., and Lai, L.
\newblock Transformers handle endogeneity in in-context linear regression.
\newblock In \emph{The Thirteenth International Conference on Learning Representations}, 2025.
\newblock URL \url{https://openreview.net/forum?id=QfhU3ZC2g1}.

\bibitem[Liu et~al.(2024)Liu, Ye, Xing, and Zou]{liu2024incontextvectorsmakingcontext}
Liu, S., Ye, H., Xing, L., and Zou, J.~Y.
\newblock In-context vectors: Making in context learning more effective and controllable through latent space steering.
\newblock In \emph{Proceedings of the 41st International Conference on Machine Learning}, volume 235, pp.\  32287--32307, 2024.

\bibitem[Liu et~al.(2025)Liu, Gong, Cai, Gao, Zhang, Huang, Gong, van~den Hengel, and Shi]{liu2025ipredictiam}
Liu, Y., Gong, D., Cai, Y., Gao, E., Zhang, Z., Huang, B., Gong, M., van~den Hengel, A., and Shi, J.~Q.
\newblock I predict therefore i am: Is next token prediction enough to learn human-interpretable concepts from data?
\newblock \emph{arXiv preprint arXiv:2503.08980}, 2025.

\bibitem[Lu et~al.(2024)Lu, Bigoulaeva, Sachdeva, Madabushi, and Gurevych]{lu2023emergent}
Lu, S., Bigoulaeva, I., Sachdeva, R., Madabushi, H.~T., and Gurevych, I.
\newblock Are emergent abilities in large language models just in-context learning?
\newblock In \emph{Proceedings of the 62nd Annual Meeting of the Association for Computational Linguistics (Volume 1: Long Papers)}, pp.\  5098--5139, 2024.

\bibitem[Marconato et~al.(2025)Marconato, Lachapelle, Weichwald, and Gresele]{marconato2025all}
Marconato, E., Lachapelle, S., Weichwald, S., and Gresele, L.
\newblock All or none: Identifiable linear properties of next-token predictors in language modeling.
\newblock In \emph{The 28th International Conference on Artificial Intelligence and Statistics}, 2025.
\newblock URL \url{https://openreview.net/forum?id=XCmIlemQP5}.

\bibitem[Marks \& Tegmark(2024)Marks and Tegmark]{marks2024geometrytruthemergentlinear}
Marks, S. and Tegmark, M.
\newblock The geometry of truth: Emergent linear structure in large language model representations of true/false datasets.
\newblock In \emph{First Conference on Language Modeling}, 2024.
\newblock URL \url{https://openreview.net/forum?id=aajyHYjjsk}.

\bibitem[Matena \& Raffel(2022)Matena and Raffel]{matena2022merging}
Matena, M.~S. and Raffel, C.~A.
\newblock Merging models with fisher-weighted averaging.
\newblock In \emph{Advances in Neural Information Processing Systems}, volume~35, pp.\  17703--17716, 2022.

\bibitem[Meng et~al.(2022)Meng, Bau, Andonian, and Belinkov]{meng2023locatingeditingfactualassociations}
Meng, K., Bau, D., Andonian, A., and Belinkov, Y.
\newblock Locating and editing factual associations in gpt.
\newblock In \emph{Advances in neural information processing systems}, volume~35, pp.\  17359--17372, 2022.

\bibitem[Meng et~al.(2024)Meng, Zou, and Cao]{meng2023benign}
Meng, X., Zou, D., and Cao, Y.
\newblock Benign overfitting in two-layer {R}e{LU} convolutional neural networks for {XOR} data.
\newblock In \emph{Proceedings of the 41st International Conference on Machine Learning}, volume 235, pp.\  35404--35469, 2024.

\bibitem[Merullo et~al.(2024)Merullo, Eickhoff, and Pavlick]{merullo2024w2varithmetic}
Merullo, J., Eickhoff, C., and Pavlick, E.
\newblock Language models implement simple word2vec-style vector arithmetic.
\newblock In \emph{Proceedings of the 2024 Conference of the North American Chapter of the Association for Computational Linguistics: Human Language Technologies (Volume 1: Long Papers)}, pp.\  5030--5047, 2024.

\bibitem[Mikolov et~al.(2013)Mikolov, Yih, and Zweig]{mikolov2013w2v}
Mikolov, T., Yih, W.-t., and Zweig, G.
\newblock Linguistic regularities in continuous space word representations.
\newblock In \emph{Proceedings of the 2013 conference of the north american chapter of the association for computational linguistics: Human language technologies}, pp.\  746--751, 2013.

\bibitem[Minegishi et~al.(2025{\natexlab{a}})Minegishi, Furuta, Iwasawa, and Matsuo]{minegishi2025rethinking}
Minegishi, G., Furuta, H., Iwasawa, Y., and Matsuo, Y.
\newblock Rethinking evaluation of sparse autoencoders through the representation of polysemous words.
\newblock In \emph{The Thirteenth International Conference on Learning Representations}, 2025{\natexlab{a}}.
\newblock URL \url{https://openreview.net/forum?id=HpUs2EXjOl}.

\bibitem[Minegishi et~al.(2025{\natexlab{b}})Minegishi, Furuta, Taniguchi, Iwasawa, and Matsuo]{minegishi2025inductionheadsincontextmeta}
Minegishi, G., Furuta, H., Taniguchi, S., Iwasawa, Y., and Matsuo, Y.
\newblock Beyond induction heads: In-context meta learning induces multi-phase circuit emergence.
\newblock \emph{arXiv preprint arXiv: 2505.16694}, 2025{\natexlab{b}}.

\bibitem[Nichani et~al.(2024)Nichani, Damian, and Lee]{nichani2024transformerslearncausalstructure}
Nichani, E., Damian, A., and Lee, J.~D.
\newblock How transformers learn causal structure with gradient descent.
\newblock In \emph{Proceedings of the 41st International Conference on Machine Learning}, pp.\  38018--38070, 2024.

\bibitem[Nichani et~al.(2025)Nichani, Lee, and Bietti]{nichani2024understandingfactualrecalltransformers}
Nichani, E., Lee, J.~D., and Bietti, A.
\newblock Understanding factual recall in transformers via associative memories.
\newblock In \emph{The Thirteenth International Conference on Learning Representations}, 2025.
\newblock URL \url{https://openreview.net/forum?id=hwSmPOAmhk}.

\bibitem[Oko et~al.(2024)Oko, Song, Suzuki, and Wu]{oko2024pretrained}
Oko, K., Song, Y., Suzuki, T., and Wu, D.
\newblock Pretrained transformer efficiently learns low-dimensional target functions in-context.
\newblock In \emph{The Thirty-eighth Annual Conference on Neural Information Processing Systems}, 2024.
\newblock URL \url{https://openreview.net/forum?id=uHcG5Y6fdB}.

\bibitem[Pan et~al.(2023)Pan, Gao, Chen, and Chen]{pan2023context}
Pan, J., Gao, T., Chen, H., and Chen, D.
\newblock What in-context learning "learns" in-context: Disentangling task recognition and task learning.
\newblock In \emph{The 61st Annual Meeting Of The Association For Computational Linguistics}, 2023.

\bibitem[Park et~al.(2024)Park, Choe, and Veitch]{park2023linearhypothesis}
Park, K., Choe, Y.~J., and Veitch, V.
\newblock The linear representation hypothesis and the geometry of large language models.
\newblock In \emph{Proceedings of the 41st International Conference on Machine Learning}, volume 235, pp.\  39643--39666, 2024.

\bibitem[Park et~al.(2025)Park, Choe, Jiang, and Veitch]{parkgeometry_2024}
Park, K., Choe, Y.~J., Jiang, Y., and Veitch, V.
\newblock The geometry of categorical and hierarchical concepts in large language models.
\newblock In \emph{The Thirteenth International Conference on Learning Representations}, 2025.
\newblock URL \url{https://openreview.net/forum?id=bVTM2QKYuA}.

\bibitem[Ravent{\'o}s et~al.(2023)Ravent{\'o}s, Paul, Chen, and Ganguli]{raventos2023pretraining}
Ravent{\'o}s, A., Paul, M., Chen, F., and Ganguli, S.
\newblock Pretraining task diversity and the emergence of non-bayesian in-context learning for regression.
\newblock In \emph{Advances in neural information processing systems}, volume~36, pp.\  14228--14246, 2023.

\bibitem[Reizinger et~al.(2024)Reizinger, Ujv{\'a}ry, M{\'e}sz{\'a}ros, Kerekes, Brendel, and Husz{\'a}r]{reizinger2024position}
Reizinger, P., Ujv{\'a}ry, S., M{\'e}sz{\'a}ros, A., Kerekes, A., Brendel, W., and Husz{\'a}r, F.
\newblock Position: Understanding {LLM}s requires more than statistical generalization.
\newblock In \emph{Forty-first International Conference on Machine Learning}, 2024.
\newblock URL \url{https://openreview.net/forum?id=pVyOchWUBa}.

\bibitem[Shen et~al.(2024)Shen, Zhou, Yang, and Shen]{shen2024trainingconvergencetransformersincontext}
Shen, W., Zhou, R., Yang, J., and Shen, C.
\newblock On the training convergence of transformers for in-context classification.
\newblock \emph{arXiv preprint arXiv:2410.11778}, 2024.

\bibitem[Tian et~al.(2023)Tian, Wang, Chen, and Du]{tianyuandongscansnap}
Tian, Y., Wang, Y., Chen, B., and Du, S.~S.
\newblock Scan and snap: Understanding training dynamics and token composition in 1-layer transformer.
\newblock In \emph{Advances in Neural Information Processing Systems}, volume~36, pp.\  71911--71947, 2023.

\bibitem[Tian et~al.(2024)Tian, Wang, Zhang, Chen, and Du]{tian2024joma}
Tian, Y., Wang, Y., Zhang, Z., Chen, B., and Du, S.~S.
\newblock Jo{MA}: Demystifying multilayer transformers via joint dynamics of {MLP} and attention.
\newblock In \emph{The Twelfth International Conference on Learning Representations}, 2024.
\newblock URL \url{https://openreview.net/forum?id=LbJqRGNYCf}.

\bibitem[Todd et~al.(2024)Todd, Li, Sharma, Mueller, Wallace, and Bau]{todd2024functionvectorslargelanguage}
Todd, E., Li, M., Sharma, A.~S., Mueller, A., Wallace, B.~C., and Bau, D.
\newblock Function vectors in large language models.
\newblock In \emph{The Twelfth International Conference on Learning Representations}, 2024.
\newblock URL \url{https://openreview.net/forum?id=AwyxtyMwaG}.

\bibitem[Vaswani et~al.(2017)Vaswani, Shazeer, Parmar, Uszkoreit, Jones, Gomez, Kaiser, and Polosukhin]{vaswani2018attention}
Vaswani, A., Shazeer, N., Parmar, N., Uszkoreit, J., Jones, L., Gomez, A.~N., Kaiser, L.~u., and Polosukhin, I.
\newblock Attention is all you need.
\newblock In \emph{Advances in Neural Information Processing Systems}, volume~30, 2017.

\bibitem[Wainwright(2019)]{wainwright2019high}
Wainwright, M.~J.
\newblock \emph{High-dimensional statistics: A non-asymptotic Viewpoint}, volume~48.
\newblock Cambridge university press, 2019.

\bibitem[Wang \& Komatsuzaki(2021)Wang and Komatsuzaki]{Wang2021GPTJ6B}
Wang, B. and Komatsuzaki, A.
\newblock Gpt-j 6b: A 6 billion parameter autoregressive language model.
\newblock \url{https://github.com/kingoflolz/mesh-transformer-jax}, 2021.

\bibitem[Wang et~al.(2025)Wang, Wang, Wang, and Ying]{wang2025can}
Wang, Q., Wang, Y., Wang, Y., and Ying, X.
\newblock Can in-context learning really generalize to out-of-distribution tasks?
\newblock In \emph{The Thirteenth International Conference on Learning Representations}, 2025.
\newblock URL \url{https://openreview.net/forum?id=INe4otjryz}.

\bibitem[Wang et~al.(2023)Wang, Zhu, Saxon, Steyvers, and Wang]{wang2023llmtopicimplicit}
Wang, X., Zhu, W., Saxon, M., Steyvers, M., and Wang, W.~Y.
\newblock Large language models are implicitly topic models: Explaining and finding good demonstrations for in-context learning.
\newblock In \emph{Workshop on Efficient Systems for Foundation Models @ ICML2023}, 2023.

\bibitem[Wibisono \& Wang(2023)Wibisono and Wang]{wibisono2023bidirectionalattentionmixturecontinuous}
Wibisono, K.~C. and Wang, Y.
\newblock Bidirectional attention as a mixture of continuous word experts.
\newblock In \emph{Uncertainty in Artificial Intelligence}, pp.\  2271--2281. PMLR, 2023.

\bibitem[Wortsman et~al.(2022)Wortsman, Ilharco, Gadre, Roelofs, Gontijo-Lopes, Morcos, Namkoong, Farhadi, Carmon, Kornblith, and Schmidt]{wortsman2022model}
Wortsman, M., Ilharco, G., Gadre, S.~Y., Roelofs, R., Gontijo-Lopes, R., Morcos, A.~S., Namkoong, H., Farhadi, A., Carmon, Y., Kornblith, S., and Schmidt, L.
\newblock Model soups: averaging weights of multiple fine-tuned models improves accuracy without increasing inference time.
\newblock In \emph{Proceedings of the 39th International Conference on Machine Learning}, volume 162, pp.\  23965--23998. PMLR, 2022.

\bibitem[Xie et~al.(2022)Xie, Raghunathan, Liang, and Ma]{xie2022explanationincontextlearningimplicit}
Xie, S.~M., Raghunathan, A., Liang, P., and Ma, T.
\newblock An explanation of in-context learning as implicit bayesian inference.
\newblock In \emph{International Conference on Learning Representations}, 2022.
\newblock URL \url{https://openreview.net/forum?id=RdJVFCHjUMI}.

\bibitem[Yadlowsky et~al.(2023)Yadlowsky, Doshi, and Tripuraneni]{yadlowsky2023pretraining}
Yadlowsky, S., Doshi, L., and Tripuraneni, N.
\newblock Pretraining data mixtures enable narrow model selection capabilities in transformer models.
\newblock \emph{arXiv preprint arXiv:2311.00871}, 2023.

\bibitem[Yamagiwa et~al.(2023)Yamagiwa, Oyama, and Shimodaira]{yamagiwa2023discovering}
Yamagiwa, H., Oyama, M., and Shimodaira, H.
\newblock Discovering universal geometry in embeddings with ica.
\newblock In \emph{Proceedings of the 2023 Conference on Empirical Methods in Natural Language Processing}, pp.\  4647--4675, 2023.

\bibitem[Yang et~al.(2024{\natexlab{a}})Yang, Wang, Shen, Liu, Guo, Wang, and Tao]{yang2024adamerging}
Yang, E., Wang, Z., Shen, L., Liu, S., Guo, G., Wang, X., and Tao, D.
\newblock Adamerging: Adaptive model merging for multi-task learning.
\newblock In \emph{The Twelfth International Conference on Learning Representations}, 2024{\natexlab{a}}.
\newblock URL \url{https://openreview.net/forum?id=nZP6NgD3QY}.

\bibitem[Yang et~al.(2024{\natexlab{b}})Yang, Kailkhura, Wang, and Liang]{yang2024cooccur}
Yang, H., Kailkhura, B., Wang, Z., and Liang, Y.
\newblock Training dynamics of transformers to recognize word co-occurrence via gradient flow analysis.
\newblock In \emph{Advances in Neural Information Processing Systems}, volume~37, pp.\  46047--46117, 2024{\natexlab{b}}.

\bibitem[Yang et~al.(2025)Yang, Lin, Lee, Papailiopoulos, and Nowak]{yang2025taskvectorsincontextlearning}
Yang, L., Lin, Z., Lee, K., Papailiopoulos, D., and Nowak, R.
\newblock Task vectors in in-context learning: Emergence, formation, and benefit.
\newblock \emph{arXiv preprint arXiv:2501.09240}, 2025.

\bibitem[Ye et~al.(2024)Ye, Yang, Siah, and Namkoong]{ye2024pretrainingincontextlearningbayesian}
Ye, N., Yang, H., Siah, A., and Namkoong, H.
\newblock Pre-training and in-context learning is bayesian inference a la de finetti.
\newblock \emph{arXiv preprint arXiv: 2408.03307}, 2024.

\bibitem[Yoshida et~al.(2025)Yoshida, Naraki, Horie, Yamaki, Shimizu, Saito, McAuley, and Naganuma]{yoshida2025mastering}
Yoshida, K., Naraki, Y., Horie, T., Yamaki, R., Shimizu, R., Saito, Y., McAuley, J., and Naganuma, H.
\newblock Mastering task arithmetic: \${\textbackslash}tau\$jp as a key indicator for weight disentanglement.
\newblock In \emph{The Thirteenth International Conference on Learning Representations}, 2025.
\newblock URL \url{https://openreview.net/forum?id=1VwWi6zbxs}.

\bibitem[Zhang et~al.(2024)Zhang, Frei, and Bartlett]{zhang2023trained}
Zhang, R., Frei, S., and Bartlett, P.~L.
\newblock Trained transformers learn linear models in-context.
\newblock \emph{Journal of Machine Learning Research}, 25\penalty0 (49):\penalty0 1--55, 2024.

\bibitem[Zhang et~al.(2025{\natexlab{a}})Zhang, Heinzerling, Li, Ishigaki, Hitomi, and Inui]{zhang2025understandingfactrecalllanguage}
Zhang, Y., Heinzerling, B., Li, D., Ishigaki, R., Hitomi, Y., and Inui, K.
\newblock Understanding fact recall in language models: Why two-stage training encourages memorization but mixed training teaches knowledge.
\newblock \emph{arXiv preprint arXiv: 2505.16178}, 2025{\natexlab{a}}.

\bibitem[Zhang et~al.(2025{\natexlab{b}})Zhang, Zhang, Yang, and Wang]{zhang2023BMA}
Zhang, Y., Zhang, F., Yang, Z., and Wang, Z.
\newblock What and how does in-context learning learn? bayesian model averaging, parameterization, and generalization.
\newblock In \emph{The 28th International Conference on Artificial Intelligence and Statistics}, 2025{\natexlab{b}}.
\newblock URL \url{https://openreview.net/forum?id=B50OF0Fc6O}.

\end{thebibliography}
\bibliographystyle{icml2025}

\newpage
\appendix
\onecolumn
\section{Additional Related Work}\label{app:add_related_work}

\textbf{Task Vector Mechanism}. This line of research investigates the emergence of task vectors through controlled experiments. \citet{hendel2023taskvector} and \citet{todd2024functionvectorslargelanguage} independently observed the emergence of task or function vectors in tasks such as translation and factual recall. \citet{knittel2024gpt2lensvectorsymbolic} demonstrated that GPT-2 effectively learns mechanisms analogous to vector symbolic architectures, where different blocks communicate by writing and reading nearly orthogonal vectors from the residual stream. \citet{liu2024incontextvectorsmakingcontext} proposed a framework that recasts in-context learning as in-context vector manipulation. \citet{yang2025taskvectorsincontextlearning} explored the emergence and benefits of task vectors in in-context learning (ICL). Finally, \citet{merullo2024w2varithmetic} revealed that large language models (LLMs) implement certain single-token factual-recall tasks through vector arithmetic. However, no existing study explains why and how gradient-update transformer models naturally implement this mechanism.

\textbf{Applications of Task Vectors.} Owing to the inherent sparsity in LLM representations, task vector arithmetic has emerged as an efficient and interpretable method for model merging~\cite{lee2025dynamicfisher,yoshida2025mastering,cao2025paramdelta}, explicitly via arithmetic mean~\cite{wortsman2022model,ilharco2023editing}, Fisher information weighting~\cite{matena2022merging}, regression-based mean~\cite{jin2023dataless}, or learned merging weights~\cite{yang2024adamerging}. Recently, \citet{he2025localizeandstitch} introduced a localized merging approach to better exploit task arithmetic. Beyond merging, \citet{li2025when} demonstrate task vectors' effectiveness for model editing, and \citet{jiang2025unlocking} reveal their promise in mitigating catastrophic forgetting.

\textbf{Storage and Retrieval of Factual Knowledge}. Recent research has explored how large language models (LLMs) perform factual recall or associative memory tasks \citep{meng2023locatingeditingfactualassociations, nichani2024understandingfactualrecalltransformers, cabannes2024learningassociativememoriesgradient, ghosal2024understandingfinetuningfactualknowledge, allenzhu2024physicslanguagemodels31, marks2024geometrytruthemergentlinear}. These studies span a wide range of topics, including the expressiveness of LLMs \citep{bietti2023birthtransformermemoryviewpoint}, the optimality of memory storage and optimization dynamics \citep{cabannes2024learningassociativememoriesgradient, nichani2024understandingfactualrecalltransformers}, and the empirical mechanisms by which LLMs encode factual knowledge and their inherent limitations. \citet{allenzhu2024physics33} empirically demonstrated the memorization capacity of transformer language models of varying sizes trained on synthetic factual recall tasks, observing near-linear scaling with the number of parameters. \cite{allenzhu2024physicslanguagemodels31} found that QA data is essential for learning to retrieve factual knowledge in biography data. Subsequently, \cite{zhang2025understandingfactrecalllanguage} further showed that training on QA data, when mixed with Bio data, would increase both quantity and importance of shared parameters compared to training upon biography and QA data in a sequential manner, and \cite{kahardipraja2025atlasincontextlearningattention} revealed that different attention heads play distinct roles--in context heads comprehend instructions and retrieve relevant contextual information, while parametric heads store relational knowledge. Additionally, \citet{park2023linearhypothesis, parkgeometry_2024, jiang2024origins, marks2024geometrytruthemergentlinear, marconato2025all, liu2025ipredictiam} provided both theoretical and empirical evidence that LLMs tend to represent independent concepts, words, and factual statements in a linear manner. \citet{minegishi2025rethinking} showed the Sparse autoencoders (SAEs) have their limits for representing polysemous contexts, while deeper layers of LLM contributes to distinguishing the polysemy. These observations inspire further investigation into the underlying mechanisms of factual knowledge retrieval.

\textbf{Empirical Investigations of ICL Mechanisms.}  
\citet{garg2022what} showed that Transformer-based in-context learning (ICL) is robust to distribution shifts, motivating a series of studies on out-of-distribution (OOD) generalization~\citep{yadlowsky2023pretraining, raventos2023pretraining, ahuja2023closer, kossen2024incontext, pan2023context, fan2024transformers, wang2025can}. \citet{yadlowsky2023pretraining} and \citet{wang2025can} analyze mixtures of function classes and show that Transformers struggle to generalize to unseen ones; \citet{kwon2025outofdistributiongeneralizationincontextlearning} further extend this to mixtures over multiple distributions and provide optimal parameterization for regression. However, these works largely overlook ICL's core capability: inferring the underlying function/task behind demonstrations (i.e., \textit{function reference}). This aligns with the perspective that identifying high-level latent concepts corresponds to task inference, a view supported by empirical findings~\citep{wang2023llmtopicimplicit}. While BMA-based models~\citep{zhang2023BMA, ye2024pretrainingincontextlearningbayesian} offer a theoretical lens, they rely on unrealistic assumptions that distort Transformer architectures for kernel regression. Empirical work has instead uncovered that LLMs encode task identity via function or task vectors during ICL~\citep{hendel2023taskvector, todd2024functionvectorslargelanguage, yang2025taskvectorsincontextlearning, merullo2024w2varithmetic}, and that multi-head attention-only models exhibit multi-phases in forming task circuits~\citep{minegishi2025inductionheadsincontextmeta}. Further, \citet{kahardipraja2025atlasincontextlearningattention} identify distinct roles of in-context and parametric heads in function comprehension and knowledge storage. Yet, these works do not systematically connect task vector mechanisms with BMA-style inference or with OOD generalization in ICL, leaving a gap that we aim to fill.

\textbf{Theoretical Investigations of ICL Mechanisms.} A recent line of theoretical research investigates how transformers learn in-context in various scenarios \citep{zhang2023trained, tianyuandongscansnap, nichani2024transformerslearncausalstructure, chen2024unveilinginductionheadsprovable, liang2024transformershandleendogeneityincontext, shen2024trainingconvergencetransformersincontext, huang2024task, chen2024multihead}. However, these studies often rely on idealized assumptions, such as linearized or query-key (QK)-combined attention mechanisms \citep{zhang2023trained, tianyuandongscansnap, nichani2024transformerslearncausalstructure, chen2024unveilinginductionheadsprovable, oko2024pretrained, zhang2023BMA}, or employ simplified loss functions like squared or hinge loss \citep{chen2024multihead, huang2024MIM, li2023visiontransformer, li2024fine, chang2025provable}. Furthermore, rather than investigating how transformers inherently process task demonstrations $\mathbf{T}$, these studies adopt specialized embedding schemes, such as the “up-word-down-label” approach:
\[
\mathbf{E}(\mathbf{T}) = \begin{pmatrix}
\boldsymbol{x}_1, & \cdots & \boldsymbol{x}_J, & \boldsymbol{x}_{\text{query}} \\
\boldsymbol{y}_1, & \cdots & \boldsymbol{y}_J, & \mathbf{0}
\end{pmatrix},
\]
where words and labels are processed separately by attention and MLP layers \citep{Baialgorithmselection, li2023visiontransformer, li2024fine, bu2024multiconcept, chang2025provable}. In contrast, we consider the case where the prompt $\mathbf{T} = [\boldsymbol{x}_1, \boldsymbol{y}_1,\cdots,\boldsymbol{y}_J,\boldsymbol{x}_{\text {query }}]=[\mathbf{T}_{1},\cdots \mathbf{T}_{L}] \in \mathbb{R}^{d \times L}$ $(L=2J+1)$ is \textit{directly processed} by a non-linear transformer with a residual stream. Besides, in these work's construction, there must be similar label-related (low-level) patterns in the demonstration pairs, and essentially not showed the \textit{function reference} of transformer models during ICL. For example, given the prompt \texttt{Japan Tokyo France Paris} \texttt{China}, the expected output ``\texttt{Beijing}'' reflects the underlying function ``\texttt{Country’s Capital}'' applied to the final query, not dependent on prior low-level label semantics in ``\texttt{Japan}'' or ``\texttt{France}''. In addition, none of these theoretical works consider the real-world latent geometric relationship between words and labels depicted in Figure~\ref{fig:concept_geometry}, nor do they explain the observed in-context vector arithmetic.

\section{Additional Experiments}\label{app:additional_exp}
\begin{figure}[H]
    \centering
    \begin{subfigure}[b]{0.24\linewidth}
        \includegraphics[width=\linewidth]{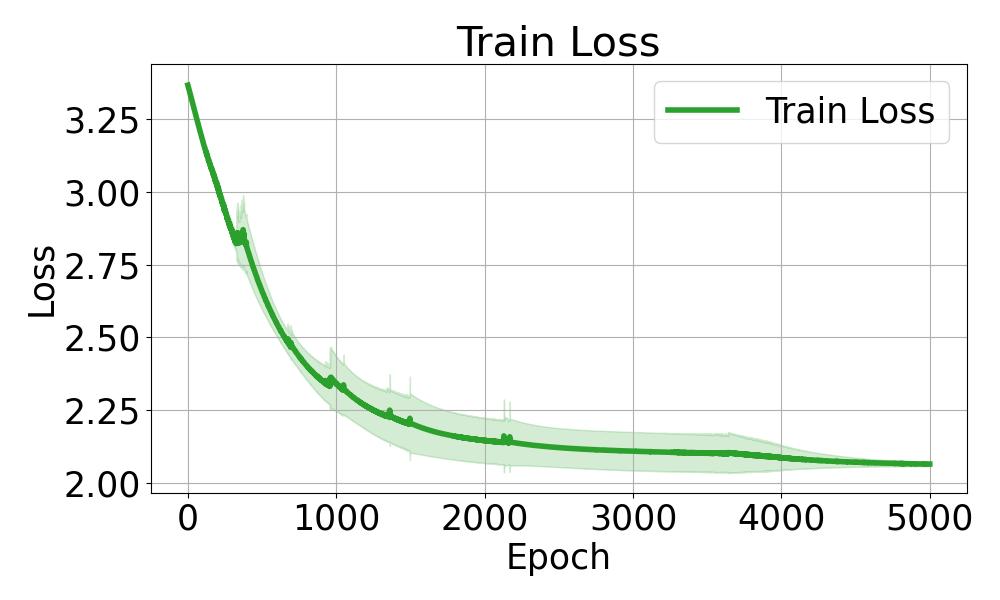}
        \caption{Training loss}
    \end{subfigure}
    \hfill
    \begin{subfigure}[b]{0.24\linewidth}
        \includegraphics[width=\linewidth]{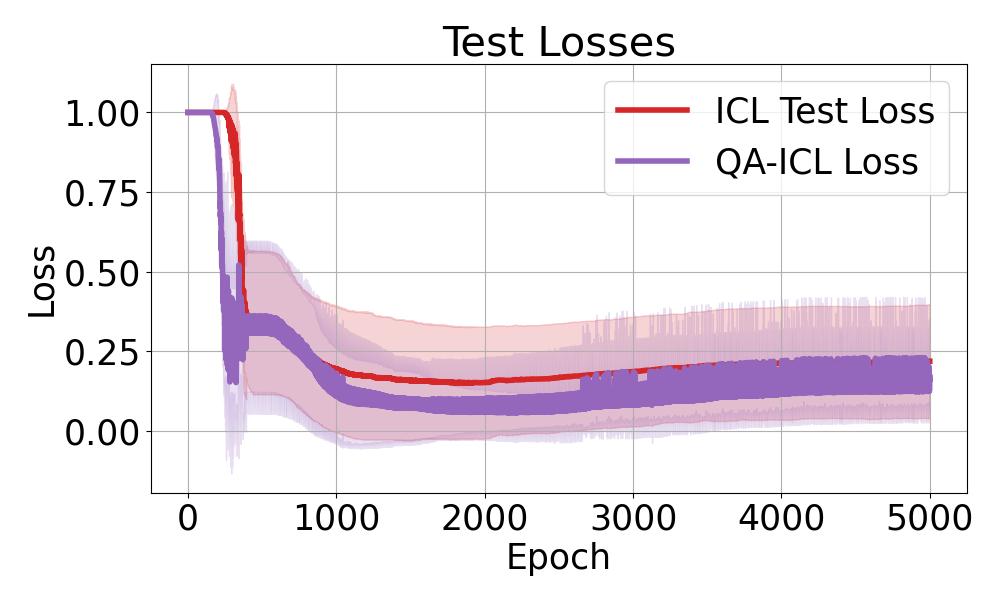}
        \caption{Test losses}
    \end{subfigure}
    \hfill
    \begin{subfigure}[b]{0.24\linewidth}
        \includegraphics[width=\linewidth]{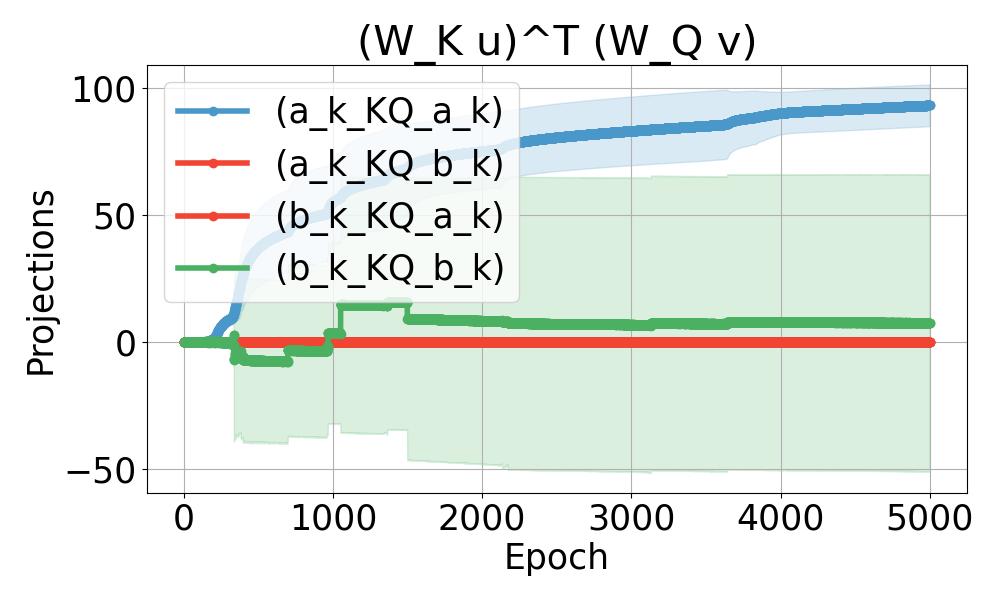}
        \caption{$\boldsymbol{u}^{\top}\mathbf{W}_{K}^{\top}\mathbf{W}_{Q}\boldsymbol{v}$}
    \end{subfigure}
    \hfill
    \begin{subfigure}[b]{0.24\linewidth}
        \includegraphics[width=\linewidth]{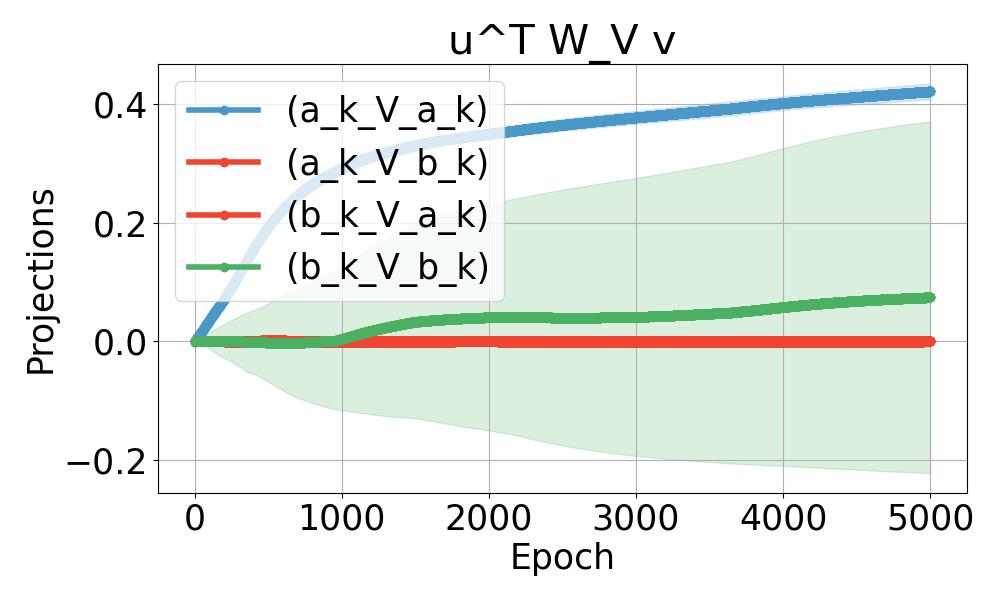}
        \caption{$\boldsymbol{u}^{\top}\mathbf{W}_{V}\boldsymbol{v}$}
    \end{subfigure}
    \caption{Training Dynamics over the QA-ICL Training Distribution. The definitions of legends follows Figure \ref{fig:ICL_trained_dynamics}. In (d), $\mathbf{W}_{V}$ overfits to low-level features $\boldsymbol{b}_{k}$, resulting in persistently constant ($0.2$) test error shown in (b).}
    \label{fig:QAICL_trained_dynamics}
\end{figure}

We also conducted experiments on the scenario when the training is conducted on QA-ICL data. For comparison, we adopt the similar parameter configuration :$K=2$, $K^{\prime}=100$, $d = 3000$, $n=200$, $M=30$, $L=L^{\star}=30$, $\eta=5$, $q_{V}=1e^{-5}$, $\sigma_0=1e^{-3}$, $\sigma_1 = 5\times1e^{-3}$, $\sigma_p= \sigma_p^{\star}= 1e^{-2}$, and $T=5000$.

\section{Model Details}\label{app: details of model}

In this section, we present the missing details of our learning problem.

\subsection{Training Algorithm}

Our training algorithms for transformer is presented in Algorithm \ref{alg:main}.

\subsection{Hierarchical Data Modeling}

To model the empirically observed task vector arithmetic mechanism in factual-recall ICL as illustrated in Figure \ref{fig:concept_geometry}, where the answer/label vectors $\boldsymbol{y}$ tend to align more strongly with the task vector $\mathbf{a}_{\boldsymbol{\theta}}^f$ than with query word vectors $\boldsymbol{x}$, we draw inspiration from recent work \citet{parkgeometry_2024} on hierarchical concept geometry. This work observes that LLMs tend to encode high- and low-level vectors in an approximately orthogonal manner within their latent space, and that concept vectors at the same level, which are semantically independent, also exhibit mutual orthogonality—an observation that is further supported by both theoretical and empirical studies \citet{jiang2024origins, yamagiwa2023discovering, park2023linearhypothesis}. Furthermore, Figure 1 in \citet{parkgeometry_2024} suggests that word representations reside within polytopes, aligning with the empirical illustration in Figure \ref{fig:concept_geometry}. Motivated by these findings, a natural theoretical modeling approach is to treat the task vector as a high-level concept vector, while the components orthogonal to the task vector correspond to task-specific low-level concept vectors, which we define as follows.

\begin{definition}
\textbf{Word-Label Pair ICL Prompt Distribution} ($\mathcal{P}_{\mathbf{T}}$).  
The prompt $\mathbf{T}:=[\mathbf{x}_1, \mathbf{y}_1, \cdots, \mathbf{x}_J, \mathbf{y}_J, \mathbf{x}_{J+1}] \in \mathbb{R}^{d \times (2J+1)}$ and $\mathbf{y}_{J+1}\in \mathbb{R}^{d}$ is generated as follows. Each word-label pair in $\mathbf{T}$ shares a high-level task concept with uniform probability, and each pair has an equal chance of possessing a positive or negative low-level label feature. Specifically, for all $l \in [J+1]$, the word and label vectors are defined as:

\begin{equation}
\begin{aligned}
    & k_{\mathbf{T}} \sim \operatorname{Unif}[K], \  y_{k_{\mathbf{T}},l} \sim \operatorname{Unif}[\pm 1], \  \boldsymbol{\xi}_{l,\mathbf{x}}, \boldsymbol{\xi}_{l,\mathbf{y}} \sim \mathcal{N}(\mathbf{0}, \sigma_p^{2} \mathbb{I}), \\
    & \mathcal{X}_{\mathbf{T},l} = \{k_{\mathbf{T}}\} \cup \{i \in [K] \setminus \{k_{\mathbf{T}}\} : B_i \sim \operatorname{Ber}(K^{-1}), B_i = 1\}.  \\
    &\mathcal{Y}_{\mathbf{T},l} = \{k_{\mathbf{T}}\} \cup \{i \in [K] \setminus \{k_{\mathbf{T}}\} : B_i \sim \operatorname{Ber}(K^{-1}), B_i = 1\}.  \\
    & y_{k_{l}, {l}} \sim \operatorname{Unif}[\pm 1], \quad  \forall k_{l} \in \mathcal{X}_{\mathbf{T},l} \setminus \{k_{\mathbf{T}}\},\\
    & \mathbf{x}_{l} := (\sum_{k_{l} \in \mathcal{X}_{\mathbf{T},l}}x_a \cdot \boldsymbol{a}_{k_{l}} + y_{k_{l}, {l}} \cdot \boldsymbol{b}_{k_{l}}) +\boldsymbol{\xi}_{l,\mathbf{x}},\\
    & \mathbf{y}_{l_{y}} := (\sum_{k_{l_{y}} \in \mathcal{Y}_{\mathbf{T},l_{y}}} \boldsymbol{a}_{k_{l_{y}}} + y_{k_{l_{y}}, l_{y}} \cdot \boldsymbol{b}_{k_{l_{y}}}) +\boldsymbol{\xi}_{l_{y},\mathbf{y}}, \ \forall l_{y} < J+1\\
    & \mathbf{y}_{J+1} := \boldsymbol{a}_{k_{\mathbf{T}}} + y_{k_{\mathbf{T}},J+1} \cdot \boldsymbol{b}_{k_{\mathbf{T}}},
\end{aligned}
\label{eq:def_of_x_y_distri}\end{equation}

where the noise vector $\boldsymbol{\xi}_{l}$ accounts for the inherent semantic inexactness of latent representation for language. The expected label vector of the prompt, $\mathbf{y}_{J+1}$, is constrained by the co-task concept $k_{\mathbf{T}}$ of the prompt and remains noiseless to ensure an accurate semantic representation of the concept-specific answer. The term $x_a \cdot \boldsymbol{a}_k$ acts as an anchor, guiding the transformer to retrieve high-level task concepts, with $x_a\|\boldsymbol{a}_k\|/\|\boldsymbol{b}_k\| \leq o(1)$. For simplicity, we set $x_a := 0.1$.

\label{def:detailed def of word label pair}\end{definition}

\begin{remark}
To model \textit{polysemy}, each word vector $\mathbf{x}_{l_{x}}$ ($l_{x} \in [J+1]$) and label vector $\mathbf{y}_{l_{y}}$ ($l_{y} \in [J]$) is associated with multiple task concepts. However, all tokens in a prompt share the co-task concept $k_{\mathbf{T}}$. For example, ``\textit{Apple}'' may relate to both ``\textit{Color}'' and ``\textit{Species}'' as task concepts, with labels ``\textit{Red}'' and ``\textit{Fruit}'', respectively. Likewise, ``\textit{Fruit}'' may indicate ``\textit{Outcome}'' in contexts unrelated to ``\textit{Color}''. To reduce the freedom of our learning problem, we assume the ground truth label follows a single-concept noiseless form:  
\[
\mathbf{y}_{J+1} = \boldsymbol{a}_{k_{\mathbf{T}}} + y_{k_{\mathbf{T}}, J+1} \cdot \boldsymbol{b}_{k_{\mathbf{T}}}.
\]  
This is reasonable, as $k_{\mathbf{T}}$ naturally disambiguates label meanings. For instance, while ``\textit{Fruit}'' could mean ``\textit{Outcome}'', under the ``\textit{Species}'' task, its meaning is unambiguous.  
\end{remark}

\textbf{Failure of ICL-trained Transformer}. Under the above definitions, we conduct experiments across various parameter settings in the procedure of Algorithm \ref{alg:main}. However, as illustrated in Figure \ref{fig:ICL_trained_dynamics}, the test loss always fails to converge, remaining at a constant level. Moreover, the projection dynamics indicate that the value matrix $\mathbf{W}_{V}$ learns low-level concepts in a disorganized manner, suggesting that $\mathbf{h}_{\boldsymbol{\theta},0}(\mathbf{T})$ becomes a hybrid vector containing both $\boldsymbol{a}_{k_{\mathbf{T}}}$ and $\pm\boldsymbol{b}_{k_{\mathbf{T}}}$. Consequently, this compromises its role as approximating a task vector: the presence of low-level concept components $e\boldsymbol{b}_{k_{\mathbf{T}}}$, $e\in \{\pm 1\}$, within $\mathbf{h}_{\boldsymbol{\theta},0}(\mathbf{T})$ disrupts the vector arithmetic in Eq.~(\ref{eq:vector_arith}), biasing it toward a specific $e\boldsymbol{b}_{k_{\mathbf{T}}}$. This leads to degraded predictions, particularly when $\mathbf{x}_{J+1}$ contains $-e\boldsymbol{b}_{k_{\mathbf{T}}}$.

\textbf{Reason of Failure: ICL Data vs. QA Data}. At a high level, the failure arises from unintended co-occurrences between low-level features in the demonstration pairs and those in the query words. In contrast, QA data contains only the task vector and irrelevant tokens in $\mathbf{x}_{\text{QA}}$, avoiding the possibility of such co-occurrences. According to \citet{yang2024cooccur}, transformers tend to encode fixed co-occurrence patterns via gradient descent. Moreover, training and testing solely on the ICL data distribution—though common in theoretical studies \citep{zhang2023trained, kim2024MFD, chen2024multihead, bu2024multiconcept}—is unrealistic in real-world settings. A more practical training paradigm involves generative pretraining or QA-style pretraining/fine-tuning \citep{allenzhu2024physicslanguagemodels31, zhang2025understandingfactrecalllanguage}. Notably, our concept data modeling \textbf{naturally arises} from generative pretraining \citep{park2023linearhypothesis, jiang2024origins, parkgeometry_2024}. Furthermore, QA data has been shown to \textbf{enhance} transformers' factual retrieval capabilities \citep{allenzhu2024physicslanguagemodels31, zhang2025understandingfactrecalllanguage}, and has been effectively leveraged to extract task vectors \citep{merullo2024w2varithmetic}. Motivated by these observations, we incorporate QA data into our training framework, which we formalize as follows.

\begin{definition}
\textbf{QA Sentence Distribution}\footnote{Indeed, the formula of the sentence is unnecessarily a QA, but can be a factual statement (e.g. $\mathbf{S}=[\mathbf{x}^{\text{QA}}, \  \mathbf{y}]=[\underset{\boldsymbol{\nu}_{1}}{\text{The}},\ {\boldsymbol{a}_{k}},\ \underset{\boldsymbol{\nu}_{4}}{\text{of}},\ \mathbf{x}, \  \mathbf{y}]$).} ($\mathcal{P}_{\text{QA}}$). Building on the findings of \citet{allenzhu2024physicslanguagemodels31, nichani2024understandingfactualrecalltransformers}, which imply that incorporating high-level task messages or relation tokens within questions significantly enhances a transformer's ability to \textit{retrieve} relevant factual knowledge, we model our QA-type sentence $\mathbf{S}_{n}:=[\mathbf{x}^{\text{QA}}_{n}, \  \mathbf{y}_{n}] \in \mathbb{R}^{d\times(M+2)}$ from distribution $\mathcal{P}_{\text{QA}}$ as follows. 
\begin{equation}
\begin{aligned}
    k_{\mathbf{S}_n} &\sim \operatorname{Unif}[K], \  y_{k_{\mathbf{S}_n}}=y_{n} \sim \operatorname{Unif}[\pm 1],\ \ \boldsymbol{\xi}_{n, 1:M}, \boldsymbol{\xi}_{n,\mathbf{x}} \sim \mathcal{N}(\mathbf{0}, \sigma_p^{2} \mathbb{I}), \\
    \mathcal{X}_{\mathbf{S}_n} & = \{k_{\mathbf{S}}\} \cup \{k \in [K] \setminus \{k_{\mathbf{S}_n}\} : B_k \sim \operatorname{Ber}(K^{-1}), B_k = 1\}.  \\
    y_{k_{n}} & \sim \operatorname{Unif}[\pm 1], \forall k_{l} \in \mathcal{X}_{\mathbf{S}_n} \setminus \{k_{\mathbf{S}_n}\},\\
    \mathbf{x}_{n}& : = (\sum_{k_{n} \in \mathcal{X}_{\mathbf{S}_n}}x_a \cdot \boldsymbol{a}_{k_{n}} + y_{k_{n}} \cdot \boldsymbol{b}_{k_{n}}) + \boldsymbol{\xi}_{n,\mathbf{x}}, \\
    \mathbf{y}_{n} & := \boldsymbol{a}_{k_{\mathbf{S}}} + y_{n} \cdot \boldsymbol{b}_{k_{\mathbf{S}}} , \\
     m_{\mathbf
    {S}_n} &\sim \operatorname{Unif}[M], \  i_{1:M} \sim  \operatorname{Unif}[K^{\prime}], \ \boldsymbol{\nu}_{n, 1:M} := \boldsymbol{\nu}_{i_{1:M}},\\
    \mathbf{x}_{n}^{\text{QA}}&:=[{\boldsymbol{\nu}_{n,{1}}}+\boldsymbol{\xi}_{n,1},\ \cdots,\ \boldsymbol{\nu}_{n,{m_{\mathbf
    {S}_n}-1}}+\boldsymbol{\xi}_{n,{m_{\mathbf
    {S}_{n}}}-1},\ \boldsymbol{a}_{{k_{{\mathbf{S}_{n}}}}}+\boldsymbol{\xi}_{n,m_{\mathbf
    {S}_n}}, \\
    & \phantom{:=[} \boldsymbol{\nu}_{n,{m_{\mathbf
    {S}_n}}+1}+\boldsymbol{\xi}_{n,m_{\mathbf
    {S}_n}+1}\cdots, \boldsymbol{\nu}_{n,{M}}+\boldsymbol{\xi}_{n,M},\ \mathbf{x}]\in \mathbb{R}^{d\times(M+2)},\\
\end{aligned}
\label{eq:def_of_QA_distri}\end{equation}
where $K^{\prime}$ denotes the number of task-irrelevant tokens, $\boldsymbol{\nu}_{n,{1:M}}$ denotes common tokens (e.g., task-irrelevant tokens) that appear uniformly across different tasks $k \in [K]$, and the position of $\boldsymbol{a}_{k}$ can be arbitrary before $\mathbf{x}$. 
\label{def:detailed def of QA}\end{definition}
An illustrative example with $M=4$ on concept $k$ is:
\[
\underset{\boldsymbol{\nu}_{n,{1}}+\boldsymbol{\xi}_{1}}{\text{What}}\ \underset{\boldsymbol{\nu}_{n,{2}}+\boldsymbol{\xi}_{2}}{\text{is}}\ \underset{\boldsymbol{\nu}_{n,{3}}+\boldsymbol{\xi}_{3}}{\text{the}}\ \boldsymbol{a}_{k}\ \underset{\boldsymbol{\nu}_{n,{4}}+\boldsymbol{\xi}_{4}}{\text{of}}\ \underset{\substack{x_a\cdot\boldsymbol{a}_{k}+e \boldsymbol{b}_{k}+\boldsymbol{\xi}_{n,M} \\ + \cdots}}{\mathbf{x}} \  \underset{\boldsymbol{a}_{k}+e \boldsymbol{b}_{k}}{\mathbf{y}}.
\]
The index $(n, m)$, where $n \in [N]$ and $m \in [M]$, denotes the sample index ($n$-th) and the position index ($m$) for the noise vector $\boldsymbol{\xi}_{n, m}$.

\textbf{Complexity of QA and ICL Tasks}. In our setting involving retrieval over a hierarchical concept graph, one natural way to measure task complexity is by the model’s difficulty in confidently producing its argmax prediction. A simple metric is $C(\mathbf{T}) = 1/\max_{y}p_{\boldsymbol{\theta}}(y|\mathbf{T})$, where a higher $C(\mathbf{T})$ indicates greater complexity. QA tasks tend to be simpler: a cue word (e.g., ``\texttt{capital}'' in ``\texttt{What is the capital of Japan?}'') guides $\boldsymbol{\theta}$ to retrieve the relevant task in collaboration with the query word, often resulting in lower $C(\mathbf{T})$. In contrast, Word-Label tasks lack such explicit cues, requiring $\boldsymbol{\theta}$ to infer the task underlying the word-label pair. For instance, given the prompt $\mathbf{T} = \texttt{[Japan, Sakura, China]}$, the model may assign non-trivial confidence to both ``Panda'' and ``Peony'' (e.g., struggling in choosing ``National Symbol'' or ``National Flower'' as the tasks). This reflects the polysemy-induced challenge posed by the hierarchical conceptual knowledge embedded in natural tokens.

\textbf{QA-ICL Prompt Distribution}. ($\mathcal{P}_{\text{QA}}^{\mathbf{T}}$). A natural extension of the above two distributions is that we can replace the word-based demonstration by QA-based demonstration in the task-specific prompt, namely $\mathbf{T}_{\text{QA}}:=[\mathbf{x}^{\text{QA}}_{1},\mathbf{y}_{1}, \cdots, \mathbf{x}^{\text{QA}}_{J},\mathbf{y}_{J}, \mathbf{x}^{\text{QA}}_{J+1}]\in\mathbb{R}^{d \times (J+1)(M+2)}\sim \mathcal{P}_{\mathbf{T}_{\text{QA}}}$.

\textbf{Success of QA-Trained Transformers}.  
Unlike ICL-trained scenarios, we observe the success of QA-trained dynamics, as shown in Figure \ref{fig:QAtrain_imbalanced_M30_ID_10_2loss.jpg}, where the column and row projections of the value matrix $\mathbf{W}_{V}$ primarily align with the task vector directions. As a result, $\mathbf{h}_{\boldsymbol{\theta},0}(\mathbf{T})$ closely approximates the task concept vector, leading to an arbitrarily small test loss for both ICL and QA-ICL data after sufficient training epochs. Interestingly, as depicted in Figure \ref{fig:QAICL_trained_dynamics}, training a transformer with QA-ICL data induces harmful hybrid task vector generation, yet its test error remains at a similar level to that observed when training on ICL data. These findings motivate a rigorous theoretical analysis of the underlying optimization dynamics and the test convergence guarantees, which will be explored in detail in the following section.

\textbf{ICL and QA Tasks}. Given a prompt $ \mathbf{T}$ or an QA sentence $\mathbf{S}$, where we denote the last token as the residual query vector: $\mathbf{b}_{\boldsymbol{\theta}}^{\text{query}}$ with some $y \in [\pm 1], k \in [K]$. We expect the transformer model can return the original low-level semantic label vector $y \cdot \boldsymbol{b}_{k}$ by argmax sampling from the undisturbed normalized token dictionary
\begin{equation}
\begin{aligned}
    \mathbf{U}:=&[  \frac{\boldsymbol{a}_1+ \boldsymbol{b}_{1}}{\|\boldsymbol{a}_1+ \boldsymbol{b}_{1}\|}, \frac{\boldsymbol{a}_1 - \boldsymbol{b}_{1}}{\|\boldsymbol{a}_1 - \boldsymbol{b}_{1}\|}, \frac{x_a\cdot\boldsymbol{a}_1+ \boldsymbol{b}_{1}}{\|x_a\cdot\boldsymbol{a}_1+ \boldsymbol{b}_{1}\|}, \frac{x_a\cdot\boldsymbol{a}_1 - \boldsymbol{b}_{1}}{\|x_a\cdot\boldsymbol{a}_1 - \boldsymbol{b}_{1}\|}, \frac{\boldsymbol{a}_1}{\|\boldsymbol{a}_1\|}, \frac{\boldsymbol{b}_1}{\|\boldsymbol{b}_1\|},  \frac{-\boldsymbol{b}_1}{\|\boldsymbol{b}_1\|},\cdots,\\
    &  \frac{\boldsymbol{b}_K}{\|\boldsymbol{b}_K\|} ,\frac{- \boldsymbol{b}_K}{\|\boldsymbol{b}_K\|}, \cdots,\frac{\boldsymbol{\nu}_{1}}{\|\boldsymbol{\nu}_{1}\|},\cdots,  \frac{\boldsymbol{\nu}_{K^{\prime}}} {\|\boldsymbol{\nu}_{K^{\prime}}\|}]=[\mathbf{u}_1, \cdots, \boldsymbol{u}_{7K+K^{\prime}}] \in \mathbb{R}^{d \times (7K+K^{\prime})}
\end{aligned}
\label{eq:def_of_U}\end{equation}
The normalization of the noiseless token in the dictionary is to ensure that every token $\mathbf{u}_{k}, \forall k \in [K]$ enjoys the same length fairly. Similar empirically driven dictionary approaches have also been adopted in recent theoretical work \citep{tianyuandongscansnap, tian2024joma}. For simplicity here we let $\|\boldsymbol{a}_k\| = \|\mathbf{a}\| =\|\boldsymbol{b}_k\|= \|\mathbf{b}\| =\|\boldsymbol{\nu}_{m}\|=1 $. Given a label vector $\mathbf{y} = \boldsymbol{a}_{k} + y \cdot \boldsymbol{b}_{k} + \boldsymbol{\xi}$, its corresponding dictionary vector is defined as:  
\[
\mathbf{u}_{k_{\mathbf{y}}} = \operatorname{LN}\left(\boldsymbol{a}_{\left\lceil \frac{k_{\mathbf{y}}}{7} \right\rceil} + \left(2\left(\frac{k_{\mathbf{y}}}{7} - \left\lceil \frac{k_{\mathbf{y}}}{7} \right\rceil\right) - 1\right)\boldsymbol{b}_{\left\lceil \frac{k_{\mathbf{y}}}{7} \right\rceil}\right),
\]
where the dictionary index is determined by:  
\[
k_{\mathbf{y}} = 7k + \frac{y+1}{2}.
\]
\textbf{Connection to Word-2-Vector Arithmetic}. 
In the context above, when $\lvert\mathcal{X}_{\mathbf{T},l}\rvert,\lvert\mathcal{Y}_{\mathbf{T},l}\rvert$ and the noise length are limited, we approximately have:
\begin{equation}
\begin{aligned}
    \mathbf{y}_{J+1}\approx \boldsymbol{a}_{k_{\mathbf{T}}} + \mathbf{x}_{J+1}
\end{aligned}
\label{eq:vector_arith_app}\end{equation}
Therefore, if the transformer $\boldsymbol{\theta}$ can extract the high-level task vector $\mathbf{h}_{\boldsymbol{\theta},0}(\mathbf{T})$, then adding it to \textbf{any word vector within the same task concept} should yield the task-specific label vector in the context of argmax sampling. This aligns with empirical findings by \citet{merullo2024w2varithmetic}, which show that adding various embedded query words, such as ``Poland'' or ``China,'' to the vector $\vec{o}_{\textit{city}}$—which captures the function \texttt{get\_capital(·)} in the latent space—produces the correct capital city as the answer.
\section{Preliminary Lemmas}
\subsection{Continuous Flows}\label{sec:flows}
In this section, we present several lemmas whose continuous flows serve as surrogate processes to bound the discrete training dynamics. While similar but simpler approaches have been explored in \citet{tianyuandongscansnap, meng2023benign, bu2024multiconcept}, our analysis considers a significantly larger number of more complex flows. This extension is necessitated by the inherent complexity of our model, which incorporates softmax attention, layer-wise normalization, and residual stream structures—features that make our setting more realistic than prior work. 

The key idea of using surrogate continuous flows is to leverage the monotonicity of differential equation derivatives. By comparing earlier derivatives with their integrals over a time interval, we establish upper and lower bounds for the discrete process via its continuous counterpart.

\begin{lemma}
    Suppose that a sequence $a_t, t_{0} \leq t \leq {t}_{1}$ follows the iterative formula
    \[
    a_{t+1} = a_t + b,
    \]
    for some constant $b>0$. Then it holds that 
    \[
    a_t =  bt + a_0.
    \]
    for all $t_{0} \leq t \leq {t}_{1}$.
\label{lem:ODE-1}\end{lemma}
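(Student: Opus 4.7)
The plan is to observe that the recurrence $a_{t+1} = a_t + b$ is a first-order linear difference equation with constant forcing, whose closed-form solution is obtained by a straightforward telescoping argument. Since $b$ is a constant independent of $t$, each step contributes the same additive increment, so the cumulative change from the starting index to $t$ is simply $b$ multiplied by the number of steps taken.

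Concretely, I would proceed by induction on $t$. The base case ($t=t_0$, or equivalently $t=0$ as written in the statement) is immediate, since $a_0 = b\cdot 0 + a_0$. For the inductive step, assuming $a_t = bt + a_0$, applying the recurrence gives
\[
a_{t+1} = a_t + b = (bt + a_0) + b = b(t+1) + a_0,
\]
which is exactly the claimed formula at index $t+1$. Equivalently, one can simply telescope:
\[
a_t - a_0 = \sum_{s=0}^{t-1}(a_{s+1}-a_s) = \sum_{s=0}^{t-1} b = bt,
\]
and rearrange.

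There is essentially no obstacle: the lemma is a standard linear-recurrence identity, included here only because it will be invoked repeatedly as a surrogate to upper- and lower-bound certain discrete gradient-descent updates in the ``linear-growth'' regime of Phase 1 (e.g., the update of $\boldsymbol{a}_{k}^{\top}\mathbf{W}_{V}^{(t)}\boldsymbol{a}_{k}$ when the denominator $\|\mathbf{W}_V \mathbf{S}\boldsymbol{\pi}\|$ is approximately constant). The only care needed in downstream use is to verify that $b$ can indeed be treated as constant over the window $[t_0,t_1]$ under consideration, but that verification belongs to the lemmas that invoke this one rather than to the proof of the lemma itself.
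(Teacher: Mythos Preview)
Your proof is correct. The paper's own proof is a one-line remark invoking ``the monotonicity of the linear function and Comparison Theorem'' (in keeping with the framing of the other continuous-surrogate lemmas in that section), whereas you give the direct induction/telescoping argument; for this trivial recurrence the two amount to the same thing.
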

\begin{proof}
    The proof is direct by the monotonicity of the linear function and Comparison Theorem. 
\end{proof}

\begin{lemma}
    Suppose that a positive sequence $b_t>0, t_{1} \leq t \leq {t}_{2}$ follows the iterative formula
    \[
    b_{t+1} = b_t + \frac{a b_{t}}{ct+d},
    \]
    for some constant $a>c>0,d>0$. Then it holds that 
    \[
    b_t \leq (\frac{b_{t_{1}}}{(t_{1}+\frac{d}{c})^{\frac{a}{c}}})(t+\frac{d}{c})^{\frac{a}{c}}.
    \]
    for all $t_{1} \leq t \leq {t}_{2}$.
\label{lem:ODE-5}\end{lemma}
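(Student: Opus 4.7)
The plan is to recognize the claim as a discrete Grönwall-type bound whose right-hand side is exactly the solution of the surrogate ODE $b'(t) = \frac{a}{ct+d}\,b(t)$ with matched initial condition at $t = t_1$. Separating variables in the ODE gives $(\ln b)' = (a/c)/(t + d/c)$, hence $b(t) = b_{t_1}\bigl((t + d/c)/(t_1 + d/c)\bigr)^{a/c}$, which coincides with the stated upper envelope. The proof strategy is then a direct induction on $t$ verifying that the discrete iterates do not overshoot this envelope.

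\textbf{Base case and induction step.} At $t = t_1$ the claim holds with equality by the definition of the constant $C := b_{t_1}/(t_1 + d/c)^{a/c}$. For the inductive step, I would set $u := t + d/c$, which is strictly positive throughout the range since $d > 0$ and $t \geq t_1 \geq 0$. The recursion rewrites as $b_{t+1} = b_t \cdot (u + a/c)/u$, and applying the hypothesis $b_t \leq C\, u^{a/c}$ yields
\[
b_{t+1} \;\leq\; C\, u^{a/c - 1}(u + a/c).
\]
So the entire proof reduces to the pointwise inequality $u^{a/c - 1}(u + a/c) \leq (u+1)^{a/c}$. Dividing by $u^{a/c}$ turns this into $1 + (a/c)/u \leq (1 + 1/u)^{a/c}$, which is exactly Bernoulli's inequality for exponent $a/c \geq 1$ (guaranteed by $a > c > 0$) and base $1 + 1/u > 1$. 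This closes the induction and gives $b_{t+1} \leq C(t + 1 + d/c)^{a/c}$ as desired.

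\textbf{Main obstacle and alternative.} The argument is entirely algebraic once positivity of $b_t$ and of $u$ is ensured, both of which are hypotheses. The delicate point is the application of Bernoulli in the regime $a/c > 1$; this is where the hypothesis $a > c$ is used nontrivially, so in retrospect this is the tightest part of the claim and also explains structurally why the ratio $a/c$ appears as the exponent. As a sanity-check alternative, taking logarithms and applying $\ln(1 + x) \leq x$ gives $\ln b_{t+1} - \ln b_t \leq a/(ct + d)$; telescoping and bounding the resulting discrete sum by $\int a\,d\tau/(c\tau + d)$ via monotone decrease of the integrand would recover the same envelope up to a harmless index shift. I would present the Bernoulli-based induction since it is cleaner and yields the claimed bound without any slack.
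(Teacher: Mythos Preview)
Your proof is correct and takes a cleaner, more elementary route than the paper. The paper constructs the continuous surrogate $x_t$ solving $x' = ax/(ct+d)$, obtains the explicit solution $x_t = b_{t_1}\bigl((t+d/c)/(t_1+d/c)\bigr)^{a/c}$, and then shows $x_{t+1} \geq x_t + ax_t/(ct+d)$ by proving the integrand $x_\tau/(c\tau+d) = (\text{const})\cdot(\tau+d/c)^{a/c-1}$ is increasing in $\tau$ (this is where $a>c$ enters); comparison of the two recursions with matching initial data then gives $b_t \leq x_t$. You bypass the integral entirely and reduce the inductive step directly to $(1+1/u)^{a/c} \geq 1 + (a/c)/u$, which is Bernoulli for exponent $a/c>1$. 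The two arguments are equivalent at the core---unwinding the paper's integral inequality yields exactly your Bernoulli step---but your presentation is more self-contained and makes the role of $a>c$ completely transparent. The paper's version has the advantage of fitting into the uniform ``continuous flow surrogate'' template used throughout Appendix~E.1, which is stylistically consistent but costs a quotient-rule computation that your approach avoids.
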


\begin{proof}
    Consider a continuous-time sequence $x_{t}, t>t_{1}$ defined by the integral equation
     \begin{equation}
         x_t = x_{t_1} + a \int_{t_{1}}^{t_{2}} \frac{x_{\tau} }{c{\tau}+d}\mathrm{d}\tau, \ x_{t_{1}} = b_{t_{1}}.
    \label{eq:integral_of_x_t_b}\end{equation}
    Here, $x_t$ is an increasing function over $t$, such that 
    \begin{equation}
        \frac{\mathrm{d}x_{t}}{\mathrm{d}t}=\frac{a x_{\tau} }{c{\tau}+d}, \ x_{t_{1}} = b_{t_{1}}.
     \label{eq:deriv_of_x_t_b}
    \end{equation}
    To solve this ODE system, by separation of variables we first have
    \[
    \mathrm{d}\log(x_{t}) = \mathrm{d} \log((t+\frac{d}{c})^{\frac{a}{c}}),
    \]
    which then leads to
    \[
    \log(x_{t}) = \log((t+\frac{d}{c})^{\frac{a}{c}}) + \text{constant}.
    \]
    To finalize the constant, we extend $x_{t_{1}}=b_{t_{1}}$ into the formula, and have
     \[
     x_{t} = (\frac{b_{t_{1}}}{(t_{1}+\frac{d}{c})^{\frac{a}{c}}})(t+\frac{d}{c})^{\frac{a}{c}},
     \]
     which is unique by the monotonicity of log function and the positiveness of $a,c,d$. Note that $a>c$, define $f(t) = \frac{\left(t + \frac{d}{c}\right)^{a/c}}{c t + d}$.
     To determine whether \( f(t) \) is increasing, we compute its derivative \( f'(t) \) and analyze its sign. We use the quotient rule:
    \[
    f'(t) = \frac{g'(t)(c t + d) - g(t) c}{(c t + d)^2},
    \]
    where \( g(t) = \left(t + \frac{d}{c}\right)^{a/c} \). Thus, 
    \[
    g'(t) = \frac{a}{c} \left(t + \frac{d}{c}\right)^{\frac{a}{c} - 1}.
    \]
    
    Substituting \( g(t) \) and \( g'(t) \), we obtain:
    \[
    f'(t) = \frac{\frac{a}{c} \left(t + \frac{d}{c}\right)^{\frac{a}{c} - 1} (c t + d) - c \left(t + \frac{d}{c}\right)^{a/c}}{(c t + d)^2}.
    \]
    We can simplify the numerator:
    \[
    f'(t) = \frac{\left(t + \frac{d}{c}\right)^{\frac{a}{c} - 1}}{(c t + d)^2} \cdot \left[\frac{a}{c} (c t + d) - c \left(t + \frac{d}{c}\right)\right].
    \]
    Focus on the bracketed term:
    \[
    \Delta = \frac{a}{c} (c t + d) - c \left(t + \frac{d}{c}\right).
    \]
    Expanding \( \Delta \):
    \[
    \Delta = \frac{a}{c} c t + \frac{a}{c} d - c t - \frac{c d}{c}.
    \]
    Simplify:
    \[
    \Delta = (a - c)t + \frac{a d}{c} - d.
    \]
    
    Therefore, given \( a > c \):  
    1. \( (a - c)t > 0 \), since \( t > 0 \).  
    2. \( \frac{a d}{c} - d > 0 \), as \( a > c \) ensures \( \frac{a}{c} > 1 \).  
    
    Thus, \( \Delta > 0 \), implying \( f'(t) > 0 \), and \( f(t) \) is strictly increasing.
     \[
     x_{t+1} = x_{t} +  \int_{t}^{t+1} \frac{a x_{\tau} }{c{\tau}+d}\mathrm{d}\tau \geq x_{t} +  \int_{t}^{t+1} \frac{a x_{t} }{ct+d}\mathrm{d}\tau,
     \]
     where the inequality is by examine the monotonicity of $f(t) = \frac{\left(t + \frac{d}{c}\right)^{a/c}}{c t + d}$. By Comparison Theorem of two positive series, we obtain $b_t \leq x_t=(\frac{b_{t_{1}}}{(t_{1}+\frac{d}{c})^{\frac{a}{c}}})(t+\frac{d}{c})^{\frac{a}{c}}$.
\end{proof}

\begin{lemma}
    Suppose that a sequence $c_t \geq 0, t_{1}  \leq t \leq {t}_{2}$ follows the iterative formula
    \[
    c_{t+1} = c_t + \frac{d}{c_t},
    \]
    for some constant $d>0$. Then it holds that 
    \[
    x_t \leq  c_t \leq \frac{d}{c_{t_{1}}} + x_t
    \]
    for all $t_{1} \leq t \leq {t}_{2}$. Here, $x_t = \sqrt{d (t-t_{1}) + c_{t_{1}}^{2}}$.
\label{lem:ODE-2}\end{lemma}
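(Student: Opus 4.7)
My plan mirrors the continuous-surrogate comparison used in Lemma~\ref{lem:ODE-5}, but carries it out on the squared sequence $c_t^2$, where the recursion becomes fully tractable. Squaring $c_{t+1} = c_t + d/c_t$ gives $c_{t+1}^2 = c_t^2 + 2d + d^2/c_t^2$, and telescoping from $t_1$ yields the exact identity
\[
c_t^2 \;=\; c_{t_1}^2 + 2d(t-t_1) + R_t, \qquad R_t \;:=\; \sum_{s=t_1}^{t-1} \frac{d^2}{c_s^2} \;\geq\; 0.
\]
The whole problem therefore reduces to controlling the nonnegative residual $R_t$, which is the discrete-time counterpart of the higher-order correction obtained by comparing to the continuous flow $\dot x = d/x$. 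The lower bound is then immediate: discarding $R_t$ and using $2d(t-t_1) \geq d(t-t_1)$ gives $c_t^2 \geq c_{t_1}^2 + d(t-t_1) = x_t^2$, hence $c_t \geq x_t$.

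For the upper bound I would estimate $R_t$ via a telescoping trick that avoids the logarithmic overhead which a naive pointwise bound would incur. Since $d/c_s > 0$ forces $c_{s+1} \geq c_s$, the sequence $c_s$ is non-decreasing, so in each summand of $R_t$ one factor satisfies $d/c_s \leq d/c_{t_1}$, while the other factor is, by the very definition of the recursion, the discrete increment $c_{s+1} - c_s$. Pairing these two observations gives
\[
R_t \;=\; \sum_{s=t_1}^{t-1} \frac{d}{c_s}\cdot\frac{d}{c_s} \;\leq\; \frac{d}{c_{t_1}} \sum_{s=t_1}^{t-1} (c_{s+1}-c_s) \;=\; \frac{d}{c_{t_1}}\,(c_t - c_{t_1}).
\]
Substituting back into the exact identity produces a scalar quadratic inequality of the form $c_t^2 - (d/c_{t_1})\,c_t \leq c_{t_1}^2 + 2d(t-t_1) - d$, which I would solve by completing the square and then applying $\sqrt{a^2+b^2} \leq a+b$ for $a,b\geq 0$ to extract a bound of the shape $c_t \leq x_t + d/c_{t_1}$, as claimed.

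The main obstacle, and the only step that genuinely requires care, is precisely this residual-sum control: any direct pointwise estimate such as $1/c_s^2 \leq 1/(c_{t_1}^2 + 2d(s-t_1))$ followed by integration would contribute a $\log(1 + 2d(t-t_1)/c_{t_1}^2)$ term, which is incompatible with the clean additive correction $d/c_{t_1}$. The telescoping trick bypasses this loss by exploiting the fact that one factor in each summand of $R_t$ is itself the discrete derivative of $c_s$, so that the sum collapses exactly. A Comparison-Theorem argument against the continuous surrogate $\sqrt{2d(t-t_1)+c_{t_1}^2}$ could serve as an alternative route, but the telescoping approach delivers the precise $d/c_{t_1}$ correction without any additional integration step.
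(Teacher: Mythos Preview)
Your approach is sound and genuinely different from the paper's. The paper never squares: it works directly with the continuous surrogate $\dot x = d/x$, first showing $c_t \geq x_t$ by comparing the discrete step to $x_{t+1} = x_t + \int_t^{t+1} (d/x_\tau)\,d\tau \leq x_t + d/x_t$, then bounding $c_t = c_{t_1} + \sum_\tau d/c_\tau \leq c_{t_1} + \sum_\tau d/x_\tau$ and converting the latter sum to the integral $\int_{t_1}^{t}(d/x_\tau)\,d\tau = x_t - x_{t_1}$, with the single leftover term $d/x_{t_1}=d/c_{t_1}$ supplying the additive correction. Your route---square, telescope exactly to $c_t^2 = c_{t_1}^2 + 2d(t-t_1) + R_t$, then bound $R_t$ by pairing one factor $d/c_s$ with $d/c_{t_1}$ and the other with $c_{s+1}-c_s$---is fully discrete and never invokes an integral comparison. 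The paper's approach makes the ODE picture explicit and fits the uniform ``surrogate flow'' template of Appendix~\ref{sec:flows}; your approach is more self-contained and, since it produces an exact identity for $c_t^2$, arguably a shade tighter.

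One caveat on your last step: the stated $x_t = \sqrt{d(t-t_1)+c_{t_1}^2}$ is a typo---the paper's own ODE $\dot x = d/x$ solves to $\sqrt{2d(t-t_1)+c_{t_1}^2}$, and the upper bound as printed is actually false for large $t-t_1$ (your own identity gives $c_t^2 \geq 2d(t-t_1)$, which eventually exceeds $(\sqrt{d(t-t_1)+c_{t_1}^2}+d/c_{t_1})^2$). Your quadratic inequality $c_t^2 - (d/c_{t_1})c_t \leq c_{t_1}^2 + 2d(t-t_1) - d$ naturally yields the intended bound $c_t \leq \sqrt{2d(t-t_1)+c_{t_1}^2} + d/c_{t_1}$: drop the $-d$, complete the square, and apply $\sqrt{a^2+b^2} \leq a+b$ with $a = \sqrt{2d(t-t_1)+c_{t_1}^2}$ and $b = d/(2c_{t_1})$. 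With this correction your plan goes through cleanly, and the ``$2d(t-t_1) \geq d(t-t_1)$'' step in the lower bound becomes unnecessary. (You in fact already wrote the surrogate as $\sqrt{2d(t-t_1)+c_{t_1}^2}$ in your closing remark, so you had the right object in mind.)
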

\begin{proof}
     The proof strategy follows Lemma C.1 in \citet{meng2023benign} despite the consideration of different ODE processes. The key is the decaying and positive nature of $g(x)=1/x$ as well as the examination of integration. Consider a continuous-time sequence $x_{t}, t>t_{1}$ defined by the integral equation
     \begin{equation}
         x_t = x_{t_1} + d \int_{t_{1}}^{t_{2}} \frac{ \mathrm{d}\tau}{x_{\tau}}, x_{t_{1}} = c_{t_{1}}
    \label{eq:integral_of_x_t}\end{equation}
     Note that $x_t$ is an increasing function of $t$, satisfying
     \begin{equation}
     \frac{\mathrm{d}x_{t}}{\mathrm{d}t}=\frac{d}{x_{t}}, x_{t_{1}} = c_{t_{1}}.
     \label{eq:deriv_of_x_t}\end{equation}
     By solving this ODE system, we have
     \[
     x_{t} = \sqrt{d (t-t_{1}) + c_{t_{1}}^{2}},
     \]
     which is unique. We first show the lower bound of $c_{t}$, by Eq.(\ref{eq:integral_of_x_t}) we have
     \[
     \begin{aligned}
         x_{t+1} & = x_{t} + d \int_{t}^{t+1} \frac{ \mathrm{d}\tau}{x_{\tau}}\\
         & \leq x_{t} + d \int_{t}^{t+1} \frac{ \mathrm{d}\tau}{x_{t}} =  x_{t} + \frac{d}{x_{t}}.
     \end{aligned}
     \]
     where the inequality is by the increasing nature of $x_t$ and the decreasing nature of $g(x)=1/x$. Then by Comparison Theorem we have $x_{t} \leq a_{t}$. On the other side, we have
     \[
     \begin{aligned}
         c_t &= c_{t_{1}} + \sum_{\tau=t_{1}}^{t} \frac{d}{{c_\tau}},\\
         & \leq c_{t_{1}} + \sum_{\tau=t_{1}}^{t} \frac{d}{{x_\tau}} \\
         & = c_{t_{1}} + \frac{d}{c_{t_{1}}} + \sum_{\tau=t_{1}+1}^{t} \frac{d}{{x_\tau}}\\
         & \leq c_{t_{1}} + \frac{d}{c_{t_{1}}} + d \int_{t_{1}}^{t} \frac{\mathrm{d} \tau}{{x_\tau}}\\
         & = c_{t_{1}} + \frac{d}{c_{t_{1}}} + \int_{t_{1}}^{t} \mathrm{d} x_\tau = c_{t_{1}} + \frac{d}{ c_{t_{1}}} + x_{t} - x_{t_{1}}\\
         & = \frac{d}{ c_{t_{1}}} + x_t.
     \end{aligned}
\]
    Here, the first inequality is by $c_t \geq x_t$; the second inequality is by the definition of integration as well as the decreasing nature of $g(x)=1/x$, the third equality is by Eq.(\ref{eq:deriv_of_x_t}). The proof is completed.
\end{proof}

\begin{lemma}
    Suppose that a sequence $e_t , e_{0}>0, 0  \leq t \leq {t}_{3}$ follows the iterative formula
    \[
    e_{t+1} \geq e_t -b{e_t},
    \]
    for some constant $b>0$. Then it holds that 
    \[
    e_t \geq -b e_{0} t_{3} + e_{0}
    \]
    for all $0 \leq t \leq {t}_{3}$.
\label{lem:ODE-3}\end{lemma}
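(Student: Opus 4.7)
The strategy is to convert the additive lower-bound recurrence into a multiplicative one and then linearize it via Bernoulli's inequality. This sidesteps the need to introduce a continuous surrogate here, since the target bound is itself linear and the surrogate machinery of Lemmas \ref{lem:ODE-1}--\ref{lem:ODE-5} is overkill for this case. Throughout, the nontrivial regime is $b\in(0,1)$; if $b\geq 1$ then the claimed bound $e_{0}-be_{0}t_{3}$ is already non-positive and the statement is vacuous (matching the fact that the hypothesis $e_{t+1}\geq(1-b)e_{t}$ becomes trivially satisfied by any non-negative $e_{t+1}$).

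First, I would rewrite the hypothesis as
\[
e_{t+1}\;\geq\;(1-b)\,e_{t},\qquad 0\leq t\leq t_{3}-1.
\]
With $e_{0}>0$ and $0<b<1$, a one-line induction shows $e_{t}>0$ for every $0\leq t\leq t_{3}$, so the multiplicative bound can be iterated cleanly to give the geometric lower bound
\[
e_{t}\;\geq\;(1-b)^{t}\,e_{0}.
\]
Next, I would invoke Bernoulli's inequality $(1-b)^{t}\geq 1-bt$, valid for $b\in(0,1)$ and integer $t\geq 0$, which converts the geometric lower bound into the linear one $e_{t}\geq(1-bt)\,e_{0}$.

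Finally, I would use the hypothesis $t\leq t_{3}$ together with the fact that $e_{0}(1-b\cdot)$ is decreasing to replace $t$ by $t_{3}$, yielding
\[
e_{t}\;\geq\;(1-bt)\,e_{0}\;\geq\;(1-bt_{3})\,e_{0}\;=\;e_{0}-be_{0}t_{3},
\]
which is the claimed inequality.

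The main obstacle is not analytic but notational: Bernoulli's inequality most naturally delivers a time-dependent envelope $e_{0}(1-bt)$, whereas the lemma asks for the uniform-in-$t$ bound $e_{0}-be_{0}t_{3}$. Resolving this merely requires monotonicity of the linear envelope on the relevant interval, together with checking that $b<1$ so that the multiplicative recursion and Bernoulli's inequality both apply; in the paper's downstream applications this holds because $b$ arises as a small effective learning-rate factor in the training dynamics.
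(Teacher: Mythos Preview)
Your proof is correct and in fact tighter than what the paper writes. The paper's argument is a one-liner: it asserts that $e_t$ is decreasing, hence the per-step decrement $e_t-e_{t+1}\leq b e_t\leq b e_0$, and then invokes a linear comparison to sum the decrements. Your route is different: you iterate the multiplicative form $e_{t+1}\geq(1-b)e_t$ to the geometric envelope $e_t\geq(1-b)^t e_0$ and then linearize via Bernoulli.

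The advantage of your approach is that it does not rely on the claim that $e_t$ is decreasing, which is \emph{not} a consequence of the stated hypothesis $e_{t+1}\geq(1-b)e_t$ alone (a sequence satisfying $e_{t+1}=2e_t$ also satisfies it). The paper's ``decreasing'' assertion is really borrowed from the downstream application (norms-squared under a contractive update), so as a standalone lemma your argument is more self-contained. On the other hand, the paper's bound-the-decrement viewpoint is what connects this lemma to the surrounding family of ODE surrogates (Lemmas~\ref{lem:ODE-1}--\ref{lem:ODE-6}), so it is stylistically consistent with the section even if less rigorous in isolation. Your remark that the $b\geq 1$ case is degenerate and that the operative regime is $b\in(0,1)$ (a small effective step size) is exactly right and matches how the lemma is used in Lemma~\ref{lem:update_attn_first_phase}.
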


\begin{proof}
    It is obvious that $e_{t}$ is decreasing, and since $e_{0} > 0$, the decreasing speed during $t \leq t_{3}$ would not exceed $-b e_{0}$. By the monotonicity of the linear function, the result follows directly from the Comparison Theorem.
\end{proof}

\begin{lemma}
    Suppose that a sequence $f_t , f_{t_{3}}>0, t_{3}  \leq t \leq {t}_{4}$ follows the iterative formula
    \[
    f_{t+1} = f_t + a (t-t_{3}) {f_t},
    \]
    for some constant $a>0$. Then it holds that 
    \[
    f_{t_{3}} + \frac{af_{t_{3}}[(t-1-t_{3})^{2} - 1]}{2}  \leq f_t \leq f_{t_{3}} \exp{(at^{2}/2 )} 
    \]
    for all $t_{3}  \leq t \leq {t}_{4}$.
\label{lem:ODE-4}\end{lemma}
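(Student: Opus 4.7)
\textbf{Proof plan for Lemma \ref{lem:ODE-4}.} The recurrence $f_{t+1}=f_t(1+a(t-t_3))$ with $a>0$, $f_{t_3}>0$, and $t\ge t_3$ immediately gives that every multiplicative factor is $\ge 1$, so $f_t$ is positive and non-decreasing on $[t_3,t_4]$. This monotonicity is what allows a clean two-sided comparison with the continuous-time surrogate
\[
\frac{\mathrm{d}x_t}{\mathrm{d}t}=a(t-t_3)x_t,\qquad x_{t_3}=f_{t_3},
\]
whose unique solution is $x_t=f_{t_3}\exp\!\bigl(a(t-t_3)^2/2\bigr)$, obtained by separation of variables exactly as in the proofs of Lemmas \ref{lem:ODE-5} and \ref{lem:ODE-2}.

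For the upper bound, I would mimic the strategy already used in the earlier ODE lemmas. Writing
\[
x_{t+1}=x_t+\int_t^{t+1}a(\tau-t_3)x_\tau\,\mathrm{d}\tau\ \ge\ x_t+a(t-t_3)x_t,
\]
where the inequality uses that $\tau\mapsto a(\tau-t_3)x_\tau$ is non-decreasing (both factors are non-decreasing and non-negative for $\tau\ge t_3$), one compares with the discrete recursion $f_{t+1}=f_t+a(t-t_3)f_t$ starting from the same value $f_{t_3}=x_{t_3}$. The comparison theorem, together with the non-negativity of $f_t$ and $x_t$, yields $f_t\le x_t=f_{t_3}\exp(a(t-t_3)^2/2)\le f_{t_3}\exp(at^2/2)$, where the last step is just $(t-t_3)^2\le t^2$ since $t_3\ge 0$.

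For the lower bound I would not need the ODE at all; the claim is a loose quadratic bound that falls out of monotonicity and telescoping. Using $f_s\ge f_{t_3}$ for $s\ge t_3$, the recurrence gives
\[
f_{t+1}-f_t = a(t-t_3)f_t \ \ge\ a(t-t_3)f_{t_3}.
\]
Summing from $s=t_3$ to $s=t-1$,
\[
f_t-f_{t_3}\ \ge\ af_{t_3}\sum_{s=t_3}^{t-1}(s-t_3)\ =\ \tfrac{1}{2}af_{t_3}(t-t_3)(t-1-t_3),
\]
which exceeds $\tfrac{1}{2}af_{t_3}[(t-1-t_3)^2-1]=\tfrac{1}{2}af_{t_3}(t-t_3)(t-2-t_3)$ term-by-term, establishing the stated inequality.

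The main obstacle is the upper-bound comparison step: one must verify that the integrand $a(\tau-t_3)x_\tau$ is genuinely monotone in $\tau$ on the relevant interval (which is where positivity of $f_{t_3}$ and $a$ enters), and then invoke the comparison theorem carefully because the coefficient $a(t-t_3)$ itself depends on $t$, unlike the standard autonomous comparisons used in Lemmas \ref{lem:ODE-1}--\ref{lem:ODE-3}. Once that monotonicity is established, both bounds follow by the same bookkeeping already demonstrated in the surrounding ODE lemmas, so no new technical machinery is required beyond what the paper has set up.
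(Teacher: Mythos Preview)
Your proposal is correct and follows essentially the same approach as the paper: the upper bound via comparison with the continuous surrogate $\dot{x}_t=a(t-t_3)x_t$, and the lower bound via $f_s\ge f_{t_3}$ followed by summing the increments. The only cosmetic difference is that the paper bounds the resulting sum $\sum_{s=t_3}^{t-1}(s-t_3)$ from below by the integral $\int_{t_3-1}^{t-1}(\tau-t_3)\,\mathrm{d}\tau$ to obtain $\tfrac{1}{2}[(t-1-t_3)^2-1]$ directly, whereas you evaluate the sum exactly and then compare; your upper-bound argument is in fact more carefully justified than the paper's rather terse version.
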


\begin{proof}
    Obviously $f_t$ is an increasing sequence. First we can set $s = t+t_{3}$ by Substitution Method. Similar to Lemma \ref{lem:ODE-2}, by the definition of integral, the monotonicity of $g(x)=\exp(x)$ and Comparison Theorem, the continuous flow $x_s$ satisfying
    \[
    \frac{\mathrm{d}x_{s}}{\mathrm{d}s} = a {x_s}, \ x_{0} = f_{t_{3}},\  0  \leq s \leq {t}_{4} - t_{3}
    \]
    would be the upper bound of $f_{s}$. By solving the ODE and consider the monotonicity of $g(x)=\exp(x)$, the unique upper bound result follows. 
    
    For the lower bound, observing that
    \begin{equation}
        \begin{aligned}
            f_{t} =& f_{t_{3}} + \sum_{\tau=t_{3}}^{t}  a (\tau-t_{3}) {f_{\tau}}\\
            \geq & f_{t_{3}} + \sum_{\tau=t_{3}}^{t}a (\tau-t_{3}) {f_{t_{3}}} \\
            \geq & f_{t_{3}} + \int_{t_{3}-1}^{t-1} a (\tau-t_{3}) {f_{t_{3}}}\mathrm{d}\tau\\
            = &f_{t_{3}} + \frac{af_{t_{3}}[(t-1-t_{3})^{2} - 1]}{2}.
        \end{aligned}
    \end{equation}
    Here, the first inequality is by the increasing nature of $f_t$; the second inequality is by the definition of integral and the increasing nature of $g(x)=ax$; the last equality is by the integration of linear function. The proof is complete.
\end{proof}

\begin{lemma}
    Suppose that a positive sequence $g_t>0, t_{1} \leq t \leq {t}_{2}$ follows the iterative formula
    \[
    g_{t+1} = g_t + \frac{a g_{t}}{(bt+c)^{2}},
    \]
    for some constant $a,b,c>0$ and satisfies $a<2c$. Then it holds that 
    \[
    g_t \geq g_{t_{1}}  e^{-\frac{a}{b(bt+c)}+\frac{a}{b(bt_{1}+c)}}.
    \]
    for all $t_{1} \leq t \leq {t}_{2}$.
\label{lem:ODE-6}\end{lemma}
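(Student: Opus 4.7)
The plan is to adopt the continuous-time surrogate strategy used throughout this section. I would introduce an auxiliary ODE $dx_t/dt = a x_t/(bt+c)^2$ with initial condition $x_{t_1} = g_{t_1}$, separate variables as $d\log x_t = a\,dt/(bt+c)^2$, and integrate to obtain the closed form $x_t = g_{t_1}\exp\bigl(-\tfrac{a}{b(bt+c)}+\tfrac{a}{b(bt_1+c)}\bigr)$. This is exactly the target right-hand side of the lemma, so the task will reduce to establishing the discrete comparison $g_t \geq x_t$ for all $t \in [t_1,t_2]$.

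Next I would prove $g_t \geq x_t$ by induction on $t$. The base case $t=t_1$ holds by construction. For the inductive step, write $g_{t+1} = g_t(1+A_t)$ with $A_t := a/(bt+c)^2$, and use the closed form of the ODE over one unit step, $x_{t+1} = x_t\exp(B_t)$ with $B_t := \int_t^{t+1} a(bs+c)^{-2}\,ds = a/[(bt+c)(b(t+1)+c)]$. Since $g_t \geq x_t$ by hypothesis, the step reduces to the pointwise inequality $1+A_t \geq \exp(B_t)$. Note that $A_t>B_t$ strictly because $(bs+c)^{-2}$ is decreasing, with explicit gap $A_t - B_t = ab/[(bt+c)^2(b(t+1)+c)]$ arising from the shift $t \mapsto t+1$ in the denominator.

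To verify $\log(1+A_t) \geq B_t$, I would apply the elementary inequality $\log(1+x)\geq x - x^2/2$ for $x \geq 0$, reducing the task to $A_t^2/2 \leq A_t-B_t$, i.e., $a(b(t+1)+c) \leq 2b(bt+c)^2$. Evaluated at the worst case $t=t_1 \geq 0$, and noting that $b$ plays the role of a small gradient-descent step size while $c=\Theta(1)$ in the intended application (e.g., Lemma \ref{lem:WQK_destiny_mainbody} where $c=1$ and $b = \eta q_V/(2\sigma_1^2 d K)$), the stated side condition $a<2c$ will secure this inequality uniformly for all $t \geq t_1$. The propagated comparison $g_t \geq x_t$ then yields the claimed exponential lower bound on $g_t$.

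The hard part will be the inductive inequality $1+A_t \geq e^{B_t}$: because $1+x \leq e^x$ holds universally, a naive elementwise comparison fails, and one must exploit the strict gap $A_t-B_t$ coming from the monotonicity of $(bs+c)^{-2}$ over each unit interval. Controlling the Taylor remainder of $\log(1+A)$ so that this gap dominates the quadratic term $A_t^2/2$ uniformly in $t$ is the technical pinch point where the side condition $a<2c$ is invoked; all other steps are routine ODE manipulations paralleling Lemmas \ref{lem:ODE-1}--\ref{lem:ODE-5}.
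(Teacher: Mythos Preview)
Your proposal is essentially the paper's approach: set up the continuous surrogate $\dot x_t = a x_t/(bt+c)^2$, solve it in closed form, and establish $g_t \geq x_t$ by showing the one-step comparison $x_{t+1} \leq x_t(1+A_t)$. The only difference is at your self-identified pinch point: instead of proving $1+A_t \geq e^{B_t}$ via a Taylor bound on $\log(1+A_t)$, the paper notes that the integrand $x_\tau/(b\tau+c)^2$ is proportional to $f(\tau)=e^{-a/(b(b\tau+c))}/(b\tau+c)^2$, checks $f'(\tau)<0$ under the hypothesis $a<2c$, and obtains $\int_t^{t+1}\tfrac{a x_\tau}{(b\tau+c)^2}\,d\tau \leq \tfrac{a x_t}{(bt+c)^2}$ directly from monotonicity of $f$, which sidesteps the remainder-control issue you anticipated.
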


\begin{proof}
    Consider a continuous-time sequence $x_{t}, t>t_{1}$ defined by the integral equation
     \begin{equation}
         x_t = x_{t_1} +  \int_{t_{1}}^{t_{2}} \frac{a x_{\tau} }{(b{\tau}+c)^{2}}\mathrm{d}\tau, \ x_{t_{1}} = g_{t_{1}}.
    \label{eq:integral_of_x_t_g}\end{equation}
    Here, $x_t$ is an increasing function over $t$, such that 
    \begin{equation}
        \frac{\mathrm{d}x_{t}}{\mathrm{d}t}=\frac{a x_{t} }{(b{t}+c)^{2}}, \ x_{t_{1}} = g_{t_{1}}.
     \label{eq:deriv_of_x_t_g}
    \end{equation}
    Then, by separating the variables we have
    \[
    \mathrm{d}\log(x_{t}) = \frac{-a}{b}\mathrm{d} \frac{1}{bt+c}.
    \]
    Therefore, it holds that
    \[
    \log(x_{t}) = \frac{-a}{b(bt+c)} + \text{constant}.
    \]
    To settle the unique constant, we extend $x_{t_{1}}=g_{t_{1}}$ into the formula, and have
     \[
     x_{t} = g_{t_{1}}  e^{-\frac{a}{b(bt+c)}+\frac{a}{b(bt_{1}+c)}},
     \]
     which is unique by the monotonicity of exponential function and the positiveness of $a,c,d$. Note that $a<2c$, define $f(t) = \frac{e^{-\frac{a}{b(bt+c)}}}{(bt + c)^{2}}$.
     To determine whether \( f(t) \) is increasing, we compute its derivative \( f'(t) \) and analyze its sign. We use the quotient rule and have:
    \[
    f'(t) = (-2+\frac{a}{(bt + c)})\frac{e^{-\frac{a}{b(bt+c)}}}{(bt + c)^{2}}<0.
    \]
    As such, it holds that
    \[
     x_{t+1} = x_{t} +  \int_{t}^{t+1} \frac{a x_{\tau} }{(b{\tau}+c)^{2}}\mathrm{d}\tau \leq x_{t} +  \int_{t}^{t+1} \frac{a x_{t} }{(b{t}+c)^{2}}\mathrm{d}\tau,
     \]
     where the inequality is by examine the declining monotonicity of $f(t) = \frac{e^{-\frac{a}{b(bt+c)}}}{(bt + c)^{2}}$. By Comparison Theorem of two positive series, $g_t \geq g_{t_{1}}  e^{-\frac{a}{b(bt+c)}+\frac{a}{b(bt_{1}+c)}}$ immediately holds.
\end{proof}

\subsection{Concentration Inequaities}\label{app:concentration}
\begin{lemma}
Suppose that $\delta>0$ and $d =\Omega(\log((2K+K^{\prime})^{2}N^2M^2/\delta))$. Then with probability at least $1-\delta$, for $\forall \boldsymbol{\xi}_l, \boldsymbol{\xi}_l^{\prime} \in \{ \boldsymbol{\xi}_{n,m} \}_{n\in[N],m\in[M]}, l \in [M]$, $\boldsymbol{u} \in \{ \boldsymbol{a}_{s}\}_{s \in [K]} \cup \{\boldsymbol{b}_{s}\}_{s \in [K]}\cup \{\boldsymbol{\nu}_{k^{\prime}}\}_{k^{\prime} \in [K^{\prime}]}$.
\[
\begin{aligned}
& \dfrac{\sigma_{p}^2 d}{2} \leq\left\|\boldsymbol{\xi}_l\right\|_2^2 \leq 3 \dfrac{\sigma_{p}^2 d}{2}, \\
& \left|\left\langle \boldsymbol{\xi}_i, \boldsymbol{\xi}_{l^{\prime}}\right\rangle\right| \leq 2 \sigma_{p}^2 \cdot \sqrt{d \log \left(\dfrac{7(N M)^2}{\delta}\right)}, \\
&
\left|\left\langle\boldsymbol{\xi}_i, \boldsymbol{u}\right\rangle\right| \leq \|\boldsymbol{u}\|_2 \sigma_{p} \cdot \sqrt{2 \log (\dfrac{3 (2K+K^{\prime}) N M}{\delta})}.
\end{aligned}
\]

\label{lem:noise scale}\end{lemma}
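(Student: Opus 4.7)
The plan is to handle each of the three bounds separately by standard Gaussian/sub-Gaussian concentration, and then close the overall statement by a union bound over the at most $NM$ noise vectors and over the $2K+K'$ fixed directions, choosing the dimension $d$ large enough so that the failure probability is at most $\delta$.

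For the first bound $\frac{\sigma_p^2 d}{2} \leq \|\boldsymbol{\xi}_l\|_2^2 \leq \frac{3\sigma_p^2 d}{2}$, I would note that $\|\boldsymbol{\xi}_l\|_2^2/\sigma_p^2$ is a $\chi^2_d$ random variable and apply the Laurent--Massart tail inequality, which gives
\[
\Pr\left(\bigl|\|\boldsymbol{\xi}_l\|_2^2 - \sigma_p^2 d\bigr| \geq t\right) \leq 2\exp\!\left(-c\min\{t^2/(\sigma_p^4 d),\ t/\sigma_p^2\}\right).
\]
Taking $t = \sigma_p^2 d/2$ and using $d = \Omega(\log((2K+K')^2 N^2 M^2/\delta))$ makes this probability at most $\delta/3$ per vector; a union bound over $l \in [NM]$ then gives the two-sided deviation for all noise vectors simultaneously.

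For the second bound, fix $\boldsymbol{\xi}_l$ and condition on it; then $\langle \boldsymbol{\xi}_l, \boldsymbol{\xi}_{l'}\rangle$ is a Gaussian with variance $\sigma_p^2\|\boldsymbol{\xi}_l\|^2$, which by the first bound is at most $\tfrac{3}{2}\sigma_p^4 d$ on the good event. A standard Gaussian tail bound then yields
\[
\Pr\bigl(|\langle \boldsymbol{\xi}_l, \boldsymbol{\xi}_{l'}\rangle| \geq t\bigr) \leq 2\exp\!\bigl(-t^2/(3\sigma_p^4 d)\bigr),
\]
and setting $t = 2\sigma_p^2\sqrt{d\log(7(NM)^2/\delta)}$ followed by a union bound over all $\binom{NM}{2}$ pairs gives the claim. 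For the third bound, since $\boldsymbol{u}$ is a deterministic unit-norm-scale vector and $\boldsymbol{\xi}_l \sim \mathcal{N}(\mathbf{0},\sigma_p^2\mathbb{I})$, the inner product $\langle \boldsymbol{\xi}_l,\boldsymbol{u}\rangle$ is Gaussian with variance $\sigma_p^2\|\boldsymbol{u}\|^2$, and the standard Gaussian tail combined with a union bound over the $2K+K'$ directions and $NM$ noise vectors yields the stated inequality.

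The proof is routine, with no real obstacle; the only delicate point is simply book-keeping the constants in the union bound so that the denominators $7(NM)^2/\delta$ and $3(2K+K')NM/\delta$ inside the logarithms are consistent with allocating roughly $\delta/3$ failure probability to each of the three bullets. The condition $d \geq C M^2 \log(K'^{2}N^{2}M^{2}/\delta)$ from Condition~\ref{con:main body} is comfortably more than enough to absorb the $\log$ factors produced by these union bounds, so the overall probability statement holds with probability at least $1-\delta$ as claimed.
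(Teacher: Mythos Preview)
Your proposal is correct and follows the standard route (Laurent--Massart for the chi-squared norm concentration, Gaussian tail bounds for the inner products, then union bounds). The paper itself does not give a proof but simply cites Lemma~B.2 of \citet{cao2022benign}, which establishes exactly these three bounds by the same standard arguments you outline, so your approach matches what the referenced proof would contain.
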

\begin{proof}
    See Lemma B.2 in \citet{cao2022benign} for a proof.
\end{proof}

\begin{lemma}
    Suppose that $\delta>0$ and $d=\Omega(\log((2K+K^{\prime})^{2}N^2M^2/\delta))$. Then with probability at least $1-\delta$, for $\forall \boldsymbol{u}, \boldsymbol{v} \in \{ \boldsymbol{a}_{s}\}_{s \in [K]} \cup \{\boldsymbol{b}_{s}\}_{s \in [K]}\cup \{\boldsymbol{\nu}_{k^{\prime}}\}_{k^{\prime} \in [K^{\prime}]}, \boldsymbol{\xi}_l, \boldsymbol{\xi}_l^{\prime} \in \{ \boldsymbol{\xi}_{n,m} \}_{n\in[N],m\in[M]}$
    \[
    \begin{aligned}
        & \|{\mathbf{W}_{V}^{(0)}}\|_F^2 \leq 2d^2 \sigma_1^2,\\
        &\lvert{\boldsymbol{v}}^{\top} {\mathbf{W}_{V}^{(0)}}^{\top}\boldsymbol{u}\rvert \leq \sqrt{2\log(\frac{8(2K+K^{\prime})^{2}}{\delta})}\sigma_{1}\|\boldsymbol{u}\| \|\boldsymbol{v}\|, \\
        & 0.99{\sigma_{1}}^{2}{\|\boldsymbol{u}\|}^{2} d\leq \| {\mathbf{W}_{V}^{(0)}}\boldsymbol{u}\|^2 \leq 1.01 {\sigma_{1}}^{2}{\|\boldsymbol{u}\|}^{2} d,\\
        & \lvert\boldsymbol{v}^{\top}{\mathbf{W}_{V}^{(0)}}\boldsymbol{u}\rvert /\|{\mathbf{W}_{V}^{(0)}}\boldsymbol{u}\| \leq 4\sqrt{2\log(\frac{8(2K+K^{\prime})^{2}}{\delta})} \|\boldsymbol{v}\|/\sqrt{d},\\
        &\lvert {\boldsymbol{v}}^{\top} {\mathbf{W}_{V}^{(0)}}^{\top}{\mathbf{W}_{V}^{(0)}}\boldsymbol{u} \rvert \leq 4\sqrt{ \log(16(2K+K^{\prime})^{2}/\delta)} \sigma_{1}^{2}d^{1/2} \|\boldsymbol{u}\|\|\boldsymbol{v}\|,\\
        & \lvert{\boldsymbol{v}}^{\top} {\mathbf{W}_{V}^{(0)}}^{\top} \boldsymbol{\xi}_{l} \rvert \leq 2 \sqrt{2\log(16(2K+K^{\prime})NM)/\delta}\sigma_{1} \sigma_{p} d^{1/2} \|\boldsymbol{u}\|, \\
        & \lvert{\boldsymbol{\xi}_{l}^{\prime}}^{\top} {\mathbf{W}_{V}^{(0)}}^{\top} \boldsymbol{\xi}_{l} \rvert \leq 4 \sqrt{2\log(16(2K+K^{\prime})NM)/\delta}\sigma_{1} \sigma_{p}^{2} d, \\
        & {\sigma_{1}}^{2} {\sigma_{p}}^{2}  d^{2} \leq \| {\mathbf{W}_{V}^{(0)}}\boldsymbol{\xi}_{l}\|^2 \leq 3 {\sigma_{1}}^{2} {\sigma_{p}}^{2}  d^{2},\\
        & \lvert\boldsymbol{v}^{\top}{\mathbf{W}_{V}^{(0)}}\boldsymbol{\xi}_{l}\rvert /\|{\mathbf{W}_{V}^{(0)}}\boldsymbol{\xi}_{l}\| \leq 8\sqrt{2\log(16(2K+K^{\prime})^{2}/\delta)} \|\boldsymbol{v}\|/\sqrt{d},\\
        & \lvert{\boldsymbol{\xi}_{l}^{\prime}}^{\top}{\mathbf{W}_{V}^{(0)}}\boldsymbol{\xi}_{l}\rvert /\|{\mathbf{W}_{V}^{(0)}}\boldsymbol{\xi}_{l}\| \leq 8\sqrt{2\log(16(2K+K^{\prime})^{2}/\delta)} \sigma_{p},\\
        & \lvert {\boldsymbol{\xi}_{l}}^{\top} {\mathbf{W}_{V}^{(0)}}^{\top}{\mathbf{W}_{V}^{(0)}}\boldsymbol{\xi}_{l}^{\prime}\rvert \leq 16\sqrt{ \log(16(2K+K^{\prime})^{2}/\delta)} \sigma_{1}^{2} \sigma_{p}^{4} d^{3/2}
    \end{aligned}
    \]
\label{lem:initialization}\end{lemma}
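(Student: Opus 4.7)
The plan is to treat this as a standard Gaussian concentration lemma: each inequality is a tail estimate for a specific polynomial in the i.i.d.\ $\mathcal{N}(0,\sigma_1^2)$ entries of $\mathbf{W}_V^{(0)}$, possibly coupled with the independent Gaussian noise $\boldsymbol{\xi}_{n,m}\sim \mathcal{N}(\mathbf{0},\sigma_p^2\mathbb{I})$. The three workhorses are (i) the one-dimensional Gaussian tail bound, (ii) the Laurent--Massart $\chi^2$-concentration inequality, and (iii) the Hanson--Wright inequality for sub-Gaussian quadratic forms. Because the dictionary $\{\boldsymbol{a}_s\}\cup\{\boldsymbol{b}_s\}\cup\{\boldsymbol{\nu}_{k'}\}$ has cardinality $O(2K+K')$ and there are $NM$ noise vectors, after proving each line for one fixed pair of vectors with failure probability $\delta/\mathrm{poly}(K,K',N,M)$, a single union bound (absorbed into the $d\geq C\log(\cdot)$ condition) gives the simultaneous statement with probability $1-\delta$.

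First I would dispatch the ``$\mathbf{W}_V^{(0)}$-only'' lines. The Frobenius bound is $\sigma_1^2$ times a $\chi^2_{d^2}$, so Laurent--Massart immediately yields $\leq 2d^2\sigma_1^2$. The bilinear form $\boldsymbol{v}^\top \mathbf{W}_V^{(0)\top}\boldsymbol{u}$ is one-dimensional Gaussian with variance $\sigma_1^2\|\boldsymbol{u}\|^2\|\boldsymbol{v}\|^2$, giving the second line. For $\|\mathbf{W}_V^{(0)}\boldsymbol{u}\|^2$, the coordinates of $\mathbf{W}_V^{(0)}\boldsymbol{u}$ are i.i.d.\ $\mathcal{N}(0,\sigma_1^2\|\boldsymbol{u}\|^2)$, so another $\chi^2_d$ concentration yields the two-sided $0.99/1.01$ interval (tightness coming from $d$ being large in the logarithmic factors, guaranteed by Condition~\ref{con:main body}). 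The ratio inequality on line four is then immediate by division. The quadratic form $\boldsymbol{v}^\top \mathbf{W}_V^{(0)\top}\mathbf{W}_V^{(0)}\boldsymbol{u}=\langle \mathbf{W}_V^{(0)}\boldsymbol{u},\mathbf{W}_V^{(0)}\boldsymbol{v}\rangle$ has mean $\sigma_1^2 d\langle\boldsymbol{u},\boldsymbol{v}\rangle$, which vanishes for distinct elements of the mutually orthogonal dictionary, and variance of order $\sigma_1^4 d\|\boldsymbol{u}\|^2\|\boldsymbol{v}\|^2$; Hanson--Wright then produces the $\sigma_1^2 d^{1/2}$ bound on line five.

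For the remaining six mixed lines, my strategy is \emph{conditioning on the noise}. Conditioning on $\boldsymbol{\xi}_l$ (which is independent of $\mathbf{W}_V^{(0)}$), the vector $\boldsymbol{\xi}_l$ plays the role of $\boldsymbol{u}$ in the previous inequalities; I then substitute the noise-norm estimate $\|\boldsymbol{\xi}_l\|\leq \sqrt{3/2}\,\sigma_p\sqrt{d}$ from Lemma~\ref{lem:noise scale} to obtain lines six, seven, and eight (the last being a $\chi^2_d$ applied to the conditionally Gaussian vector $\mathbf{W}_V^{(0)}\boldsymbol{\xi}_l$ with scale $\sigma_1^2\|\boldsymbol{\xi}_l\|^2$). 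Lines nine and ten follow by dividing the corresponding numerator by the square root of line eight. For the doubly random quadratic form on line eleven, I would write $\boldsymbol{\xi}_l^\top \mathbf{W}_V^{(0)\top}\mathbf{W}_V^{(0)}\boldsymbol{\xi}_{l'} = \langle \mathbf{W}_V^{(0)}\boldsymbol{\xi}_l,\mathbf{W}_V^{(0)}\boldsymbol{\xi}_{l'}\rangle$, condition on both noises, apply the Hanson--Wright bound from line five with $(\boldsymbol{\xi}_l,\boldsymbol{\xi}_{l'})$ in place of $(\boldsymbol{u},\boldsymbol{v})$, and control the resulting mean contribution via the near-orthogonality estimate $|\langle\boldsymbol{\xi}_l,\boldsymbol{\xi}_{l'}\rangle|\leq 2\sigma_p^2\sqrt{d\log(\cdot)}$ from Lemma~\ref{lem:noise scale}.

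The main obstacle I anticipate is obtaining the correct $\sqrt{d}$ scaling, rather than the naive $d$, in the two quadratic-form lines (five and eleven). A direct Cauchy--Schwarz bound would give $\|\mathbf{W}_V^{(0)}\boldsymbol{u}\|\|\mathbf{W}_V^{(0)}\boldsymbol{v}\|=\Theta(\sigma_1^2 d\|\boldsymbol{u}\|\|\boldsymbol{v}\|)$, which is too loose by a factor of $\sqrt{d}$. The saving must come from cancellation in the Hanson--Wright mean term, which in turn requires exact orthogonality of $\boldsymbol{u}$ and $\boldsymbol{v}$ (in line five, supplied by the dictionary's mutual orthogonality) and approximate orthogonality of $\boldsymbol{\xi}_l$ and $\boldsymbol{\xi}_{l'}$ (in line eleven, supplied by Lemma~\ref{lem:noise scale}). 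Verifying that the $\sigma_1^2 d\,|\langle\cdot,\cdot\rangle|$ mean contribution in line eleven does not dominate the Hanson--Wright fluctuation term is the one genuinely delicate point; the rest of the lemma is bookkeeping of constants followed by the union bound.
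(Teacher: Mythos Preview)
Your proposal is correct and essentially identical to the paper's approach: the paper likewise treats each line as a Gaussian or sub-exponential tail bound (invoking Bernstein-type/$\chi^2$ concentration where you say Laurent--Massart, and Bernstein for the quadratic forms where you say Hanson--Wright), explicitly uses the orthogonality of $\boldsymbol{u},\boldsymbol{v}$ to get the mean-zero structure for line five, and dispatches all noise-mixed lines by quoting Lemma~\ref{lem:noise scale} and saying ``the rest can be proved using the same strategy.'' Your identification of the $\sqrt{d}$-versus-$d$ issue in the quadratic-form lines as the only nontrivial point matches the paper's emphasis exactly.
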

\begin{proof}
     The proof strategies follow Lemma B.3 in \citet{cao2022benign}.

     We analyze the Frobenius norm squared of \( {\mathbf{W}_{V}^{(0)}} \), where \( {\mathbf{W}_{V}^{(0)}} \) is a \( d \times d \) matrix with entries independently sampled from \( N(0, \sigma_1^2) \). The Frobenius norm squared is:
    \[
    \|{\mathbf{W}_{V}^{(0)}}\|_F^2 = \sum_{i,j=1}^d W_{i,j}^2.
    \]
    Each \( W_{i,j}^2 \) is an independent chi-squared random variable with mean \( \sigma_1^2 \) and variance \( 2\sigma_1^4 \). The sum \( \|{\mathbf{W}_{V}^{(0)}}\|_F^2 \) is thus a sum of \( d^2 \) independent chi-squared variables. By Bernstein-type bounds for sub-exponential random variables, when \( d = \Omega\left(\sqrt{\log(1/\delta)}\right) \), we have:
    \[
    \|{\mathbf{W}_{V}^{(0)}}\|_F^2 \leq 2d^2 \sigma_1^2,
    \]
    with probability at least $1-\delta/8$.
    For $\forall \boldsymbol{u}, \boldsymbol{v} \in \{ \boldsymbol{a}_{s}\}_{s \in [K]} \cup \{\boldsymbol{b}_{s}\}_{s \in [K]}\cup \{\boldsymbol{\nu}_{k^{\prime}}\}_{k^{\prime} \in [K^{\prime}]}$, it holds that ${\boldsymbol{v}}^{\top} {\mathbf{W}_{V}^{(0)}}^{\top}\boldsymbol{u}\sim \mathcal{N}(0, {\sigma_{1}}^{2}\|\boldsymbol{u}\|^{2}\|\boldsymbol{v}\|^{2})$. Then by Gaussian tail bound and union bound, we have
    \[
     \lvert{\boldsymbol{v}}^{\top} {\mathbf{W}_{V}^{(0)}}^{\top}\boldsymbol{u}\rvert \leq \sqrt{2\log(\frac{8(2K+K^{\prime})^{2}}{\delta})}\sigma_{1}\|\boldsymbol{u}\| \|\boldsymbol{v}\|\leq \sqrt{2\log(\frac{8(2K+K^{\prime})^{2}}{\delta})}\sigma_{1}\|\boldsymbol{u}\| \|\boldsymbol{v}\|
    \]
    with probability at least $1-\delta/8$. Besides, for $\forall \boldsymbol{u} \in \{ \boldsymbol{a}_{s}\}_{s \in [K]} \cup \{\boldsymbol{b}_{s}\}_{s \in [K]}$, we notice that ${\mathbf{W}_{V}^{(0)}}\boldsymbol{u}\sim \mathcal{N}(\mathbf{0}, \sigma_{1}^{2}\|\boldsymbol{u}\|^{2}\mathbf{I}_{d\times d})$. Then it holds that ${\boldsymbol{u}}^{\top} {\mathbf{W}_{V}^{(0)}}^{\top}{\mathbf{W}_{V}^{(0)}}\boldsymbol{u} \sim {\sigma_{1}}^{2}{\|\boldsymbol{u}\|}^{2} \chi_{d}^{2}$, where $\chi_{d}^{2}$ is a chi-square distribution with $d$ degrees of freedom. By Bernstein tail bounds (for an appropriately large $d$) and union bound, we have
    \[
        \lvert {\boldsymbol{u}}^{\top} {\mathbf{W}_{V}^{(0)}}^{\top}{\mathbf{W}_{V}^{(0)}}\boldsymbol{u} - {\sigma_{1}}^{2}{\|\boldsymbol{u}\|}^{2} d \rvert \leq O(2\sigma_{1}^{2}\|\boldsymbol{u}\|^{2}\sqrt{d\log(16(2K+K^{\prime})/\delta)})
    \]
    with probability $1-\delta/8$. By appropriately configuring $d=\Omega(\log((2K+K^{\prime})^{2}/\delta))$, we have that with probability at least $1-\delta/8$ we have
    \[
    0.99{\sigma_{1}}^{2}{\|\boldsymbol{u}\|}^{2} d\leq \| {\mathbf{W}_{V}^{(0)}}\boldsymbol{u}\|^2 \leq 1.01 {\sigma_{1}}^{2}{\|\boldsymbol{u}\|}^{2} d.
    \]
    Combining the obtained results we obtain the following bound
    \[
    \lvert\boldsymbol{v}^{\top}{\mathbf{W}_{V}^{(0)}}\boldsymbol{u}/\|{\mathbf{W}_{V}^{(0)}}\boldsymbol{u}\| \rvert \leq 2\sqrt{2\log(\frac{8(2K+K^{\prime})^{2}}{\delta})} \|\boldsymbol{v}\|/\sqrt{d}
    \]

    On the other hand, notice that
    for $\forall \boldsymbol{u} \neq \boldsymbol{v} \in \{ \boldsymbol{a}_{s}\}_{s \in [K]} \cup \{\boldsymbol{b}_{s}\}_{s \in [K]}$, ${\boldsymbol{v}}^{\top} {\mathbf{W}_{V}^{(0)}}^{\top}{\mathbf{W}_{V}^{(0)}}\boldsymbol{u}$ is a sub-exponential variable with mean $0$. Given that
    \[
    \operatorname{Cov}({\mathbf{W}_{V}^{(0)}}\boldsymbol{v}, {\mathbf{W}_{V}^{(0)}}\boldsymbol{u})= \mathbb{E}[({\mathbf{W}_{V}^{(0)}}\boldsymbol{v})({\mathbf{W}_{V}^{(0)}}\boldsymbol{u})^{\top}] = \mathbf{0},
    \]
    thus
    \[
    \begin{aligned}
        \operatorname{Var}({\boldsymbol{u}}^{\top} {\mathbf{W}_{V}^{(0)}}^{\top}{\mathbf{W}_{V}^{(0)}}\boldsymbol{v})&=\mathbb{E}[({\mathbf{W}_{V}^{(0)}}\boldsymbol{u})^{\top}({\mathbf{W}_{V}^{(0)}}\boldsymbol{v})({\mathbf{W}_{V}^{(0)}}\boldsymbol{u})^{\top}({\mathbf{W}_{V}^{(0)}}\boldsymbol{v})]\\
        &=\sum_{i,j\in[d]}  \mathbb{E}[ ({\mathbf{W}_{V}^{(0)}}\boldsymbol{u})_{i} ({\mathbf{W}_{V}^{(0)}}\boldsymbol{u})_{j} ]\mathbb{E}[ ({\mathbf{W}_{V}^{(0)}}\boldsymbol{v})_{i} ({\mathbf{W}_{V}^{(0)}}\boldsymbol{v})_{j} ]\\
        & = \operatorname{Tr}(\Sigma_{{\mathbf{W}_{V}^{(0)}}\boldsymbol{u}} \cdot \Sigma_{{\mathbf{W}_{V}^{(0)}}\boldsymbol{v}}) = \sigma_{1}^{4}\|\boldsymbol{u}\|^{2}\|\boldsymbol{v}\|^{2} d.
    \end{aligned}
    \]
    Therefore, for $\forall \boldsymbol{v} \neq \boldsymbol{w} \in \{ \boldsymbol{a}_{s}, \boldsymbol{b}_{s}\}_{s \in [K]}$, by Bernstein inequality and union bound, with an appropriately large $d$, we have
    \[
    \begin{aligned}
    \lvert {\boldsymbol{v}}^{\top} {\mathbf{W}_{V}^{(0)}}^{\top}{\mathbf{W}_{V}^{(0)}}\boldsymbol{u} - 0 \rvert &\leq O(2\sqrt{d \log(16(2K+K^{\prime})(2K+K^{\prime}-1)/\delta)} \sigma_{1}^{2} \|\boldsymbol{u}\|\boldsymbol{v}\| )\\
    & \leq 4\sqrt{d \log(16(2K+K^{\prime})^{2}/\delta)} \sigma_{1}^{2} \|\boldsymbol{u}\|\|\boldsymbol{v}\|
    \end{aligned}
    \]
    with probability at least $1-\delta/8$. Finally, given that ${\sigma_{p}^2 d}/{2} \leq\left\|\boldsymbol{\xi}_l\right\|_2^2 \leq 3 {\sigma_{p}^2 d}/{2}$ in Lemma \ref{lem:noise scale}, the rest can be proved using the same strategy.

    By the union bound the proof is completed.
\end{proof}

\begin{lemma}
    Suppose that $\delta>0$ and $d=\Omega(\log(2K+K^{\prime})^{2}N^{2}M^{2}/\delta))$. Then with probability at least $1-\delta$, for $\forall X \in \{ Q, K\}, \boldsymbol{u}, \boldsymbol{v} \in \{ \boldsymbol{a}_{s}\}_{s \in [K]} \cup \{\boldsymbol{b}_{s}\}_{s \in [K]}\cup \{\boldsymbol{\nu}_{k^{\prime}}\}_{k^{\prime} \in [K^{\prime}]}$
    \[
    \begin{aligned}
        &\|{\mathbf{W}_{X}^{(0)}}\|_F^2 \leq 2d^2 \sigma_1^2,\\
        &\lvert{\boldsymbol{v}}^{\top} {\mathbf{W}_{X}^{(0)}}^{\top}\boldsymbol{u}\rvert \leq \sqrt{2\log(\frac{8(2K+K^{\prime})^{2}}{\delta})}\sigma_{0}\|\boldsymbol{u}\| \|\boldsymbol{v}\|, \\
        & {\sigma_{0}}^{2}{\|\boldsymbol{u}\|}^{2} d/2\leq \| {\mathbf{W}_{X}^{(0)}}\boldsymbol{u}\|^2 \leq 3 {\sigma_{0}}^{2}{\|\boldsymbol{u}\|}^{2} d/2,\\
        &\lvert {\boldsymbol{v}}^{\top} {\mathbf{W}_{X}^{(0)}}^{\top}{\mathbf{W}_{\neg X}^{(0)}}\boldsymbol{u} \rvert \leq 4\sqrt{ \log(16(2K+K^{\prime})^{2}/\delta)} \sigma_{0}^{2} d^{1/2}\|\boldsymbol{u}\|\|\boldsymbol{v}\|,\\
        &\lvert {\boldsymbol{v}}^{\top} {\mathbf{W}_{X}^{(0)}}^{\top}{\mathbf{W}_{\neg X}^{(0)}}\boldsymbol{\xi}_{l}\rvert \leq 4\sqrt{\log(16(2K+K^{\prime})^{2}/\delta)} \sigma_{0}^{2}d^{1/2} \|\boldsymbol{u}\|\|\boldsymbol{v}\|,\\
        & \lvert{\boldsymbol{v}}^{\top} {\mathbf{W}_{X}^{(0)}}^{\top} \boldsymbol{\xi}_{l} \rvert \leq 2 \sqrt{2\log(16(2K+K^{\prime})NM)/\delta}\sigma_{0} \sigma_{p} d^{1/2} \|\boldsymbol{u}\|, \\
        & {\sigma_{0}}^{2} {\sigma_{p}}^{2}  d^{2} \leq \| {\mathbf{W}_{X}^{(0)}}\boldsymbol{\xi}_{l}\|^2 \leq 3 {\sigma_{0}}^{2} {\sigma_{p}}^{2}  d^{2},\\
        & \lvert {\boldsymbol{\xi}_{l}}^{\top} {\mathbf{W}_{X}^{(0)}}^{\top}{\mathbf{W}_{\neg X}^{(0)}}\boldsymbol{\xi}_{l}^{\prime}\rvert \leq 16\sqrt{ \log(16(2K+K^{\prime})^{2}/\delta)} \sigma_{0}^{2} \sigma_{p}^{4} d^{3/2},\\
    \end{aligned}
    \]
\label{lem:initialization of qk}\end{lemma}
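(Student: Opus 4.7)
The plan is to mirror the template used in the proof of Lemma~\ref{lem:initialization}, replacing $\sigma_1$ by $\sigma_0$ throughout for every quantity that involves only a single matrix $\mathbf{W}_X^{(0)}$ with $X\in\{Q,K\}$, and then handle the genuinely new ingredient -- the cross-terms of the form $\boldsymbol{v}^{\top}{\mathbf{W}_X^{(0)}}^{\top}\mathbf{W}_{\neg X}^{(0)}(\cdot)$ -- by exploiting the independence of $\mathbf{W}_Q^{(0)}$ and $\mathbf{W}_K^{(0)}$ stated in the training setup. Throughout, a union bound will be taken over the at most $O((2K+K')^2+(2K+K')NM+(NM)^2)$ relevant pairs of vectors and noise instances; combined with the assumption $d=\Omega(\log((2K+K')^2 N^2 M^2/\delta))$, this absorbs the log factors inside the stated constants.

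\textbf{Single-matrix estimates.} For each $X\in\{Q,K\}$, the entries of $\mathbf{W}_X^{(0)}$ are i.i.d.\ $\mathcal{N}(0,\sigma_0^2)$, so $\|\mathbf{W}_X^{(0)}\|_F^2$ is a sum of $d^2$ independent sub-exponential variables with mean $\sigma_0^2$, and Bernstein's inequality gives the Frobenius bound. For fixed $\boldsymbol{u},\boldsymbol{v}$, the scalar $\boldsymbol{v}^{\top}{\mathbf{W}_X^{(0)}}^{\top}\boldsymbol{u}\sim\mathcal{N}(0,\sigma_0^2\|\boldsymbol{u}\|^2\|\boldsymbol{v}\|^2)$, giving the linear bound via the Gaussian tail and a union bound over $\boldsymbol{u},\boldsymbol{v}$ in the finite dictionary. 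For the norm $\|\mathbf{W}_X^{(0)}\boldsymbol{u}\|^2 \sim \sigma_0^2\|\boldsymbol{u}\|^2\chi_d^2$, Bernstein again yields concentration to within a $(1\pm 1/2)$ factor of $\sigma_0^2\|\boldsymbol{u}\|^2 d$ provided $d=\Omega(\log((2K+K')^2/\delta))$. Replacing $\boldsymbol{u}$ by $\boldsymbol{\xi}_l$ and conditioning on $\boldsymbol{\xi}_l$ (which is independent of $\mathbf{W}_X^{(0)}$), we get analogous Gaussian/chi-square bounds scaled by $\|\boldsymbol{\xi}_l\|$, which in turn is controlled by Lemma~\ref{lem:noise scale}.

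\textbf{Cross-matrix estimates.} This is the one substantively new piece relative to Lemma~\ref{lem:initialization}. The key observation is that, conditional on $\mathbf{W}_X^{(0)}$, the random vector $\mathbf{W}_{\neg X}^{(0)}\boldsymbol{u}\sim\mathcal{N}(\mathbf{0},\sigma_0^2\|\boldsymbol{u}\|^2\mathbb{I}_d)$ is unchanged because of independence. Hence
\[
\boldsymbol{v}^{\top}{\mathbf{W}_X^{(0)}}^{\top}\mathbf{W}_{\neg X}^{(0)}\boldsymbol{u}\;\big|\;\mathbf{W}_X^{(0)}\sim\mathcal{N}\bigl(0,\;\sigma_0^2\|\boldsymbol{u}\|^2\|{\mathbf{W}_X^{(0)}}\boldsymbol{v}\|^2\bigr).
\]
Plugging in the high-probability bound $\|\mathbf{W}_X^{(0)}\boldsymbol{v}\|^2\leq 3\sigma_0^2\|\boldsymbol{v}\|^2 d/2$ from the previous step and applying a Gaussian tail bound, the cross-term is bounded by $4\sqrt{\log(16(2K+K')^2/\delta)}\sigma_0^2 d^{1/2}\|\boldsymbol{u}\|\|\boldsymbol{v}\|$. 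Equivalently, one can compute $\mathrm{Var}(\boldsymbol{v}^{\top}{\mathbf{W}_X^{(0)}}^{\top}\mathbf{W}_{\neg X}^{(0)}\boldsymbol{u})=\sigma_0^4\|\boldsymbol{u}\|^2\|\boldsymbol{v}\|^2 d$ via $\mathrm{Cov}(\mathbf{W}_X^{(0)}\boldsymbol{v},\mathbf{W}_{\neg X}^{(0)}\boldsymbol{u})=\mathbf{0}$ (which \emph{follows from independence}, unlike the same-matrix case in Lemma~\ref{lem:initialization}), and then invoke Bernstein for sub-exponentials. The mixed terms with one or two noise arguments follow identically by further conditioning on the independent $\boldsymbol{\xi}_l,\boldsymbol{\xi}_l'$ and again using Lemma~\ref{lem:noise scale} to control their norms.

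\textbf{Main obstacle.} The only delicate point is the two-noise two-matrix term $\boldsymbol{\xi}_l^{\top}{\mathbf{W}_X^{(0)}}^{\top}\mathbf{W}_{\neg X}^{(0)}\boldsymbol{\xi}_l'$, which is quartic in jointly Gaussian inputs. The cleanest route is a double conditioning: fix $(\boldsymbol{\xi}_l,\boldsymbol{\xi}_l',\mathbf{W}_X^{(0)})$, reducing the term to a Gaussian in $\mathbf{W}_{\neg X}^{(0)}\boldsymbol{\xi}_l'$ with variance $\sigma_0^2\|\boldsymbol{\xi}_l'\|^2\|\mathbf{W}_X^{(0)}\boldsymbol{\xi}_l\|^2$, and then substitute $\|\boldsymbol{\xi}_l\|,\|\boldsymbol{\xi}_l'\|=\Theta(\sigma_p\sqrt d)$ and $\|\mathbf{W}_X^{(0)}\boldsymbol{\xi}_l\|=\Theta(\sigma_0\sigma_p d)$ from the earlier items, all holding simultaneously on the same high-probability event via union bound. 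This yields the stated $O(\sigma_0^2\sigma_p^4 d^{3/2})$-type rate, and a final union bound over $X\in\{Q,K\}$, the dictionary, and noise indices closes the argument.
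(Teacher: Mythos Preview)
The proposal is correct and takes essentially the same approach as the paper, whose proof is simply ``The proof strategies follow Lemma~\ref{lem:initialization}.'' Your treatment is in fact more explicit: you correctly isolate the one genuinely new wrinkle---the cross-matrix terms $\boldsymbol{v}^{\top}{\mathbf{W}_X^{(0)}}^{\top}\mathbf{W}_{\neg X}^{(0)}\boldsymbol{u}$---and observe that the \emph{independence} of $\mathbf{W}_Q^{(0)}$ and $\mathbf{W}_K^{(0)}$ makes this case cleaner than the same-matrix case in Lemma~\ref{lem:initialization} (where zero covariance required orthogonality of $\boldsymbol{u},\boldsymbol{v}$). Your conditioning argument and the direct variance computation you mention are both valid and yield the same bound.
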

\begin{proof}
    The proof strategies follow Lemma \ref{lem:initialization}.
\end{proof}

\begin{lemma}
    Suppose that $\delta >0$, and $N = \Omega(K\log(1/\delta)),K=\Omega(\log(1/\delta))$. Let \( \mathcal{N}_{k}^{y} \) denote the index set of sampled QA data points in \( \mathcal{P}_{\text{QA}}^{\text{tr}} \) (or ICL-type data points  \( \mathcal{P}_{\mathbf{T}}^{\text{tr}} \)) where the high-level task concept is \( k_{\mathbf{S}} = k \in [K] \) and the task-specific low-level semantic real-valued label is \( y_{k_{\mathbf{S}}} = y \in \{\pm 1\} \). Then with probability at least $1-\delta$, we have
    \begin{equation}
        \dfrac{(1-10^{-2})N}{2K} \leq \lvert \mathcal{N}_{k}^{y} \rvert \leq \dfrac{(1+10^{-2})N}{2K},
    \end{equation}
     for $\forall k \in [K], y \in [\pm 1]$. This would further suggest
    \begin{equation}
        \dfrac{(1-2\cdot10^{-2})N}{K} \leq \lvert \mathcal{N}_{k} \rvert \leq \dfrac{(1+2\cdot10^{-2})N}{K},
    \end{equation}
    where $\lvert \mathcal{N}_{k} \rvert = \lvert \mathcal{N}_{k} \rvert^{+} \cap \lvert \mathcal{N}_{k} \rvert^{-}$ represents the index set of sampled QA data $\mathcal{P}_{\text{QA}}^{\text{tr}}$ with their high-level task concepts as $k_{\mathbf{S}}=k \in [K]$. Moreover, for $\forall y \in \{\pm 1\}$, there exists $k\in [K]$, such that
    \begin{equation}
         \lvert \mathcal{N}_{k}^{y} - N/(2K) \rvert \geq 5\cdot 10^{-3} N/(2K).
    \end{equation}
\label{lem: enough data concept k}\end{lemma}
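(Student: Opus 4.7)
The plan is to treat each of the $N$ i.i.d.\ draws under $\mathcal{P}_{\text{QA}}^{\text{tr}}$ or $\mathcal{P}_{\mathbf{T}}^{\text{tr}}$ as assigning a pair $(k,y)$ uniformly to one of the $2K$ cells $[K]\times\{\pm 1\}$; then each marginal count $|\mathcal{N}_k^y|$ is $\mathrm{Bin}(N,1/(2K))$ with mean $\bar n := N/(2K)$, and the joint law of $\{|\mathcal{N}_k^y|\}_{k,y}$ is multinomial over these $2K$ cells. I would split the lemma into (i) the upper concentration of each $|\mathcal{N}_k^y|$ via multiplicative Chernoff plus a union bound, (ii) the concentration of $|\mathcal{N}_k|$ by a triangle inequality, and (iii) the existence of a $k$ with non-negligible deviation via an anti-concentration argument that exploits negative association of the multinomial.

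\textbf{Parts (i) and (ii).} For each fixed $(k,y)$, the multiplicative Chernoff bound gives $P(\lvert |\mathcal{N}_k^y|-\bar n\rvert\geq 10^{-2}\bar n)\leq 2\exp(-10^{-4}\bar n/3)$. Under $N=\Omega(K\log(1/\delta))$ with a sufficiently large hidden constant, $\bar n\geq C\log(1/\delta)$ forces this bound to be at most $\delta/(4K)$. A union bound over the $2K$ pairs then yields the first inequality on a single good event of probability $\geq 1-\delta/2$. On that event, decomposing $|\mathcal{N}_k|=|\mathcal{N}_k^+|+|\mathcal{N}_k^-|$ and applying the triangle inequality to the individual deviations gives $\lvert|\mathcal{N}_k|-N/K\rvert\leq 2\cdot 10^{-2}\bar n = 10^{-2} N/K \leq 2\cdot 10^{-2} N/K$, which is the second inequality.

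\textbf{Part (iii): existence of an imbalance.} Fix $y\in\{\pm 1\}$ and write $A_k^+:=\{|\mathcal{N}_k^y|\geq(1+5\cdot 10^{-3})\bar n\}$, $A_k^-:=\{|\mathcal{N}_k^y|\leq(1-5\cdot 10^{-3})\bar n\}$, so that $A_k:=A_k^+\cup A_k^-$ is precisely the event stated in the lemma. In the regime targeted by the hypothesis, where $\bar n$ is of order $\log(1/\delta)$ and therefore the threshold $5\cdot 10^{-3}\bar n$ lies within a constant multiple of the Binomial standard deviation $\sqrt{\bar n(1-1/(2K))}$, I would apply the Berry--Esseen refinement of the CLT (or a direct Stirling estimate on $P(|\mathcal{N}_k^y|=\lceil\bar n\rceil)$) to obtain $P(A_k^-)\geq c_0$ for an absolute constant $c_0>0$. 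Since $A_k^-$ is monotone decreasing in $|\mathcal{N}_k^y|$ and multinomial counts are negatively associated, it follows that $P(\bigcap_k (A_k^-)^c)\leq\prod_k P((A_k^-)^c) \leq (1-c_0)^K\leq\exp(-c_0 K)\leq \delta/4$ provided $K=\Omega(\log(1/\delta))$ has a large enough constant. A final union bound over $y\in\{\pm 1\}$ and the previous concentration events gives a total failure probability at most $\delta$.

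\textbf{Main obstacle.} The delicate step is (iii). First, obtaining the marginal lower bound $P(A_k^\pm)\geq c_0$ rests on the threshold $5\cdot 10^{-3}\bar n$ remaining within an absolute constant multiple of the Binomial standard deviation, which is true in the parameter regime $\bar n=\Theta(\log(1/\delta))$ actually used by the paper; significantly larger $\bar n$ would demand a refinement beyond CLT. Second, the dependence across the $K$ cells induced by the fixed total $N$ has to be addressed, and I would handle it by reducing the non-monotone event $A_k$ to its monotone tails $A_k^\pm$ so that the negative association of the multinomial converts the marginal lower bounds into the exponentially small joint bound on $\bigcap_k (A_k^\pm)^c$ without paying any dependence penalty.
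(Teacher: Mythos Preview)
Your proposal is correct and mirrors the paper's proof closely. For parts (i)--(ii), both you and the paper treat $|\mathcal{N}_k^y|\sim\mathrm{Bin}(N,1/(2K))$, apply a multiplicative Chernoff/Hoeffding bound, union over the $2K$ cells, and then deduce the bound on $|\mathcal{N}_k|$ by summing the two label counts. For part (iii), the paper also argues that the per-$k$ ``small deviation'' event has probability bounded by an absolute constant $c<1$ and then takes a product over $k$ to obtain $c^K\leq\delta$ under $K=\Omega(\log(1/\delta))$; however, the paper justifies the product by asserting ``independence of different tasks,'' which is not literally true for multinomial counts. Your reduction to the monotone tail $A_k^-$ together with negative association of multinomial coordinates is the rigorous way to recover exactly that product bound, so your argument is a strict improvement in rigor with no change in strategy. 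You also correctly flag the implicit parameter constraint the paper glosses over: the anti-concentration lower bound $P(A_k^-)\geq c_0$ requires the threshold $5\cdot10^{-3}\bar n$ to stay within a bounded multiple of the standard deviation $\sqrt{\bar n}$, which fails if $N$ is taken far larger than the minimal $\Omega(K\log(1/\delta))$.
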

\begin{proof}
    The proof strategy follows Lemma G.3 in \citet{bu2024active}. See $\lvert \mathcal{N}_{k}^{y} \rvert$ as a binomial random variable with probability $(2K)^{-1}$ and number of experiments $N$:
    \[ |\mathcal{N}_{k}^{y}| \sim \text{Bin}(N, (2K)^{-1}). \]
    The expectation and standard deviation are given by:
    \[ \mathbb{E}[\lvert \mathcal{N}_{k}^{y} \rvert] = \frac{N}{2K}, \quad \sigma_k = \sqrt{N \cdot \frac{1}{2K} \cdot \left(1 - \frac{1}{2K}\right)}. \]
    By Exercise 2.9.(a) and (b) in \citet{wainwright2019high} and $(2K)^{-1}<1/2$, with probability at least $1-\delta$, we can directly have $\lvert\mathcal{N}_{k}^{y}-N/(2K)\rvert\leq 0.001$ by Hoeffding Inequality as well as an appropriately large $N$. 
    
    On the other hand, define the event:
    \[ A_k = \left\{ \lvert \mathcal{N}_{k}^{y} - N/(2K) \rvert \leq 10^{-3} N/(2K) \right\}. \]
    Observing that the probability of $A_k, k \in [K]$ occurring is lower bounded by an absolute constant , denoted as $c$. Therefore, considering $K$ trials and with the condition $K=\Omega(\log(1/\delta))$, by the linearity (independence of different tasks), we have:
    \[
    \begin{aligned}
        \mathbb{P}\left( \bigcup_{k=1}^{K} A_k^c \right) &= 1 - \mathbb{P}\left( \bigcap_{k=1}^{K} A_k \right)\\
    &\geq  1 - c^K \geq 1 - \delta.
    \end{aligned}
    \]
    Thus, with probability at least $1 - \delta$, there exists some $k \in [K]$ such that:
    \[ 
    \lvert \mathcal{N}_{k}^{y} - N/(2K) \rvert \geq 5\cdot 10^{-3} N/(2K).
    \]

\end{proof}

\begin{lemma}
    Suppose that $\delta >0$, and $N = \Omega(K K^{\prime} \log(1/\delta) / M)$. For $\forall k \in [K],k^{\prime} \in [K^{\prime}]$, denote $\mathcal{V}_{\mathcal{N}_{k}, k^{\prime}}$ as the number of common token $\boldsymbol{\nu}_{k^{\prime}}$ appearing in sample set $\mathcal{N}_{k} \subset \mathcal{P}_{\text{QA}}^{\text{tr}}$ (or \( \mathcal{P}_{\mathbf{T}}^{\text{tr}} \)). Then with probability at least $1-\delta$, we have
    \begin{equation}
        \dfrac{0.99 NM}{K K^{\prime}} \leq \lvert \mathcal{V}_{\mathcal{N}_{k}, k^{\prime}} \rvert \leq \dfrac{1.01 NM}{K K^{\prime}}
    \end{equation}
\label{lem:number_of_specifi_commontoken}\end{lemma}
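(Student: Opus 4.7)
The quantity $\lvert\mathcal{V}_{\mathcal{N}_{k},k^{\prime}}\rvert$ is a sum of indicator variables counting occurrences of the common token $\boldsymbol{\nu}_{k^{\prime}}$ across all common-token positions in the samples whose high-level task concept is $k$. The plan is to first pin down the size of $\mathcal{N}_{k}$ using the previous lemma, then apply a standard Chernoff/Bernstein concentration conditional on that event, and finally take a union bound across $k\in[K]$ and $k^{\prime}\in[K^{\prime}]$.

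\textbf{Step 1: Control $|\mathcal{N}_{k}|$.} By Lemma~\ref{lem: enough data concept k}, with probability at least $1-\delta/2$, for every $k\in[K]$,
\[
\frac{(1-2\cdot 10^{-2})N}{K} \;\leq\; |\mathcal{N}_{k}| \;\leq\; \frac{(1+2\cdot 10^{-2})N}{K}.
\]
Condition on this event throughout the rest of the proof.

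\textbf{Step 2: Reduce to i.i.d.\ Bernoullis.} For each $n\in\mathcal{N}_{k}$ and each position $m\in[M]\setminus\{m_{\mathbf{S}_{n}}\}$, define $X_{n,m}=\mathbf{1}\{\boldsymbol{\nu}_{n,m}=\boldsymbol{\nu}_{k^{\prime}}\}$. By Definition~\ref{def:detailed def of QA}, the common-token indices $i_{1:M}$ are sampled i.i.d.\ uniformly from $[K^{\prime}]$, independently of $k_{\mathbf{S}_{n}}$ and $m_{\mathbf{S}_{n}}$; hence $\{X_{n,m}\}$ are i.i.d.\ $\operatorname{Ber}(1/K^{\prime})$ and
\[
\lvert\mathcal{V}_{\mathcal{N}_{k},k^{\prime}}\rvert \;=\; \sum_{n\in\mathcal{N}_{k}} \sum_{m\neq m_{\mathbf{S}_{n}}} X_{n,m},
\]
a sum of $|\mathcal{N}_{k}|(M-1)$ i.i.d.\ Bernoulli$(1/K^{\prime})$ variables with mean $\mu_{k,k^{\prime}}=|\mathcal{N}_{k}|(M-1)/K^{\prime}$. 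Under the event in Step 1, $\mu_{k,k^{\prime}}=(1\pm 2\cdot 10^{-2})N(M-1)/(KK^{\prime})$.

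\textbf{Step 3: Chernoff bound and union bound.} By the multiplicative Chernoff inequality, for any $\varepsilon\in(0,1)$,
\[
\Pr\!\bigl[\,\bigl|\lvert\mathcal{V}_{\mathcal{N}_{k},k^{\prime}}\rvert - \mu_{k,k^{\prime}}\bigr| \;>\; \varepsilon\,\mu_{k,k^{\prime}}\,\bigr] \;\leq\; 2\exp\!\bigl(-\varepsilon^{2}\mu_{k,k^{\prime}}/3\bigr).
\]
Taking $\varepsilon$ equal to a small absolute constant (e.g.\ $5\cdot 10^{-3}$) and using $\mu_{k,k^{\prime}}=\Theta(NM/(KK^{\prime}))$, the condition $N=\Omega(KK^{\prime}\log(KK^{\prime}/\delta)/M)$ makes the right-hand side at most $\delta/(2KK^{\prime})$. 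A union bound over the $KK^{\prime}$ choices of $(k,k^{\prime})$, combined with Step 1, yields simultaneously
\[
(1-\varepsilon)\mu_{k,k^{\prime}} \;\leq\; \lvert\mathcal{V}_{\mathcal{N}_{k},k^{\prime}}\rvert \;\leq\; (1+\varepsilon)\mu_{k,k^{\prime}}
\]
for all $k,k^{\prime}$ with probability at least $1-\delta$. Choosing constants so that the combined $(1\pm 2\cdot 10^{-2})(1\pm\varepsilon)(M-1)/M$ factors fit inside $[0.99,1.01]$ (which is possible for $M$ bounded below by a small constant) delivers the claimed bound $0.99\,NM/(KK^{\prime})\leq \lvert\mathcal{V}_{\mathcal{N}_{k},k^{\prime}}\rvert\leq 1.01\,NM/(KK^{\prime})$.

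\textbf{Main obstacle.} The analytical content is routine; the only subtlety is the two-stage randomness (first the assignment of samples to concepts $k_{\mathbf{S}_{n}}$, then the independent draw of common tokens $i_{1:M}$), which must be handled by conditioning on $|\mathcal{N}_{k}|$ before invoking concentration. Care is also needed to absorb the $(M-1)/M$ discrepancy (position $m_{\mathbf{S}_{n}}$ is overwritten by the task vector) into the $1\%$ multiplicative slack, which is harmless provided $M$ is at least a modest constant.
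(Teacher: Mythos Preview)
Your proposal is correct and follows essentially the same approach as the paper: condition on $|\mathcal{N}_{k}|$ via Lemma~\ref{lem: enough data concept k}, then apply Bernoulli concentration (Hoeffding/Chernoff) to the i.i.d.\ common-token draws with parameter $1/K^{\prime}$, followed by a union bound. You are in fact more careful than the paper's two-line sketch, explicitly tracking the $(M-1)$ common-token positions per sample and the resulting $(M-1)/M$ discrepancy that must be absorbed into the multiplicative slack.
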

\begin{proof}
    It is worth noting that \( \lvert \mathcal{V}_{\mathcal{N}_{k}, k^{\prime}} \rvert \) can be viewed as a binomial random variable with probability \( (K^{\prime})^{-1} \) and the number of trials \( \lvert \mathcal{N}_{k} \rvert \). Using the bounds \( \frac{0.998N}{K} \leq \lvert \mathcal{N}_{k} \rvert \leq \frac{1.002N}{K} \) from Lemma \ref{lem: enough data concept k} and applying the same proof strategy, the result follows.
\end{proof}
\section{Detailed Gradients}
For clarity, in this section we omit the notation of sample index $n$ and iteration index $(t)$.
\begin{lemma}
    The gradients of $\mathbf{W}_{K},\mathbf{W}_{Q}, \mathbf{W}_{V}$ can be computed as follow.
    \begin{equation}
        \begin{aligned}
         \mathbb{E}_{\mathcal{P}_{\text{QA}}^{\text{tr}}}[\partial_{\mathbf{W}_{K}}L_{\mathcal{{B}}}(\boldsymbol{\theta})]&= -\mathbb{E}_{\mathcal{P}_{\text{QA}}^{\text{tr}}}[\sum_{j=1}^{M}\iota_{\mathbf{u}_{k_{\mathbf{y}}}, j}\cdot(\mathbf{W}_{Q}\mathbf{S}\boldsymbol{e}_{L})\cdot (\mathbf{S}(\boldsymbol{e}_{j}-\boldsymbol{\pi}))^{\top}]\\
        \mathbb{E}_{\mathcal{P}_{\text{QA}}^{\text{tr}}}[\partial_{\mathbf{W}_{Q}}L_{\mathcal{{B}}}(\boldsymbol{\theta})]&=-\mathbb{E}_{\mathcal{P}_{\text{QA}}^{\text{tr}}}[\sum_{j=1}^{M}\iota_{\mathbf{u}_{k_{\mathbf{y}}}, j} \cdot \mathbf{W}_{K}\mathbf{S}(\boldsymbol{e}_{j}-\boldsymbol{\pi}) \cdot (\mathbf{S}\boldsymbol{e}_{L})^{\top}]\\
        \mathbb{E}_{\mathcal{P}_{\text{QA}}^{\text{tr}}}[\partial_{\mathbf{W}_{V}}L_{\mathcal{{B}}}(\boldsymbol{\theta})]&=-\mathbb{E}_{\mathcal{P}_{\text{QA}}^{\text{tr}}}[ \frac{\mathbf{\Pi}_{(\mathbf{W}_{V}\mathbf{S}\boldsymbol{\pi})^{\perp}}^{\mathbf{u}_{k_{\mathbf{y}}}-\sum_{k \in [7K+K^{\prime}]}\omega_{k}\mathbf{u}_{k}}(\mathbf{S}\boldsymbol{\pi})^{\top}}{\|\mathbf{W}_{V}\mathbf{S}\boldsymbol{\pi}\|}]
        \end{aligned}
    \label{eq:grad of QKV}\end{equation}
    where $\boldsymbol{\pi}=[\pi_{1}, \cdots, \pi_{L-1}, 0]^{\top}\in \mathbb{R}^{L}$, $\boldsymbol{\omega}=[\omega_{1}, \cdots, \omega_{5K}]^{\top}$. Additionally, for $\forall j \in [L-1], k \in [7K+K^{\prime}]$ as well as any vector $\mathbf{y}, \boldsymbol{u}, \boldsymbol{v} \in \mathbb{R}^{d}$, the coefficients $\pi_{j}, \omega_{k}, \iota_{\mathbf{u}_{k_{\mathbf{y}}}, j}$, and operator $\mathbf{\Pi}_{\boldsymbol{u}^{\perp}}^{\boldsymbol{v}}$ are defined as
    \begin{equation}
        \begin{aligned}
            & \pi_{j} := \operatorname{softmax}((\mathbf{W}_{K}\mathbf{S}\boldsymbol{e}_{l})^{T}(\mathbf{W}_{Q}  \mathbf{S} \boldsymbol{e}_{L})) = \dfrac{\exp(\boldsymbol{e}_{L}^{\top}\mathbf{S}^{\top}\mathbf{W}_{Q}^{\top}\mathbf{W}_{K}  \mathbf{S} \boldsymbol{e}_{j})}{\sum_{i\in[L-1]}\exp(\boldsymbol{e}_{L}^{\top}\mathbf{S}^{\top}\mathbf{W}_{Q}^{\top}\mathbf{W}_{K}  \mathbf{S} \boldsymbol{e}_{i})},\\
            & \omega_{k}:=\dfrac{\exp(\mathbf{u}_{k}^{\top}(\mathbf{x}) + \mathbf{u}_{k}^{\top}\text{LN}(\sum_{j=1}^{M}\pi_{j}\mathbf{W}_{V}\mathbf{S}\boldsymbol{e}_{j}))}{\sum_{k\in[7K+K^{\prime}]}  \exp(\mathbf{u}_{k}^{\top}(\mathbf{x}) + \mathbf{u}_{k}^{\top}\text{LN}(\sum_{j=1}^{M}\pi_{j}\mathbf{W}_{V}\mathbf{S}\boldsymbol{e}_{j}))},\\
            & \iota_{\mathbf{u}_{k_{\mathbf{y}}}, j} := \frac{{\{(\mathbf{u}_{k_{\mathbf{y}}}-\sum_{k \in [7K+K^{\prime}]}\omega_{k}\mathbf{u}_{k})^{\top}(\mathbf{W}_{V}\mathbf{S}\pi_{j}\boldsymbol{e}_{j})\}\cdot\{(\mathbf{W}_{V}\mathbf{S}\boldsymbol{\pi})^{\top}(\mathbf{W}_{V}\mathbf{S}(\boldsymbol{\pi}-\pi_{j}\boldsymbol{e}_{j}))\}}}{{\|\mathbf{W}_{V}\mathbf{S}\boldsymbol{\pi}\|^{3}}},\\
            & \mathbf{\Pi}_{\boldsymbol{u}^{\perp}}^{\boldsymbol{v}}:=\boldsymbol{v}-\boldsymbol{v}^{\top}(\dfrac{\boldsymbol{u}}{\|\boldsymbol{u}\|})\cdot \dfrac{\boldsymbol{u}}{\|\boldsymbol{u}\|}.
        \end{aligned}
    \label{eq:dictionary_vector_weight}\end{equation}
\label{lem: gradients of QKV}\end{lemma}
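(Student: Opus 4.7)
The plan is a direct chain-rule calculation, peeling off one composed nonlinearity at a time and using the symmetric layer-norm Jacobian together with the standard softmax-Jacobian identity. I would proceed top-down. Writing $L = -\mathbf{u}_{k_{\mathbf{y}}}^{\top} \mathbf{h}_{\boldsymbol{\theta}} + \log \sum_{k} \exp(\mathbf{u}_{k}^{\top} \mathbf{h}_{\boldsymbol{\theta}})$, the usual logistic-regression computation yields $\nabla_{\mathbf{h}_{\boldsymbol{\theta}}} L = -\mathbf{r}$ with $\mathbf{r} := \mathbf{u}_{k_{\mathbf{y}}} - \sum_{k} \omega_{k} \mathbf{u}_{k}$, which is exactly the residual logit vector appearing in the stated formulas. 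The residual token $\mathbf{T}_{L}$ is parameter-free, so all parameter dependence flows through the $\operatorname{LN}$ branch. Using the elementary identity $\nabla_{\mathbf{z}}(\mathbf{v}^{\top} \mathbf{z}/\|\mathbf{z}\|) = \mathbf{\Pi}_{\mathbf{z}^{\perp}}^{\mathbf{v}}/\|\mathbf{z}\|$ (the symmetric Jacobian of $l_{2}$ normalization contracted against $\mathbf{v}$), I obtain $\nabla_{\mathbf{z}} L = -\mathbf{\Pi}_{\mathbf{z}^{\perp}}^{\mathbf{r}}/\|\mathbf{z}\|$ with $\mathbf{z} = \mathbf{W}_{V} \mathbf{S} \boldsymbol{\pi}$.

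The $\mathbf{W}_{V}$ formula then follows in one line. Since $\mathbf{z}$ is linear in $\mathbf{W}_{V}$ with $\nabla_{\mathbf{W}_{V}}(\mathbf{v}^{\top} \mathbf{z}) = \mathbf{v}(\mathbf{S}\boldsymbol{\pi})^{\top}$, composition yields $\nabla_{\mathbf{W}_{V}} L = -\mathbf{\Pi}_{\mathbf{z}^{\perp}}^{\mathbf{r}}(\mathbf{S}\boldsymbol{\pi})^{\top}/\|\mathbf{z}\|$; taking the batch expectation recovers the third line of the claim.

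For $\mathbf{W}_{K}$ and $\mathbf{W}_{Q}$, both matrices enter $L$ only through the softmax weights $\pi_{j}$. I would differentiate the bilinear pre-softmax scores $s_{i} = \boldsymbol{e}_{L}^{\top}\mathbf{S}^{\top}\mathbf{W}_{Q}^{\top}\mathbf{W}_{K}\mathbf{S}\boldsymbol{e}_{i}$ (each a rank-one outer product), apply the softmax Jacobian $\partial \pi_{j}/\partial s_{i} = \pi_{j}(\delta_{ij}-\pi_{i})$, and collapse using $\sum_{i} \pi_{i} \mathbf{S}\boldsymbol{e}_{i} = \mathbf{S}\boldsymbol{\pi}$ to obtain $\nabla_{\mathbf{W}_{K}} \pi_{j} = \pi_{j}(\mathbf{W}_{Q}\mathbf{S}\boldsymbol{e}_{L})(\mathbf{S}(\boldsymbol{e}_{j}-\boldsymbol{\pi}))^{\top}$ and analogously for $\mathbf{W}_{Q}$ with the two outer-product factors swapped; this already produces the outer-product structure in the first two lines of the claim. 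The remaining scalar is $\partial L/\partial \pi_{j} = \mathbf{v}_{j}^{\top} \nabla_{\mathbf{z}} L$ with $\mathbf{v}_{j} := \mathbf{W}_{V}\mathbf{S}\boldsymbol{e}_{j}$. Multiplying by the $\pi_{j}$ supplied by the softmax Jacobian and using the identity $\|\mathbf{z}\|^{2}-\pi_{j}\mathbf{z}^{\top}\mathbf{v}_{j} = \mathbf{z}^{\top}(\mathbf{z}-\pi_{j}\mathbf{v}_{j}) = (\mathbf{W}_{V}\mathbf{S}\boldsymbol{\pi})^{\top}(\mathbf{W}_{V}\mathbf{S}(\boldsymbol{\pi}-\pi_{j}\boldsymbol{e}_{j}))$ regroups this scalar into the compact two-factor packaging that is the coefficient $\iota_{\mathbf{u}_{k_{\mathbf{y}}},j}$. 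Linearity of expectation over $\mathcal{P}_{\text{QA}}^{\text{tr}}$ then passes through trivially.

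The main obstacle is purely organizational bookkeeping: three composed nonlinearities, softmax attention, $l_{2}$ normalization, and softmax cross-entropy, each contribute a Jacobian that must be contracted correctly with the next in the right basis. The most delicate step is repackaging the chain-rule scalar into the asymmetric $\iota$ form; one must carefully track whether the LN projection $\mathbf{\Pi}_{\mathbf{z}^{\perp}}$ acts on $\mathbf{r}$ or on $\mathbf{v}_{j}$, avoid sign errors from the outer minus sign in the cross-entropy gradient, and split $\|\mathbf{z}\|^{2}$ as $\mathbf{z}^{\top}(\mathbf{z}-\pi_{j}\mathbf{v}_{j})+\pi_{j}\mathbf{z}^{\top}\mathbf{v}_{j}$ at the right moment so that the $j$-excluded partial sum $\boldsymbol{\pi}-\pi_{j}\boldsymbol{e}_{j}$ emerges naturally as the second inner product in $\iota$. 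Beyond this packaging, the derivation is entirely mechanical.
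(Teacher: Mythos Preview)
Your top-down chain rule is cleaner than the paper's trace-differential bookkeeping, and everything through $\nabla_{\mathbf{z}}L=-\mathbf{\Pi}_{\mathbf{z}^{\perp}}^{\mathbf{r}}/\|\mathbf{z}\|$, the $\mathbf{W}_V$-gradient, and $\nabla_{\mathbf{W}_K}\pi_j=\pi_j(\mathbf{W}_Q\mathbf{S}\boldsymbol{e}_L)(\mathbf{S}(\boldsymbol{e}_j-\boldsymbol{\pi}))^{\top}$ is correct. The gap is the final repackaging: the scalar your chain rule produces is \emph{not} the stated $\iota_{\mathbf{u}_{k_{\mathbf{y}}},j}$, and the identity you cite cannot bridge the difference. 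Concretely, with $\mathbf{z}=\mathbf{W}_V\mathbf{S}\boldsymbol{\pi}$ and $\mathbf{v}_j=\mathbf{W}_V\mathbf{S}\boldsymbol{e}_j$ you get
\[
-\pi_j\,\frac{\partial L}{\partial\pi_j} \;=\; \frac{(\mathbf{r}^{\top}\pi_j\mathbf{v}_j)\,\|\mathbf{z}\|^{2}\;-\;\pi_j(\mathbf{r}^{\top}\mathbf{z})(\mathbf{z}^{\top}\mathbf{v}_j)}{\|\mathbf{z}\|^{3}},
\]
whereas expanding the lemma's $\iota_j$ gives $\bigl[(\mathbf{r}^{\top}\pi_j\mathbf{v}_j)\|\mathbf{z}\|^{2}-\pi_j^{2}(\mathbf{r}^{\top}\mathbf{v}_j)(\mathbf{z}^{\top}\mathbf{v}_j)\bigr]/\|\mathbf{z}\|^{3}$. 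The second terms differ: yours carries $\mathbf{r}^{\top}\mathbf{z}=\sum_i\pi_i\,\mathbf{r}^{\top}\mathbf{v}_i$, the lemma's only the single summand $\pi_j\,\mathbf{r}^{\top}\mathbf{v}_j$. The identity $\|\mathbf{z}\|^{2}-\pi_j\mathbf{z}^{\top}\mathbf{v}_j=\mathbf{z}^{\top}(\mathbf{z}-\pi_j\mathbf{v}_j)$ is trivially true but does nothing here, since your scalar is not of the factored form $(\mathbf{r}^{\top}\mathbf{v}_j)\times(\cdots)$. Nor does the discrepancy cancel after summing against $(\boldsymbol{e}_j-\boldsymbol{\pi})$: the difference is not of the form $c\,\pi_j$, which is what $\sum_j\alpha_j(\boldsymbol{e}_j-\boldsymbol{\pi})=0$ would require.

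What has happened is that the paper reaches its $\iota_j$ through an intermediate coefficient $\zeta^j$ obtained by expanding $\mathrm{d}\|\mathbf{z}\|$ inside an outer sum over $j'$; when collecting the coefficient of $\mathrm{d}\pi_j$ it retains only the diagonal contribution $j'=j$ and drops the cross-terms $j'\neq j$, which is exactly what converts $\mathbf{r}^{\top}\mathbf{z}$ into $\pi_j\,\mathbf{r}^{\top}\mathbf{v}_j$. Your correct chain rule keeps those cross-terms, so you cannot land on the stated $\iota_j$ without making the same omission. The structurally honest coefficient is $\pi_j\,\mathbf{v}_j^{\top}\mathbf{\Pi}_{\mathbf{z}^{\perp}}^{\mathbf{r}}/\|\mathbf{z}\|$ (equivalently $\pi_j\,\mathbf{r}^{\top}\mathbf{\Pi}_{\mathbf{z}^{\perp}}^{\mathbf{v}_j}/\|\mathbf{z}\|$ by symmetry of the projector), and it plays the identical role in every downstream estimate; your route is right, but it proves a corrected version of the $\mathbf{W}_K,\mathbf{W}_Q$ formulas rather than the ones written.
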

\begin{proof}

For $\mathbf{W} \in \{\mathbf{W}_{K},\mathbf{W}_{Q}, \mathbf{W}_{V}\}$, we have
\begin{equation}
\begin{aligned}
    \partial_{\mathbf{W}}L_{\mathcal{{B}}}(\boldsymbol{\theta}) & = - \partial_{\mathbf{W}}\mathbb{E}_{\mathcal{P}_{\text{QA}}^{\text{tr}}}[\mathbf{u}_{k_{\mathbf{y} }}^{\top}\mathbf{h}_{\boldsymbol{\theta}}-\log(\sum_{k \in [7K+K^{\prime}]} \exp(\mathbf{u}_{k}^{\top} \mathbf{h}_{\boldsymbol{\theta}}))]\\
    &= - \partial_{\mathbf{W}} \mathbb{E}_{\mathcal{P}_{\text{QA}}^{\text{tr}}}[ \mathbf{u}_{k_{\mathbf{y} }}^{\top}((\mathbf{x}) + \text{LN}(\sum_{j=1}^{M}\pi_{j}\mathbf{W}_{V}\mathbf{S}\boldsymbol{e}_{j}))\\
    &\phantom{=} -\log(\sum_{k\in[7K+K^{\prime}]} \exp(\mathbf{u}_{k}^{\top}((\mathbf{x}) + \text{LN}(\sum_{j=1}^{M}\pi_{j}\mathbf{W}_{V}\mathbf{S}\boldsymbol{e}_{j}))))]\\
    &= - \mathbb{E}_{\mathcal{P}_{\text{QA}}^{\text{tr}}}[\partial_{\mathbf{W}}\mathbf{u}_{k_{\mathbf{y} }}^{\top}\text{LN}(\sum_{j=1}^{M}\pi_{j}\mathbf{W}_{V}\mathbf{S}\boldsymbol{e}_{j})\\
    &\phantom{=} -\partial_{\mathbf{W}} \log(\sum_{k\in[7K+K^{\prime}]}  \exp(\mathbf{u}_{k}^{\top}((\mathbf{x}) + \text{LN}(\sum_{j=1}^{M}\pi_{j}\mathbf{W}_{V}\mathbf{S}\boldsymbol{e}_{j}))))],
\end{aligned}
\label{eq:grad_K_L_1}\end{equation}
where
\begin{equation}
    \pi_{j} := \operatorname{softmax}((\mathbf{W}_{K}\mathbf{S}\boldsymbol{e}_{l})^{T}(\mathbf{W}_{Q}  \mathbf{S} \boldsymbol{e}_{L})) = \dfrac{\exp(\boldsymbol{e}_{L}^{\top}\mathbf{S}^{\top}\mathbf{W}_{Q}^{\top}\mathbf{W}_{K}  \mathbf{S} \boldsymbol{e}_{j})}{\sum_{i\in[L-1]}\exp(\boldsymbol{e}_{L}^{\top}\mathbf{S}^{\top}\mathbf{W}_{Q}^{\top}\mathbf{W}_{K}  \mathbf{S} \boldsymbol{e}_{i})}.
\label{eq:def_of_sigma_S}\end{equation}
We compute $\partial_{\mathbf{W}}\mathbf{u}_{k_{\mathbf{y} }}^{\top}\text{LN}(\sum_{j=1}^{M}\pi_{j}\mathbf{W}_{V}\mathbf{S}\boldsymbol{e}_{j})$ as follows. By the fact that the trace of a scalar is the scalar itself, we directly have 
\[
\mathrm{d}(\mathbf{u}_{k_{\mathbf{y} }}^{\top}\text{LN}(\sum_{j=1}^{M}\pi_{j}\mathbf{W}_{V}\mathbf{S}\boldsymbol{e}_{j}))=\operatorname{tr}(\mathrm{d}(\mathbf{u}_{k_{\mathbf{y} }}^{\top}\text{LN}(\sum_{j=1}^{M}\pi_{j}\mathbf{W}_{V}\mathbf{S}\boldsymbol{e}_{j}))).
\]
Then we see
\begin{equation}
\begin{aligned}
     \mathrm{d}(\mathbf{u}_{k_{\mathbf{y} }}^{\top}\text{LN}(\sum_{j=1}^{M}\pi_{j}\mathbf{W}_{V}\mathbf{S}\boldsymbol{e}_{j}))&=\operatorname{tr}(\partial_{{\mathbf{W}_{K}}^{\top}}(\mathbf{u}_{k_{\mathbf{y} }}^{\top}\text{LN}(\sum_{j=1}^{M}\pi_{j}\mathbf{W}_{V}\mathbf{S}\boldsymbol{e}_{j}))\mathrm{d} \mathbf{W}_{K})\\
     & \phantom{=}+\operatorname{tr}(\partial_{{\mathbf{W}_{Q}}^{\top}}(\mathbf{u}_{k_{\mathbf{y} }}^{\top}\text{LN}(\sum_{j=1}^{M}\pi_{j}\mathbf{W}_{V}\mathbf{S}\boldsymbol{e}_{j}))\mathrm{d} \mathbf{W}_{Q})\\
     & \phantom{=}+\operatorname{tr}((\partial_{{\mathbf{W}_{V}}^{\top}}(\mathbf{u}_{k_{\mathbf{y} }}^{\top}\text{LN}(\sum_{j=1}^{M}\pi_{j}\mathbf{W}_{V}\mathbf{S}\boldsymbol{e}_{j})))\mathrm{d} \mathbf{W}_{V})\\
    &=\operatorname{tr}((\partial_{{\mathbf{W}_{K}}}(\mathbf{u}_{k_{\mathbf{y} }}^{\top}\text{LN}(\sum_{j=1}^{M}\pi_{j}\mathbf{W}_{V}\mathbf{S}\boldsymbol{e}_{j})))^{\top}\mathrm{d} \mathbf{W}_{K})\\
     & \phantom{=}+\operatorname{tr}(\partial_{{\mathbf{W}_{Q}}}(\mathbf{u}_{k_{\mathbf{y} }}^{\top}\text{LN}(\sum_{j=1}^{M}\pi_{j}\mathbf{W}_{V}\mathbf{S}\boldsymbol{e}_{j}))\mathrm{d} \mathbf{W}_{Q}^{\top})\\
     & \phantom{=}+\operatorname{tr}((\partial_{{\mathbf{W}_{V}}}(\mathbf{u}_{k_{\mathbf{y} }}^{\top}\text{LN}(\sum_{j=1}^{M}\pi_{j}\mathbf{W}_{V}\mathbf{S}\boldsymbol{e}_{j})))^{\top}\mathrm{d} \mathbf{W}_{V})\\
\end{aligned}
\label{eq:trace_nabla_derivative}\end{equation}
As such, we can compute 
\begin{equation}
\begin{aligned}
    &\operatorname{tr}(\mathrm{d}(\mathbf{u}_{k_{\mathbf{y} }}^{\top}\text{LN}(\sum_{j=1}^{M}\pi_{j}\mathbf{W}_{V}\mathbf{S}\boldsymbol{e}_{j})))=\operatorname{tr}(\mathbf{u}_{k_{\mathbf{y} }}^{\top}\mathrm{d}(\text{LN}(\sum_{j=1}^{M}\pi_{j}\mathbf{W}_{V}\mathbf{S}\boldsymbol{e}_{j})))\\
    & =\operatorname{tr}(\mathbf{u}_{k_{\mathbf{y} }}^{\top}\mathrm{d}(\dfrac{\sum_{j=1}^{M}\pi_{j}\mathbf{W}_{V}\mathbf{S}\boldsymbol{e}_{j}}{\|\sum_{j=1}^{M}\pi_{j}\mathbf{W}_{V}\mathbf{S}\boldsymbol{e}_{j}\|})  )=\sum_{j=1}^{M}\operatorname{tr}(\mathbf{u}_{k_{\mathbf{y} }}^{\top}\mathrm{d}(\dfrac{\pi_{j}\mathbf{W}_{V}\mathbf{S}\boldsymbol{e}_{j}}{\|\sum_{j=1}^{M}\pi_{j}\mathbf{W}_{V}\mathbf{S}\boldsymbol{e}_{j}\|})  ) 
\end{aligned}
\label{eq:trace_normalize_sum}\end{equation}
Then we have
\begin{equation}
\begin{aligned}
     &\sum_{j=1}^{M}\operatorname{tr}(\mathbf{u}_{k_{\mathbf{y} }}^{\top}\mathrm{d}(\dfrac{\pi_{j}\mathbf{W}_{V}\mathbf{S}\boldsymbol{e}_{j}}{\|\sum_{i=1}^{L-1}\pi_{i}\mathbf{W}_{V}\mathbf{S}\boldsymbol{e}_{i}\|})  ) =\sum_{j=1}^{M}\operatorname{tr}(\mathbf{u}_{k_{\mathbf{y} }}^{\top}\mathrm{d}(\dfrac{\pi_{j}\mathbf{W}_{V}\mathbf{S}\boldsymbol{e}_{j}}{\|\pi_{j}\mathbf{W}_{V}\mathbf{S}\boldsymbol{e}_{j}+\sum_{i\neq j}\pi_{i}\mathbf{W}_{V}\mathbf{S}\boldsymbol{e}_{i}\|})  ) \\
    &=\sum_{j=1}^{M}\operatorname{tr}(\mathbf{u}_{k_{\mathbf{y} }}^{\top}(\dfrac{\mathrm{d}(\pi_{j}\mathbf{W}_{V})\mathbf{S}\boldsymbol{e}_{j}\|\sum_{i=1}^{L-1}\pi_{i}\mathbf{W}_{V}\mathbf{S}\boldsymbol{e}_{i}\|}{\|\sum_{i=1}^{L-1}\pi_{i}\mathbf{W}_{V}\mathbf{S}\boldsymbol{e}_{i}\|^{2}})  )\\
    & \phantom{=}-\sum_{j=1}^{M} \operatorname{tr}(\mathbf{u}_{k_{\mathbf{y} }}^{\top}(\dfrac{\pi_{j}\mathbf{W}_{V}\mathbf{S}\boldsymbol{e}_{j}\mathrm{d}(\|\sum_{i=1}^{L-1}\pi_{i}\mathbf{W}_{V}\mathbf{S}\boldsymbol{e}_{i}\|)}{\|\sum_{i=1}^{L-1}\pi_{i}\mathbf{W}_{V}\mathbf{S}\boldsymbol{e}_{i}\|^{2}})  )\\
    &=\sum_{j=1}^{M}\operatorname{tr}(\dfrac{(\mathbf{u}_{k_{\mathbf{y} }}^{\top}\mathbf{W}_{V}\mathbf{S}\boldsymbol{e}_{j}\mathrm{d}(\pi_{j})+\pi_{j}\mathbf{u}_{k_{\mathbf{y} }}^{\top}\mathrm{d}(\mathbf{W}_{V})\mathbf{S}\boldsymbol{e}_{j})}{\|\sum_{i=1}^{L-1}\pi_{i}\mathbf{W}_{V}\mathbf{S}\boldsymbol{e}_{i}\|}  )\\
    & \phantom{=}-\sum_{j=1}^{M}\operatorname{tr}(\dfrac{\pi_{j}\mathbf{u}_{k_{\mathbf{y} }}^{\top}\mathbf{W}_{V}\mathbf{S}\boldsymbol{e}_{j}\sum_{i,l\in[L-1]}(\boldsymbol{e}_{i}^{\top}\mathbf{S}^{\top}\mathrm{d}(\mathbf{W}_{V}^{\top}\pi_{i}\pi_{l}\mathbf{W}_{V})\mathbf{S}\boldsymbol{e}_{l})}{2\|\sum_{i=1}^{L-1}\pi_{i}\mathbf{W}_{V}\mathbf{S}\boldsymbol{e}_{i}\|^{3}} )\\
    &=\sum_{j=1}^{M}\operatorname{tr}(\dfrac{(\mathbf{u}_{k_{\mathbf{y} }}^{\top}\mathbf{W}_{V}\mathbf{S}\boldsymbol{e}_{j}\mathrm{d}(\pi_{j})+\pi_{j}\mathbf{u}_{k_{\mathbf{y} }}^{\top}\mathrm{d}(\mathbf{W}_{V})\mathbf{S}\boldsymbol{e}_{j})}{\|\sum_{i=1}^{L-1}\pi_{i}\mathbf{W}_{V}\mathbf{S}\boldsymbol{e}_{i}\|}  )\\
    & \phantom{=}-\sum_{j=1}^{M}\operatorname{tr}(\sum_{l=1}^{2J}\dfrac{\pi_{j}\mathbf{u}_{k_{\mathbf{y} }}^{\top}\mathbf{W}_{V}\mathbf{S}\boldsymbol{e}_{j}\sum_{i\in[L-1]}\pi_{i}(\boldsymbol{e}_{i}^{\top}\mathbf{S}^{\top}\mathbf{W}_{V}^{\top}\mathbf{W}_{V}\mathbf{S}\boldsymbol{e}_{l})}{\|\sum_{i=1}^{L-1}\pi_{i}\mathbf{W}_{V}\mathbf{S}\boldsymbol{e}_{i}\|^{3}} \mathrm{d}(\pi_{l}))\\
    & \phantom{=}-\sum_{j=1}^{M}\operatorname{tr}(\dfrac{\pi_{j}\sum_{i,l\in[L-1]}\mathbf{S}\boldsymbol{e}_{l}\mathbf{u}_{k_{\mathbf{y} }}^{\top}\mathbf{W}_{V}\mathbf{S}\boldsymbol{e}_{j}\pi_{i}\pi_{l}(\boldsymbol{e}_{i}^{\top}\mathbf{S}^{\top}(\mathrm{d}(\mathbf{W}_{V}^{\top})\mathbf{W}_{V}+\mathbf{W}_{V}^{\top}\mathrm{d}(\mathbf{W}_{V})))}{2\|\sum_{j=1}^{M}\pi_{j}\mathbf{W}_{V}\mathbf{S}\boldsymbol{e}_{j}\|^{3}}) \\
    & = \sum_{j=1}^{M}(\operatorname{tr}(\zeta_{\mathbf{u}_{k_{\mathbf{y}}}}^{j}(\mathrm{d}\pi_{j}))  + \operatorname{tr}( \mathbf{\Upsilon}^{j}_{\mathbf{u}_{k_{\mathbf{y}}}} \mathrm{d}\mathbf{W}_{V}))
\end{aligned}
\label{eq:formula_d_pi}\end{equation}
where
\begin{equation}
\begin{aligned}
\zeta_{\mathbf{u}_{k_{\mathbf{y}}}}^{j} &:=(\mathbf{u}_{k_{\mathbf{y} }}^{\top}\mathbf{W}_{V}\mathbf{S}\boldsymbol{e}_{j})(\dfrac{1}{\|\sum_{i=1}^{L-1}\pi_{i}\mathbf{W}_{V}\mathbf{S}\boldsymbol{e}_{i}\|}-\dfrac{\sum_{i\in[L-1]}\pi_{i}(\boldsymbol{e}_{i}^{\top}\mathbf{S}^{\top}\mathbf{W}_{V}^{\top}\mathbf{W}_{V}\mathbf{S}\pi_{j}\boldsymbol{e}_{j})}{\|\sum_{i=1}^{L-1}\pi_{i}\mathbf{W}_{V}\mathbf{S}\boldsymbol{e}_{i}\|^{3}})\\
& = (\mathbf{u}_{k_{\mathbf{y} }}^{\top}\mathbf{W}_{V}\mathbf{S}\boldsymbol{e}_{j})(\dfrac{\sum_{i\neq j}(\mathbf{W}_{V}\mathbf{S}\pi_{i}\boldsymbol{e}_{i})^{\top}\mathbf{W}_{V}\mathbf{S}\pi_{j}\boldsymbol{e}_{j}}{\|\sum_{i=1}^{L-1}\pi_{i}\mathbf{W}_{V}\mathbf{S}\boldsymbol{e}_{i}\|^{3}})\\
 \mathbf{\Upsilon}^{j}_{\mathbf{u}_{k_{\mathbf{y}}}}&:= \dfrac{\pi_{j}\mathbf{S}\boldsymbol{e}_{j}\mathbf{u}_{k_{\mathbf{y} }}^{\top}-\dfrac{\pi_{j}\sum_{i,l\in[L-1]}\mathbf{S}\boldsymbol{e}_{l}\mathbf{u}_{k_{\mathbf{y} }}^{\top}\mathbf{W}_{V}\mathbf{S}\boldsymbol{e}_{j} \pi_{i}\pi_{l}\boldsymbol{e}_{i}^{\top}\mathbf{S}^{\top}\mathbf{W}_{V}^{\top}}{2\|\sum_{i=1}^{L-1}\pi_{i}\mathbf{W}_{V}\mathbf{S}\boldsymbol{e}_{i}\|^{2}}}{\|\sum_{i=1}^{L-1}\pi_{i}\mathbf{W}_{V}\mathbf{S}\boldsymbol{e}_{i}\|}\\
& \phantom{\mathbf{:=}}- \dfrac{\dfrac{(\pi_{j}\sum_{i,l\in[L-1]}\mathbf{W}_{V}\mathbf{S}\boldsymbol{e}_{l}\mathbf{u}_{k_{\mathbf{y} }}^{\top}\mathbf{W}_{V}\mathbf{S}\boldsymbol{e}_{j} \pi_{i}\pi_{l}\boldsymbol{e}_{i}^{\top}\mathbf{S}^{\top})^{\top}}{2\|\sum_{i=1}^{L-1}\pi_{i}\mathbf{W}_{V}\mathbf{S}\boldsymbol{e}_{i}\|^{2}}}{\|\sum_{i=1}^{L-1}\pi_{i}\mathbf{W}_{V}\mathbf{S}\boldsymbol{e}_{i}\|}\\
&=\dfrac{\pi_{j}(\mathbf{y}\boldsymbol{e}_{j}^{\top}\mathbf{S}^{\top})^{\top}-\dfrac{\pi_{j}\sum_{i,l\in[L-1]}\pi_{i}\pi_{l}(\mathbf{W}_{V}\mathbf{S}\boldsymbol{e}_{i}\boldsymbol{e}_{j}^{\top}\mathbf{S}^{\top}\mathbf{W}_{V}^{\top}\mathbf{y}\boldsymbol{e}_{l}^{\top}\mathbf{S}^{\top} )^{\top}}{2\|\sum_{i=1}^{L-1}\pi_{i}\mathbf{W}_{V}\mathbf{S}\boldsymbol{e}_{i}\|^{2}}}{\|\sum_{i=1}^{L-1}\pi_{i}\mathbf{W}_{V}\mathbf{S}\boldsymbol{e}_{i}\|}\\
& \phantom{\mathbf{:=}}- \dfrac{\dfrac{\pi_{j}\sum_{i,l\in[L-1]}\pi_{i}\pi_{l}(\mathbf{W}_{V}\mathbf{S}\boldsymbol{e}_{l}\mathbf{u}_{k_{\mathbf{y} }}^{\top}\mathbf{W}_{V}\mathbf{S}\boldsymbol{e}_{j} \boldsymbol{e}_{i}^{\top}\mathbf{S}^{\top})^{\top}}{2\|\sum_{i=1}^{L-1}\pi_{i}\mathbf{W}_{V}\mathbf{S}\boldsymbol{e}_{i}\|^{2}}}{\|\sum_{i=1}^{L-1}\pi_{i}\mathbf{W}_{V}\mathbf{S}\boldsymbol{e}_{i}\|}
\end{aligned}
\label{eq:def_of_zeta_omega_y_j}\end{equation}
Then for $\forall j \in [L-1]$, we investigate $\mathrm{d}(\pi_j)$:
\begin{equation}
\begin{aligned}
     \mathrm{d}(\pi_{j})&=\mathrm{d} \dfrac{\exp(\boldsymbol{e}_{L}^{\top}\mathbf{S}^{\top}\mathbf{W}_{Q}^{\top}\mathbf{W}_{K}  \mathbf{S} \boldsymbol{e}_{j})}{\sum_{i\in[L-1]}\exp(\boldsymbol{e}_{L}^{\top}\mathbf{S}^{\top}\mathbf{W}_{Q}^{\top}\mathbf{W}_{K}  \mathbf{S} \boldsymbol{e}_{i})} \\
     &=\pi_{j}{\boldsymbol{e}_{L}^{\top}\mathbf{S}^{\top}\mathrm{d}(\mathbf{W}_{Q}^{\top}\mathbf{W}_{K})  \mathbf{S} \boldsymbol{e}_{j}-}\\
    &\phantom{=}\dfrac{\pi_{j}\sum_{i\in[L-1]}(\exp(\boldsymbol{e}_{L}^{\top}\mathbf{S}^{\top}\mathbf{W}_{Q}^{\top}\mathbf{W}_{K}  \mathbf{S} \boldsymbol{e}_{i}))\boldsymbol{e}_{L}^{\top}\mathbf{S}^{\top}\mathrm{d}(\mathbf{W}_{Q}^{\top}\mathbf{W}_{K})  \mathbf{S} \boldsymbol{e}_{i}}{(\sum_{i\in[L-1]}\exp(\boldsymbol{e}_{L}^{\top}\mathbf{S}^{\top}\mathbf{W}_{Q}^{\top}\mathbf{W}_{K}  \mathbf{S} \boldsymbol{e}_{i}))}\\
    & =\pi_{j}\boldsymbol{e}_{L}^{\top}\mathbf{S}^{\top}(\mathrm{d}(\mathbf{W}_{Q}^{\top}\mathbf{W}_{K})  \mathbf{S} \boldsymbol{e}_{j}-\sum_{i\in[L-1]}\pi_{i}\mathrm{d}(\mathbf{W}_{Q}^{\top}\mathbf{W}_{K})  \mathbf{S} \boldsymbol{e}_{i})\\
\end{aligned}
\label{eq:label_softmax}\end{equation}

By Eq.(\ref{eq:trace_normalize_sum}), (\ref{eq:formula_d_pi}),(\ref{eq:label_softmax}) we have
\begin{equation}
\begin{aligned}
    \mathrm{d}(\mathbf{u}_{k_{\mathbf{y} }}^{\top}\text{LN}(\sum_{j=1}^{M}\pi_{j}\mathbf{W}_{V}\mathbf{S}\boldsymbol{e}_{j}))=&\operatorname{tr}(\sum_{j=1}^{M} \zeta_{\mathbf{u}_{k_{\mathbf{y}}}}^{j}\pi_{j}\boldsymbol{e}_{L}^{\top}\mathbf{S}^{\top}(\mathrm{d}(\mathbf{W}_{Q}^{\top}\mathbf{W}_{K})  \mathbf{S} \boldsymbol{e}_{j}-\sum_{i\in[L-1]}\pi_{i}\mathrm{d}(\mathbf{W}_{Q}^{\top}\mathbf{W}_{K})  \mathbf{S} \boldsymbol{e}_{i})) \\
    & + \operatorname{tr}( \sum_{j=1}^{M}\mathbf{\Upsilon}^{j}_{\mathbf{u}_{k_{\mathbf{y}}}} \mathrm{d}\mathbf{W}_{V})\\
    & =\operatorname{tr}(\sum_{j=1}^{M} \zeta_{\mathbf{u}_{k_{\mathbf{y}}}}^{j}\pi_{j}\mathbf{S}( \boldsymbol{e}_{j}-\sum_{i=1}^{L-1}\pi_{i}\boldsymbol{e}_{i})\boldsymbol{e}_{L}^{\top}\mathbf{S}^{\top}(\mathrm{d}(\mathbf{W}_{Q}^{\top}\mathbf{W}_{K})  ))\\
    & + \operatorname{tr}( \sum_{j=1}^{M}\mathbf{\Upsilon}^{j}_{\mathbf{u}_{k_{\mathbf{y}}}} \mathrm{d}\mathbf{W}_{V})\\
\end{aligned}
\label{eq:differential_1}\end{equation}

As we see that
\[
\mathrm{d}(\mathbf{W}_{Q}^{\top}\mathbf{W}_{K}) = \mathrm{d}(\mathbf{W}_{Q}^{\top})\mathbf{W}_{K}+\mathbf{W}_{Q}^{\top}\mathrm{d}(\mathbf{W}_{K}).
\]
Therefore, by the Cyclic Invariance Property of operator trace and Eq.(\ref{eq:differential_1}),(\ref{eq:grad_K_L_1}), we have
\[
\begin{aligned}
    & (\partial_{{\mathbf{W}_{K}}}(\mathbf{u}_{k_{\mathbf{y} }}^{\top}\text{LN}(\sum_{j=1}^{M}\pi_{j}\mathbf{W}_{V}\mathbf{S}\boldsymbol{e}_{j})))^{\top} = \sum_{j=1}^{M} \zeta_{\mathbf{u}_{k_{\mathbf{y}}}}^{j}\pi_{j}\mathbf{S}( \boldsymbol{e}_{j}-\sum_{i=1}^{L-1}\pi_{i}\boldsymbol{e}_{i})\boldsymbol{e}_{L}^{\top}\mathbf{S}^{\top}\mathbf{W}_{Q}^{\top}.  
\end{aligned}
\]
That is,
\begin{equation}
\begin{aligned}
    & \partial_{{\mathbf{W}_{K}}}(\mathbf{u}_{k_{\mathbf{y} }}^{\top}\text{LN}(\sum_{j=1}^{M}\pi_{j}\mathbf{W}_{V}\mathbf{S}\boldsymbol{e}_{j})) = \sum_{j=1}^{M} \zeta_{\mathbf{u}_{k_{\mathbf{y}}}}^{j}\pi_{j}\mathbf{W}_{Q}\mathbf{S}\boldsymbol{e}_{L}( \boldsymbol{e}_{j}-\sum_{i=1}^{L-1}\pi_{i}\boldsymbol{e}_{i})^{\top}\mathbf{S}^{\top}.  
\end{aligned}
\label{eq:W_K_differential}\end{equation}
Similarly, we have
\begin{equation}
\partial_{{\mathbf{W}_{Q}}}(\mathbf{u}_{k_{\mathbf{y} }}^{\top}\text{LN}(\sum_{j=1}^{M}\pi_{j}\mathbf{W}_{V}\mathbf{S}\boldsymbol{e}_{j})) = \sum_{j=1}^{M} \zeta_{\mathbf{u}_{k_{\mathbf{y}}}}^{j}\pi_{j}\mathbf{W}_{K}\mathbf{S}( \boldsymbol{e}_{j}-\sum_{i=1}^{L-1}\pi_{i}\boldsymbol{e}_{i})\boldsymbol{e}_{L}^{\top}\mathbf{S}^{\top}.
\label{eq:W_Q_differential}\end{equation}
By Eq.(\ref{eq:trace_nabla_derivative}, \ref{eq:formula_d_pi}), we can also obtain
\begin{equation}
    \begin{aligned}
        \partial_{{\mathbf{W}_{V}}}(\mathbf{u}_{k_{\mathbf{y} }}^{\top}\text{LN}(\sum_{j=1}^{M}\pi_{j}\mathbf{W}_{V}\mathbf{S}\boldsymbol{e}_{j})) & = \sum_{j=1}^{M} (\mathbf{\Upsilon}^{j}_{\mathbf{u}_{k_{\mathbf{y}}}})^{\top}
    \end{aligned}\label{eq:W_O_differential}
\end{equation}

Similarly, to compure $\partial_{\mathbf{W}} \log(\sum_{k\in[7K+K^{\prime}]}  \exp(\mathbf{u}_{k}^{\top}((\mathbf{x}) + \text{LN}(\sum_{j=1}^{M}\pi_{j}\mathbf{W}_{V}\mathbf{S}\boldsymbol{e}_{j}))))$, we investigate
\[
\mathrm{d} \log(\sum_{k\in[7K+K^{\prime}]}  \exp(\mathbf{u}_{k}^{\top}(\mathbf{x}) + \mathbf{u}_{k}^{\top}\text{LN}(\sum_{j=1}^{M}\pi_{j}\mathbf{W}_{V}\mathbf{S}\boldsymbol{e}_{j}))),
\]
which can be calculated as
\begin{equation}
    \begin{aligned}
        &\sum_{k\in[7K+K^{\prime}]}\dfrac{\exp(\mathbf{u}_{k}^{\top}(\mathbf{x}) + \mathbf{u}_{k}^{\top}\text{LN}(\sum_{j=1}^{M}\pi_{j}\mathbf{W}_{V}\mathbf{S}\boldsymbol{e}_{j}))(\mathrm{d}(\mathbf{u}_{k}^{\top}\text{LN}(\sum_{j=1}^{M}\pi_{j}\mathbf{W}_{V}\mathbf{S}\boldsymbol{e}_{j})))}{\sum_{k\in[7K+K^{\prime}]}  \exp(\mathbf{u}_{k}^{\top}(\mathbf{x}) + \mathbf{u}_{k}^{\top}\text{LN}(\sum_{j=1}^{M}\pi_{j}\mathbf{W}_{V}\mathbf{S}\boldsymbol{e}_{j}))}\\
        & = \sum_{k\in[7K+K^{\prime}]} \omega_{k} \mathrm{d}(\mathbf{u}_{k}^{\top}\text{LN}(\sum_{j=1}^{M}\pi_{j}\mathbf{W}_{V}\mathbf{S}\boldsymbol{e}_{j})),
    \end{aligned}
\end{equation}
where
\begin{equation}
    \omega_{k}:=\dfrac{\exp(\mathbf{u}_{k}^{\top}(\mathbf{x}) + \mathbf{u}_{k}^{\top}\text{LN}(\sum_{j=1}^{M}\pi_{j}\mathbf{W}_{V}\mathbf{S}\boldsymbol{e}_{j}))}{\sum_{k\in[7K+K^{\prime}]}  \exp(\mathbf{u}_{k}^{\top}(\mathbf{x}) + \mathbf{u}_{k}^{\top}\text{LN}(\sum_{j=1}^{M}\pi_{j}\mathbf{W}_{V}\mathbf{S}\boldsymbol{e}_{j}))}, \sum_{k\in[7K+K^{\prime}]}\omega_{k}=1.
\label{eq:def_of_omega_k}\end{equation}
On the other hand, by the derivative results we got from $\mathrm{d}(\mathbf{u}_{k_{\mathbf{y} }}^{\top}\text{LN}(\sum_{j=1}^{M}\pi_{j}\mathbf{W}_{V}\mathbf{S}\boldsymbol{e}_{j}))$, we directly have similar outcome for $\mathbf{u}_{k}, k\in[7K+K^{\prime}]$:
\begin{equation}
    \begin{aligned}
        & \partial_{{\mathbf{W}_{K}}}(\mathbf{u}_{k}^{\top}\text{LN}(\sum_{j=1}^{M}\pi_{j}\mathbf{W}_{V}\mathbf{S}\boldsymbol{e}_{j})) = \sum_{j=1}^{M} \zeta_{\mathbf{u}_{k}}^{j}\pi_{j}\mathbf{W}_{Q}\mathbf{S}\boldsymbol{e}_{L}( \boldsymbol{e}_{j}-\sum_{i=1}^{L-1}\pi_{i}\boldsymbol{e}_{i})^{\top}\mathbf{S}^{\top}\\
         & \partial_{{\mathbf{W}_{Q}}}(\mathbf{u}_{k}^{\top}\text{LN}(\sum_{j=1}^{M}\pi_{j}\mathbf{W}_{V}\mathbf{S}\boldsymbol{e}_{j}))= \sum_{j=1}^{M} \zeta_{\mathbf{u}_{k}}^{j}\pi_{j}\mathbf{W}_{K}\mathbf{S}( \boldsymbol{e}_{j}-\sum_{i=1}^{L-1}\pi_{i}\boldsymbol{e}_{i})\boldsymbol{e}_{L}^{\top}\mathbf{S}^{\top}\\
        & \partial_{{\mathbf{W}_{V}}}(\mathbf{u}_{k}^{\top}\text{LN}(\sum_{j=1}^{M}\pi_{j}\mathbf{W}_{V}\mathbf{S}\boldsymbol{e}_{j})) = \sum_{j=1}^{M} (\mathbf{\Upsilon}^{j}_{\mathbf{u}_{k}})^{\top} \\
        & = \sum_{j=1}^{M}\dfrac{\pi_{j}\mathbf{u}_{k}\boldsymbol{e}_{j}^{\top}\mathbf{S}^{\top}-\dfrac{\pi_{j}\sum_{i,l\in[L-1]}\pi_{i}\pi_{l}\mathbf{W}_{V}\mathbf{S}\boldsymbol{e}_{i}\boldsymbol{e}_{j}^{\top}\mathbf{S}^{\top}\mathbf{W}_{V}^{\top}\mathbf{u}_{k}\boldsymbol{e}_{l}^{\top}\mathbf{S}^{\top} }{2\|\sum_{i=1}^{L-1}\pi_{i}\mathbf{W}_{V}\mathbf{S}\boldsymbol{e}_{i}\|^{2}}}{\|\sum_{i=1}^{L-1}\pi_{i}\mathbf{W}_{V}\mathbf{S}\boldsymbol{e}_{i}\|}\\
        & \phantom{=}- \dfrac{\dfrac{\pi_{j}\sum_{i,l\in[L-1]}\pi_{i}\pi_{l}\mathbf{W}_{V}\mathbf{S}\boldsymbol{e}_{l}\mathbf{u}_{k}^{\top}\mathbf{W}_{V}\mathbf{S}\boldsymbol{e}_{j} \boldsymbol{e}_{i}^{\top}\mathbf{S}^{\top}}{2\|\sum_{i=1}^{L-1}\pi_{i}\mathbf{W}_{V}\mathbf{S}\boldsymbol{e}_{i}\|^{2}}}{\|\sum_{i=1}^{L-1}\pi_{i}\mathbf{W}_{V}\mathbf{S}\boldsymbol{e}_{i}\|}
    \end{aligned}
\label{eq:derivatives_of_u_k}\end{equation}
As such, by Eq.(\ref{eq:grad_K_L_1}, \ref{eq:def_of_omega_k}) we have
\begin{equation}
    \begin{aligned}
        \mathbb{E}_{\mathcal{P}_{\text{QA}}^{\text{tr}}}[\partial_{\mathbf{W}_{K}}L_{\mathcal{{B}}}(\boldsymbol{\theta})]&=-\mathbb{E}_{\mathcal{P}_{\text{QA}}^{\text{tr}}}[(1-\omega_{k_{\mathbf{y}}})\sum_{j=1}^{M} \zeta_{\mathbf{u}_{k_{\mathbf{y}}}}^{j}\pi_{j}\mathbf{W}_{Q}\mathbf{S}\boldsymbol{e}_{L}( \boldsymbol{e}_{j}-\sum_{i=1}^{L-1}\pi_{i}\boldsymbol{e}_{i})^{\top}\mathbf{S}^{\top}\\
        &\phantom{=\mathbb{E}_{\mathcal{P}_{\text{QA}}^{\text{tr}}}[}-\sum_{k\neq k_{\mathbf{y}}}\omega_{k} \sum_{j=1}^{M} \zeta_{\mathbf{u}_{k}}^{j}\pi_{j}\mathbf{W}_{Q}\mathbf{S}\boldsymbol{e}_{L}( \boldsymbol{e}_{j}-\sum_{i=1}^{L-1}\pi_{i}\boldsymbol{e}_{i})^{\top}\mathbf{S}^{\top}]\\
        \mathbb{E}_{\mathcal{P}_{\text{QA}}^{\text{tr}}}[\partial_{\mathbf{W}_{Q}}L_{\mathcal{{B}}}(\boldsymbol{\theta})]&=-\mathbb{E}_{\mathcal{P}_{\text{QA}}^{\text{tr}}}[(1-\omega_{k_{\mathbf{y}}})\sum_{j=1}^{M} \zeta_{\mathbf{u}_{k_{\mathbf{y}}}}^{j}\pi_{j}\mathbf{W}_{K}\mathbf{S}( \boldsymbol{e}_{j}-\sum_{i=1}^{L-1}\pi_{i}\boldsymbol{e}_{i})\boldsymbol{e}_{L}^{\top}\mathbf{S}^{\top}\\
        &\phantom{=\mathbb{E}_{\mathcal{P}_{\text{QA}}^{\text{tr}}}[}-\sum_{k\neq k_{\mathbf{y}}}\omega_{k} \sum_{j=1}^{M} \zeta_{\mathbf{u}_{k}}^{j}\pi_{j}\mathbf{W}_{K}\mathbf{S}( \boldsymbol{e}_{j}-\sum_{i=1}^{L-1}\pi_{i}\boldsymbol{e}_{i})\boldsymbol{e}_{L}^{\top}\mathbf{S}^{\top}]\\
        &= - \mathbb{E}_{\mathcal{P}_{\text{QA}}^{\text{tr}}}[\sum_{j=1}^{M}\sum_{k\neq k_{\mathbf{y}}}\pi_{j}\omega_{k}( \zeta_{\mathbf{u}_{k_{\mathbf{y}}}}^{j}- \zeta_{\mathbf{u}_{k}}^{j})\mathbf{W}_{K}\mathbf{S}( \boldsymbol{e}_{j}-\sum_{i=1}^{L-1}\pi_{i}\boldsymbol{e}_{i})\boldsymbol{e}_{L}^{\top}\mathbf{S}^{\top}].
    \end{aligned}
\label{eq:gradient_of_L_W_K}\end{equation}
Define $\boldsymbol{\pi}=[\pi_{1}, \cdots, \pi_{L-1}, 0]^{\top}\in \mathbb{R}^{L}$ and $\boldsymbol{\omega}=[\omega_{1}, \cdots, \omega_{5K}]^{\top}$. By the definitions in Eq.(\ref{eq:def_of_zeta_omega_y_j}), we have
\begin{equation}
    \begin{aligned}
        \mathbb{E}_{\mathcal{P}_{\text{QA}}^{\text{tr}}}[\partial_{\mathbf{W}_{K}}L_{\mathcal{{B}}}(\boldsymbol{\theta})]&= \mathbb{E}_{\mathcal{P}_{\text{QA}}^{\text{tr}}}[(\mathbf{W}_{Q}\mathbf{S}\boldsymbol{e}_{L})\cdot\\
        &\phantom{=}\sum_{j=1}^{M}\frac{{(((1-\omega_{k_{\mathbf{y}}})\mathbf{u}_{k_{\mathbf{y}}}-\sum_{k\neq k_{\mathbf{y}}}\omega_{k}\mathbf{u}_{k})^{\top}(\mathbf{W}_{V}\mathbf{S}\pi_{j}\boldsymbol{e}_{j}))}}{(\dfrac{(\mathbf{W}_{V}\mathbf{S}\boldsymbol{\pi})^{\top}(\mathbf{W}_{V}\mathbf{S}(\boldsymbol{\pi}-\pi_{j}\boldsymbol{e}_{j}))}{\|\mathbf{W}_{V}\mathbf{S}\boldsymbol{\pi}\|^{3}})^{-1}}(\mathbf{S}(\boldsymbol{\pi}-\boldsymbol{e}_{j}))^{\top}]\\
       &= \mathbb{E}_{\mathcal{P}_{\text{QA}}^{\text{tr}}}[\sum_{j=1}^{M}\iota_{\mathbf{u}_{k_{\mathbf{y}}}, j}\cdot(\mathbf{W}_{Q}\mathbf{S}\boldsymbol{e}_{L})\cdot (\mathbf{S}(\boldsymbol{\pi}-\boldsymbol{e}_{j}))^{\top}]\\
        \mathbb{E}_{\mathcal{P}_{\text{QA}}^{\text{tr}}}[\partial_{\mathbf{W}_{Q}}L_{\mathcal{{B}}}(\boldsymbol{\theta})]&=\mathbb{E}_{\mathcal{P}_{\text{QA}}^{\text{tr}}}[\sum_{j=1}^{M}\mathbf{W}_{K}\mathbf{S}(\boldsymbol{\pi}-\boldsymbol{e}_{j}) \cdot \\
        &\phantom{=}\sum_{j=1}^{M}\frac{{(((1-\omega_{k_{\mathbf{y}}})\mathbf{u}_{k_{\mathbf{y}}}-\sum_{k\neq k_{\mathbf{y}}}\omega_{k}\mathbf{u}_{k})^{\top}(\mathbf{W}_{V}\mathbf{S}\pi_{j}\boldsymbol{e}_{j}))}}{(\dfrac{(\mathbf{W}_{V}\mathbf{S}\boldsymbol{\pi})^{\top}(\mathbf{W}_{V}\mathbf{S}(\boldsymbol{\pi}-\boldsymbol{e}_{j}))}{\|\mathbf{W}_{V}\mathbf{S}\boldsymbol{\pi}\|^{3}})^{-1}}(\mathbf{S}\boldsymbol{e}_{L})^{\top}]\\
        &=\mathbb{E}_{\mathcal{P}_{\text{QA}}^{\text{tr}}}[\sum_{j=1}^{M}\iota_{\mathbf{u}_{k_{\mathbf{y}}}, j} \cdot \mathbf{W}_{K}\mathbf{S}(\boldsymbol{\pi}-\boldsymbol{e}_{j}) \cdot (\mathbf{S}\boldsymbol{e}_{L})^{\top}]
    \end{aligned}
\label{eq:gradient_of_L_W_Q}\end{equation}

where the $\iota_{\mathbf{u}_{k_{\mathbf{y}}}, j}$ is defined as
\begin{equation}
    \begin{aligned}
        & \iota_{\mathbf{u}_{k_{\mathbf{y}}}, j} := \frac{{\{((1-\omega_{k_{\mathbf{y}}})\mathbf{u}_{k_{\mathbf{y}}}-\sum_{k\neq k_{\mathbf{y}}}\omega_{k}\mathbf{u}_{k})^{\top}(\mathbf{W}_{V}\mathbf{S}\pi_{j}\boldsymbol{e}_{j})\}\cdot\{(\mathbf{W}_{V}\mathbf{S}\boldsymbol{\pi})^{\top}(\mathbf{W}_{V}\mathbf{S}(\boldsymbol{\pi}-\pi_{j}\boldsymbol{e}_{j}))\}}}{{\|\mathbf{W}_{V}\mathbf{S}\boldsymbol{\pi}\|^{3}}}
    \end{aligned}
\end{equation}

Similarly,
\begin{equation}
    \begin{aligned}
        &\mathbb{E}_{\mathcal{P}_{\text{QA}}^{\text{tr}}}[\partial_{\mathbf{W}_{V}}L_{\mathcal{{B}}}(\boldsymbol{\theta})]= -\mathbb{E}_{\mathcal{P}_{\text{QA}}^{\text{tr}}}[(1-\omega_{k_{\mathbf{y}}})\sum_{j=1}^{M} (\mathbf{\Upsilon}^{j}_{\mathbf{u}_{k_{\mathbf{y}}}})^{\top}- \sum_{k\neq k_{\mathbf{y}}}\omega_{k}\sum_{j=1}^{M} (\mathbf{\Upsilon}^{j}_{\mathbf{u}_{k}})^{\top}]=  \\
        & -(\mathbb{E}_{\mathcal{P}_{\text{QA}}^{\text{tr}}}[\frac{\sum_{\substack{j\in[L-1] \\ k\neq k_{\mathbf{y}}}}\{\omega_{k}\pi_{j}(\mathbf{y}-\mathbf{u}_{k})\boldsymbol{e}_{j}^{\top}\mathbf{S}^{\top}-\frac{\omega_{k}\pi_{j}\sum_{i,l\in[L-1]}\pi_{i}\pi_{l}\mathbf{W}_{V}\mathbf{S}\boldsymbol{e}_{i}\boldsymbol{e}_{j}^{\top}\mathbf{S}^{\top}\mathbf{W}_{V}^{\top}(\mathbf{y}-\mathbf{u}_{k})\boldsymbol{e}_{l}^{\top}\mathbf{S}^{\top} }{2\|\mathbf{W}_{V}\mathbf{S}\boldsymbol{\pi}\|^{2}}}{\|\mathbf{W}_{V}\mathbf{S}\boldsymbol{\pi}\|}\\
        &\frac{-\frac{\omega_{k}\pi_{j}\sum_{i,l\in[L-1]}\pi_{i}\pi_{l}\mathbf{W}_{V}\mathbf{S}\boldsymbol{e}_{l}(\mathbf{y}-\mathbf{u}_{k})^{\top}\mathbf{W}_{V}\mathbf{S}\boldsymbol{e}_{j} \boldsymbol{e}_{i}^{\top}\mathbf{S}^{\top}}{2\|\mathbf{W}_{V}\mathbf{S}\boldsymbol{\pi}\|^{2}}\}}{\|\mathbf{W}_{V}\mathbf{S}\boldsymbol{\pi}\|}])=-\\
        & \mathbb{E}_{\mathcal{P}_{\text{QA}}^{\text{tr}}}[ \frac{((1-\omega_{k_{\mathbf{y}}})\mathbf{u}_{k_{\mathbf{y}}}-\sum_{k\neq k_{\mathbf{y}}}\omega_{k}\mathbf{u}_{k})(\mathbf{S}\boldsymbol{\pi})^{\top}-\frac{(((1-\omega_{k_{\mathbf{y}}})\mathbf{u}_{k_{\mathbf{y}}}-\sum_{k\neq k_{\mathbf{y}}}\omega_{k}\mathbf{u}_{k})^{\top}(\mathbf{W}_{V}\mathbf{S}\boldsymbol{\pi}))(\mathbf{W}_{V}\mathbf{S}\boldsymbol{\pi})(\mathbf{S}\boldsymbol{\pi})^{\top} }{\|\mathbf{W}_{V}\mathbf{S}\boldsymbol{\pi}\|^{2}}}{\|\mathbf{W}_{V}\mathbf{S}\boldsymbol{\pi}\|}]\\ 
        & = -\mathbb{E}_{\mathcal{P}_{\text{QA}}^{\text{tr}}}[ \frac{((1-\omega_{k_{\mathbf{y}}})\mathbf{u}_{k_{\mathbf{y}}}-\sum_{k\neq k_{\mathbf{y}}}\omega_{k}\mathbf{u}_{k})-\frac{(((1-\omega_{k_{\mathbf{y}}})\mathbf{u}_{k_{\mathbf{y}}}-\sum_{k\neq k_{\mathbf{y}}}\omega_{k}\mathbf{u}_{k})^{\top}(\mathbf{W}_{V}\mathbf{S}\boldsymbol{\pi}))(\mathbf{W}_{V}\mathbf{S}\boldsymbol{\pi}) }{\|\mathbf{W}_{V}\mathbf{S}\boldsymbol{\pi}\|^{2}}}{\|\mathbf{W}_{V}\mathbf{S}\boldsymbol{\pi}\|}(\mathbf{S}\boldsymbol{\pi})^{\top}]\\
        & = -\mathbb{E}_{\mathcal{P}_{\text{QA}}^{\text{tr}}}[ \frac{\mathbf{\Pi}_{(\mathbf{W}_{V}\mathbf{S}\boldsymbol{\pi})^{\perp}}^{(1-\omega_{k_{\mathbf{y}}})\mathbf{u}_{k_{\mathbf{y}}}-\sum_{k\neq k_{\mathbf{y}}}\omega_{k}\mathbf{u}_{k}}(\mathbf{S}\boldsymbol{\pi})^{\top}}{\|\mathbf{W}_{V}\mathbf{S}\boldsymbol{\pi}\|}]\\
    \end{aligned}
\label{eq:gradient_of_L_W_O}\end{equation}
 where the projection operator $\mathbf{\Pi}_{\boldsymbol{u}^{\perp}}^{\boldsymbol{v}}$ is defined as $$\mathbf{\Pi}_{\boldsymbol{u}^{\perp}}^{\boldsymbol{v}}:=\boldsymbol{v}-\boldsymbol{v}^{\top}(\dfrac{\boldsymbol{u}}{\|\boldsymbol{u}\|})\cdot \dfrac{\boldsymbol{u}}{\|\boldsymbol{u}\|}.$$
 \end{proof}

\section{Training Dynamics: QA data}\label{sec:dynamics_QA}
In the following sections, we consider training is on QA data, and assume the results in Appendix \ref{app:concentration} all hold with high probability. Following Definition \ref{def:detailed def of QA}, we denote:
\begin{itemize}
    \item $\mathbf{S}_{n} \in \mathcal{P}_{\text{QA}}^{\text{tr}}$ as the $n$-th QA sentence sample in $\mathcal{P}_{\text{QA}}^{\text{tr}}$,
    \item $\mathbf{y}_{n}$ as the label vector (last column) of $\mathbf{S}_{n}$,
    \item $k_{{\mathbf{S}_{n}}} \in [K]$ as the co-task concept of $\mathbf{S}_{n}$,
    \item $y_{k_{{\mathbf{S}_{n}}}} = y_{n}$ as the task-specific low-level semantic real value label,
    \item $k_{\mathbf{y}_{n}} = 2 k_{{\mathbf{S}_{n}}} + \frac{y_{n}+1}{2}$ as the dictionary index of $\mathbf{y}_{n}$,
    \item $m_{\mathbf{S}_n}$ as the position index of $\boldsymbol{a_{k_{{\mathbf{S}_{n}}}}}$ in sentence $\mathbf{S}_{n}$,
    \item $\pi_{n, m}^{(t)}$ as the attention score over the $m$-th position in $\mathbf{S}_{n}$ at the iteration $t$,
    \item $\omega_{n, k}^{(t)}$ as the weight of dictionary vector $\mathbf{u}_{k}$ in $\mathbf{S}_{n}$, defined in Eq.~(\ref{eq:dictionary_vector_weight}) at iteration $t$,
    \item $\iota_{\mathbf{u}_{k_{\mathbf{y}}}, m}^{(t)}$ as the coefficients defined in Eq.~(\ref{eq:dictionary_vector_weight}) at iteration $t$,
    \item $\mathcal{N}_{k}^{y} \subset \mathcal{P}_{\text{QA}}^{\text{tr}}$ as the index set of QA samples with their high-level task concept $k_{\mathbf{S}}=k \in [K]$ and task-specific low-level semantic real value  label $y_{k_{\mathbf{S}}} = y \in [\pm 1]$,
    \item $\mathcal{V}_{\mathcal{N}_{k}, k^{\prime}}$ as the number of common token $\boldsymbol{\nu}_{k^{\prime}}$ appearing in sample set $\mathcal{N}_{k} \subset \mathcal{P}_{\text{QA}}^{\text{tr}}$.
\label{itm:signal_meaning}\end{itemize}

\subsection{Phase 1: Linear Growth of MLP and Accelerating Growth of Attention}

In this phase, the MLP is upper and lower bounded by linear continuous counterparts. Meanwhile, the evolution of attention is upper and lower bounded by exponential and quadratic counterparts, respectively.

\begin{lemma}
    Under Condition \ref{con:main body}, during $t\leq T_{1}=\Theta((\eta q_{V})^{-1} \sigma_{1}^{2} d K)$, we have
    \begin{equation}
        \begin{aligned}
             \boldsymbol{a}_{{k}}^{\top}\mathbf{W}_{V}^{(t)}\boldsymbol{a}_{{k}} &= o(\sigma_{1} d^{1/2}) , \\ 
            \lvert{\boldsymbol{a}_{k}}^{\top} {\mathbf{W}_{V}^{(t)}} \boldsymbol{u}\rvert, \ \lvert{\boldsymbol{v}}^{\top} {\mathbf{W}_{V}^{(t)}} \boldsymbol{v}\rvert, \ \lvert{\boldsymbol{v}}^{\top} {\mathbf{W}_{V}^{(t)}} \boldsymbol{v}^{\prime}\rvert&=O(\sqrt{2\log(\frac{8(2K+K^{\prime})^{2}}{\delta})}\sigma_{1}) ,\\ 
             \lvert(\mathbf{W}_{Q}^{(t)} \boldsymbol{a}_{k_{{\mathbf{S}_{n}}}})^{\top}\mathbf{W}_{K}^{(t)} (\boldsymbol{a}_{k_{{\mathbf{S}_{n}}}})\rvert & =   O(\sigma_{0}^{2} d e^{ q_{V}^{-1}\sigma_{1}^{2}d}), \\
            \lvert(\mathbf{W}_{Q}^{(t)} \mathbf{x}_{n})^{\top}\mathbf{W}_{K}^{(t)} ({\boldsymbol{\nu}_{n,{\neg m_{\mathbf{S}_{n}}}}}+\boldsymbol{\xi}_{n,{\neg m_{\mathbf{S}_{n}}}})\rvert & =  O(\sqrt{ \log(\frac{16(2K+K^{\prime})^{2}}{\delta})} \sigma_{0}^{2} d^{1/2}),\\
        \end{aligned}
    \label{eq:induction_first_phase}\end{equation}
    for $\forall k \in [K], n \in [N], s\in [7], \neg m_{\mathbf{S}_{n}} \neq m_{\mathbf{S}_{n}} \in [M] ,\boldsymbol{u} \in \{ \boldsymbol{a}_{s}\}_{s \neq {k} \in [K]} \cup\{\boldsymbol{b}_{s}\}_{s \in [K]} \cup\{\boldsymbol{\nu}_{k^{\prime}}\}_{k^{\prime}\in[K^{\prime}]} \cup \{ \boldsymbol{\xi}_{n,m} \}_{n\in[N],m\in[M]}$ and $\boldsymbol{v} \neq \boldsymbol{v}^{\prime} \in \{\boldsymbol{b}_{s}\}_{s \in [K]} \cup \{ \boldsymbol{\xi}_{n,m} \}_{n\in[N],m\in[M]}$. At the end of this stage, we have
    \begin{equation}
        \begin{aligned}
        \boldsymbol{a}_{{k}}^{\top}\mathbf{W}_{V}^{(T_{1})}\boldsymbol{a}_{{k}} \geq& \dfrac{\bar{C}_{1} \sigma_{1} d^{1/2}}{ M} - \sqrt{2\log(\frac{8(2K+K^{\prime})^{2}}{\delta})}\sigma_{1},\\
        (\mathbf{W}_{Q}^{(T_{1})}\boldsymbol{a}_{\hat{k}} )^{\top}(\mathbf{W}_{K}^{(T_{1})}\boldsymbol{a}_{\hat{k}}) \geq &  \bar{C}_{2}  \frac{\sigma_{0}^{2}\sigma_{1}^{2}d^{2}}{M^2 q_{V}}-\bar{C}_{3}(\dfrac{\eta^{2} q_{V} \sigma_{0}^{2}}{\sigma_{1}^{2}  M^2K^{2} } +4\sqrt{ \log(16(2K+K^{\prime})^{2}/\delta)} \sigma_{0}^{2} d^{1/2}).
       \end{aligned}
\label{eq:upperbound_first_phase}\end{equation}
    for some positive constants $\bar{C}_{1-3}$.
\label{lem:induction_first_phase}\end{lemma}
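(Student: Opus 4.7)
}

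The plan is to prove the inductive bounds in Eq.~(\ref{eq:induction_first_phase}) by induction on $t$, and then to extract the end-of-phase lower bounds in Eq.~(\ref{eq:upperbound_first_phase}) by matching the resulting discrete recurrences against the continuous surrogates in Lemmas~\ref{lem:ODE-1} and \ref{lem:ODE-4}. The base case $t=0$ is immediate from the concentration bounds in Lemmas~\ref{lem:initialization} and \ref{lem:initialization of qk}, which already give all the off-diagonal projections at the claimed $O(\sigma_1)$ scale, and the attention projections at the $O(\sigma_0^2 d^{1/2})$ scale. The inductive step will start from the closed-form gradients in Eq.~(\ref{eq:grad of QKV}) and exploit the fact that, throughout this phase, the inductive hypothesis forces $\|\mathbf{W}_V^{(t)} \mathbf{S} \boldsymbol{\pi}^{(t)}\| = \Theta(\sigma_1 d^{1/2})$, since the dominant contribution comes from initialization noise and not yet from the feature alignment.

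First I would analyze the $\mathbf{W}_V$ update along $\boldsymbol{a}_k \otimes \boldsymbol{a}_k$. Projecting the $\mathbf{W}_V$ gradient from Eq.~(\ref{eq:grad of QKV}) onto $\boldsymbol{a}_k \boldsymbol{a}_k^\top$, the only QA sentences that contribute order-one signal are those in $\mathcal{N}_k$ (by Lemma~\ref{lem: enough data concept k}, a $\Theta(1/K)$ fraction), and within those sentences the only column of $\mathbf{S}$ carrying $\boldsymbol{a}_k$ is position $m_{\mathbf{S}_n}$, giving $\boldsymbol{a}_k^\top \mathbf{S} \boldsymbol{\pi}^{(t)} = \Theta(1/M)$ because attention is near-uniform in Phase~1 (by the inductive bound on the attention projections and Condition~\ref{con:main body}). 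Combined with the $\Theta(1)$ residual $\mathbf{u}_{k_{\mathbf{y}}} - \sum_k \omega_k^{(t)} \mathbf{u}_k$ in the $\boldsymbol{a}_k$ direction and the denominator $\|\mathbf{W}_V^{(t)} \mathbf{S}\boldsymbol{\pi}^{(t)}\| = \Theta(\sigma_1 d^{1/2})$, the per-step update is of the form $a_{t+1} = a_t + \Theta(\eta q_V/(\sigma_1 d^{1/2} K M))$. Applying Lemma~\ref{lem:ODE-1} over $T_1 = \Theta((\eta q_V)^{-1} \sigma_1^2 d K)$ steps yields the lower bound in Eq.~(\ref{eq:upperbound_first_phase}), and a matching upper bound keeps $\boldsymbol{a}_k^\top \mathbf{W}_V^{(t)} \boldsymbol{a}_k = o(\sigma_1 d^{1/2})$ so that the $\|\mathbf{W}_V \mathbf{S}\boldsymbol{\pi}\|$ estimate is self-consistent. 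The same computation, carried out for off-diagonal directions, shows that $\boldsymbol{b}_k$ and $\boldsymbol{\nu}_{k'}$ projections receive gradient contributions that (i) are not systematically aligned with any single sample because $\mathbf{x}^{\text{QA}}$ contains no $\boldsymbol{b}_k$ and all common tokens appear symmetrically (Lemma~\ref{lem:number_of_specifi_commontoken}), and (ii) therefore stay at the $O(\sqrt{\log}\,\sigma_1)$ initialization scale by a standard concentration argument over $N$ samples.

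Next I would turn to the attention projections. Plugging the inductive estimates into the $\mathbf{W}_K, \mathbf{W}_Q$ gradients in Eq.~(\ref{eq:grad of QKV}), the dominant signal in $(\mathbf{S}(\boldsymbol{e}_j - \boldsymbol{\pi}))$ along direction $\boldsymbol{a}_{k_{\mathbf{S}_n}}$ comes again from position $m_{\mathbf{S}_n}$, while $\mathbf{W}_Q \mathbf{S} \boldsymbol{e}_L$ picks up a contribution of order $x_a \cdot \mathbf{W}_Q \boldsymbol{a}_{k_{\mathbf{S}_n}}$ from the query word $\mathbf{x}$. The coefficient $\iota_{\mathbf{u}_{k_{\mathbf{y}}}, j}^{(t)}$, after substituting the scaling of $\mathbf{W}_V$ and $\|\mathbf{W}_V \mathbf{S}\boldsymbol{\pi}\|$, grows linearly in $\boldsymbol{a}_k^\top \mathbf{W}_V^{(t)} \boldsymbol{a}_k$, which itself grows linearly in $t$ by the previous paragraph. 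This produces a recurrence of the form $f_{t+1} = f_t + a(t - t_4) f_t$ for the relevant attention projection, with $a = \Theta(\eta^2 q_V / (\sigma_1^2 d K^2 M^2))$, to which Lemma~\ref{lem:ODE-4} applies directly, yielding both the quadratic lower bound and the exponential upper bound claimed in Eq.~(\ref{eq:induction_first_phase}) and Eq.~(\ref{eq:upperbound_first_phase}). Cross-projections of the form $(\mathbf{W}_Q \mathbf{x}_n)^\top \mathbf{W}_K (\boldsymbol{\nu}_{n,\neg m_{\mathbf{S}_n}} + \boldsymbol{\xi})$ get no persistent alignment signal because the common-token distribution is uniform across tasks, so their update is governed by a random-walk-like recurrence and remains at the concentration scale $O(\sqrt{\log}\,\sigma_0^2 d^{1/2})$.

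The main obstacle, and where I would spend the most care, is the self-consistency of the two inductive statements $\|\mathbf{W}_V^{(t)} \mathbf{S}\boldsymbol{\pi}^{(t)}\| = \Theta(\sigma_1 d^{1/2})$ and ``attention is nearly uniform on non-$\boldsymbol{a}$ positions.'' Both are used to bound the gradients that drive the evolution of $\mathbf{W}_V$ and $\mathbf{W}_{Q,K}$, so the induction must be performed \emph{jointly}: at each $t$, I would first verify that the upper bound on $\boldsymbol{a}_k^\top \mathbf{W}_V^{(t+1)} \boldsymbol{a}_k = o(\sigma_1 d^{1/2})$ is preserved (guaranteeing the denominator estimate), then verify that the upper bound on the attention projection, via Condition~\ref{con:main body}'s constraint $q_V \leq \sigma_1^2 d / (C \log(\cdot))$, still keeps $\exp((\mathbf{W}_Q \boldsymbol{a}_k)^\top \mathbf{W}_K \boldsymbol{a}_k)$ on the same order as the other softmax entries (hence attention remains near-uniform up to a bounded multiplicative factor). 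The careful choice of $T_1 = \Theta((\eta q_V)^{-1}\sigma_1^2 d K)$ is precisely what makes these two constraints simultaneously tight at the end of the phase, providing the lower bounds in Eq.~(\ref{eq:upperbound_first_phase}) while still justifying the linearization used throughout. The auxiliary projections involving $\boldsymbol{\xi}$ and $\boldsymbol{\nu}$ are handled uniformly by the concentration inequalities in Appendix~\ref{app:concentration} and do not contribute to the leading-order dynamics.
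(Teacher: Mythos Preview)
Your proposal is correct and follows essentially the same route as the paper: a joint induction on $t$ feeding the gradient formulas of Lemma~\ref{lem: gradients of QKV}, linear growth of $\boldsymbol{a}_k^\top\mathbf{W}_V^{(t)}\boldsymbol{a}_k$ via Lemma~\ref{lem:ODE-1} (carried out as Lemma~\ref{lem:evolution_aWa_norm_projection}), and the $f_{t+1}=f_t+a(t-t_4)f_t$ recurrence for the attention projection via Lemma~\ref{lem:ODE-4} (carried out as Lemma~\ref{lem:update_attn_first_phase}). One detail you glide over that the paper treats explicitly: before $t_4$, if $\boldsymbol{a}_k^\top\mathbf{W}_V^{(0)}\boldsymbol{a}_k<0$, the attention projection and the norms $\|\mathbf{W}_Q^{(t)}\boldsymbol{a}_k\|^2,\|\mathbf{W}_K^{(t)}\boldsymbol{a}_k\|^2$ can \emph{decrease}; the paper invokes Lemma~\ref{lem:ODE-3} together with the condition $\sigma_1\le O(q_V^{1/2}\sqrt{\log(\cdot)})$ to show these norms stay above $\sigma_0^2 d/4$, which is needed to seed the lower branch of Lemma~\ref{lem:ODE-4}.
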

   We first examine the initialization. At $t=0$, the results in Eq.(\ref{eq:induction_first_phase}) can be directly derived based on the orders of the initialized products and the norms presented in Lemma \ref{lem:initialization}, Lemma \ref{lem:initialization of qk}, as well as the small initialization $\sigma_{0}  = O(d^{-1/2})$, low noise $\sigma_{p} = O (\| \boldsymbol{u}\|d^{-1/2})$ and overparameterization condition $d = \Omega( M^{2}\log(\frac{{K^{\prime}}^2N^2M^2}{\delta}))$ in Condition \ref{con:main body}:
    \[
    \begin{aligned}
            & \lvert(\mathbf{W}_{Q}^{(0)} \mathbf{x}_{n})^{\top}\mathbf{W}_{K}^{(0)} (\boldsymbol{a}_{{k_{{\mathbf{S}_{n}}}}}+\boldsymbol{\xi}_{n,m_{\mathbf{S}_n}})\rvert, \  \lvert(\mathbf{W}_{Q}^{(0)} \mathbf{x}_{n})^{\top}\mathbf{W}_{K}^{(0)} ({\boldsymbol{\nu}_{n,{\neg m_{\mathbf{S}_{n}}}}}+\boldsymbol{\xi}_{n,{\neg m_{\mathbf{S}_{n}}}})\rvert \leq 12\sqrt{ \log(16(2K+K^{\prime})^{2}/\delta)} \sigma_{0}^{2} d^{1/2} ,\\
            &\lvert\boldsymbol{a}_{{k}}^{\top}\mathbf{W}_{V}^{(0)}\boldsymbol{a}_{{k}}\rvert \leq \sqrt{2\log(\frac{8(2K+K^{\prime})^{2}}{\delta})}\sigma_{1} <o(\sigma_{1} d^{1/2}),\  \lvert{\boldsymbol{u}}^{\top} {\mathbf{W}_{V}^{(0)}} \boldsymbol{u}\rvert, \ \lvert{\boldsymbol{a}_{k}}^{\top} {\mathbf{W}_{V}^{(0)}} \boldsymbol{u}\rvert \leq \sqrt{2\log(\frac{8(2K+K^{\prime})^{2}}{\delta})}\sigma_{1},
    \end{aligned}
    \]
    for $\forall k \in [K], n \in [N], s\in [7], \neg m_{\mathbf{S}_{n}} \neq m_{\mathbf{S}_{n}} \in [M], \boldsymbol{u} \in \{ \boldsymbol{a}_{s}\}_{s \neq {k} \in [K]} \cup\{\boldsymbol{b}_{s}\}_{s \in [K]} \cup\{\boldsymbol{\nu}_{k^{\prime}}\}_{k^{\prime}\in[K^{\prime}]} \cup \{ \boldsymbol{\xi}_{n,m} \}_{n\in[N],m\in[M]}$. By direct calculations, it holds that
    \[
    \begin{aligned}\
            & \pi_{n, m_{\mathbf{S}_{n}}}^{(0)} = \Theta (\dfrac{1}{M}), \ \pi_{n, \neg m_{\mathbf{S}_{n}}}^{(0)} = \Theta (\dfrac{1}{M}),\\
            & 0.98\sigma_{1}d^{1/2} \leq \|\mathbf{W}_{V}^{(0)}\mathbf{S}_{n}\boldsymbol{\pi}_{n}^{(0)}\|\leq 1.02 \sigma_{1}d^{1/2},\\
            & \lvert  \dfrac{\mathbf{u}_{k_{\mathbf{y}_{n}}}^{\top}\mathbf{W}_{V}^{(0)}\mathbf{S}_{n}\boldsymbol{\pi}_{n}^{(0)}}{\|\mathbf{W}_{V}^{(0)}\mathbf{S}_{n}\boldsymbol{\pi}_{n}^{(0)}\|} \rvert, \ \lvert  \dfrac{\mathbf{u}_{\neg k_{\mathbf{y}_{n}}}^{\top}\mathbf{W}_{V}^{(0)}\mathbf{S}_{n}\boldsymbol{\pi}_{n}^{(0)}}{\|\mathbf{W}_{V}^{(0)}\mathbf{S}_{n}\boldsymbol{\pi}_{n}^{(0)}\|} \rvert , \  \lvert  \dfrac{\boldsymbol{\xi}_{n,m}^{\top}\mathbf{W}_{V}^{(0)}\mathbf{S}_{n}\boldsymbol{\pi}_{n}^{(0)}}{\|\mathbf{W}_{V}^{(0)}\mathbf{S}_{n}\boldsymbol{\pi}_{n}^{(0)}\|} \rvert\leq 8\sqrt{2\log(\frac{8(2K+K^{\prime})^{2}}{\delta})} /\sqrt{d} = o(0.01),\\
            & \lvert \frac{{(\mathbf{u}_{k_{\mathbf{y}_{n}}}-\sum_{k \in [7K+K^{\prime}]}\omega_{n,k}\mathbf{u}_{k})^{\top}(\mathbf{W}_{V}^{(0)}\mathbf{S}_{n}\boldsymbol{\pi}_{n}^{(0)})}}{{\|\mathbf{W}_{V}^{(0)}\mathbf{S}_{n}\boldsymbol{\pi}_{n}^{(0)}\|}}\rvert  \leq 16 \sqrt{2\log(\frac{8(2K+K^{\prime})^{2}}{\delta})} /\sqrt{d} = o(0.01),\\
            & \frac{(\mathbf{W}_{V}^{(0)}\mathbf{S}_{n}\boldsymbol{\pi}_{n}^{(0)})^{\top}}{{\|\mathbf{W}_{V}^{(0)}\mathbf{S}_{n}\boldsymbol{\pi}_{n}^{(0)}\|}}\frac{\mathbf{W}_{V}^{(0)}\mathbf{S}_{n}{\pi}_{n, m}^{(0)} \boldsymbol{e}_{m}}{{\|\mathbf{W}_{V}^{(0)}\mathbf{S}_{n}\boldsymbol{\pi}_{n}^{(0)}\|}} \leq \dfrac{1.05} {0.95M^{2}}+\dfrac{(M-1)(4\sqrt{ \log(16(2K+K^{\prime})^{2}/\delta)} {d}^{-1/2})}{0.95M^{2}} \leq \frac{1.11}{M^{2}}.\\  
    \end{aligned}
    \]
    Since these scales are crucial to our analyses of gradients, we first introduce the following results based on Eq.~(\ref{eq:induction_first_phase}) and Eq.(\ref{eq:upperbound_first_phase}) at iteration $t\leq T_{1}$.
    \begin{lemma}
     Under Condition \ref{con:main body}, suppose Eq.~(\ref{eq:induction_first_phase}) and Eq.(\ref{eq:upperbound_first_phase}) hold at any iteration \( t \leq T_{1} \), then
    \begin{equation}
    \begin{aligned}
         &  \pi_{n, m_{\mathbf{S}_{n}}}^{(T_{1})} \leq  \Theta(\frac{1}{1+(M-1)e^{-\sigma_{0}^{2} d e^{ q_{V}^{-1}\sigma_{1}^{2}d}}})< 0.95, \  \pi_{n, \neg m_{\mathbf{S}_{n}}}^{(T_{1})} \geq  \Theta (\frac{e^{-\sigma_{0}^{2} d e^{ q_{V}^{-1}\sigma_{1}^{2}d}}}{1+(M-1)e^{-\sigma_{0}^{2} d e^{ q_{V}^{-1}\sigma_{1}^{2}d}}}),\\
         & \|\mathbf{W}_{V}^{(t)}\mathbf{S}_{n}\boldsymbol{\pi}_{n}^{(t)}\|= \Theta (\sigma_{1}d^{1/2}),\\
        & \lvert  \dfrac{\mathbf{u}_{7k+s}^{\top}\mathbf{W}_{V}^{(t)}\mathbf{S}_{n}\boldsymbol{\pi}_{n}^{(t)}}{\|\mathbf{W}_{V}^{(t)}\mathbf{S}_{n}\boldsymbol{\pi}_{n}^{(t)}\|} \rvert \leq   O(\frac{1}{1+(M-1)e^{-\sigma_{0}^{2} d e^{ q_{V}^{-1}\sigma_{1}^{2}d}}}), \\
        &\lvert  \dfrac{\mathbf{u}_{ 7\neg k+s}^{\top}\mathbf{W}_{V}^{(t)}\mathbf{S}_{n}\boldsymbol{\pi}_{n}^{(t)}}{\|\mathbf{W}_{V}^{(t)}\mathbf{S}_{n}\boldsymbol{\pi}_{n}^{(t)}\|} \rvert , \  \lvert  \dfrac{\boldsymbol{\xi}_{n,m}^{\top}\mathbf{W}_{V}^{(t)}\mathbf{S}_{n}\boldsymbol{\pi}_{n}^{(t)}}{\|\mathbf{W}_{V}^{(t)}\mathbf{S}_{n}\boldsymbol{\pi}_{n}^{(t)}\|} \rvert \leq \Theta(8\sqrt{2\log(\frac{8(2K+K^{\prime})^{2}}{\delta})} d^{-1/2})  ,\\
        & \lvert \frac{{(\mathbf{u}_{k_{\mathbf{y}_{n}}}-\sum_{k \in [7K+K^{\prime}]}\omega_{n,k}\mathbf{u}_{k})^{\top}(\mathbf{W}_{V}^{(t)}\mathbf{S}_{n}\boldsymbol{\pi}_{n}^{(t)})}}{{\|\mathbf{W}_{V}^{(t)}\mathbf{S}_{n}\boldsymbol{\pi}_{n}^{(t)}\|}}\rvert \leq  O(\frac{1}{1+(M-1)e^{-\sigma_{0}^{2} d e^{ q_{V}^{-1}\sigma_{1}^{2}d}}} ) ,\\
        & \frac{(\mathbf{W}_{V}^{(t)}\mathbf{S}_{n}\boldsymbol{\pi}_{n}^{(t)})^{\top}}{{\|\mathbf{W}_{V}^{(t)}\mathbf{S}_{n}\boldsymbol{\pi}_{n}^{(t)}\|}}\frac{\mathbf{W}_{V}^{(t)}\mathbf{S}_{n}{\pi}_{n, m}^{(t)} \boldsymbol{e}_{m}}{{\|\mathbf{W}_{V}^{(t)}\mathbf{S}_{n}\boldsymbol{\pi}_{n}^{(t)}\|}} =o(1)<0.993<1,
    \end{aligned}
\label{eq:natualbounds_first_phase}\end{equation}
    for $\forall k \in [K], \neg k \neq k \in [K], n \in  \mathcal{N}_{k} \subset [N], s\in [7], m_{\mathbf{S}_{n}} \neq m_{\mathbf{S}_{n}} \in [M].$
    \label{lem:firstphase_natural_bound}\end{lemma}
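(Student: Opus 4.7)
\textbf{Proof plan for Lemma~\ref{lem:firstphase_natural_bound}.} The plan is to propagate the six inductive bounds in Eq.~(\ref{eq:induction_first_phase})–Eq.~(\ref{eq:upperbound_first_phase}) into each of the six quantities asserted in Eq.~(\ref{eq:natualbounds_first_phase}), using only algebraic manipulations of the softmax, the orthogonality of the concept directions, and the initialization concentration bounds in Lemmas~\ref{lem:initialization}–\ref{lem:initialization of qk}. I would proceed in four blocks, one for each group of claims.

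First, I would derive the bounds on $\pi_{n,m_{\mathbf{S}_n}}^{(t)}$ and $\pi_{n,\neg m_{\mathbf{S}_n}}^{(t)}$. By Eq.~(\ref{eq:def_of_sigma_S}), the $m$-th logit equals $\boldsymbol{e}_L^\top \mathbf{S}_n^\top \mathbf{W}_Q^{(t)\top}\mathbf{W}_K^{(t)} \mathbf{S}_n \boldsymbol{e}_m$. Expanding $\mathbf{S}_n \boldsymbol{e}_L = \mathbf{x}_n$ and $\mathbf{S}_n \boldsymbol{e}_m$ into its signal-plus-noise decomposition, the third and fourth inductive bounds in Eq.~(\ref{eq:induction_first_phase}) give the $m_{\mathbf{S}_n}$-logit at most $O(\sigma_0^2 d e^{q_V^{-1}\sigma_1^2 d})$ while all other logits are $O(\sqrt{\log}\,\sigma_0^2 d^{1/2})$. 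Under Condition~\ref{con:main body} the former dominates, and substituting into the softmax ratio yields both the two-sided attention bounds and the strict inequality $\pi_{n,m_{\mathbf{S}_n}}^{(T_1)}<0.95$ needed for the start of Phase~2.

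Second, I would show $\|\mathbf{W}_V^{(t)}\mathbf{S}_n\boldsymbol{\pi}_n^{(t)}\|=\Theta(\sigma_1 d^{1/2})$. Decompose $\mathbf{S}_n\boldsymbol{\pi}_n^{(t)}=\pi_{n,m_{\mathbf{S}_n}}^{(t)}\boldsymbol{a}_{k_{\mathbf{S}_n}}+\sum_{m\neq m_{\mathbf{S}_n}}\pi_{n,m}^{(t)}\boldsymbol{\nu}_{n,m}+\sum_m \pi_{n,m}^{(t)}\boldsymbol{\xi}_{n,m}$, split $\mathbf{W}_V^{(t)}=\mathbf{W}_V^{(0)}+(\mathbf{W}_V^{(t)}-\mathbf{W}_V^{(0)})$, and combine. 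Lemma~\ref{lem:initialization} ensures $\|\mathbf{W}_V^{(0)}\boldsymbol{u}\|=\Theta(\sigma_1 d^{1/2})$ for any unit-norm signal with $O(\sqrt{\log}\,\sigma_1^2 d^{1/2})$ cross inner-products; the noise part contributes only $O(\sigma_1\sigma_p d)=o(\sigma_1 d^{1/2})$ by $\sigma_p=O(d^{-1/2})$. The perturbation $\mathbf{W}_V^{(t)}-\mathbf{W}_V^{(0)}$ restricted to each signal direction is controlled by the first two lines of Eq.~(\ref{eq:induction_first_phase}) and stays $o(\sigma_1 d^{1/2})$. Then I would pass to the normalized dictionary projections: for $\mathbf{u}_{7k+s}$ with $k=k_{\mathbf{S}_n}$, the leading term is $\pi_{n,m_{\mathbf{S}_n}}^{(t)}\boldsymbol{a}_k^\top\mathbf{W}_V^{(t)}\boldsymbol{a}_k=O(\pi_{n,m_{\mathbf{S}_n}}^{(t)}\sigma_1 d^{1/2})$, yielding the softmax-proportional bound after dividing by the denominator; for $k\neq k_{\mathbf{S}_n}$ or for the $\boldsymbol{\xi}_{n,m}$-projection only the $O(\sqrt{\log}\,\sigma_1)$ cross-bounds contribute, producing the $O(\sqrt{\log}\,d^{-1/2})$ scaling. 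The $(\mathbf{u}_{k_{\mathbf{y}_n}}-\sum_k\omega_{n,k}\mathbf{u}_k)$-difference bound then follows by a triangle inequality over these per-token estimates.

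Third, for the last inequality of Eq.~(\ref{eq:natualbounds_first_phase}), expand the numerator as $\pi_{n,m}^{(t)}\|\mathbf{W}_V^{(t)}\mathbf{S}_n\boldsymbol{e}_m\|^2+\pi_{n,m}^{(t)}\sum_{i\neq m}\pi_{n,i}^{(t)}(\mathbf{W}_V^{(t)}\mathbf{S}_n\boldsymbol{e}_m)^\top(\mathbf{W}_V^{(t)}\mathbf{S}_n\boldsymbol{e}_i)$; the diagonal term is at most $\pi_{n,m_{\mathbf{S}_n}}^{(t)}\cdot 1.01\,\sigma_1^2 d$ and the off-diagonal term is at most $\pi_{n,m_{\mathbf{S}_n}}^{(t)}\cdot(M-1)\cdot 4\sqrt{\log}\,\sigma_1^2 d^{1/2}$ by Lemma~\ref{lem:initialization}, while the denominator is $\Theta(\sigma_1^2 d)$ from Step~two. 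Combining with $\pi_{n,m_{\mathbf{S}_n}}^{(t)}<0.95$ and $\pi_{n,\neg m_{\mathbf{S}_n}}^{(t)}=O(1/M)$ and invoking the overparameterization $d\gtrsim M^2$ from Condition~\ref{con:main body} yields the required strict inequality $<0.993$. The principal obstacle is the tight $\Theta(\sigma_1 d^{1/2})$ calibration of the denominator $\|\mathbf{W}_V^{(t)}\mathbf{S}_n\boldsymbol{\pi}_n^{(t)}\|$ throughout $t\leq T_1$, since the other five estimates feed off this scale, yet the inductive hypotheses only pin down a small number of signal-direction projections of $\mathbf{W}_V^{(t)}$. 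One must explicitly argue that the isotropic high-dimensional mass of $\mathbf{W}_V^{(0)}$ dominates the Phase-1 gradient perturbation—precisely where the bound $q_V\lesssim \sigma_1^2 d/\log(\cdot)$ in Condition~\ref{con:main body} enters.
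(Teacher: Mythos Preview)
Your plan is correct and tracks the paper's proof closely: both arguments (i) read off the softmax bounds from the inductive attention-logit estimates, (ii) show $\|\mathbf{W}_V^{(t)}\mathbf{S}_n\boldsymbol{\pi}_n^{(t)}\|$ stays at its initialization scale $\Theta(\sigma_1 d^{1/2})$ because the only growing projection $\boldsymbol{a}_k^\top\mathbf{W}_V^{(t)}\boldsymbol{a}_k$ is still $o(\sigma_1 d^{1/2})$ by hypothesis, and (iii) divide through to get the normalized projections and the final inner-product ratio. Two small corrections: in your third block the diagonal term is $(\pi_{n,m}^{(t)})^2\|\mathbf{W}_V^{(t)}\mathbf{S}_n\boldsymbol{e}_m\|^2$, not $\pi_{n,m}^{(t)}\|\cdot\|^2$ (the extra $\pi$ is what gives the paper's $\Theta((\pi_{n,m_{\mathbf{S}_n}}^{(t)})^2)$ bound); and the $q_V$ condition you invoke at the end is the \emph{lower} bound $q_V=\Omega(\sigma_1^2 d/\log(\sigma_0^{-2}d^{-1}\log((M-1)/0.06)))$, used to cap $\sigma_0^2 d\, e^{q_V^{-1}\sigma_1^2 d}$ and hence keep $\pi_{n,m_{\mathbf{S}_n}}^{(T_1)}<0.95$---the norm calibration itself follows directly from the first line of Eq.~(\ref{eq:induction_first_phase}) without a separate appeal to $q_V$.
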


    \begin{proof}

    Suggest Eq.(\ref{eq:induction_first_phase}) and Eq.(\ref{eq:upperbound_first_phase}) holds at $t$, then $ \pi_{n, m_{\mathbf{S}_{n}}}^{(T_{1})} \leq  \Theta(\frac{1}{1+(M-1)e^{-\sigma_{0}^{2} d e^{ q_{V}^{-1}\sigma_{1}^{2}d}}}) , \ \Theta (\frac{e^{-\sigma_{0}^{2} d e^{ q_{V}^{-1}\sigma_{1}^{2}d}}}{1+(M-1)e^{-\sigma_{0}^{2} d e^{ q_{V}^{-1}\sigma_{1}^{2}d}}})\leq \pi_{n, \neg m_{\mathbf{S}_{n}}}^{(T_{1})} $ by $q_{V}=\Omega(\frac{\sigma_{1}^{2}d}{\log(\sigma_{0}^{-2}d^{-1}\log(\frac{M-1}{0.06}))})$. Furthermore, as we see that $\lvert{\boldsymbol{u}}^{\top} {\mathbf{W}_{V}^{(t)}} \boldsymbol{u}\rvert, \ \lvert{\boldsymbol{a}_{k}}^{\top} {\mathbf{W}_{V}^{(t)}} \boldsymbol{u}\rvert$ is feeble compared to the growths of $\boldsymbol{a}_{{k}}^{\top}\mathbf{W}_{V}^{(t)}\boldsymbol{a}_{{k}}$ suggested in Eq.(\ref{eq:induction_first_phase}) and Eq.(\ref{eq:upperbound_first_phase}), thus the $\boldsymbol{a}_{{k}}^{\top}\mathbf{W}_{V}^{(t)}\mathbf{S}_{n}{\pi}_{n, m_{\mathbf{S}_{n}}}^{(t)} \boldsymbol{e}_{m_{\mathbf{S}_{n}}}$ would be primarily responsible to the changes of $\|\mathbf{W}_{V}^{(t)}\mathbf{S}_{n}\boldsymbol{\pi}_{n}^{(t)}\|$, which is the change of the term
    \[
    ({\pi}_{n, m_{\mathbf{S}_{n}}}^{(t)})^{2}  \cdot (\boldsymbol{a}_{{k}} \cdot \cos{\langle \mathbf{W}_{V}^{(t)} \boldsymbol{a}_{{k}}, \boldsymbol{a}_{{k}} \rangle}\|\mathbf{W}_{V}^{(t)} \boldsymbol{a}_{{k}}\|)^{\top} (\mathbf{W}_{V}^{(t)} \boldsymbol{a}_{{k}}) = ({\pi}_{n, m_{\mathbf{S}_{n}}}^{(t)})^{2}  \cdot  {(\boldsymbol{a}_{{k}}^{\top}\mathbf{W}_{V}^{(t)} \boldsymbol{a}_{{k}})^{2}} ,
    \]
    where $\cos{\langle \mathbf{W}_{V}^{(t)} \boldsymbol{a}_{{k}}, \boldsymbol{a}_{{k}} \rangle} = \frac{\boldsymbol{a}_{{k}}^{\top}\mathbf{W}_{V}^{(t)} \boldsymbol{a}_{{k}}}{\|\mathbf{W}_{V}^{(t)} \boldsymbol{a}_{{k}}\|}$. Note that if $\boldsymbol{a}_{{k}}^{\top}\mathbf{W}_{V}^{(t)} \boldsymbol{a}_{{k}}<0$ at $t=0$, $\boldsymbol{a}_{{k}}^{\top}\mathbf{W}_{V}^{(t)} \boldsymbol{a}_{{k}}$ will increase and thus $(\boldsymbol{a}_{{k}}^{\top}\mathbf{W}_{V}^{(t)} \boldsymbol{a}_{{k}})^2$ would decrease, which might lead to the decrease of $\|\mathbf{W}_{V}^{(t)}\mathbf{S}_{n}\boldsymbol{\pi}_{n}^{(t)}\|$. The scale of $\|\mathbf{W}_{V}^{(t)}\mathbf{S}_{n}\boldsymbol{\pi}_{n}^{(t)}\|^2$ would remain at $\Theta(\sigma_{1}^2 d)$ during $({\pi}_{n, m_{\mathbf{S}_{n}}}^{(t)})^{2}  \cdot (\boldsymbol{a}_{{k}}^{\top}\mathbf{W}_{V}^{(t)} \boldsymbol{a}_{{k}})^{2} \leq  O(\|\mathbf{W}_{V}^{(t)}\mathbf{S}_{n}\boldsymbol{\pi}_{n}^{(t)}\|^2) \Rightarrow ({\pi}_{n, m_{\mathbf{S}_{n}}}^{(t)})  \cdot(\boldsymbol{a}_{{k}}^{\top}\mathbf{W}_{V}^{(t)} \boldsymbol{a}_{{k}})\leq O(\sigma_{1} d^{1/2})$. This apparently holds during $t\leq T_{1}$ by Eq.(\ref{eq:induction_first_phase}). Therefore, we can safely conclude that $0.95\sigma_{1}d^{1/2} \leq \|\mathbf{W}_{V}^{(t)}\mathbf{S}_{n}\boldsymbol{\pi}_{n}^{(t)}\|= \Theta(\sigma_{1}d^{1/2})$.

    Then, based on the scales in Eq.\ref{eq:induction_first_phase} as well as Condition \ref{con:main body},similarly we can directly compute the last three inequality by the definition of $\mathbf{u}_{k}$ ($ \mathbf{u}_{k_{\mathbf{y}_{\hat{n}}}} = \operatorname{LN}(\boldsymbol{a}_{k_{\mathbf{S}_{n}}}+y_{k_{\mathbf{S}_{n}}}\boldsymbol{b}_{k_{\mathbf{S}_{n}}}) $):
    \[
    \begin{aligned}
        \lvert  \dfrac{\mathbf{u}_{7k+s}^{\top}\mathbf{W}_{V}^{(t)}\mathbf{S}_{n}\boldsymbol{\pi}_{n}^{(t)}}{\|\mathbf{W}_{V}^{(t)}\mathbf{S}_{n}\boldsymbol{\pi}_{n}^{(t)}\|} \rvert&  \leq \Theta(\frac{1}{1+(M-1)e^{-\sigma_{0}^{2} d e^{ q_{V}^{-1}\sigma_{1}^{2}d}}} \frac{o(\sigma_{1} d^{1/2})}{\Theta(\sigma_{1}d^{1/2})})\\
        &=  O(\frac{1}{1+(M-1)e^{-\sigma_{0}^{2} d e^{ q_{V}^{-1}\sigma_{1}^{2}d}}}), \\
        \lvert  \dfrac{\mathbf{u}_{ 7\neg k+s}^{\top}\mathbf{W}_{V}^{(t)}\mathbf{S}_{n}\boldsymbol{\pi}_{n}^{(t)}}{\|\mathbf{W}_{V}^{(t)}\mathbf{S}_{n}\boldsymbol{\pi}_{n}^{(t)}\|} \rvert , \  \lvert  \dfrac{\boldsymbol{\xi}_{n,m}^{\top}\mathbf{W}_{V}^{(t)}\mathbf{S}_{n}\boldsymbol{\pi}_{n}^{(t)}}{\|\mathbf{W}_{V}^{(t)}\mathbf{S}_{n}\boldsymbol{\pi}_{n}^{(t)}\|} \rvert& \leq 8\sqrt{2\log(\frac{8(2K+K^{\prime})^{2}}{\delta})} d^{-1/2} ,\\
       \lvert \frac{{(\mathbf{u}_{k_{\mathbf{y}_{n}}}-\sum_{k \in [7K+K^{\prime}]}\omega_{n,k}\mathbf{u}_{k})^{\top}(\mathbf{W}_{V}^{(t)}\mathbf{S}_{n}\boldsymbol{\pi}_{n}^{(t)})}}{{\|\mathbf{W}_{V}^{(t)}\mathbf{S}_{n}\boldsymbol{\pi}_{n}^{(t)}\|}}\rvert  & \leq \Theta(o(1){\frac{o(\sigma_{1}d^{1/2})}{\Theta(\sigma_{1}d^{1/2})}} \frac{1}{1+(M-1)e^{-\sigma_{0}^{2} d e^{ q_{V}^{-1}\sigma_{1}^{2}d}}} ) \\
       &= O( \frac{1}{1+(M-1)e^{-\sigma_{0}^{2} d e^{ q_{V}^{-1}\sigma_{1}^{2}d}}}),\\
         \frac{(\mathbf{W}_{V}^{(t)}\mathbf{S}_{n}\boldsymbol{\pi}_{n}^{(t)})^{\top}}{{\|\mathbf{W}_{V}^{(t)}\mathbf{S}_{n}\boldsymbol{\pi}_{n}^{(t)}\|}}\frac{\mathbf{W}_{V}^{(t)}\mathbf{S}_{n}{\pi}_{n, m}^{(t)} \boldsymbol{e}_{m}}{{\|\mathbf{W}_{V}^{(t)}\mathbf{S}_{n}\boldsymbol{\pi}_{n}^{(t)}\|}} &\leq \Theta(\dfrac{\Theta(1)} {(1+(M-1)e^{-\sigma_{0}^{2} d e^{ q_{V}^{-1}\sigma_{1}^{2}d}})^{2}})\\
         &+ \Theta(\dfrac{(M-1)(4\sqrt{ \log(16(2K+K^{\prime})^{2}/\delta)} {d}^{-1/2})}{0.95M (1+(M-1)e^{-\sigma_{0}^{2} d e^{ q_{V}^{-1}\sigma_{1}^{2}d}})}) \\
         &\leq \Theta(\frac{1}{(1+(M-1)e^{-\sigma_{0}^{2} d e^{ q_{V}^{-1}\sigma_{1}^{2}d}})^{2}})<1.\\
    \end{aligned}
    \]
        
    \end{proof}
    
    The remaining is to prove Eq.(\ref{eq:induction_first_phase}) during $t\leq T_{1}$. To facilitate the proofs, we introduce the following auxiliary lemmas.
\begin{lemma}
 Under Condition \ref{con:main body}, for the whole iteration $t \leq T^{\star}=\Omega(\eta^{-1} q_{V}^{-1}\sigma_{1}^{2}KMd^{2}\log(\frac{1}{\epsilon}))$, we have
     \begin{equation}
        \begin{aligned}
            & \omega_{n,k}^{(t)} =  \Theta(\dfrac{1}{K^{\prime} }) = o(1),
        \end{aligned}
    \end{equation}
    for $\forall n \in [N], k \in [K]$. 
 \end{lemma}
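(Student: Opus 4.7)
The plan is to observe that every logit appearing inside the softmax that defines $\omega_{n,k}^{(t)}$ remains $O(1)$ uniformly in $t \leq T^{\star}$, so the softmax, whose denominator has $7K+K' = \Theta(K')$ nearly-equal summands, immediately yields the claimed $\Theta(1/K')$ scaling. First I would recall from Eq.~\eqref{eq:dictionary_vector_weight} that
\[
\omega_{n,k}^{(t)} = \frac{\exp\bigl(\mathbf{u}_k^{\top} \mathbf{x}_n + \mathbf{u}_k^{\top} \operatorname{LN}(\mathbf{W}_V^{(t)}\mathbf{S}_n \boldsymbol{\pi}_n^{(t)})\bigr)}{\sum_{k' \in [7K+K']} \exp\bigl(\mathbf{u}_{k'}^{\top} \mathbf{x}_n + \mathbf{u}_{k'}^{\top} \operatorname{LN}(\mathbf{W}_V^{(t)}\mathbf{S}_n \boldsymbol{\pi}_n^{(t)})\bigr)},
\]
and then bound the two pieces of each logit separately.

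For the signal piece $\mathbf{u}_k^{\top}\mathbf{x}_n$: since $\|\mathbf{u}_k\|=1$ and $\mathbf{x}_n = \sum_{k_n \in \mathcal{X}_{\mathbf{S}_n}}(x_a \boldsymbol{a}_{k_n} + y_{k_n}\boldsymbol{b}_{k_n}) + \boldsymbol{\xi}_{n,\mathbf{x}}$, each individual inner product with a $\boldsymbol{a}_{k_n}$ or $\boldsymbol{b}_{k_n}$ is at most $1$ by Cauchy--Schwarz, while $|\mathbf{u}_k^{\top}\boldsymbol{\xi}_{n,\mathbf{x}}| = O(\sigma_p \sqrt{\log(1/\delta)}) = o(1)$ by Lemma~\ref{lem:noise scale} together with the assumption $\sigma_p = O(d^{-1/2})$ from Condition~\ref{con:main body}. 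Since the Bernoulli construction of $\mathcal{X}_{\mathbf{S}_n}$ gives $|\mathcal{X}_{\mathbf{S}_n}| = O(1)$ with high probability (mean $\approx 2$, and a standard concentration argument with a union bound over $n \in [N]$ controls the maximum), we conclude $|\mathbf{u}_k^{\top}\mathbf{x}_n| = O(1)$ uniformly in $k,n$. The normalized piece is immediate: $|\mathbf{u}_k^{\top}\operatorname{LN}(\cdot)| \leq 1$ by Cauchy--Schwarz, since $\operatorname{LN}$ outputs a unit vector regardless of the current $\mathbf{W}_V^{(t)}$. Hence every logit lies in some interval $[-C,C]$ for an absolute constant $C$, uniformly in $t$.

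Combining these bounds, the numerator satisfies $e^{-C} \leq \exp(\mathbf{u}_k^{\top}\mathbf{x}_n + \mathbf{u}_k^{\top}\operatorname{LN}(\cdot)) \leq e^{C}$, and the denominator, being a sum of $7K+K' = \Theta(K')$ such exponentials (using $K' \geq C\max\{M,K\}$ from Condition~\ref{con:main body}), is $\Theta(K')$. Therefore $\omega_{n,k}^{(t)} = \Theta(1/K') = o(1)$ as claimed, uniformly in $t \leq T^{\star}$. The main subtlety, and essentially only obstacle, is the bookkeeping on $|\mathcal{X}_{\mathbf{S}_n}|$ uniform over $n \in [N]$: a naive worst-case would yield $|\mathcal{X}_{\mathbf{S}_n}| \leq K$, which is insufficient; one must use the $K^{-1}$ inclusion probability to bound the expected size by a constant and then take a union bound over $n$ to control the maximum. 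Once this is in hand, the argument is $t$-independent in its dependence on $\mathbf{W}_V^{(t)}$, which is precisely what lets the bound hold uniformly across the entire window $t \leq T^{\star}$ without any induction over the training dynamics.
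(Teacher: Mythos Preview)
Your overall strategy matches the paper's proof exactly: bound both pieces of each logit by $O(1)$ uniformly in $t$, observe that the denominator has $7K+K' = \Theta(K')$ comparable summands (using $K' = \Omega(K)$ from Condition~\ref{con:main body}), and conclude $\omega_{n,k}^{(t)} = \Theta(1/K')$. The treatment of the $\operatorname{LN}$ term via Cauchy--Schwarz on unit vectors is identical.

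The one place you diverge is in bounding $\mathbf{u}_k^\top \mathbf{x}_n$, and here you make your own job harder than necessary---and the detour does not close. You apply Cauchy--Schwarz termwise over $k_n \in \mathcal{X}_{\mathbf{S}_n}$ and then need $\max_{n\in[N]} |\mathcal{X}_{\mathbf{S}_n}| = O(1)$; but a union bound over $N$ samples of what is essentially a $1+\mathrm{Poisson}(1)$ variable only yields $\max_n |\mathcal{X}_{\mathbf{S}_n}| = \Theta(\log N / \log\log N)$, not $O(1)$, and logits growing with $N$ would destroy the sharp $\Theta(1/K')$ conclusion. The paper sidesteps this entirely via the orthogonality built into the model: every dictionary token $\mathbf{u}_k$ is a normalized combination of at most one specific $\boldsymbol{a}_{k'}$ and one specific $\boldsymbol{b}_{k'}$ (or a single $\boldsymbol{\nu}_{k'}$), and since the families $\{\boldsymbol{a}_s\},\{\boldsymbol{b}_s\},\{\boldsymbol{\nu}_m\}$ are mutually orthogonal, the inner product $\mathbf{u}_k^\top \sum_{k_n \in \mathcal{X}_{\mathbf{S}_n}} (x_a \boldsymbol{a}_{k_n} + y_{k_n}\boldsymbol{b}_{k_n})$ picks out at most one $(x_a, y_{k_n})$ contribution regardless of $|\mathcal{X}_{\mathbf{S}_n}|$. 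This is what the paper's ``by the definitions of $\mathbf{u}_k$'' is invoking, and it gives an absolute-constant bound on the signal piece with no concentration step at all. With that substitution your argument is complete and coincides with the paper's.
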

 \begin{proof}
    By the definition of $\omega_{n, k}^{(t)}$, we see that for the whole iteration $t \leq T^{\star}$, there exists some constants $C_{1}^{\prime}>C_{2}^{\prime}>0$,
    \begin{equation}
        \begin{aligned}
            \omega_{n, k}^{(t)} &=\dfrac{\exp(\mathbf{u}_{k}^{\top}(\mathbf{x}_{n}) + \mathbf{u}_{k}^{\top}\text{LN}(\sum_{m=1}^{M}\pi_{n,m}\mathbf{W}_{V}^{(t)}\mathbf{S}\boldsymbol{e}_{m}))}{\sum_{k\in[7K+K^{\prime}]}  \exp(\mathbf{u}_{k}^{\top}(\mathbf{x}_{n}) + \mathbf{u}_{k}^{\top}\text{LN}(\sum_{j=1}^{M}\pi_{n,m}\mathbf{W}_{V}^{(t)}\mathbf{S}\boldsymbol{e}_{m}))}\\
            & \leq \Theta( \dfrac{\exp{(1.5)}}{(7K+K^{\prime})\exp{(-1)}} ) \leq \Theta(\dfrac{13}{7K+K^{\prime} }) = \Theta(\dfrac{C_{1}^{\prime}}{K^{\prime} }) = o(1),\\
            \omega_{n, k}^{(t)} & \geq \dfrac{\exp(-1)}{(7K+K^{\prime})\exp(1.5)}\geq \Theta(\dfrac{0.08}{7K+K^{\prime} })= \Theta(\dfrac{C_{2}^{\prime}}{K^{\prime} }) = o(1).
        \end{aligned}
    \end{equation}
    Here, the inequality is by the definitions of $\mathbf{u}_{k}$ and $\operatorname{LN}$ as well as the large number of common token condition $K^{\prime}=\Omega(K)$ in Condition \ref{con:main body}. This would suggest that $\omega_{{n},k}^{(t)} = \Theta (\frac{1}{K^{\prime}})$ always hold during the whole iteration $t\leq T^{\star}$.      
 \end{proof}

Based on this results, now we examine the evolving formulas of ${\boldsymbol{u}}^{\top} {\mathbf{W}_{V}^{(t)}} \boldsymbol{v}, \forall \boldsymbol{u}, \boldsymbol{v} \in \{ \boldsymbol{a}_{s}\}_{s \in [K]} \cup \{\boldsymbol{b}_{s}\}_{s \in [K]}\cup \{\boldsymbol{\nu}_{k^{\prime}}\}_{k^{\prime} \in [K^{\prime}]}, \boldsymbol{\xi}_l, \boldsymbol{\xi}_l^{\prime} \in \{ \boldsymbol{\xi}_{n,m} \}_{n\in[N],m\in[M]}$ in the period $t\leq T_{1}$ based on Eq.~(\ref{eq:induction_first_phase}).
 
\begin{lemma}
 Under Condition \ref{con:main body}, suppose Eq.~(\ref{eq:induction_first_phase}) holds at iteration \( t \leq T_{1} \), then $\boldsymbol{a}_{{k}}^{\top}\mathbf{W}_{V}^{(t)}\boldsymbol{a}_{{k}}$ is increasing such that
    \begin{equation}
        \begin{aligned}
            & \boldsymbol{a}_{\hat{k}}^{\top}\mathbf{W}_{V}^{(t+1)}\boldsymbol{a}_{\hat{k}} -  \boldsymbol{a}_{\hat{k}}^{\top}\mathbf{W}_{V}^{(t)}\boldsymbol{a}_{\hat{k}} = \Theta([\sum_{\hat{n} \in \mathcal{N}_{\hat{k}}} \frac{{\eta q_{V}}  (M \pi_{\hat{n}, m_{\mathbf{S}_{\hat{n}}}}^{(t)})}{2{N}M\|\mathbf{W}_{V}^{(t)}\mathbf{S}_{\hat{n}}\boldsymbol{\pi}_{\hat{n}}^{(t)}\|}]),\\ 
            & \lvert \boldsymbol{a}_{\hat{k}}^{\top}\mathbf{W}_{V}^{(t+1)}\boldsymbol{u} -  \boldsymbol{a}_{\hat{k}}^{\top}\mathbf{W}_{V}^{(t)}\boldsymbol{u}\rvert , \ 
            \lvert\boldsymbol{v}^{\top}\mathbf{W}_{V}^{(t+1)}\boldsymbol{v} -  \boldsymbol{v}^{\top}\mathbf{W}_{V}^{(t)}\boldsymbol{v} \rvert,  \\
            &\phantom{\lvert \boldsymbol{a}_{\hat{k}}^{\top}\mathbf{W}_{V}^{(t+1)}\boldsymbol{u} -  \boldsymbol{a}_{\hat{k}}^{\top}\mathbf{W}_{V}^{(t)}\boldsymbol{u}\rvert , \ }\lvert\boldsymbol{v}^{\top}\mathbf{W}_{V}^{(t+1)}\boldsymbol{v}^{\prime} -  \boldsymbol{v}^{\top}\mathbf{W}_{V}^{(t)}\boldsymbol{v}^{\prime} \rvert < 0.01 (\boldsymbol{a}_{\hat{k}}^{\top}\mathbf{W}_{V}^{(t+1)}\boldsymbol{a}_{\hat{k}} -  \boldsymbol{a}_{\hat{k}}^{\top}\mathbf{W}_{V}^{(t)}\boldsymbol{a}_{\hat{k}}), 
        \end{aligned}
    \end{equation}
\label{lem:aWva_scale}\end{lemma}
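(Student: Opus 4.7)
The plan is to start from the closed-form gradient of $\mathbf{W}_V$ in Lemma~\ref{lem: gradients of QKV} and analyze the one-step update by reading off two nearly independent ``sides'' of the rank-one structure. Writing the update as
\[
\boldsymbol{u}^\top(\mathbf{W}_V^{(t+1)}-\mathbf{W}_V^{(t)})\boldsymbol{v}
= \eta q_V\,\mathbb{E}_{\mathcal{P}_{\text{QA}}^{\text{tr}}}\!\left[\frac{\bigl(\boldsymbol{u}^\top\mathbf{\Pi}^{\mathbf{u}_{k_{\mathbf{y}}}-\sum_k\omega_k\mathbf{u}_k}_{(\mathbf{W}_V\mathbf{S}\boldsymbol{\pi})^\perp}\bigr)\,\bigl((\mathbf{S}\boldsymbol{\pi})^\top\boldsymbol{v}\bigr)}{\|\mathbf{W}_V\mathbf{S}\boldsymbol{\pi}\|}\right],
\]
the task reduces to (a) a dictionary-side factor $\boldsymbol{u}^\top\mathbf{\Pi}^\cdot_\perp$ controlled by the geometry of $\mathbf{U}$, and (b) an input-side factor $(\mathbf{S}\boldsymbol{\pi})^\top\boldsymbol{v}$ determined by which vectors appear at key/value positions of $\mathbf{S}$. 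The denominator $\|\mathbf{W}_V\mathbf{S}\boldsymbol{\pi}\|=\Theta(\sigma_1 d^{1/2})$ is already pinned down by Lemma~\ref{lem:firstphase_natural_bound}, so I only need sharp estimates for (a) and (b).

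For the aligned projection $\boldsymbol{a}_{\hat k}^\top\Delta\mathbf{W}_V\boldsymbol{a}_{\hat k}$, I plan to exploit orthogonality of $\{\boldsymbol{a}_k\}$ among themselves and with $\{\boldsymbol{b}_k\},\{\boldsymbol{\nu}_{k'}\}$. Then $(\mathbf{S}_{\hat n}\boldsymbol{\pi}_{\hat n})^\top\boldsymbol{a}_{\hat k}$ equals $\pi_{\hat n,m_{\mathbf{S}_{\hat n}}}^{(t)}$ up to noise concentration terms from Lemma~\ref{lem:noise scale}, and is only non-negligible for $\hat n\in\mathcal{N}_{\hat k}$; for $\hat n\notin\mathcal{N}_{\hat k}$ the input-side is $O(\sigma_p)$. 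On the dictionary side, $\boldsymbol{a}_{\hat k}^\top\mathbf{u}_{k_{\mathbf{y}_{\hat n}}}=\Theta(1)$ because $\mathbf{u}_{k_{\mathbf{y}_{\hat n}}}$ is the normalization of $\boldsymbol{a}_{\hat k}\pm\boldsymbol{b}_{\hat k}$, while $\boldsymbol{a}_{\hat k}^\top(\sum_k\omega_k\mathbf{u}_k)=O(K/K')$ using the uniform bound $\omega_k=\Theta(1/K')$, and the projection subtraction $\boldsymbol{a}_{\hat k}^\top(\mathbf{W}_V\mathbf{S}\boldsymbol{\pi})/\|\mathbf{W}_V\mathbf{S}\boldsymbol{\pi}\|\cdot(\cdots)$ is $o(1)$ by Lemma~\ref{lem:firstphase_natural_bound}. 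Using $|\mathcal{N}_{\hat k}|=\Theta(N/K)$ from Lemma~\ref{lem: enough data concept k} and averaging over the batch yields the claimed $\Theta$-expression with the factor $\pi_{\hat n,m_{\mathbf{S}_{\hat n}}}^{(t)}/(2NM\|\mathbf{W}_V\mathbf{S}_{\hat n}\boldsymbol{\pi}_{\hat n}^{(t)}\|)$.

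For the non-aligned updates, the critical structural fact is that in a QA sentence the key/value positions $l=1,\ldots,L-1$ contain only $\boldsymbol{\nu}_{n,m}+\boldsymbol{\xi}_{n,m}$ and $\boldsymbol{a}_{k_{\mathbf{S}_n}}+\boldsymbol{\xi}_{n,m_{\mathbf{S}_n}}$; no $\boldsymbol{b}_k$ ever appears there (the $\mathbf{x}$ containing $\pm\boldsymbol{b}_k$ sits in the query slot $L$ where $\pi_L=0$). Hence for $\boldsymbol{v}\in\{\boldsymbol{b}_s\}\cup\{\boldsymbol{\xi}\}$ the input-side $(\mathbf{S}\boldsymbol{\pi})^\top\boldsymbol{v}$ is only $O(\sigma_p\sqrt{\log(\cdot)})$ by Lemma~\ref{lem:noise scale}, which combined with $\sigma_p=O(d^{-1/2}/C)$ is far below the aligned scale $\Theta(1/M)$. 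For $\boldsymbol{u}=\boldsymbol{a}_s$ with $s\ne\hat k$, the dictionary side $\boldsymbol{a}_s^\top(\mathbf{u}_{k_\mathbf{y}}-\sum_k\omega_k\mathbf{u}_k)$ vanishes to leading order by the same orthogonality argument, leaving only $O(1/K')$ residuals. For $\boldsymbol{v}=\boldsymbol{\nu}_{k'}$, I use the concentration bound $|\mathcal{V}_{\mathcal{N}_k,k'}|=\Theta(NM/(KK'))$ from Lemma~\ref{lem:number_of_specifi_commontoken}; the per-sample attention $\pi_{n,m}\le 1$ and the smallness of $1/K'$ together force the accumulated contribution to be dominated by the aligned update by a factor $<0.01$ under $K'=\Omega(\max\{M,K\})$. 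Pairwise projections $\boldsymbol{v}^\top\mathbf{W}_V\boldsymbol{v}'$ for $\boldsymbol{v},\boldsymbol{v}'\in\{\boldsymbol{b},\boldsymbol{\xi}\}$ are handled by combining the two smallness mechanisms on both sides.

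The main obstacle I anticipate is the projection correction term $\bigl(\boldsymbol{u}^\top\tfrac{\mathbf{W}_V\mathbf{S}\boldsymbol{\pi}}{\|\mathbf{W}_V\mathbf{S}\boldsymbol{\pi}\|}\bigr)\bigl(\tfrac{(\mathbf{u}_{k_\mathbf{y}}-\sum_k\omega_k\mathbf{u}_k)^\top\mathbf{W}_V\mathbf{S}\boldsymbol{\pi}}{\|\mathbf{W}_V\mathbf{S}\boldsymbol{\pi}\|}\bigr)$: since it mixes $\boldsymbol{u}$ with dictionary tokens through the evolving value-side direction, one must keep it uniformly $o(1)$ across $\boldsymbol{u}\in\{\boldsymbol{a}_s,\boldsymbol{b}_s,\boldsymbol{\nu}_{k'},\boldsymbol{\xi}\}$ throughout $[0,T_1]$. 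This is exactly what the last three lines of Lemma~\ref{lem:firstphase_natural_bound} provide, so the argument is inductive: assuming Eq.~(\ref{eq:induction_first_phase}) up to iteration $t$, one gets those uniform one-step bounds, then propagates them to iteration $t+1$ via the rank-one update, closing the induction. The constant $0.01$ in the statement is engineered to absorb the $O(1/K')$, $O(\sigma_p)$, $O(\sigma_0^2 d^{1/2})$, and $o(1)$ slack accumulated from these estimates under Condition~\ref{con:main body}.
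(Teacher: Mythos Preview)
Your proposal is correct and follows essentially the same route as the paper: both exploit the rank-one structure of $\partial_{\mathbf{W}_V}L$ from Lemma~\ref{lem: gradients of QKV}, separating the update into a dictionary-side factor $\boldsymbol{u}^\top\mathbf{\Pi}^{\cdot}_{\perp}$ and an input-side factor $(\mathbf{S}\boldsymbol{\pi})^\top\boldsymbol{v}$, then invoking orthogonality of the concept vectors, the uniform bound $\omega_k=\Theta(1/K')$, the projection-correction estimates of Lemma~\ref{lem:firstphase_natural_bound}, and the cardinality concentrations of Lemmas~\ref{lem: enough data concept k}--\ref{lem:number_of_specifi_commontoken}. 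Your emphasis on the structural fact that no $\boldsymbol{b}_k$ appears in the key/value positions of a QA sentence (only in the query slot $L$ with $\pi_L=0$) is exactly what the paper uses implicitly when it writes $(\mathbf{S}_{\hat n}\boldsymbol{\pi}_{\hat n})^\top\boldsymbol{u}=\Theta((\pi_{\hat n,m_{\mathbf{S}_{\hat n}}}\boldsymbol{a}_{\hat k}+\sum_m\pi_{\hat n,m}\boldsymbol{\xi}_{\hat n,m})^\top\boldsymbol{u})$.

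One minor difference worth noting: the paper's proof additionally works out the transposed direction $|\boldsymbol{u}^\top\mathbf{W}_V^{(t+1)}\boldsymbol{a}_{\hat k}-\boldsymbol{u}^\top\mathbf{W}_V^{(t)}\boldsymbol{a}_{\hat k}|$ (Eq.~\eqref{eq:gradients_of_bWa} and line~2 of Eq.~\eqref{eq:gradients_of_other_concept_irrelevant}), where for $\boldsymbol{u}=\boldsymbol{b}_{\hat k}$ the dictionary-side is $\pm 1/\sqrt{2}$ and the smallness comes from the near-balance $|\mathcal{N}_{\hat k}^+|\approx|\mathcal{N}_{\hat k}^-|$ of Lemma~\ref{lem: enough data concept k} (label-sign cancellation), not from the input side. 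This mechanism is not needed for the lemma as stated (which only asserts $\boldsymbol{a}_{\hat k}$-on-the-left projections), but the paper records it because it feeds into the full induction of Lemma~\ref{lem:induction_first_phase}. Your proposal omits it, which is fine for the present statement.
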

for $\forall \hat{k} \in [K], k^{\prime} \in [K^{\prime}],\boldsymbol{u} \in \{ \boldsymbol{a}_{s}\}_{s \neq \hat{k} \in [K]} \cup\{\boldsymbol{b}_{s}\}_{s \in [K]} \cup \{ \boldsymbol{\xi}_{n,m} \}_{n\in[N],m\in[M]}$ and $\boldsymbol{v} \neq \boldsymbol{v}^{\prime} \in \{\boldsymbol{b}_{s}\}_{s \in [K]} \cup \{ \boldsymbol{\xi}_{n,m} \}_{n\in[N],m\in[M]}$.

\begin{proof}
    To examine the last inequality, we first examine the update of $\mathbf{W}_{V}^{(t)}$. By Lemma \ref{lem: gradients of QKV}, denote $\mathcal{N}_{k}^{y} \subset \mathcal{P}_{\text{QA}}^{\text{tr}}$ as the index set of QA samples with their high-level task concept $k_{\mathbf{S}}=k \in [K]$ and task-specific label real value $y_{k_{\mathbf{S}}} = y \in [\pm 1]$, for $\forall \hat{k} \in [K], k^{\prime} \neq \hat{k}, n^{\prime} \in \mathcal{N}_{k^{\prime}}$, we have the following near-orthogonal relationship 
    \begin{equation}
    \begin{aligned}
        &(\mathbf{S}_{n^{\prime}}\boldsymbol{\pi}_{n^{\prime}}^{(t)})^{\top}\boldsymbol{a}_{\hat{k}} \leq \Theta(\dfrac{\boldsymbol{a}_{\hat{k}}^{\top}( \sum_{m \neq m_{\mathbf{S}_{n^{\prime}}} \in [M]} (\boldsymbol{\xi}_{n^{\prime}, m} + \boldsymbol{\nu}_{n^{\prime},m})+\boldsymbol{\xi}_{n^{\prime}, m_{\mathbf{S}_{n^{\prime}}}} + \boldsymbol{a}_{k^{\prime}})}{M})  \leq \Theta(\sigma_{p} \cdot \sqrt{2 \log (\dfrac{3 (2K+K^{\prime}) N M}{\delta})})\\
        & \phantom{\boldsymbol{a}_{\hat{k}}^{\top}(\mathbf{u}_{k_{\mathbf{y}_{n^{\prime}}}} - (\frac{{(\mathbf{u}_{k_{\mathbf{y}_{n^{\prime}}}}-\sum_{k \in [7K+K^{\prime}]}\omega_{{n}^{\prime},k}\mathbf{u}_{k})^{\top}(\mathbf{W}_{V}^{(t)}\mathbf{S}_{n^{\prime}}\boldsymbol{\pi}_{n^{\prime}}^{(t)})}}{{\|\mathbf{W}_{V}^{(t)}\mathbf{S}_{n^{\prime}}\boldsymbol{\pi}_{n^{\prime}}^{(t)}\|}})\frac{\mathbf{W}_{V}^{(t)}\mathbf{S}_{n^{\prime}}\boldsymbol{\pi}_{n^{\prime}}^{(t)}}{{\|\mathbf{W}_{V}^{(t)}\mathbf{S}_{n^{\prime}}\boldsymbol{\pi}_{n^{\prime}}^{(t)}\|}})}=o(0.001)\\
        & \lvert\boldsymbol{a}_{\hat{k}}^{\top}(\mathbf{u}_{k_{\mathbf{y}_{n^{\prime}}}} - (\frac{{(\mathbf{u}_{k_{\mathbf{y}_{n^{\prime}}}}-\sum_{k \in [7K+K^{\prime}]}\omega_{{n}^{\prime},k}\mathbf{u}_{k})^{\top}(\mathbf{W}_{V}^{(t)}\mathbf{S}_{n^{\prime}}\boldsymbol{\pi}_{n^{\prime}}^{(t)})}}{{\|\mathbf{W}_{V}^{(t)}\mathbf{S}_{n^{\prime}}\boldsymbol{\pi}_{n^{\prime}}^{(t)}\|}})\frac{\mathbf{W}_{V}^{(t)}\mathbf{S}_{n^{\prime}}\boldsymbol{\pi}_{n^{\prime}}^{(t)}}{{\|\mathbf{W}_{V}^{(t)}\mathbf{S}_{n^{\prime}}\boldsymbol{\pi}_{n^{\prime}}^{(t)}\|}}) \rvert \leq \lvert o( \frac{\boldsymbol{a}_{\hat{k}}^{\top}\mathbf{W}_{V}^{(t)}\mathbf{S}_{n^{\prime}}\boldsymbol{\pi}_{n^{\prime}}^{(t)}}{{\|\mathbf{W}_{V}^{(t)}\mathbf{S}_{n^{\prime}}\boldsymbol{\pi}_{n^{\prime}}^{(t)}\|}})\rvert \\
        & \phantom{\boldsymbol{a}_{\hat{k}}^{\top}(\mathbf{u}_{k_{\mathbf{y}_{n^{\prime}}}} - (\frac{{(\mathbf{u}_{k_{\mathbf{y}_{n^{\prime}}}}-\sum_{k \in [7K+K^{\prime}]}\omega_{{n}^{\prime},k}\mathbf{u}_{k})^{\top}(\mathbf{W}_{V}^{(t)}\mathbf{S}_{n^{\prime}}\boldsymbol{\pi}_{n^{\prime}}^{(t)})}}{{\|\mathbf{W}_{V}^{(t)}\mathbf{S}_{n^{\prime}}\boldsymbol{\pi}_{n^{\prime}}^{(t)}\|}})\frac{\mathbf{W}_{V}^{(t)}\mathbf{S}_{n^{\prime}}\boldsymbol{\pi}_{n^{\prime}}^{(t)}}{{\|\mathbf{W}_{V}^{(t)}\mathbf{S}_{n^{\prime}}\boldsymbol{\pi}_{n^{\prime}}^{(t)}\|}})}=o(-0.01) 
    \end{aligned}
    \label{eq:other_concept_irrelevant}\end{equation}
    Here, the first result is by $\boldsymbol{a}_{\hat{k}}\perp \boldsymbol{a}_{{k}^{\prime}}$ (Eq.(\ref{eq:induction_first_phase})): $\Theta (\frac{1}{M}) \leq  \pi_{n, m_{\mathbf{S}_{n}}}^{(t)} \leq  \Theta(\frac{1}{1+(M-1)e^{-\sigma_{0}^{2} d e^{ q_{V}^{-1}\sigma_{1}^{2}d}}}), \ \Theta (\frac{e^{-\sigma_{0}^{2} d e^{ q_{V}^{-1}\sigma_{1}^{2}d}}}{1+(M-1)e^{-\sigma_{0}^{2} d e^{ q_{V}^{-1}\sigma_{1}^{2}d}}})\leq \pi_{n, \neg m_{\mathbf{S}_{n}}}^{(t)} \leq \Theta (\frac{1}{M})$ as well as the low noise condition $\sigma_{p} \leq d^{-1/2}/C$ in Condition \ref{con:main body}; the second result is due to Eq.(\ref{eq:natualbounds_first_phase}). The first $(0.001)$ can actually be smaller if we require a larger $C$ in Condition \ref{con:main body}, but for the simplicity of presentation, we here choose a feasible one. We latter would show that $(\mathbf{S}\boldsymbol{\pi}_{n^{\prime}}^{(t)})^{\top}\boldsymbol{a}_{\hat{k}}$ would maintain the scale during the whole iteration.

    Then we have
    \begin{equation}
        \begin{aligned}
            \boldsymbol{a}_{\hat{k}}^{\top}\mathbf{W}_{V}^{(t+1)}\boldsymbol{a}_{\hat{k}} -  \boldsymbol{a}_{\hat{k}}^{\top}\mathbf{W}_{V}^{(t)}\boldsymbol{a}_{\hat{k}} & = \eta q_{V} \mathbb{E}_{\mathcal{P}_{\text{QA}}^{\text{tr}}}[ \frac{\boldsymbol{a}_{\hat{k}}^{\top}\mathbf{\Pi}_{(\mathbf{W}_{V}\mathbf{S}_{n}\boldsymbol{\pi}_{n}^{(t)})^{\perp}}^{\mathbf{u}_{k_{\mathbf{y}_{n}}}-\sum_{k \in [7K+K^{\prime}]}\omega_{n,k}\mathbf{u}_{k}}(\mathbf{S}\boldsymbol{\pi}_{n}^{(t)})^{\top}\boldsymbol{a}_{\hat{k}}}{\|\mathbf{W}_{V}^{(t)}\mathbf{S}_{n}\boldsymbol{\pi}_{n}^{(t)}\|}]\\
            &= \dfrac{\eta q_{V}}{N} \sum_{k\in [K]} \sum_{y \in [\pm 1]} \sum_{n \in \mathcal{N}_{k}^{y}}[ \frac{\boldsymbol{a}_{\hat{k}}^{\top}\mathbf{\Pi}_{(\mathbf{W}_{V}\mathbf{S}_{n}\boldsymbol{\pi}_{n}^{(t)})^{\perp}}^{\mathbf{u}_{k_{\mathbf{y}_{n}}}-\sum_{k \in [7K+K^{\prime}]}\omega_{n,k}\mathbf{u}_{k}}(\mathbf{S}\boldsymbol{\pi}_{n}^{(t)})^{\top}\boldsymbol{a}_{\hat{k}}}{\|\mathbf{W}_{V}^{(t)}\mathbf{S}_{n}\boldsymbol{\pi}_{n}^{(t)}\|}]\\
            &= \dfrac{\eta q_{V}}{N}  \sum_{y \in [\pm 1]} \sum_{{\hat{n}} \in \mathcal{N}_{\hat{k}}^{y}}\Theta([ \frac{\boldsymbol{a}_{\hat{k}}^{\top}\mathbf{\Pi}_{(\mathbf{W}_{V}\mathbf{S}_{{\hat{n}}}\boldsymbol{\pi}_{{\hat{n}}}^{(t)})^{\perp}}^{\mathbf{u}_{k_{\mathbf{y}_{{\hat{n}}}}}-\sum_{k \in [7K+K^{\prime}]}\omega_{\hat{n},k}\mathbf{u}_{k}}(\mathbf{S}\boldsymbol{\pi}_{{\hat{n}}}^{(t)})^{\top}\boldsymbol{a}_{\hat{k}}}{\|\mathbf{W}_{V}^{(t)}\mathbf{S}_{{\hat{n}}}\boldsymbol{\pi}_{{\hat{n}}}^{(t)}\|}])\\
            &= \dfrac{\eta q_{V}}{N}  \sum_{\substack{y \in [\pm 1]\\\hat{n} \in \mathcal{N}_{\hat{k}}^{y}}}\Theta([ \frac{\boldsymbol{a}_{\hat{k}}^{\top}(\mathbf{u}_{k_{\mathbf{y}_{\hat{n}}}} - (\frac{{(\mathbf{u}_{k_{\mathbf{y}_{\hat{n}}}}-\sum_{k \in [7K+K^{\prime}]}\omega_{\hat{n},k}\mathbf{u}_{k})^{\top}(\mathbf{W}_{V}^{(t)}\mathbf{S}_{\hat{n}}\boldsymbol{\pi}_{\hat{n}}^{(t)})}}{{\|\mathbf{W}_{V}^{(t)}\mathbf{S}_{\hat{n}}\boldsymbol{\pi}_{\hat{n}}^{(t)}\|}})\frac{\mathbf{W}_{V}^{(t)}\mathbf{S}_{\hat{n}}\boldsymbol{\pi}_{\hat{n}}^{(t)}}{{\|\mathbf{W}_{V}^{(t)}\mathbf{S}_{\hat{n}}\boldsymbol{\pi}_{\hat{n}}^{(t)}\|}})}{\|\mathbf{W}_{V}^{(t)}\mathbf{S}_{\hat{n}}\boldsymbol{\pi}_{\hat{n}}^{(t)}\|((\pi_{\hat{n}, m_{\mathbf{S}_{\hat{n}}}}^{(t)}\boldsymbol{a}_{\hat{k}}+\sum_{m}(\pi_{\hat{n}, m}^{(t)}\boldsymbol{\xi}_{\hat{n},m}))^{\top}\boldsymbol{a}_{\hat{k}})^{-1}}])\\
            & = \dfrac{\eta q_{V}}{N}  \sum_{\hat{n} \in \mathcal{N}_{\hat{k}} }\Theta([ \frac{(1/2)(M \pi_{\hat{n}, m_{\mathbf{S}_{\hat{n}}}}^{(t)}+\sigma_{p} M\cdot(M\pi_{\hat{n}, \neg m_{\mathbf{S}_{\hat{n}}}}^{(t)}) \sqrt{2 \log ({3 (2K+K^{\prime}) N M}{\delta}^{-1})})}{M\|\mathbf{W}_{V}^{(t)}\mathbf{S}_{\hat{n}}\boldsymbol{\pi}_{\hat{n}}^{(t)}\|}])\\
            & = \Theta([\sum_{\hat{n} \in \mathcal{N}_{\hat{k}}} \frac{{\eta q_{V}}  (M \pi_{\hat{n}, m_{\mathbf{S}_{\hat{n}}}}^{(t)}\pm 0.001)}{2{N}M\|\mathbf{W}_{V}^{(t)}\mathbf{S}_{\hat{n}}\boldsymbol{\pi}_{\hat{n}}^{(t)}\|}]) \\
        \end{aligned}
    \label{eq:gradients_of_aWa}\end{equation}
    Here, the first and second equalities are by definition; the third equality is by Eq.(\ref{eq:other_concept_irrelevant}); the forth equality is by definition of $\mathbf{\Pi}_{\boldsymbol{u}^{\perp}}^{\boldsymbol{v}}$ in Eq.(\ref{eq:dictionary_vector_weight}); the fifth equality is by the definition of $\mathbf{u}_{k}$ ($ \mathbf{u}_{k_{\mathbf{y}_{\hat{n}}}} = \operatorname{LN}(\boldsymbol{a}_{k_{\mathbf{S}_{n}}}+y_{k_{\mathbf{S}_{n}}}\boldsymbol{b}_{k_{\mathbf{S}_{n}}}) $), Lemma \ref{lem:initialization} as well as Eq.(\ref{eq:natualbounds_first_phase}); the last equality is by the low noise condition in Condition \ref{con:main body}. Note that $\eta \leq o(q_{V}^{-1}\sigma_{1}^{2}d^{\frac{1}{2}}K \sqrt{\log(\frac{{K^{\prime}}^{2}}{\delta})})$

    Similarly, we have
    \begin{equation}
        \begin{aligned}
            \lvert \boldsymbol{b}_{\hat{k}}^{\top}\mathbf{W}_{V}^{(t+1)}\boldsymbol{a}_{\hat{k}} -  \boldsymbol{b}_{\hat{k}}^{\top}\mathbf{W}_{V}^{(t)}\boldsymbol{a}_{\hat{k}} \rvert & = \eta q_{V} \lvert\mathbb{E}_{\mathcal{P}_{\text{QA}}^{\text{tr}}}[ \frac{\boldsymbol{b}_{\hat{k}}^{\top}\mathbf{\Pi}_{(\mathbf{W}_{V}^{(t)}\mathbf{S}_{n}\boldsymbol{\pi}_{n}^{(t)})^{\perp}}^{\mathbf{u}_{k_{\mathbf{y}_{n}}}-\sum_{k \in [7K+K^{\prime}]}\omega_{n,k}\mathbf{u}_{k}}(\mathbf{S}_{n}\boldsymbol{\pi}_{n}^{(t)})^{\top}\boldsymbol{a}_{\hat{k}}}{\|\mathbf{W}_{V}^{(t)}\mathbf{S}_{n}\boldsymbol{\pi}_{n}^{(t)}\|}]\rvert\\
            &= \dfrac{\eta q_{V}}{N} \sum_{k\in [K]} \sum_{y \in [\pm 1]} \sum_{n \in \mathcal{N}_{k}^{y}}\lvert[ \frac{\boldsymbol{b}_{\hat{k}}^{\top}\mathbf{\Pi}_{(\mathbf{W}_{V}\mathbf{S}_{n}\boldsymbol{\pi}_{n}^{(t)})^{\perp}}^{\mathbf{u}_{k_{\mathbf{y}_{n}}}-\sum_{k \in [7K+K^{\prime}]}\omega_{n,k}\mathbf{u}_{k}}(\mathbf{S}\boldsymbol{\pi}_{n}^{(t)})^{\top}\boldsymbol{a}_{\hat{k}}}{\|\mathbf{W}_{V}^{(t)}\mathbf{S}_{n}\boldsymbol{\pi}_{n}^{(t)}\|}]\lvert\\
            &= \dfrac{\eta q_{V}}{N}  \sum_{y \in [\pm 1]} \sum_{{\hat{n}} \in \mathcal{N}_{\hat{k}}^{y}}\Theta(\lvert[ \frac{\boldsymbol{b}_{\hat{k}}^{\top}\mathbf{\Pi}_{(\mathbf{W}_{V}\mathbf{S}_{{\hat{n}}}\boldsymbol{\pi}_{{\hat{n}}}^{(t)})^{\perp}}^{\mathbf{u}_{k_{\mathbf{y}_{{\hat{n}}}}}-\sum_{k \in [7K+K^{\prime}]}\omega_{\hat{n},k}\mathbf{u}_{k}}(\mathbf{S}\boldsymbol{\pi}_{{\hat{n}}}^{(t)})^{\top}\boldsymbol{a}_{\hat{k}}}{\|\mathbf{W}_{V}^{(t)}\mathbf{S}_{{\hat{n}}}\boldsymbol{\pi}_{{\hat{n}}}^{(t)}\|}]\rvert)\\
            &= \dfrac{\eta q_{V}}{N}  \sum_{\substack{y \in [\pm 1]\\\hat{n} \in \mathcal{N}_{\hat{k}}^{y}}}\Theta(\lvert[ \frac{\boldsymbol{b}_{\hat{k}}^{\top}(\mathbf{u}_{k_{\mathbf{y}_{\hat{n}}}} - (\frac{{(\mathbf{u}_{k_{\mathbf{y}_{\hat{n}}}}-\sum_{k \in [7K+K^{\prime}]}\omega_{\hat{n},k}\mathbf{u}_{k})^{\top}(\mathbf{W}_{V}^{(t)}\mathbf{S}_{\hat{n}}\boldsymbol{\pi}_{\hat{n}}^{(t)})}}{{\|\mathbf{W}_{V}^{(t)}\mathbf{S}_{\hat{n}}\boldsymbol{\pi}_{\hat{n}}^{(t)}\|}})\frac{\mathbf{W}_{V}^{(t)}\mathbf{S}_{\hat{n}}\boldsymbol{\pi}_{\hat{n}}^{(t)}}{{\|\mathbf{W}_{V}^{(t)}\mathbf{S}_{\hat{n}}\boldsymbol{\pi}_{\hat{n}}^{(t)}\|}})}{\|\mathbf{W}_{V}^{(t)}\mathbf{S}_{\hat{n}}\boldsymbol{\pi}_{\hat{n}}^{(t)}\|((\pi_{\hat{n}, m_{\mathbf{S}_{\hat{n}}}}^{(t)}\boldsymbol{a}_{\hat{k}}+\sum_{m}(\pi_{\hat{n}, m}^{(t)}\boldsymbol{\xi}_{\hat{n},m}))^{\top}\boldsymbol{a}_{\hat{k}})^{-1}}]\rvert)\\
            & \leq \Theta(\sum_{\hat{n} \in \mathcal{N}_{\hat{k}}}[ \frac{\eta q_{V}((1+(0.001+0.01))/2-(1-(0.001+0.01))/2)(M \pi_{\hat{n}, m_{\mathbf{S}_{\hat{n}}}}^{(t)}+0.001)/2}{N M\|\mathbf{W}_{V}^{(t)}\mathbf{S}_{\hat{n}}\boldsymbol{\pi}_{\hat{n}}^{(t)}\|}])\\
            &= \Theta([ \sum_{\hat{n} \in \mathcal{N}_{\hat{k}}}\frac{\eta q_{V}(0.01) \pi_{\hat{n}, m_{\mathbf{S}_{\hat{n}}}}^{(t)}}{2NM\|\mathbf{W}_{V}^{(t)}\mathbf{S}_{\hat{n}}\boldsymbol{\pi}_{\hat{n}}^{(t)}\|}]) < < \boldsymbol{a}_{\hat{k}}^{\top}\mathbf{W}_{V}^{(t+1)}\boldsymbol{a}_{\hat{k}} -  \boldsymbol{a}_{\hat{k}}^{\top}\mathbf{W}_{V}^{(t)}\boldsymbol{a}_{\hat{k}}.\\
        \end{aligned}\label{eq:gradients_of_bWa}\end{equation}
    Here, the deduction of the first-to-forth equality follows Eq.(\ref{eq:gradients_of_aWa}); the fifth inequality is by Lemma \ref{lem: enough data concept k}, the definition of $\mathbf{u}_{k}$ ($ \mathbf{u}_{k_{\mathbf{y}_{\hat{n}}}} = \operatorname{LN}(\boldsymbol{a}_{k_{\mathbf{S}_{n}}}+y_{k_{\mathbf{S}_{n}}}\boldsymbol{b}_{k_{\mathbf{S}_{n}}}) $), Eq.(\ref{eq:natualbounds_first_phase}) as well as overparameterization condition $d=\Omega(M^{2}\log({K^{\prime}}^{2}N^2M^2/\delta))$. \textbf{The value \(0.001\) can be further reduced by increasing \(C\) in Condition \ref{con:main body}. However, for simplicity, we opt for a illustrative choice here.} Similarly, for $\forall \hat{k} \in [K], k^{\prime} \in [K^{\prime}],\boldsymbol{u} \in \{ \boldsymbol{a}_{s}\}_{s \neq \hat{k} \in [K]} \cup\{\boldsymbol{b}_{s}\}_{s \in [K]} \cup \{ \boldsymbol{\xi}_{n,m} \}_{n\in[N],m\in[M]}$ and $\boldsymbol{v} \neq \boldsymbol{v}^{\prime} \in \{\boldsymbol{b}_{s}\}_{s \in [K]} \cup \{ \boldsymbol{\xi}_{n,m} \}_{n\in[N],m\in[M]}$, we would have the following
    \begin{equation}
        \begin{aligned}
            \lvert \boldsymbol{a}_{\hat{k}}^{\top}\mathbf{W}_{V}^{(t+1)}\boldsymbol{u} -  \boldsymbol{a}_{\hat{k}}^{\top}\mathbf{W}_{V}^{(t)}\boldsymbol{u}\rvert &  \leq \Theta(\sum_{\hat{n} \in \mathcal{N}_{\hat{k}}}[ \frac{\eta q_{V}(0.001)}{2NM\|\mathbf{W}_{V}^{(t)}\mathbf{S}_{\hat{n}}\boldsymbol{\pi}_{\hat{n}}^{(t)}\|}]), \\
            \lvert \boldsymbol{u}^{\top}\mathbf{W}_{V}^{(t+1)}\boldsymbol{a}_{\hat{k}} -  \boldsymbol{u}^{\top}\mathbf{W}_{V}^{(t)}\boldsymbol{a}_{\hat{k}}\rvert &  \leq \Theta(\sum_{\hat{n} \in \mathcal{N}_{\hat{k}}}[ \frac{\eta q_{V}(0.01 )M \pi_{\hat{n}, m_{\mathbf{S}_{\hat{n}}}}^{(t)}}{2NM\|\mathbf{W}_{V}^{(t)}\mathbf{S}_{\hat{n}}\boldsymbol{\pi}_{\hat{n}}^{(t)}\|}]),\\
            \lvert\boldsymbol{v}^{\top}\mathbf{W}_{V}^{(t+1)}\boldsymbol{v} -  \boldsymbol{v}^{\top}\mathbf{W}_{V}^{(t)}\boldsymbol{v} \rvert&  \leq \Theta(\sum_{\hat{n} \in \mathcal{N}_{\hat{k}}}[ \frac{\eta q_{V}(0.01)(0.001)}{2NM\|\mathbf{W}_{V}^{(t)}\mathbf{S}_{\hat{n}}\boldsymbol{\pi}_{\hat{n}}^{(t)}\|}]), \\
            \lvert\boldsymbol{v}^{\top}\mathbf{W}_{V}^{(t+1)}\boldsymbol{v}^{\prime} -  \boldsymbol{v}^{\top}\mathbf{W}_{V}^{(t)}\boldsymbol{v}^{\prime} \rvert&  \leq \Theta(\sum_{\hat{n} \in \mathcal{N}_{\hat{k}}}[ \frac{\eta q_{V}(0.01)(0.001)}{2NM\|\mathbf{W}_{V}^{(t)}\mathbf{S}_{\hat{n}}\boldsymbol{\pi}_{\hat{n}}^{(t)}\|}]),\end{aligned}\label{eq:gradients_of_other_concept_irrelevant}\end{equation}
    Here, $(0.001)$ is by the term $(\mathbf{S}\boldsymbol{\pi}_{{\hat{n}}}^{(t)})^{\top}\boldsymbol{u}=\Theta((\pi_{\hat{n}, m_{\mathbf{S}_{\hat{n}}}}^{(t)}\boldsymbol{a}_{\hat{k}}+\sum_{m}(\pi_{\hat{n}, m}^{(t)}\boldsymbol{\xi}_{\hat{n},m}))^{\top}\boldsymbol{u})$ that would appear if $\boldsymbol{u}$ is on the right side, Lemma \ref{lem:initialization} and the low noise condition $\sigma_{p}=O(d^{-1/2})$ in Condition \ref{con:main body}; $(0.01)$ is by the third and forth inner products in our Eq.(\ref{eq:natualbounds_first_phase}) as well as overparameterization condition $d=\Omega(M^{2}\log({K^{\prime}}^{2}N^2M^2/\delta))$, Lemma \ref{lem:initialization} and the low noise condition $\sigma_{p}=O(d^{-1/2})$ in Condition \ref{con:main body}, as well as the balanced property of the QA data deduced in Lemma \ref{lem: enough data concept k}. 

    Besides, it holds that for $\forall k \in [K], k^{\prime} \in [K^{\prime}]$, we have
    \begin{equation}
        \begin{aligned}
             \lvert\boldsymbol{\nu}_{k^{\prime}}^{\top}\mathbf{W}_{V}^{(t+1)}\boldsymbol{a}_{{k}} - \boldsymbol{\nu}_{k^{\prime}}^{\top}\mathbf{W}_{V}^{(t)}\boldsymbol{a}_{{k}}\rvert             &= \dfrac{\eta q_{V}}{N}  \sum_{\substack{y \in [\pm 1]\\\hat{n} \in \mathcal{N}_{\hat{k}}^{y}}}\Theta(\lvert[ \frac{\boldsymbol{\nu}_{k^{\prime}}^{\top}(\mathbf{u}_{k_{\mathbf{y}_{\hat{n}}}} - (\frac{{(\mathbf{u}_{k_{\mathbf{y}_{\hat{n}}}}-\sum_{k \in [7K+K^{\prime}]}\omega_{\hat{n},k}\mathbf{u}_{k})^{\top}(\mathbf{W}_{V}^{(t)}\mathbf{S}_{\hat{n}}\boldsymbol{\pi}_{\hat{n}}^{(t)})}}{{\|\mathbf{W}_{V}^{(t)}\mathbf{S}_{\hat{n}}\boldsymbol{\pi}_{\hat{n}}^{(t)}\|}})\frac{\mathbf{W}_{V}^{(t)}\mathbf{S}_{\hat{n}}\boldsymbol{\pi}_{\hat{n}}^{(t)}}{{\|\mathbf{W}_{V}^{(t)}\mathbf{S}_{\hat{n}}\boldsymbol{\pi}_{\hat{n}}^{(t)}\|}})}{\|\mathbf{W}_{V}^{(t)}\mathbf{S}_{\hat{n}}\boldsymbol{\pi}_{\hat{n}}^{(t)}\|((\pi_{\hat{n}, m_{\mathbf{S}_{\hat{n}}}}^{(t)}\boldsymbol{a}_{\hat{k}}+\sum_{m}(\pi_{\hat{n}, m}^{(t)}\boldsymbol{\xi}_{\hat{n},m}))^{\top}\boldsymbol{a}_{\hat{k}})^{-1}}]\rvert)\\
             &\leq \Theta(\sum_{\hat{n} \in \mathcal{N}_{\hat{k}}}[ \frac{\eta q_{V}(0.01)(M \pi_{\hat{n}, m_{\mathbf{S}_{\hat{n}}}}^{(t)}+0.001)}{2NM\|\mathbf{W}_{V}^{(t)}\mathbf{S}_{\hat{n}}\boldsymbol{\pi}_{\hat{n}}^{(t)}\|}]),\\
            \lvert\boldsymbol{\nu}_{k^{\prime}}^{\top}\mathbf{W}_{V}^{(t+1)}\boldsymbol{b}_{{k}} - \boldsymbol{\nu}_{k^{\prime}}^{\top}\mathbf{W}_{V}^{(t)}\boldsymbol{b}_{{k}}\rvert             & \leq \Theta(\sum_{\hat{n} \in \mathcal{N}_{\hat{k}}}[ \frac{\eta q_{V}(0.01)(0.001)}{2NM\|\mathbf{W}_{V}^{(t)}\mathbf{S}_{\hat{n}}\boldsymbol{\pi}_{\hat{n}}^{(t)}\|}]),
        \end{aligned}
    \label{eq:gradients_of_nuWab}\end{equation}
    where the left $(0.01)$ is by (Eq.(\ref{eq:natualbounds_first_phase})) as well as the overparameterization condition $d = \Omega(M^{2}\log({K^{\prime}}^{2}N^2M^2/\delta))$ in Condition \ref{con:main body}. The right term, $(M \pi_{\hat{n}, m_{\mathbf{S}_{\hat{n}}}}^{(t)} + 0.001)$ or $(0.001)$, follows from the calculations of $(\mathbf{S}\boldsymbol{\pi}_{{\hat{n}}}^{(t)})^{\top}\boldsymbol{u} = \Theta((\pi_{\hat{n}, m_{\mathbf{S}_{\hat{n}}}}^{(t)}\boldsymbol{a}_{\hat{k}} + \sum_{m}(\pi_{\hat{n}, m}^{(t)}\boldsymbol{\xi}_{\hat{n},m}))^{\top}\boldsymbol{u}), \boldsymbol{u} \in \{\boldsymbol{a}_{k}, \boldsymbol{b}_{k}\}_{k \in [K]}$, based on $\Theta (\frac{1}{M}) \leq \pi_{n, m_{\mathbf{S}_{n}}}^{(t)} \leq \Theta(\frac{1}{1+(M-1)e^{-\sigma_{0}^{2} d e^{ q_{V}^{-1}\sigma_{1}^{2}d}}})$, $\Theta (\frac{e^{-\sigma_{0}^{2} d e^{ q_{V}^{-1}\sigma_{1}^{2}d}}}{1+(M-1)e^{-\sigma_{0}^{2} d e^{ q_{V}^{-1}\sigma_{1}^{2}d}}}) \leq \pi_{n, \neg m_{\mathbf{S}_{n}}}^{(t)} \leq \Theta (\frac{1}{M})$ in Eq.(\ref{eq:induction_first_phase}), as well as the low noise condition $\sigma_{p} = O(d^{-1/2})$ in Condition \ref{con:main body}. In addition, denotes $\boldsymbol{a}_{\hat{k}}=\boldsymbol{\nu}_{n,m}$, by Lemma \ref{lem:number_of_specifi_commontoken} and $K^{\prime} = \Omega(M)$ in Condition \ref{con:main body}, as well as the strategy above, the following holds
        \begin{equation}
        \begin{aligned}
             \lvert\boldsymbol{a}_{{k}}^{\top}\mathbf{W}_{V}^{(t+1)}\boldsymbol{\nu}_{k^{\prime}} - \boldsymbol{a}_{{k}}^{\top}\mathbf{W}_{V}^{(t)}\boldsymbol{\nu}_{k^{\prime}}\rvert             &= \dfrac{\eta q_{V}}{N}  \sum_{\substack{y \in [\pm 1]\\\hat{n} \in \mathcal{N}_{\hat{k}}^{y}}}\Theta(\lvert[ \frac{\boldsymbol{a}_{{k}}^{\top}(\mathbf{u}_{k_{\mathbf{y}_{\hat{n}}}} - (\frac{{(\mathbf{u}_{k_{\mathbf{y}_{\hat{n}}}}-\sum_{k \in [7K+K^{\prime}]}\omega_{\hat{n},k}\mathbf{u}_{k})^{\top}(\mathbf{W}_{V}^{(t)}\mathbf{S}_{\hat{n}}\boldsymbol{\pi}_{\hat{n}}^{(t)})}}{{\|\mathbf{W}_{V}^{(t)}\mathbf{S}_{\hat{n}}\boldsymbol{\pi}_{\hat{n}}^{(t)}\|}})\frac{\mathbf{W}_{V}^{(t)}\mathbf{S}_{\hat{n}}\boldsymbol{\pi}_{\hat{n}}^{(t)}}{{\|\mathbf{W}_{V}^{(t)}\mathbf{S}_{\hat{n}}\boldsymbol{\pi}_{\hat{n}}^{(t)}\|}})}{\|\mathbf{W}_{V}^{(t)}\mathbf{S}_{\hat{n}}\boldsymbol{\pi}_{\hat{n}}^{(t)}\|((\sum_{m}(\pi_{\hat{n}, m}^{(t)}\boldsymbol{\nu}_{\hat{n},m}+\boldsymbol{\xi}_{\hat{n},m})^{\top}\boldsymbol{\nu}_{k^{\prime}})^{-1}}]\rvert) \\
             &\leq \Theta(\sum_{\hat{n} \in \mathcal{N}_{\hat{k}} \cap \mathcal{N}_{\boldsymbol{\nu}_{k^{\prime}}}}[ \frac{\eta q_{V}(M \pi_{\hat{n}, m_{n,\boldsymbol{\nu}_{k^{\prime}}}}^{(t)}+0.001)}{2 N M \|\mathbf{W}_{V}^{(t)}\mathbf{S}_{\hat{n}}\boldsymbol{\pi}_{\hat{n}}^{(t)}\|}]) \leq \Theta(\sum_{\hat{n} \in \mathcal{N}_{\hat{k}} }[ \frac{\eta q_{V}M \pi_{\hat{n}, m_{n,\boldsymbol{\nu}_{k^{\prime}}}}^{(t)}(0.01)}{2NM\|\mathbf{W}_{V}^{(t)}\mathbf{S}_{\hat{n}}\boldsymbol{\pi}_{\hat{n}}^{(t)}\|}]) ,\\
            \lvert\boldsymbol{b}_{{k}}^{\top}\mathbf{W}_{V}^{(t+1)}\boldsymbol{\nu}_{k^{\prime}} - \boldsymbol{b}_{{k}}^{\top}\mathbf{W}_{V}^{(t)}\boldsymbol{\nu}_{k^{\prime}}\rvert  & \leq \Theta(\sum_{\hat{n} \in \mathcal{N}_{\hat{k}}\cap \mathcal{N}_{\boldsymbol{\nu}_{k^{\prime}}}}[ \frac{\eta q_{V}(0.01)(M \pi_{\hat{n}, m_{n,\boldsymbol{\nu}_{k^{\prime}}}}^{(t)}+0.001)}{2NM\|\mathbf{W}_{V}^{(t)}\mathbf{S}_{\hat{n}}\boldsymbol{\pi}_{\hat{n}}^{(t)}\|}]) \\
            &\leq \Theta(\sum_{\hat{n} \in \mathcal{N}_{\hat{k}}}[ \frac{\eta q_{V}M \pi_{\hat{n}, m_{n,\boldsymbol{\nu}_{k^{\prime}}}}^{(t)}(0.01)^{2}}{2NM\|\mathbf{W}_{V}^{(t)}\mathbf{S}_{\hat{n}}\boldsymbol{\pi}_{\hat{n}}^{(t)}\|}]),
        \end{aligned}
    \label{eq:gradients_of_abWnu}\end{equation}
    where $\mathcal{N}_{\boldsymbol{\nu}_{k^{\prime}}}\subset \mathcal{P}_{\text{QA}}^{\text{tr}}$ denote the index set of QA samples with $\boldsymbol{\nu}_{k^{\prime}}$ in the sentence, and $m_{n,\boldsymbol{\nu}_{k^{\prime}}}$ denotes the position index of $\boldsymbol{\nu}_{k^{\prime}}$ when $n \in \mathcal{N}_{\boldsymbol{\nu}_{k^{\prime}}}$.

    By Eq.(\ref{eq:gradients_of_aWa}), (\ref{eq:gradients_of_bWa}), (\ref{eq:gradients_of_other_concept_irrelevant}), (\ref{eq:gradients_of_nuWab}), (\ref{eq:gradients_of_abWnu}), \( \Theta (\frac{1}{M}) \leq  \pi_{n, m_{\mathbf{S}_{n}}}^{(t)} \leq  \Theta(\frac{1}{1+(M-1)e^{-\sigma_{0}^{2} d e^{ q_{V}^{-1}\sigma_{1}^{2}d}}}), \ \Theta (\frac{e^{-\sigma_{0}^{2} d e^{ q_{V}^{-1}\sigma_{1}^{2}d}}}{1+(M-1)e^{-\sigma_{0}^{2} d e^{ q_{V}^{-1}\sigma_{1}^{2}d}}})\leq \pi_{n, \neg m_{\mathbf{S}_{n}}}^{(t)} \leq \Theta (\frac{1}{M}) \) that for $\boldsymbol{v}^{\top}\mathbf{W}_{V}^{(t)}\boldsymbol{u}, \boldsymbol{v}, \boldsymbol{u} \in \{ \boldsymbol{a}_{s}\}_{s \in [K]} \cup\{\boldsymbol{b}_{s}\}_{s \in [K]}\cup \{\boldsymbol{\nu}_{k^{\prime}}\}_{k^{\prime} \in [K^{\prime}]} \cup \{ \boldsymbol{\xi}_{n,m} \}_{n\in[N],m\in[M]}$, only when $\boldsymbol{v} = \boldsymbol{u} = \boldsymbol{a}_{k}, \forall k \in [K]$ the products would grow in a non-negligible manner. Therefore, by definition of $m_{\mathbf{S}_{n}}$ and low noise condition $\sigma_{p} = O(d^{-1/2})$, the primary contributor to the change of \( \|\mathbf{W}_{V}^{(t)}\mathbf{S}_{n}\boldsymbol{\pi}_{n}^{(t)}\|^2 \) is the increase in ${\pi}_{n, m_{\mathbf{S}_{n}}}^{(t)}$ and $\boldsymbol{a}_{{k}}^{\top}\mathbf{W}_{V}^{(t)} \boldsymbol{a}_{{k}}$

\end{proof}
The following lemma examines two scenarios for the upper and lower bounds of the growth evolution of \( \boldsymbol{a}_{k}^{\top}\mathbf{W}_{V}^{(t)}\boldsymbol{a}_{k} \).

\begin{lemma}
Under Condition \ref{con:main body}, suppose Eq.~(\ref{eq:induction_first_phase}) holds at iteration \( t \leq T_{1}\leq t_{2}^{\star} \), where $t_{2}^{\star} :=(\eta q_{V})^{-1}C_{2}^{-1} \sigma_{1}d^{1/2} K( d^{1/2}- \sqrt{2\log(\frac{8(2K+K^{\prime})^{2}}{\delta})})$. Then for \(\forall k \in [K]\) and $n \in \mathcal{N}_{k}$, there exists some $C_{1-2}, C_{1-2}^{\prime} ,\hat{C}_{1-2}>0$ such that

        \begin{equation}
            \begin{aligned}
                 &\dfrac{C_{1}  \eta q_{V} t}{ \sigma_{1}d^{1/2} MK} - \sqrt{2\log(\frac{8(2K+K^{\prime})^{2}}{\delta})}\sigma_{1}  \leq \boldsymbol{a}_{k}^{\top}\mathbf{W}_{V}^{(t)}\boldsymbol{a}_{k} \leq  \dfrac{C_{2}\eta q_{V} t}{\sigma_{1}d^{1/2} K} + \sqrt{2\log(\frac{8(2K+K^{\prime})^{2}}{\delta})}\sigma_{1},\\
                & \dfrac{\hat{C_{1}}C_{1}\eta q_{V} t}{ \sigma_{1}^{2}d M^2K} - \frac{\hat{C_{1}}\sqrt{2\log(\frac{8(2K+K^{\prime})^{2}}{\delta})}}{Md^{1/2}} \leq \dfrac{\boldsymbol{a}_{k}^{\top}\mathbf{W}_{V}^{(t)}\mathbf{S}_{n}{\pi}_{n, m_{\mathbf{S}_{n}}}^{(t)}\boldsymbol{e}_{m_{\mathbf{S}_{n}}}}{\|\mathbf{W}_{V}^{(t)}\mathbf{S}_{n}\boldsymbol{\pi}_{n}^{(t)}\|}\leq \frac{\hat{C}_{2}C_{2 }\eta q_{V} t}{\sigma_{1}^{2}d  K} + \frac{\hat{C}_{2}\sqrt{2\log(\frac{8(2K+K^{\prime})^{2}}{\delta})}}{\sigma_{1}d^{1/2}}.\\
            \end{aligned}
        \label{eq:bounds_of_aVa_1}\end{equation}
\label{lem:evolution_aWa_norm_projection}\end{lemma}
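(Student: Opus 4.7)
\medskip

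\noindent\textbf{Proof proposal for Lemma \ref{lem:evolution_aWa_norm_projection}.}
The plan is to reduce both claims to the one-step recurrence already computed in Lemma~\ref{lem:aWva_scale}, then invoke the linear continuous surrogate from Lemma~\ref{lem:ODE-1}. Specifically, Lemma~\ref{lem:aWva_scale} gives, for every $k\in[K]$,
\[
\boldsymbol{a}_k^{\top}\mathbf{W}_V^{(t+1)}\boldsymbol{a}_k-\boldsymbol{a}_k^{\top}\mathbf{W}_V^{(t)}\boldsymbol{a}_k
\;=\;\Theta\!\Bigl(\sum_{\hat n\in\mathcal N_k}\frac{\eta q_V\,M\pi_{\hat n,m_{\mathbf{S}_{\hat n}}}^{(t)}}{2NM\|\mathbf{W}_V^{(t)}\mathbf{S}_{\hat n}\boldsymbol{\pi}_{\hat n}^{(t)}\|}\Bigr).
\]
During the window $t\leq T_1\leq t_2^{\star}$, I would substitute three ingredients from Lemma~\ref{lem:firstphase_natural_bound} and Lemma~\ref{lem: enough data concept k}: (i) $\|\mathbf{W}_V^{(t)}\mathbf{S}_{\hat n}\boldsymbol{\pi}_{\hat n}^{(t)}\|=\Theta(\sigma_1 d^{1/2})$, (ii) $\Theta(1/M)\leq \pi_{\hat n,m_{\mathbf{S}_{\hat n}}}^{(t)}\leq \Theta\bigl(1/(1+(M-1)e^{-\sigma_0^2 d e^{q_V^{-1}\sigma_1^2 d}})\bigr)$, and (iii) $|\mathcal N_k|=\Theta(N/K)$. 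Together these collapse the bracketed sum into a quantity independent of $t$, sandwiched between $C_1\eta q_V/(\sigma_1 d^{1/2}MK)$ and $C_2\eta q_V/(\sigma_1 d^{1/2}K)$ for some absolute constants.

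Next, the recurrence has the form $a_{t+1}=a_t+b$ with $b$ constant (once sandwich bounds are used separately for the two sides). Applying Lemma~\ref{lem:ODE-1} yields $a_t=bt+a_0$, and accounting for the initialization scale $|\boldsymbol{a}_k^{\top}\mathbf{W}_V^{(0)}\boldsymbol{a}_k|\leq \sqrt{2\log(8(2K+K')^2/\delta)}\,\sigma_1$ from Lemma~\ref{lem:initialization} gives exactly the first pair of inequalities in \eqref{eq:bounds_of_aVa_1}. The condition $T_1\leq t_2^{\star}$ is what guarantees the denominator $\|\mathbf{W}_V^{(t)}\mathbf{S}_{\hat n}\boldsymbol{\pi}_{\hat n}^{(t)}\|$ has not yet left the $\Theta(\sigma_1 d^{1/2})$ regime; this is exactly the threshold at which the upper bound $C_2\eta q_V t/(\sigma_1 d^{1/2}K)$ becomes comparable to $\sigma_1 d^{1/2}$, so the window is consistent.

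For the second pair of bounds on $\boldsymbol{a}_k^{\top}\mathbf{W}_V^{(t)}\mathbf{S}_n\pi_{n,m_{\mathbf{S}_n}}^{(t)}\boldsymbol{e}_{m_{\mathbf{S}_n}}/\|\mathbf{W}_V^{(t)}\mathbf{S}_n\boldsymbol{\pi}_n^{(t)}\|$, I would expand $\mathbf{S}_n\boldsymbol{e}_{m_{\mathbf{S}_n}}=\boldsymbol{a}_{k_{\mathbf{S}_n}}+\boldsymbol{\xi}_{n,m_{\mathbf{S}_n}}$ and split:
\[
\frac{\pi_{n,m_{\mathbf{S}_n}}^{(t)}\bigl(\boldsymbol{a}_k^{\top}\mathbf{W}_V^{(t)}\boldsymbol{a}_{k}+\boldsymbol{a}_k^{\top}\mathbf{W}_V^{(t)}\boldsymbol{\xi}_{n,m_{\mathbf{S}_n}}\bigr)}{\|\mathbf{W}_V^{(t)}\mathbf{S}_n\boldsymbol{\pi}_n^{(t)}\|}.
\]
The noise cross-term is controlled using Lemma~\ref{lem:initialization} at $t=0$ together with the negligible updates $|\boldsymbol{a}_k^{\top}\mathbf{W}_V^{(t+1)}\boldsymbol{\xi}_{n,m}-\boldsymbol{a}_k^{\top}\mathbf{W}_V^{(t)}\boldsymbol{\xi}_{n,m}|$ established in \eqref{eq:gradients_of_other_concept_irrelevant}, so it remains at scale $\sqrt{2\log(\cdot)}\sigma_1\sigma_p d^{1/2}\ll \sqrt{2\log(\cdot)}$. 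Plugging in $\|\mathbf{W}_V^{(t)}\mathbf{S}_n\boldsymbol{\pi}_n^{(t)}\|=\Theta(\sigma_1 d^{1/2})$ and the $\pi$-bounds above, and using the just-proved bounds on $\boldsymbol{a}_k^{\top}\mathbf{W}_V^{(t)}\boldsymbol{a}_k$, yields the second set of inequalities with constants $\hat C_{1-2}$ absorbing the $\pi$-ratio $\Theta(1/M)$ vs.\ $\Theta(1)$ factors.

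The main obstacle I foresee is ensuring that the ``constant'' per-step bracketed factor truly is constant across $t\leq t_2^{\star}$: both $\pi_{n,m_{\mathbf{S}_n}}^{(t)}$ and the denominator $\|\mathbf{W}_V^{(t)}\mathbf{S}_n\boldsymbol{\pi}_n^{(t)}\|$ a priori drift, and they drift in a coupled way through the MLP projection itself. The resolution is to rely on the induction setup in Lemma~\ref{lem:induction_first_phase}, which controls all off-diagonal and noise projections of $\mathbf{W}_V^{(t)}$ and the attention projections, so that the norm is dominated by initialization until $\boldsymbol{a}_k^{\top}\mathbf{W}_V^{(t)}\boldsymbol{a}_k$ itself reaches the $\sigma_1 d^{1/2}$ scale --- which happens precisely at $t=t_2^{\star}$. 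Thus the linear-growth surrogate is valid throughout the claimed interval, and the constants $C_{1-2},\hat C_{1-2}$ can be read off from the sandwich inequalities.
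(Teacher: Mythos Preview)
Your proposal is correct and follows essentially the same approach as the paper's proof: both feed the one-step increment from Lemma~\ref{lem:aWva_scale} into the linear surrogate Lemma~\ref{lem:ODE-1}, using Lemma~\ref{lem:firstphase_natural_bound} for the $\Theta(\sigma_1 d^{1/2})$ norm, the $\pi$-sandwich, and Lemma~\ref{lem: enough data concept k} for $|\mathcal N_k|$, then append the initialization offset from Lemma~\ref{lem:initialization}. The paper's handling of the second ratio is slightly terser (it simply multiplies the first bound by $\pi_{n,m_{\mathbf{S}_n}}^{(t)}/\|\mathbf{W}_V^{(t)}\mathbf{S}_n\boldsymbol{\pi}_n^{(t)}\|$ and invokes monotonicity), but your explicit noise-splitting is equivalent and arguably cleaner.
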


\begin{proof}
    By Lemma \ref{lem:initialization} and Condition \ref{con:main body}.

    At $t=0$, by Lemma \ref{lem: enough data concept k} and Eq.(\ref{eq:natualbounds_first_phase}) we have
    \begin{equation}
        \begin{aligned}
            \boldsymbol{a}_{k}^{\top}\mathbf{W}_{V}^{(t+1)}\boldsymbol{a}_{k} -  \boldsymbol{a}_{k}^{\top}\mathbf{W}_{V}^{(t)}\boldsymbol{a}_{k}& \geq \Theta( \dfrac{\eta q_{V}}{2MK (  \sigma_{1}d^{1/2})}), 
        \end{aligned}
    \end{equation}
    Therefore, by Lemma \ref{lem:ODE-1} and Lemma \ref{lem:initialization}, for some $C_{1}>0$ we have
    \[
    \boldsymbol{a}_{k}^{\top}\mathbf{W}_{V}^{(t)}\boldsymbol{a}_{k} \geq C_{1} \dfrac{\eta q_{V} t}{ \sigma_{1}d^{1/2} MK} - \sqrt{2\log(\frac{8(2K+K^{\prime})^{2}}{\delta})}\sigma_{1}.
    \]
    for $0\leq t \leq T_{1}$.
    Define the positive estimate of the time point when the lower bound evolution of $\boldsymbol{a}_{k}^{\top}\mathbf{W}_{V}^{(t)}\boldsymbol{a}_{k}$ has potential to reach $O(\sigma_{1} d^{1/2})$ as $t_{1}^{\star}:= (\eta q_{V})^{-1}C_{1}^{-1} \sigma_{1}^{2} d^{1/2} MK (d^{1/2} + \sqrt{2\log(\frac{8(2K+K^{\prime})^{2}}{\delta})})$. Note that $t_{2}^{\star} \leq t_{1}^{\star}$ thus we have $T_{1} \leq  t_{1}^{\star}$. Collaborating with the non-decreasing nature of $\boldsymbol{a}_{k}^{\top}\mathbf{W}_{V}^{(t)}\boldsymbol{a}_{k}$, the non-increasing nature of $g(x)=1/x$, \( \Theta (\frac{1}{M}) \leq  \pi_{n, m_{\mathbf{S}_{n}}}^{(t)} \leq  \Theta(\frac{1}{1+(M-1)e^{-\sigma_{0}^{2} d e^{ q_{V}^{-1}\sigma_{1}^{2}d}}}), \ \Theta (\frac{e^{-\sigma_{0}^{2} d e^{ q_{V}^{-1}\sigma_{1}^{2}d}}}{1+(M-1)e^{-\sigma_{0}^{2} d e^{ q_{V}^{-1}\sigma_{1}^{2}d}}})\leq \pi_{n, \neg m_{\mathbf{S}_{n}}}^{(t)} \leq \Theta (\frac{1}{M}) \), for some  $\hat{C_{1}}>0$, we directly have
    \[
    \begin{aligned}
                 \dfrac{\boldsymbol{a}_{k}^{\top}\mathbf{W}_{V}^{(t)}\mathbf{S}_{n} {\pi}_{n, m_{\mathbf{S}_{n}}}^{(t)}\boldsymbol{e}_{m_{\mathbf{S}_{n}}}}{\|\mathbf{W}_{V}^{(t)}\mathbf{S}_{n}\boldsymbol{\pi}_{n}^{(t)}\|} & \geq \hat{C_{1}}{\pi}_{n, m_{\mathbf{S}_{n}}}^{(t)}( C_{1}\dfrac{\eta q_{V} t}{ \sigma_{1}^{2}d M K} - \sqrt{2\log(\frac{8(2K+K^{\prime})^{2}}{\delta})} / (1.02 d^{1/2})).\\
                 & \geq \hat{C_{1}}( C_{1}\dfrac{\eta q_{V} t}{ \sigma_{1}^{2}d M^2K} - \sqrt{2\log(\frac{8(2K+K^{\prime})^{2}}{\delta})} / (Md^{1/2})).
    \end{aligned}
    \]

    In terms of the upper bound, by similar approaches, for some $C_{2}, C_{2}^{\prime},\hat{C}_{2}>0$ we have
    \[
    \begin{aligned}
       \boldsymbol{a}_{k}^{\top}\mathbf{W}_{V}^{(t)}\boldsymbol{a}_{k}& \leq  \dfrac{C_{2}\eta q_{V} t}{\sigma_{1}d^{1/2} K} + \sqrt{2\log(\frac{8(2K+K^{\prime})^{2}}{\delta})}\sigma_{1},\\
        \dfrac{\boldsymbol{a}_{k}^{\top}\mathbf{W}_{V}^{(t)}\mathbf{S}_{n} {\pi}_{n, m_{\mathbf{S}_{n}}}^{(t)}\boldsymbol{e}_{m_{\mathbf{S}_{n}}}}{\|\mathbf{W}_{V}^{(t)}\mathbf{S}_{n}\boldsymbol{\pi}_{n}^{(t)}\|}& \leq \frac{\hat{C}_{2}C_{2 }\eta q_{V} t}{\sigma_{1}^{2}d  K} + \frac{\hat{C}_{2}\sqrt{2\log(\frac{8(2K+K^{\prime})^{2}}{\delta})}}{\sigma_{1}d^{1/2}},\\
    \end{aligned}
    \]

    for $0 \leq t \leq t_{2}^{\star}:=  (\eta q_{V})^{-1}C_{2}^{-1} \sigma_{1}^{2}d^{1/2} K( d^{1/2}- \sqrt{2\log(\frac{8(2K+K^{\prime})^{2}}{\delta})})$. Note that as $d=\Omega(M^{2}\log(\frac{{K^{\prime}}^2N^2M^2}{\delta}))$ by Condition \ref{con:main body}, we have $t_{2}^{\star}>>0$. We can thus appropriately choose $T_{1} \leq t_{2}^{\star}$.

    We also have
    \[
        \begin{aligned}
        \boldsymbol{a}_{k}^{\top}\mathbf{W}_{V}^{(T_{1})}\boldsymbol{a}_{k} \geq &\dfrac{C_{2}^{-1}C_{1}O(\sigma_{1} d^{1/2})}{ M} - \sqrt{2\log(\frac{8(2K+K^{\prime})^{2}}{\delta})}\sigma_{1}\\
        \geq& \dfrac{\bar{C}_{1} \sigma_{1} d^{1/2}}{ M} - \sqrt{2\log(\frac{8(2K+K^{\prime})^{2}}{\delta})}\sigma_{1},\\
       \end{aligned}
    \]
\end{proof}

\begin{lemma}
Under Condition \ref{con:main body}, suppose Eq.~(\ref{eq:induction_first_phase}) holds at iteration \( t \leq T_{1} \), then for \(\forall k \in [K]\) and $n \in \mathcal{N}_{k}$, it holds that that
\begin{equation}
    \begin{aligned}
        &\Theta(\frac{ \sigma_{p}^{2}d}{M^{2}})\leq  \dfrac{(\mathbf{W}_{V}^{(t)}\mathbf{S}_{n}\boldsymbol{\pi}_{n}^{(t)})^{\top}}{\|\mathbf{W}_{V}^{(t)}\mathbf{S}_{n}\boldsymbol{\pi}_{n}^{(t)}\|} \dfrac{\mathbf{W}_{V}^{(t)}\mathbf{S}_{n}{\pi}_{n,m_{\mathbf{S}_{n}}}^{(t)}\boldsymbol{e}_{n,m_{\mathbf{S}_{n}}}}{\|\mathbf{W}_{V}^{(t)}\mathbf{S}_{n}\boldsymbol{\pi}_{n}^{(t)}\|} \leq \Theta(({\pi}_{n,m_{\mathbf{S}_{n}}}^{(t)})^{2})
    \end{aligned}
\label{eq:bounds_of_hTh_m_3}\end{equation}
for $0 \leq t \leq T_{1}$.
\end{lemma}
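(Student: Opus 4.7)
The plan is to expand the quadratic form directly and then isolate the dominant contributions using the concentration bounds from Lemma~\ref{lem:initialization} together with the scale information already established for Phase~1 (namely $\|\mathbf{W}_V^{(t)}\mathbf{S}_n\boldsymbol{\pi}_n^{(t)}\|=\Theta(\sigma_1 d^{1/2})$ from Lemma~\ref{lem:firstphase_natural_bound}, and the projection bounds in Eq.~\eqref{eq:induction_first_phase}). Writing $\mathbf{S}_n\boldsymbol{e}_{m_{\mathbf{S}_n}}=\boldsymbol{a}_{k_{\mathbf{S}_n}}+\boldsymbol{\xi}_{n,m_{\mathbf{S}_n}}$ and $\mathbf{S}_n\boldsymbol{e}_i=\boldsymbol{\nu}_{n,i}+\boldsymbol{\xi}_{n,i}$ for $i\neq m_{\mathbf{S}_n}$, the inner product in the numerator decomposes as
\[
\pi_{n,m_{\mathbf{S}_n}}^{(t)}\Big[\pi_{n,m_{\mathbf{S}_n}}^{(t)}\|\mathbf{W}_V^{(t)}(\boldsymbol{a}_{k_{\mathbf{S}_n}}+\boldsymbol{\xi}_{n,m_{\mathbf{S}_n}})\|^2+\sum_{i\neq m_{\mathbf{S}_n}}\pi_{n,i}^{(t)}\big(\mathbf{W}_V^{(t)}(\boldsymbol{\nu}_{n,i}+\boldsymbol{\xi}_{n,i})\big)^{\!\top}\mathbf{W}_V^{(t)}(\boldsymbol{a}_{k_{\mathbf{S}_n}}+\boldsymbol{\xi}_{n,m_{\mathbf{S}_n}})\Big].
\]

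For the upper bound, the diagonal term is controlled by $\|\mathbf{W}_V^{(t)}\boldsymbol{a}_{k_{\mathbf{S}_n}}\|^2+\|\mathbf{W}_V^{(t)}\boldsymbol{\xi}_{n,m_{\mathbf{S}_n}}\|^2+2|\boldsymbol{a}_{k_{\mathbf{S}_n}}^\top\mathbf{W}_V^{(t)\top}\mathbf{W}_V^{(t)}\boldsymbol{\xi}_{n,m_{\mathbf{S}_n}}|$. Since the Phase~1 dynamics in Lemma~\ref{lem:aWva_scale} grow only $\boldsymbol{a}_k^\top\mathbf{W}_V^{(t)}\boldsymbol{a}_k$ in a non-negligible manner (all other projections stay near initialization), one can show $\|\mathbf{W}_V^{(t)}\boldsymbol{a}_{k_{\mathbf{S}_n}}\|^2,\|\mathbf{W}_V^{(t)}\boldsymbol{\xi}_{n,m_{\mathbf{S}_n}}\|^2=O(\sigma_1^2 d)$ by pairing Lemma~\ref{lem:initialization} with the induction hypothesis. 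The off-diagonal cross terms ($i\neq m_{\mathbf{S}_n}$) are each bounded by $O(\sqrt{\log((2K+K')^2/\delta)}\,\sigma_1^2 d^{1/2}\cdot\tfrac{1}{M})$ after applying Lemma~\ref{lem:initialization} and $\pi_{n,i}^{(t)}=\Theta(1/M)$; summed over $M-1$ positions they contribute $o(\sigma_1^2 d)$ under $d=\Omega(M^2\log(\cdot))$. Dividing by $\|\mathbf{W}_V^{(t)}\mathbf{S}_n\boldsymbol{\pi}_n^{(t)}\|^2=\Theta(\sigma_1^2 d)$ therefore yields the claimed $O((\pi_{n,m_{\mathbf{S}_n}}^{(t)})^2)$ upper bound.

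For the lower bound, I would isolate the noise self-term $(\pi_{n,m_{\mathbf{S}_n}}^{(t)})^2\|\mathbf{W}_V^{(t)}\boldsymbol{\xi}_{n,m_{\mathbf{S}_n}}\|^2$, which by Lemma~\ref{lem:initialization} and the fact that updates to $\mathbf{W}_V$ along $\boldsymbol{\xi}_{n,\cdot}$-directions are negligible during $t\leq T_1$ (by Eq.~\eqref{eq:gradients_of_other_concept_irrelevant}) satisfies $\|\mathbf{W}_V^{(t)}\boldsymbol{\xi}_{n,m_{\mathbf{S}_n}}\|^2=\Theta(\sigma_1^2\sigma_p^2 d^2)$. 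Combined with $\pi_{n,m_{\mathbf{S}_n}}^{(t)}\geq\Theta(1/M)$ and dividing by $\|\mathbf{W}_V^{(t)}\mathbf{S}_n\boldsymbol{\pi}_n^{(t)}\|^2=\Theta(\sigma_1^2 d)$, this contribution alone gives $\Theta(\sigma_p^2 d/M^2)$. The rest of the expansion must be shown to not cancel this lower bound: the signal self-term $\|\mathbf{W}_V^{(t)}\boldsymbol{a}_{k_{\mathbf{S}_n}}\|^2$ is strictly nonnegative, and all other cross terms are of strictly smaller order ($o(\sigma_p^2 d/M^2)$) after applying the near-orthogonality estimates from Lemma~\ref{lem:initialization} and the low-noise condition $\sigma_p\leq d^{-1/2}/C$.

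\textbf{Main obstacle.} The nontrivial step is keeping careful track of the signs of the cross terms between $\boldsymbol{a}_{k_{\mathbf{S}_n}}$, the common tokens $\boldsymbol{\nu}_{n,i}$, and the noise $\boldsymbol{\xi}_{n,i}$ in the lower-bound argument: one needs each such term to be provably $o(\sigma_p^2 d/M^2)$ in absolute value so that it cannot wipe out the $\sigma_p^2 d/M^2$ contribution. This requires combining the concentration bounds of Lemma~\ref{lem:initialization} with the fact (Lemma~\ref{lem:aWva_scale}, Eqs.~\eqref{eq:gradients_of_other_concept_irrelevant}--\eqref{eq:gradients_of_abWnu}) that non-$\boldsymbol{a}_k$-aligned entries of $\mathbf{W}_V^{(t)}$ remain at their initialization scale throughout Phase~1, and carefully invoking $d=\Omega(M^2\log(\cdot))$ from Condition~\ref{con:main body} to absorb the $M-1$ off-diagonal summands.
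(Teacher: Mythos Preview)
Your approach is essentially the same as the paper's: both expand the inner product into the self-term $(\pi_{n,m_{\mathbf{S}_n}}^{(t)})^2\|\mathbf{W}_V^{(t)}\mathbf{S}_n\boldsymbol{e}_{m_{\mathbf{S}_n}}\|^2$ plus off-diagonal cross terms, bound the cross terms via Lemma~\ref{lem:initialization} and the Phase-1 induction hypothesis, and divide by $\|\mathbf{W}_V^{(t)}\mathbf{S}_n\boldsymbol{\pi}_n^{(t)}\|^2=\Theta(\sigma_1^2 d)$. The upper-bound argument is identical to the paper's.

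There is one imprecision in your lower-bound plan. You aim to show the off-diagonal cross terms are $o(\sigma_p^2 d/M^2)$ so they cannot cancel the noise self-term. But after normalization the cross terms are of order $\sqrt{\log(\cdot)}/(Md^{1/2})$, while Condition~\ref{con:main body} imposes no lower bound on $\sigma_p$; hence $\sigma_p^2 d/M^2$ can be arbitrarily small relative to the cross terms, and that comparison fails in general. The paper avoids this by comparing the cross terms to the \emph{full} self-term instead: it first shows
\[
(\mathbf{W}_V^{(t)}\mathbf{S}_n\boldsymbol{\pi}_n^{(t)})^\top\mathbf{W}_V^{(t)}\mathbf{S}_n\pi_{n,m_{\mathbf{S}_n}}^{(t)}\boldsymbol{e}_{m_{\mathbf{S}_n}}\;\geq\;\Theta\bigl(\|\mathbf{W}_V^{(t)}\mathbf{S}_n\pi_{n,m_{\mathbf{S}_n}}^{(t)}\boldsymbol{e}_{m_{\mathbf{S}_n}}\|^2\bigr),
\]
which works because $\|\mathbf{W}_V^{(t)}(\boldsymbol{a}_{k_{\mathbf{S}_n}}+\boldsymbol{\xi}_{n,m_{\mathbf{S}_n}})\|^2=\Theta(\sigma_1^2 d)$ (driven by the signal part $\|\mathbf{W}_V^{(t)}\boldsymbol{a}_{k_{\mathbf{S}_n}}\|^2$) dominates the cross terms under $d=\Omega(M^2\log(\cdot))$. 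Only \emph{after} the cross terms are absorbed does the paper extract the noise component $\|\mathbf{W}_V^{(t)}\boldsymbol{\xi}_{n,m_{\mathbf{S}_n}}\|^2\geq\Theta(\sigma_1^2\sigma_p^2 d^2)$ to land on the stated (weaker) bound $\Theta(\sigma_p^2 d/M^2)$. Your own remark that the signal self-term is nonnegative is exactly the ingredient needed here; you just need to reorder the logic so that the cross terms are compared against the full self-term rather than against the noise part alone.
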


\begin{proof}
    In terms of the lower bounds, in the scenario (1) and (2), for the numerator it holds that
    \[
    \begin{aligned}
        (\mathbf{W}_{V}^{(t)}\mathbf{S}_{n}\boldsymbol{\pi}_{n}^{(t)})^{\top}\mathbf{W}_{V}^{(t)}\mathbf{S}_{n}{\pi}_{n,m_{\mathbf{S}_{n}}}^{(t)}\boldsymbol{e}_{n,m_{\mathbf{S}_{n}}} &\geq  \Theta((\mathbf{W}_{V}^{(t)}\mathbf{S}_{n}{\pi}_{n,m_{\mathbf{S}_{n}}}^{(t)}\boldsymbol{e}_{n,m_{\mathbf{S}_{n}}})^{2})\\
        & \geq \Theta(\dfrac{1}{M^{2}} (\mathbf{W}_{V}^{(t)}\mathbf{S}_{n}\boldsymbol{e}_{n,m_{\mathbf{S}_{n}}})^{2} ) = \Theta(\dfrac{1}{M^{2}} (\mathbf{W}_{V}^{(t)}(\boldsymbol{a}_{k}+\boldsymbol{\xi}_{n,m_{\mathbf{S}_{n}}}))^{2} ) ,
    \end{aligned}
    \]
    where the inequalities are by $\mathbf{W}_{V}^{(t)}\mathbf{S}_{n}\boldsymbol{\pi}_{n}^{(t)} = \sum_{m\in [M]} \mathbf{W}_{V}^{(t)}\mathbf{S}_{n}{\pi}_{n,m}^{(t)}\boldsymbol{e}_{n,m}$, Eq.(\ref{eq:natualbounds_first_phase}) and \( \Theta (\frac{1}{M}) \leq  \pi_{n, m_{\mathbf{S}_{n}}}^{(t)} \leq  \Theta(\frac{1}{1+(M-1)e^{-\sigma_{0}^{2} d e^{ q_{V}^{-1}\sigma_{1}^{2}d}}}), \ \Theta (\frac{e^{-\sigma_{0}^{2} d e^{ q_{V}^{-1}\sigma_{1}^{2}d}}}{1+(M-1)e^{-\sigma_{0}^{2} d e^{ q_{V}^{-1}\sigma_{1}^{2}d}}})\leq \pi_{n, \neg m_{\mathbf{S}_{n}}}^{(t)} \leq \Theta (\frac{1}{M}) \).
    Therefore, the results are valid by considering the lower bounds of $\| {\mathbf{W}_{V}^{(0)}}\boldsymbol{\xi}_{l}\|^2\leq 3 {\sigma_{1}}^{2} {\sigma_{p}}^{2}  d^{2}$, and upper bound $\|\mathbf{W}_{V}^{(t)}\mathbf{S}_{n}\boldsymbol{\pi}_{n}^{(t)}\|= \Theta(\sigma_{1}d^{1/2})$ by Eq.(\ref{eq:natualbounds_first_phase}).

    Now consider the upper bound. Observe that
    \[
    \begin{aligned}
        \dfrac{(\mathbf{W}_{V}^{(t)}\mathbf{S}_{n}\boldsymbol{\pi}_{n}^{(t)})^{\top}}{\|\mathbf{W}_{V}^{(t)}\mathbf{S}_{n}\boldsymbol{\pi}_{n}^{(t)}\|} \dfrac{\mathbf{W}_{V}^{(t)}\mathbf{S}_{n}{\pi}_{n,m_{\mathbf{S}_{n}}}^{(t)}\boldsymbol{e}_{n,m_{\mathbf{S}_{n}}}}{\|\mathbf{W}_{V}^{(t)}\mathbf{S}_{n}\boldsymbol{\pi}_{n}^{(t)}\|} & = \frac{(\mathbf{W}_{V}^{(t)}\mathbf{S}_{n}{\pi}_{n,m_{\mathbf{S}_{n}}}^{(t)}\boldsymbol{e}_{n,m_{\mathbf{S}_{n}}})^{2}}{\|\mathbf{W}_{V}^{(t)}\mathbf{S}_{n}\boldsymbol{\pi}_{n}^{(t)}\|^2}\\
        & \phantom{=} +\frac{\sum_{m \neq m_{\mathbf{S}_{n}}}(\mathbf{W}_{V}^{(t)}\mathbf{S}_{n}{\pi}_{n,m_{\mathbf{S}_{n}}}^{(t)}\boldsymbol{e}_{n,m_{\mathbf{S}_{n}}})^{\top}(\mathbf{W}_{V}^{(t)}\mathbf{S}_{n}{\pi}_{n,m}^{(t)}\boldsymbol{e}_{n,m})}{\|\mathbf{W}_{V}^{(t)}\mathbf{S}_{n}\boldsymbol{\pi}_{n}^{(t)}\|^2}\\
        &\leq \dfrac{ ({\pi}_{n,m_{\mathbf{S}_{n}}}^{(t)})^{2}\Theta(\sigma_{1}^2d) + {\pi}_{n,m_{\mathbf{S}_{n}}}^{(t)}(\frac{M}{M})8\sqrt{ \log(\frac{16(2K+K^{\prime})^{2}}{\delta})} \sigma_{1}^{2}d^{1/2} }{\Theta(\sigma_{1}^2d)} \\
        &= \Theta(({\pi}_{n,m_{\mathbf{S}_{n}}}^{(t)})^{2}).\\
    \end{aligned}
    \]
    The first inequality is by the positive estimate of the products $(\mathbf{W}_{V}^{(t)}\mathbf{S}_{n}{\pi}_{n,m_{\mathbf{S}_{n}}}^{(t)}\boldsymbol{e}_{n,m_{\mathbf{S}_{n}}})^{\top}(\mathbf{W}_{V}^{(t)}\mathbf{S}_{n}{\pi}_{n,m}^{(t)}\boldsymbol{e}_{n,m})$, the lower and upper bounds of $\|\mathbf{W}_{V}^{(t)}\mathbf{S}_{n}\boldsymbol{\pi}_{n}^{(t)}\|$ as well as $\|\mathbf{W}_{V}^{(t)}\boldsymbol{a}_{k}\|$; the last inequality is by Eq.(\ref{eq:natualbounds_first_phase}).

\end{proof}

\begin{lemma}
    Under Condition \ref{con:main body}, suppose Eq.~(\ref{eq:induction_first_phase}) holds at iteration \( t \leq T_{1} \), then for $\forall \hat{k} \in [K], \boldsymbol{u} \in \{ \boldsymbol{a}_{s}\}_{s \neq \hat{k} \in [K]} \cup\{\boldsymbol{b}_{s}\}_{s \in [K]} \cup \{ \boldsymbol{\xi}_{n,m} \}_{n\in[N],m\in[M]}$ and $\boldsymbol{v} \neq \boldsymbol{v}^{\prime} \in \{\boldsymbol{b}_{s}\}_{s \in [K]} \cup \{ \boldsymbol{\xi}_{n,m} \}_{n\in[N],m\in[M]}$, there exist some positive constants $C_{3}, C_{4}, C_{5}, C_{6}$, such that
    \begin{itemize}
        \item If $\boldsymbol{a}_{\hat{k}}^{\top}\mathbf{W}_{V}^{(t)}\boldsymbol{a}_{\hat{k}}\leq 0 $ at $t=0$, after at most $t_{3} = (\eta q_{V})^{-1}C_{1}^{-1} \sigma_{1}^{2} d^{1/2} MK (  \sqrt{2\log(\frac{8(2K+K^{\prime})^{2}}{\delta})})$ iterations, $\boldsymbol{a}_{\hat{k}}^{\top}\mathbf{W}_{V}^{(t)}\boldsymbol{a}_{\hat{k}}$ would get positive. During this period, we have
        \begin{equation}
            \lvert (\mathbf{W}_{Q}^{(t)}\boldsymbol{a}_{\hat{k}} )^{\top}(\mathbf{W}_{K}^{(t)}\boldsymbol{a}_{\hat{k}})  \rvert \leq 4\sqrt{ \log(16(2K+K^{\prime})^{2}/\delta)} \sigma_{0}^{2} d^{1/2}.
        \end{equation}
        \item During $0\leq \boldsymbol{a}_{{k}}^{\top}\mathbf{W}_{V}^{(t)}\boldsymbol{a}_{{k}}\leq O(\sigma_{1} d^{1/2})$ within $[t_{4}, T_{1}]$, where $t_{4}$ satisfying $0\leq t_{4} \leq t_{3}$, it holds that
        \begin{equation}
        \begin{aligned}
           (\mathbf{W}_{Q}^{(t)}\boldsymbol{a}_{\hat{k}} )^{\top}(\mathbf{W}_{K}^{(t)}\boldsymbol{a}_{\hat{k}})& \geq \dfrac{C_{3} \eta^{2} q_{V} \sigma_{0}^{2} [(t-t_{4} - 1)^{2} - 1]}{\sigma_{1}^{2}  M^2K^{2} }   -4\sqrt{ \log(16(2K+K^{\prime})^{2}/\delta)} \sigma_{0}^{2} d^{1/2},\\
            (\mathbf{W}_{Q}^{(t)}\boldsymbol{a}_{\hat{k}} )^{\top}(\mathbf{W}_{K}^{(t)}\boldsymbol{a}_{\hat{k}}) &\leq C_{4}( (4\sqrt{ \log(16(2K+K^{\prime})^{2}/\delta)} \sigma_{0}^{2} d^{1/2}+\frac{3\sigma_{0}^{2} d}{2}) + \frac{3\sigma_{0}^{2} d}{2} \exp(\dfrac{\hat{C_{2}} C_{2}\eta^{2} q_{V}  t^{2}}{\sigma_{1}^{2} d K^{2}} ) ),\\
            (\mathbf{W}_{Q}^{(t)}\boldsymbol{a}_{\hat{k}} )^{\top}(\mathbf{W}_{K}^{(t)}\boldsymbol{u}),&\ (\mathbf{W}_{Q}^{(t)}\boldsymbol{v})^{\top}(\mathbf{W}_{K}^{(t)}\boldsymbol{v}),\  (\mathbf{W}_{Q}^{(t)}\boldsymbol{v})^{\top}(\mathbf{W}_{K}^{(t)}\boldsymbol{v}^{\prime}) = O (\sqrt{ \log(16(2K+K^{\prime})^{2}/\delta)} \sigma_{0}^{2} d^{1/2}).
        \end{aligned}
        \label{eq:first_phase_evolution_QK}\end{equation} 
        \item Furthermore, it holds that
        \begin{equation}
        \begin{aligned}
        (\mathbf{W}_{Q}^{(T_{1})}\boldsymbol{a}_{\hat{k}} )^{\top}(\mathbf{W}_{K}^{(T_{1})}\boldsymbol{a}_{\hat{k}}) \geq &  C_{6}  \frac{\sigma_{0}^{2}\sigma_{1}^{2}d^{2}}{M^2 q_{V}}-\dfrac{C_{3}\eta^{2} q_{V} \sigma_{0}^{2}}{\sigma_{1}^{2}  M^2K^{2} } -4C_{3}\sqrt{ \log(16(2K+K^{\prime})^{2}/\delta)} \sigma_{0}^{2} d^{1/2},\\
        (\mathbf{W}_{Q}^{(T_{1})}\boldsymbol{a}_{\hat{k}} )^{\top}(\mathbf{W}_{K}^{(T_{1})}\boldsymbol{a}_{\hat{k}})& \leq C_{5}  \sigma_{0}^{2} d e^{ q_{V}^{-1}\sigma_{1}^{2}d}.
        \end{aligned}
        \end{equation}
    \end{itemize}
\label{lem:update_attn_first_phase}\end{lemma}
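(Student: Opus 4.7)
My proposal is to track the evolution of the scalar projection $f_t^{\hat{k}} := (\mathbf{W}_{Q}^{(t)}\boldsymbol{a}_{\hat{k}})^{\top}(\mathbf{W}_{K}^{(t)}\boldsymbol{a}_{\hat{k}})$ using the gradient formulas for $\mathbf{W}_K$ and $\mathbf{W}_Q$ from Lemma \ref{lem: gradients of QKV}, and then to reduce the resulting discrete recursion to one of the continuous flow surrogates in Section \ref{sec:flows}. The starting observation is that the gradient of each attention matrix has a factor of the \emph{other} attention matrix acting on the prompt, together with the coefficient $\iota_{\mathbf{u}_{k_{\mathbf{y}}},j}$, which scales with the current value $\boldsymbol{a}_{\hat{k}}^{\top}\mathbf{W}_{V}^{(t)}\boldsymbol{a}_{\hat{k}}$ through the inner product $(\mathbf{u}_{k_{\mathbf{y}}}-\sum_{k}\omega_k\mathbf{u}_k)^{\top}(\mathbf{W}_V\mathbf{S}\pi_j\boldsymbol{e}_j)$. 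Expanding $f_{t+1}^{\hat{k}}-f_t^{\hat{k}}$ and using the near-orthogonality relations from Eq.~(\ref{eq:other_concept_irrelevant}) together with the attention-score scales in Eq.~(\ref{eq:induction_first_phase}), only terms indexed by $n\in\mathcal{N}_{\hat{k}}$ and positions $m_{\mathbf{S}_n}$ survive at leading order, yielding a recursion of the form $f_{t+1}^{\hat{k}} \approx f_t^{\hat{k}} + c\,\eta^2 q_V\, \iota_t\,(1-\pi_{m_{\mathbf{S}}})\, f_t^{\hat{k}}/(\text{normalization})$ plus a lower-order error of size $O(\eta^{2}q_V\sigma_0^{2}/(\sigma_1^{2}M^{2}K^{2}))$ inherited from initialization.

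Next I would partition time by the sign of the MLP projection. For Part 1 (the negative phase), Lemma \ref{lem:evolution_aWa_norm_projection} shows that $\boldsymbol{a}_{\hat{k}}^{\top}\mathbf{W}_V^{(t)}\boldsymbol{a}_{\hat{k}}$ grows at linear rate $\Theta(\eta q_V/(\sigma_1 d^{1/2}MK))$, so the initialization-driven negative value is erased within $t_3 \asymp (\eta q_V)^{-1}\sigma_1^2 d^{1/2}MK\sqrt{\log(\cdot)}$ steps; during this interval $\iota_{\mathbf{u}_{k_{\mathbf{y}}},j}$ stays $o(1)$ and can in fact be negative, so the cumulative change to $f_t^{\hat{k}}$ stays within the Gaussian initialization fluctuation $4\sqrt{\log(16(2K+K')^2/\delta)}\sigma_0^{2}d^{1/2}$ given by Lemma \ref{lem:initialization of qk}. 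The key point is that even over $t_3$ steps, the increment $t_3 \cdot(\text{per-step drift})$ remains below this scale because of the extra $q_V$ factor appearing in the gradient of $\mathbf{W}_V$ versus $\mathbf{W}_{Q,K}$.

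For Part 2, after the MLP sign-flip time $t_4\le t_3$, I plug the lower bound $\boldsymbol{a}_{\hat{k}}^{\top}\mathbf{W}_V^{(t)}\boldsymbol{a}_{\hat{k}}\gtrsim C_1\eta q_V(t-t_4)/(\sigma_1 d^{1/2}MK)$ from Eq.~(\ref{eq:bounds_of_aVa_1}) into the recursion. This produces $f_{t+1}^{\hat{k}} \ge f_t^{\hat{k}} + a(t-t_4)f_t^{\hat{k}}$ with $a = \Theta(\eta^2 q_V/(\sigma_1^2 M^2K^2))$ (modulo the additive initialization noise), so Lemma \ref{lem:ODE-4} immediately yields the claimed quadratic lower bound $C_3\eta^2 q_V\sigma_0^2[(t-t_4-1)^2-1]/(\sigma_1^2M^2K^2)$ (seeded with the initialization scale $\Theta(\sigma_0^{2}d)$) and the exponential upper bound $C_4\cdot(\tfrac{3}{2}\sigma_0^2 d)\exp(\hat C_2 C_2 \eta^2 q_V t^2/(\sigma_1^2 d K^2))$. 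The near-orthogonal projections $(\mathbf{W}_Q\boldsymbol{a}_{\hat{k}})^{\top}(\mathbf{W}_K\boldsymbol{u})$ etc.\ are controlled analogously but with $\iota$ replaced by a $o(1)$ factor coming from Eq.~(\ref{eq:natualbounds_first_phase}), so they stay at their initialization scale $O(\sqrt{\log(\cdot)}\sigma_0^2 d^{1/2})$.

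For Part 3, I evaluate the bounds at $t=T_1 = \Theta((\eta q_V)^{-1}\sigma_1^2 dK)$: the exponent becomes $q_V^{-1}\sigma_1^2 d$ (matching the upper bound in Eq.~(\ref{eq:upperbound_first_phase})), while the quadratic lower bound on $f_{T_1}^{\hat{k}}$ reaches the asserted $C_6\sigma_0^2\sigma_1^2 d^2/(M^2 q_V)$ after subtracting the initialization and drift errors, and Condition \ref{con:main body} ensures $C_6\sigma_0^2\sigma_1^2 d^2/(M^2 q_V)$ dominates these lower-order terms. The main obstacle I anticipate is verifying that the noise-like terms $(\mathbf{W}_Q^{(t)}\boldsymbol{\xi})^{\top}(\mathbf{W}_K^{(t)}\boldsymbol{a}_{\hat{k}})$ and the polysemy-induced cross terms do \emph{not} accumulate quadratically—this requires exploiting both the orthogonality of $\boldsymbol{a}_{\hat{k}}$ with all other dictionary tokens and the balance property from Lemma \ref{lem: enough data concept k} so that their per-step contributions cancel out in expectation over $\mathcal{P}_{\text{QA}}^{\text{tr}}$. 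A secondary technical obstacle is maintaining the induction hypothesis Eq.~(\ref{eq:induction_first_phase}) self-consistently: once $f_t^{\hat{k}}$ is shown to grow exponentially, one must re-examine the softmax probabilities $\pi_{n,m_{\mathbf{S}_n}}^{(t)}$ and re-verify that $\|\mathbf{W}_V^{(t)}\mathbf{S}_n\boldsymbol{\pi}_n^{(t)}\|=\Theta(\sigma_1 d^{1/2})$ throughout $[0,T_1]$, which closes the loop at the endpoint of Phase 1.
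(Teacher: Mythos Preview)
Your overall architecture---compute the one-step update of $(\mathbf{W}_Q^{(t)}\boldsymbol{a}_{\hat k})^{\top}(\mathbf{W}_K^{(t)}\boldsymbol{a}_{\hat k})$, partition time by the sign of $\boldsymbol{a}_{\hat k}^{\top}\mathbf{W}_V^{(t)}\boldsymbol{a}_{\hat k}$, then invoke Lemma~\ref{lem:ODE-4}---matches the paper. But there is a genuine gap in the recursion you derive. You claim
\[
f_{t+1}^{\hat k}\;\approx\;f_t^{\hat k}+a(t-t_4)\,f_t^{\hat k},
\]
i.e.\ that the update is self-referential in $f_t^{\hat k}$. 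This is not what the gradient gives. When you multiply $(\mathbf{W}_K^{(t)}\boldsymbol{a}_{\hat k})^{\top}$ into $(\mathbf{W}_Q^{(t+1)}-\mathbf{W}_Q^{(t)})\boldsymbol{a}_{\hat k}$, the $\mathbf{W}_K$ factor inside the gradient of $\mathbf{W}_Q$ acts on $\mathbf{S}_n(\boldsymbol{e}_{m_{\mathbf{S}_n}}-\boldsymbol{\pi}_n)\approx(1-\pi_{n,m_{\mathbf{S}_n}})\boldsymbol{a}_{\hat k}$, so the dominant term is $(1-\pi)\|\mathbf{W}_K^{(t)}\boldsymbol{a}_{\hat k}\|^{2}$, and symmetrically for the other cross-term. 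Hence the actual driver is
\[
g_t\;:=\;\|\mathbf{W}_Q^{(t)}\boldsymbol{a}_{\hat k}\|^{2}+\|\mathbf{W}_K^{(t)}\boldsymbol{a}_{\hat k}\|^{2},
\]
and the paper establishes $f_{t+1}^{\hat k}-f_t^{\hat k}=\Theta\bigl((g_{t+1}-g_t)/2\bigr)$ (Eq.~\eqref{eq:relation_aKQa_aKKa_aQQa}), with $g_t$ itself satisfying the Lemma~\ref{lem:ODE-4} recursion.

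This distinction is load-bearing, not cosmetic. First, $f_{t_4}^{\hat k}$ can be as negative as $-4\sqrt{\log(\cdot)}\,\sigma_0^{2}d^{1/2}$ by Part~1; if the recursion were truly $f_{t+1}=f_t+a(t-t_4)f_t$ with $a>0$, a negative seed would become \emph{more} negative, and the lower bound would fail outright. Lemma~\ref{lem:ODE-4} explicitly requires a positive seed. Second, your parenthetical ``seeded with the initialization scale $\Theta(\sigma_0^{2}d)$'' is inconsistent with tracking $f_t$: at $t_4$ one only has $|f_{t_4}^{\hat k}|=O(\sigma_0^{2}d^{1/2})$, whereas $g_{t_4}=\Theta(\sigma_0^{2}d)$. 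The paper needs the $\sigma_0^{2}d$ seed to reach $C_6\,\sigma_0^{2}\sigma_1^{2}d^{2}/(M^{2}q_V)$ at $T_1$, and it gets it by first controlling $g_t$ from below via Lemma~\ref{lem:ODE-3} during the negative phase (showing $g_t\ge\sigma_0^{2}d/4$ persists), then applying Lemma~\ref{lem:ODE-4} to $g_t$, and finally reading off $f_t$ from the telescoped relation. Your proposal skips this entire coupling with $\|\mathbf{W}_Q\boldsymbol{a}\|^{2},\|\mathbf{W}_K\boldsymbol{a}\|^{2}$, and without it neither the sign issue in the lower bound nor the quantitative endpoint in Part~3 can be closed.
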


\begin{proof}
    For $\forall \hat{k} \in [K]$, observing that
    \begin{equation}
        \begin{aligned}
            \mathbf{W}_{Q}^{(t+1)}\boldsymbol{a}_{\hat{k}} -  \mathbf{W}_{Q}^{(t)}\boldsymbol{a}_{\hat{k}} & = \eta \mathbb{E}_{\mathcal{P}_{\text{QA}}^{\text{tr}}}[ \sum_{m \in [M]} \iota_{\mathbf{u}_{k_{\mathbf{y}}}, m}\cdot\mathbf{W}_{K}^{(t)} \mathbf{S}_{n}(\boldsymbol{e}_{m}-\boldsymbol{\pi}_{n}^{(t)}) ( \mathbf{S}_{n}\boldsymbol{e}_{M+1})^{\top}\boldsymbol{a}_{\hat{k}} ]\\
            & = \dfrac{\eta}{N} \sum_{\substack{ k\in [K]\\y \in [\pm 1]\\ {n} \in \mathcal{N}_{{k}}^{y}}} [ \sum_{m \in [M]}\frac{{(\mathbf{u}_{k_{\mathbf{y}_{{n}}}}-\sum_{k \in [7K+K^{\prime}]}\omega_{{n},k}\mathbf{u}_{k})^{\top}(\mathbf{W}_{V}^{(t)}\mathbf{S}_{{n}}{\pi}_{{n},m}^{(t)}\boldsymbol{e}_{m})}}{{\|\mathbf{W}_{V}^{(t)}\mathbf{S}_{{n}}\boldsymbol{\pi}_{{n}}^{(t)}\|}} \\
            & \phantom{ = \dfrac{\eta}{N} \sum_{\substack{ k\in [K]\\y \in [\pm 1]\\ {n} \in \mathcal{N}_{\hat{k}}^{y}}} [}(1- \dfrac{(\mathbf{W}_{V}^{(t)}\mathbf{S}_{n}\boldsymbol{\pi}_{n}^{(t)})^{\top}}{\|\mathbf{W}_{V}^{(t)}\mathbf{S}_{n}\boldsymbol{\pi}_{n}^{(t)}\|} \dfrac{\mathbf{W}_{V}^{(t)}\mathbf{S}_{n}{\pi}_{n,m}^{(t)}\boldsymbol{e}_{n,m}}{\|\mathbf{W}_{V}^{(t)}\mathbf{S}_{n}\boldsymbol{\pi}_{n}^{(t)}\|} ) \cdot\mathbf{W}_{K}^{(t)} \mathbf{S}_{n}(\boldsymbol{e}_{m}-\boldsymbol{\pi}_{n}^{(t)}) ( \mathbf{S}_{n}\boldsymbol{e}_{M+1})^{\top}\boldsymbol{a}_{\hat{k}} ],
        \end{aligned}
    \label{eq:gradient_of_Wqa}\end{equation}
    and also it holds that
    \begin{equation}
        \begin{aligned}
            \mathbf{W}_{K}^{(t+1)}\boldsymbol{a}_{\hat{k}} -  \mathbf{W}_{K}^{(t)}\boldsymbol{a}_{\hat{k}} & = \dfrac{\eta}{N} \sum_{\substack{ k\in [K]\\y \in [\pm 1]\\ {n} \in \mathcal{N}_{{k}}^{y}}} [ \sum_{m \in [M]}\frac{{(\mathbf{u}_{k_{\mathbf{y}_{{n}}}}-\sum_{k \in [7K+K^{\prime}]}\omega_{{n},k}\mathbf{u}_{k})^{\top}(\mathbf{W}_{V}^{(t)}\mathbf{S}_{{n}}{\pi}_{{n},m}^{(t)}\boldsymbol{e}_{m})}}{{\|\mathbf{W}_{V}^{(t)}\mathbf{S}_{{n}}\boldsymbol{\pi}_{{n}}^{(t)}\|}} \\
            & \phantom{ = \dfrac{\eta}{N} \sum_{\substack{ k\in [K]\\y \in [\pm 1]\\ {n} \in \mathcal{N}_{\hat{k}}^{y}}} [}(1- \dfrac{(\mathbf{W}_{V}^{(t)}\mathbf{S}_{n}\boldsymbol{\pi}_{n}^{(t)})^{\top}}{\|\mathbf{W}_{V}^{(t)}\mathbf{S}_{n}\boldsymbol{\pi}_{n}^{(t)}\|} \dfrac{\mathbf{W}_{V}^{(t)}\mathbf{S}_{n}{\pi}_{n,m}^{(t)}\boldsymbol{e}_{n,m}}{\|\mathbf{W}_{V}^{(t)}\mathbf{S}_{n}\boldsymbol{\pi}_{n}^{(t)}\|} ) \cdot\mathbf{W}_{Q}^{(t)}\mathbf{S}_{n}\boldsymbol{e}_{M+1} ( \mathbf{S}_{n}(\boldsymbol{e}_{m}-\boldsymbol{\pi}_{n}^{(t)}))^{\top}\boldsymbol{a}_{\hat{k}} ].\\
        \end{aligned}
    \label{eq:gradient_of_Wka}\end{equation}
    Therefore, we have
    \begin{equation}
        \begin{aligned}
            (\mathbf{W}_{Q}^{(t+1)}\boldsymbol{a}_{\hat{k}} )^{\top}(\mathbf{W}_{K}^{(t+1)}\boldsymbol{a}_{\hat{k}}) -  (\mathbf{W}_{Q}^{(t)}\boldsymbol{a}_{\hat{k}} )^{\top}(\mathbf{W}_{K}^{(t)}\boldsymbol{a}_{\hat{k}}) = & (\mathbf{W}_{Q}^{(t+1)}\boldsymbol{a}_{\hat{k}} -  \mathbf{W}_{Q}^{(t)}\boldsymbol{a}_{\hat{k}} +   \mathbf{W}_{Q}^{(t)}\boldsymbol{a}_{\hat{k}})^{\top}(\mathbf{W}_{K}^{(t+1)}\boldsymbol{a}_{\hat{k}}- \\
            &\mathbf{W}_{K}^{(t)}\boldsymbol{a}_{\hat{k}} + \mathbf{W}_{K}^{(t)}\boldsymbol{a}_{\hat{k}}) -  (\mathbf{W}_{Q}^{(t)}\boldsymbol{a}_{\hat{k}} )^{\top}(\mathbf{W}_{K}^{(t)}\boldsymbol{a}_{\hat{k}})\\
            = & (\mathbf{W}_{Q}^{(t+1)}\boldsymbol{a}_{\hat{k}} -  \mathbf{W}_{Q}^{(t)}\boldsymbol{a}_{\hat{k}} )^{\top}(\mathbf{W}_{K}^{(t+1)}\boldsymbol{a}_{\hat{k}} -  \mathbf{W}_{K}^{(t)}\boldsymbol{a}_{\hat{k}} )\\
            &+(\mathbf{W}_{K}^{(t)}\boldsymbol{a}_{\hat{k}})^{\top}(\mathbf{W}_{Q}^{(t+1)}\boldsymbol{a}_{\hat{k}} -  \mathbf{W}_{Q}^{(t)}\boldsymbol{a}_{\hat{k}} ) \\
            &+(\mathbf{W}_{Q}^{(t)}\boldsymbol{a}_{\hat{k}})^{\top}(\mathbf{W}_{K}^{(t+1)}\boldsymbol{a}_{\hat{k}} -  \mathbf{W}_{K}^{(t)}\boldsymbol{a}_{\hat{k}} ).
        \end{aligned}
    \end{equation}
    Here,
    \begin{equation}
        \resizebox{0.98\textwidth}{!}{$\begin{aligned}
            (\mathbf{W}_{Q}^{(t+1)}\boldsymbol{a}_{\hat{k}} -  \mathbf{W}_{Q}^{(t)}\boldsymbol{a}_{\hat{k}} )^{\top}(\mathbf{W}_{K}^{(t)}\boldsymbol{a}_{\hat{k}})  & =\dfrac{\eta}{N} \sum_{\substack{ k\in [K]\\y \in [\pm 1]\\ {n} \in \mathcal{N}_{{k}}^{y}\\m \in [M]}} [  \frac{{(\mathbf{u}_{k_{\mathbf{y}_{{n}}}}-\sum_{k \in [7K+K^{\prime}]}\omega_{{n},k}\mathbf{u}_{k})^{\top}(\mathbf{W}_{V}^{(t)}\mathbf{S}_{{n}}{\pi}_{{n},m}^{(t)}\boldsymbol{e}_{m})}}{{\|\mathbf{W}_{V}^{(t)}\mathbf{S}_{{n}}\boldsymbol{\pi}_{{n}}^{(t)}\|}} (1- \dfrac{(\mathbf{W}_{V}^{(t)}\mathbf{S}_{n}\boldsymbol{\pi}_{n}^{(t)})^{\top}}{\|\mathbf{W}_{V}^{(t)}\mathbf{S}_{n}\boldsymbol{\pi}_{n}^{(t)}\|}\\
            &   \dfrac{\mathbf{W}_{V}^{(t)}\mathbf{S}_{n}{\pi}_{n,m}^{(t)}\boldsymbol{e}_{n,m}}{\|\mathbf{W}_{V}^{(t)}\mathbf{S}_{n}\boldsymbol{\pi}_{n}^{(t)}\|} ) \cdot (\mathbf{W}_{K}^{(t)}\boldsymbol{a}_{\hat{k}})^{\top}\mathbf{W}_{K}^{(t)} \mathbf{S}_{n}(\boldsymbol{e}_{m}-\boldsymbol{\pi}_{n}^{(t)}) ( \mathbf{S}_{n}\boldsymbol{e}_{M+1})^{\top}\boldsymbol{a}_{\hat{k}} ]\\
            & = \dfrac{\eta}{N} \sum_{\substack{  y \in [\pm 1]\\ {n} \in \mathcal{N}_{\hat{k}}^{y}\\ m \in [M]}} [ \Theta(  \frac{{(\mathbf{u}_{k_{\mathbf{y}_{{n}}}}-\sum_{k \in [7K+K^{\prime}]}\omega_{{n},k}\mathbf{u}_{k})^{\top}(\mathbf{W}_{V}^{(t)}\mathbf{S}_{{n}}{\pi}_{{n},m}^{(t)}\boldsymbol{e}_{m})}}{{\|\mathbf{W}_{V}^{(t)}\mathbf{S}_{{n}}\boldsymbol{\pi}_{{n}}^{(t)}\|}}  (1-\dfrac{(\mathbf{W}_{V}^{(t)}\mathbf{S}_{n}\boldsymbol{\pi}_{n}^{(t)})^{\top}}{\|\mathbf{W}_{V}^{(t)}\mathbf{S}_{n}\boldsymbol{\pi}_{n}^{(t)}\|}\\
            &  \dfrac{\mathbf{W}_{V}^{(t)}\mathbf{S}_{n}{\pi}_{n,m}^{(t)}\boldsymbol{e}_{n,m}}{\|\mathbf{W}_{V}^{(t)}\mathbf{S}_{n}\boldsymbol{\pi}_{n}^{(t)}\|} ) \cdot (\mathbf{W}_{K}^{(t)}\boldsymbol{a}_{\hat{k}})^{\top}\mathbf{W}_{K}^{(t)} \mathbf{S}_{n}(\boldsymbol{e}_{m}-\boldsymbol{\pi}_{n}^{(t)}) ( \mathbf{S}_{n}\boldsymbol{e}_{M+1})^{\top}\boldsymbol{a}_{\hat{k}} )]\\
            &=\dfrac{\eta}{N} \sum_{\substack{ y \in [\pm 1]\\ {n} \in \mathcal{N}_{\hat{k}}^{y}}} [\Theta(\frac{{(\mathbf{u}_{k_{\mathbf{y}_{{n}}}}-\sum_{k \in [7K+K^{\prime}]}\omega_{{n},k}\mathbf{u}_{k})^{\top}(\mathbf{W}_{V}^{(t)}\mathbf{S}_{{n}}{\pi}_{{n},m_{\mathbf{S}_{n}}}^{(t)}\boldsymbol{e}_{m_{\mathbf{S}_{n}}})}}{{\|\mathbf{W}_{V}^{(t)}\mathbf{S}_{{n}}\boldsymbol{\pi}_{{n}}^{(t)}\|}} (1-\dfrac{(\mathbf{W}_{V}^{(t)}\mathbf{S}_{n}\boldsymbol{\pi}_{n}^{(t)})^{\top}}{\|\mathbf{W}_{V}^{(t)}\mathbf{S}_{n}\boldsymbol{\pi}_{n}^{(t)}\|} \\
            & \dfrac{\mathbf{W}_{V}^{(t)}\mathbf{S}_{n}{\pi}_{n,m_{\mathbf{S}_{n}}}^{(t)}\boldsymbol{e}_{n,m_{\mathbf{S}_{n}}}}{\|\mathbf{W}_{V}^{(t)}\mathbf{S}_{n}\boldsymbol{\pi}_{n}^{(t)}\|} ) (\mathbf{W}_{K}^{(t)}\boldsymbol{a}_{\hat{k}})^{\top}\cdot\mathbf{W}_{K}^{(t)} \mathbf{S}_{n}(\boldsymbol{e}_{m_{\mathbf{S}_{n}}}-\boldsymbol{\pi}_{n}^{(t)}) ( \mathbf{S}_{n}\boldsymbol{e}_{M+1})^{\top}\boldsymbol{a}_{\hat{k}} )],\\
        \end{aligned}$}
    \label{eq:update_deltaWqWk}\end{equation}
    Here, the second equality is by the near-orthogonal relationship
    $$
    (\mathbf{S}_{n^{\prime}}\boldsymbol{e}_{M+1})^{\top}\boldsymbol{a}_{\hat{k}} = \boldsymbol{a}_{\hat{k}}^{\top} (\boldsymbol{a}_{k^{\prime}}+\boldsymbol{\xi}_{n^{\prime}, M+1})\leq \Theta(\sigma_{p} \cdot \sqrt{2 \log (\dfrac{3 (2K+K^{\prime}) N M}{\delta})})=o(0.001), $$ 
    for $\forall \hat{k} \in [K], k^{\prime} \neq \hat{k}, n^{\prime} \in \mathcal{N}_{k^{\prime}}$; the third equality follows from $\sigma_{p} M \cdot \sqrt{2 \log \left(\frac{3 (2K+K^{\prime}) N M}{\delta}\right)} \leq O(0.001)$, as well as the non-increasing nature of $\frac{\boldsymbol{a}_{\hat{k}}^{\top}(\mathbf{W}_{V}^{(t)}\mathbf{S}_{n}{\pi}_{n,\neg m_{\mathbf{S}_{n}}}^{(t)}\boldsymbol{e}_{\neg m_{\mathbf{S}_{n}}})}{\|\mathbf{W}_{V}^{(t)}\mathbf{S}_{n}\boldsymbol{\pi}_{n}^{(t)}\|}, {n} \in \mathcal{N}_{\hat{k}}^{y}$, which is due to the significant increasing behavior of $\boldsymbol{a}_{\hat{k}}^{\top}(\mathbf{W}_{V}^{(t)}\boldsymbol{a}_{\hat{k}})$ as well as the scale differences of gradients characterized in Eq.(\ref{eq:gradients_of_aWa}), (\ref{eq:gradients_of_bWa}), (\ref{eq:gradients_of_other_concept_irrelevant}), (\ref{eq:gradients_of_nuWab}), (\ref{eq:gradients_of_abWnu}).

    Similarly,
    \begin{equation}
        \begin{aligned}
            (\mathbf{W}_{K}^{(t+1)}\boldsymbol{a}_{\hat{k}} -  \mathbf{W}_{K}^{(t)}\boldsymbol{a}_{\hat{k}} )^{\top}(\mathbf{W}_{Q}^{(t)}\boldsymbol{a}_{\hat{k}})   =&\dfrac{\eta}{N} \sum_{\substack{  y \in [\pm 1]\\ {n} \in \mathcal{N}_{\hat{k}}^{y}}} \Theta([  \frac{{(\mathbf{u}_{k_{\mathbf{y}_{{n}}}}-\sum_{k \in [7K+K^{\prime}]}\omega_{{n},k}\mathbf{u}_{k})^{\top}(\mathbf{W}_{V}^{(t)}\mathbf{S}_{{n}}{\pi}_{{n},m_{\mathbf{S}_{n}}}^{(t)}\boldsymbol{e}_{m_{\mathbf{S}_{n}}})}}{{\|\mathbf{W}_{V}^{(t)}\mathbf{S}_{{n}}\boldsymbol{\pi}_{{n}}^{(t)}\|}} (1- \\
            & \dfrac{(\mathbf{W}_{V}^{(t)}\mathbf{S}_{n}\boldsymbol{\pi}_{n}^{(t)})^{\top}}{\|\mathbf{W}_{V}^{(t)}\mathbf{S}_{n}\boldsymbol{\pi}_{n}^{(t)}\|}  \dfrac{\mathbf{W}_{V}^{(t)}\mathbf{S}_{n}{\pi}_{n,m_{\mathbf{S}_{n}}}^{(t)}\boldsymbol{e}_{n,m_{\mathbf{S}_{n}}}}{\|\mathbf{W}_{V}^{(t)}\mathbf{S}_{n}\boldsymbol{\pi}_{n}^{(t)}\|} ) \cdot (\mathbf{W}_{Q}^{(t)}\boldsymbol{a}_{\hat{k}})^{\top}\mathbf{W}_{Q}^{(t)} \mathbf{S}_{n}\boldsymbol{e}_{M+1}\\
            &  ( \mathbf{S}_{n}(\boldsymbol{e}_{m_{\mathbf{S}_{n}}}-\boldsymbol{\pi}_{n}^{(t)}))^{\top}\boldsymbol{a}_{\hat{k}} )]\\
        \end{aligned}
    \label{eq:update_deltaWkWq}\end{equation}    
    By definition of $\mathbf{u}_{k_{\mathbf{y}_{{n}}}}$, small scale of $\frac{{(\mathbf{u}_{k_{\mathbf{y}_{{n}}}}-\sum_{k \in [7K+K^{\prime}]}\omega_{{n},k}\mathbf{u}_{k})^{\top}(\mathbf{W}_{V}^{(t)}\mathbf{S}_{{n}}{\pi}_{{n},m_{\mathbf{S}_{n}}}^{(t)}\boldsymbol{e}_{m_{\mathbf{S}_{n}}})}}{{\|\mathbf{W}_{V}^{(t)}\mathbf{S}_{{n}}}\|}$, $\Theta (\frac{1}{M}) \leq  \pi_{n, m_{\mathbf{S}_{n}}}^{(t)} \leq  \Theta(\frac{1}{1+(M-1)e^{-\sigma_{0}^{2} d e^{ q_{V}^{-1}\sigma_{1}^{2}d}}}), \ \Theta (\frac{e^{-\sigma_{0}^{2} d e^{ q_{V}^{-1}\sigma_{1}^{2}d}}}{1+(M-1)e^{-\sigma_{0}^{2} d e^{ q_{V}^{-1}\sigma_{1}^{2}d}}})\leq \pi_{n, \neg m_{\mathbf{S}_{n}}}^{(t)} \leq \Theta (\frac{1}{M})$, Lemma \ref{lem:number_of_specifi_commontoken} and $K^{\prime} = \Omega(M)$, through similar derivation techniques in Eq.(\ref{eq:gradients_of_bWa}), we have 
    \begin{equation}
        \resizebox{0.98\textwidth}{!}{$\begin{aligned}
            (\mathbf{W}_{Q}^{(t+1)}\boldsymbol{a}_{\hat{k}} )^{\top}(\mathbf{W}_{K}^{(t+1)}\boldsymbol{a}_{\hat{k}}) -  (\mathbf{W}_{Q}^{(t)}\boldsymbol{a}_{\hat{k}} )^{\top}(\mathbf{W}_{K}^{(t)}\boldsymbol{a}_{\hat{k}}) = &\Theta((\mathbf{W}_{Q}^{(t)}\boldsymbol{a}_{\hat{k}})^{\top}(\mathbf{W}_{K}^{(t+1)}\boldsymbol{a}_{\hat{k}} -  \mathbf{W}_{K}^{(t)}\boldsymbol{a}_{\hat{k}} )\\
            &+(\mathbf{W}_{K}^{(t)}\boldsymbol{a}_{\hat{k}})^{\top}(\mathbf{W}_{Q}^{(t+1)}\boldsymbol{a}_{\hat{k}} -  \mathbf{W}_{Q}^{(t)}\boldsymbol{a}_{\hat{k}} ))\\
            = &\dfrac{\eta}{N} \sum_{\substack{  y \in [\pm 1]\\ {n} \in \mathcal{N}_{\hat{k}}^{y}}} \Theta([  \frac{{(\mathbf{u}_{k_{\mathbf{y}_{{n}}}}-\sum_{k \in [7K+K^{\prime}]}\omega_{{n},k}\mathbf{u}_{k})^{\top}(\mathbf{W}_{V}^{(t)}\mathbf{S}_{{n}}{\pi}_{{n},m_{\mathbf{S}_{n}}}^{(t)}\boldsymbol{e}_{m_{\mathbf{S}_{n}}})}}{{\|\mathbf{W}_{V}^{(t)}\mathbf{S}_{{n}}\boldsymbol{\pi}_{{n}}^{(t)}\|}} \\
            & (1- \dfrac{(\mathbf{W}_{V}^{(t)}\mathbf{S}_{n}\boldsymbol{\pi}_{n}^{(t)})^{\top}}{\|\mathbf{W}_{V}^{(t)}\mathbf{S}_{n}\boldsymbol{\pi}_{n}^{(t)}\|} \dfrac{\mathbf{W}_{V}^{(t)}\mathbf{S}_{n}{\pi}_{n,m_{\mathbf{S}_{n}}}^{(t)}\boldsymbol{e}_{n,m_{\mathbf{S}_{n}}}}{\|\mathbf{W}_{V}^{(t)}\mathbf{S}_{n}\boldsymbol{\pi}_{n}^{(t)}\|} ) \cdot \\
            & ((\mathbf{W}_{K}^{(t)}\boldsymbol{a}_{\hat{k}})^{\top}\mathbf{W}_{K}^{(t)} \mathbf{S}_{n}(\boldsymbol{e}_{m_{\mathbf{S}_{n}}}-\boldsymbol{\pi}_{n}^{(t)}) ( \mathbf{S}_{n}\boldsymbol{e}_{M+1})^{\top}\boldsymbol{a}_{\hat{k}}\\
            &+(\mathbf{W}_{Q}^{(t)}\boldsymbol{a}_{\hat{k}})^{\top}\mathbf{W}_{Q}^{(t)} \mathbf{S}_{n}\boldsymbol{e}_{M+1} ( \mathbf{S}_{n}(\boldsymbol{e}_{m_{\mathbf{S}_{n}}}-\boldsymbol{\pi}_{n}^{(t)}))^{\top}\boldsymbol{a}_{\hat{k}})])\\
            = & \Theta( \dfrac{\eta\sum_{\hat{n} \in \mathcal{N}_{\hat{k}}}}{N}\dfrac{\boldsymbol{a}_{\hat{k}}^{\top}\mathbf{W}_{V}^{(t)}\mathbf{S}_{n}{\pi}_{n, m_{\mathbf{S}_{n}}}^{(t)}\boldsymbol{e}_{m_{\mathbf{S}_{n}}}}{\|\mathbf{W}_{V}^{(t)}\mathbf{S}_{n}\boldsymbol{\pi}_{n}^{(t)}\|} (1- \dfrac{(\mathbf{W}_{V}^{(t)}\mathbf{S}_{n}\boldsymbol{\pi}_{n}^{(t)})^{\top}}{\|\mathbf{W}_{V}^{(t)}\mathbf{S}_{n}\boldsymbol{\pi}_{n}^{(t)}\|} \\
            & \dfrac{\mathbf{W}_{V}^{(t)}\mathbf{S}_{n}{\pi}_{n,m_{\mathbf{S}_{n}}}^{(t)}\boldsymbol{e}_{n,m_{\mathbf{S}_{n}}}}{\|\mathbf{W}_{V}^{(t)}\mathbf{S}_{n}\boldsymbol{\pi}_{n}^{(t)}\|} )  \cdot (1-\pi_{{n}, m_{\mathbf{S}_{{n}}}}^{(t)})\boldsymbol{a}_{\hat{k}}^{\top}((\mathbf{W}_{Q}^{(t)})^{\top}\mathbf{W}_{Q}^{(t)}\\
            & +(\mathbf{W}_{K}^{(t)})^{\top}\mathbf{W}_{K}^{(t)})\boldsymbol{a}_{\hat{k}}).
        \end{aligned}$}
    \label{eq:update_aQKa}\end{equation}
    Therefore, we found $\boldsymbol{a}_{\hat{k}}^{\top}(\mathbf{W}_{Q}^{(t)})^{\top}\mathbf{W}_{Q}^{(t)}\boldsymbol{a}_{\hat{k}},\boldsymbol{a}_{\hat{k}}^{\top}(\mathbf{W}_{K}^{(t)})^{\top}\mathbf{W}_{K}^{(t)}\boldsymbol{a}_{\hat{k}}$ terms in the gradients, which is nonnegative. Hence, $(\mathbf{W}_{Q}^{(t)}\boldsymbol{a}_{\hat{k}} )^{\top}(\mathbf{W}_{K}^{(t)}\boldsymbol{a}_{\hat{k}})$ is non-decreasing after $\boldsymbol{a}_{\hat{k}}^{\top}\mathbf{W}_{V}^{(t)}\boldsymbol{a}_{\hat{k}}$ got strictly positive. Similarly, we have
    \begin{equation}
        \resizebox{0.98\textwidth}{!}{$\begin{aligned}
            (\mathbf{W}_{Q}^{(t+1)}\boldsymbol{a}_{\hat{k}} )^{\top}(\mathbf{W}_{Q}^{(t+1)}\boldsymbol{a}_{\hat{k}}) -  (\mathbf{W}_{Q}^{(t)}\boldsymbol{a}_{\hat{k}} )^{\top}(\mathbf{W}_{Q}^{(t)}\boldsymbol{a}_{\hat{k}}) 
            = &\Theta( \dfrac{\eta\sum_{\hat{n} \in \mathcal{N}_{\hat{k}}}}{N}\dfrac{\boldsymbol{a}_{\hat{k}}^{\top}\mathbf{W}_{V}^{(t)}\mathbf{S}_{n}{\pi}_{n, m_{\mathbf{S}_{n}}}^{(t)}\boldsymbol{e}_{m_{\mathbf{S}_{n}}}}{\|\mathbf{W}_{V}^{(t)}\mathbf{S}_{n}\boldsymbol{\pi}_{n}^{(t)}\|} (1- \dfrac{(\mathbf{W}_{V}^{(t)}\mathbf{S}_{n}\boldsymbol{\pi}_{n}^{(t)})^{\top}}{\|\mathbf{W}_{V}^{(t)}\mathbf{S}_{n}\boldsymbol{\pi}_{n}^{(t)}\|} \\
            & \dfrac{\mathbf{W}_{V}^{(t)}\mathbf{S}_{n}{\pi}_{n,m_{\mathbf{S}_{n}}}^{(t)}\boldsymbol{e}_{n,m_{\mathbf{S}_{n}}}}{\|\mathbf{W}_{V}^{(t)}\mathbf{S}_{n}\boldsymbol{\pi}_{n}^{(t)}\|} )  \cdot (1-\pi_{{n}, m_{\mathbf{S}_{{n}}}}^{(t)})\boldsymbol{a}_{\hat{k}}^{\top}(2(\mathbf{W}_{K}^{(t)})^{\top}\mathbf{W}_{K}^{(t)})\boldsymbol{a}_{\hat{k}}),\\
            (\mathbf{W}_{K}^{(t+1)}\boldsymbol{a}_{\hat{k}} )^{\top}(\mathbf{W}_{K}^{(t+1)}\boldsymbol{a}_{\hat{k}}) -  (\mathbf{W}_{K}^{(t)}\boldsymbol{a}_{\hat{k}} )^{\top}(\mathbf{W}_{K}^{(t)}\boldsymbol{a}_{\hat{k}}) 
            = & \Theta( \dfrac{\eta\sum_{\hat{n} \in \mathcal{N}_{\hat{k}}}}{N}\dfrac{\boldsymbol{a}_{\hat{k}}^{\top}\mathbf{W}_{V}^{(t)}\mathbf{S}_{n}{\pi}_{n, m_{\mathbf{S}_{n}}}^{(t)}\boldsymbol{e}_{m_{\mathbf{S}_{n}}}}{\|\mathbf{W}_{V}^{(t)}\mathbf{S}_{n}\boldsymbol{\pi}_{n}^{(t)}\|} (1- \dfrac{(\mathbf{W}_{V}^{(t)}\mathbf{S}_{n}\boldsymbol{\pi}_{n}^{(t)})^{\top}}{\|\mathbf{W}_{V}^{(t)}\mathbf{S}_{n}\boldsymbol{\pi}_{n}^{(t)}\|}\\
            &  \dfrac{\mathbf{W}_{V}^{(t)}\mathbf{S}_{n}{\pi}_{n,m_{\mathbf{S}_{n}}}^{(t)}\boldsymbol{e}_{n,m_{\mathbf{S}_{n}}}}{\|\mathbf{W}_{V}^{(t)}\mathbf{S}_{n}\boldsymbol{\pi}_{n}^{(t)}\|} )  \cdot (1-\pi_{{n}, m_{\mathbf{S}_{{n}}}}^{(t)})\boldsymbol{a}_{\hat{k}}^{\top}(2(\mathbf{W}_{Q}^{(t)})^{\top}\mathbf{W}_{Q}^{(t)})\boldsymbol{a}_{\hat{k}}).
        \end{aligned}$}
    \label{eq:evolve_of_QQ_KK}\end{equation}
    Then we can control $(\mathbf{W}_{Q}^{(t)}\boldsymbol{a}_{\hat{k}} )^{\top}(\mathbf{W}_{K}^{(t)}\boldsymbol{a}_{\hat{k}})$ by
    \begin{equation}
        \begin{aligned}
            (\mathbf{W}_{Q}^{(t+1)}\boldsymbol{a}_{\hat{k}} )^{\top}(\mathbf{W}_{K}^{(t+1)}\boldsymbol{a}_{\hat{k}}) -  (\mathbf{W}_{Q}^{(t)}\boldsymbol{a}_{\hat{k}} )^{\top}(\mathbf{W}_{K}^{(t)}\boldsymbol{a}_{\hat{k}}) = & \Theta(\frac{(\mathbf{W}_{Q}^{(t+1)}\boldsymbol{a}_{\hat{k}} )^{\top}(\mathbf{W}_{Q}^{(t+1)}\boldsymbol{a}_{\hat{k}}) -  (\mathbf{W}_{Q}^{(t)}\boldsymbol{a}_{\hat{k}} )^{\top}(\mathbf{W}_{Q}^{(t)}\boldsymbol{a}_{\hat{k}})}{2}\\
            &+ \frac{(\mathbf{W}_{K}^{(t+1)}\boldsymbol{a}_{\hat{k}} )^{\top}(\mathbf{W}_{K}^{(t+1)}\boldsymbol{a}_{\hat{k}}) -  (\mathbf{W}_{K}^{(t)}\boldsymbol{a}_{\hat{k}} )^{\top}(\mathbf{W}_{K}^{(t)}\boldsymbol{a}_{\hat{k}})}{2} )
        \end{aligned}
    \label{eq:relation_aKQa_aKKa_aQQa}\end{equation}

    By similar approaches in (\ref{eq:gradients_of_bWa}), (\ref{eq:gradients_of_other_concept_irrelevant}), (\ref{eq:gradients_of_nuWab}), (\ref{eq:gradients_of_abWnu}), it holds that for $\forall \hat{k} \in [K], k^{\prime} \in [K^{\prime}],\boldsymbol{u} \in \{ \boldsymbol{a}_{s}\}_{s \neq \hat{k} \in [K]} \cup\{\boldsymbol{b}_{s}\}_{s \in [K]} \cup \{ \boldsymbol{\xi}_{n,m} \}_{n\in[N],m\in[M]}$ and $\boldsymbol{v} \neq \boldsymbol{v}^{\prime} \in \{\boldsymbol{b}_{s}\}_{s \in [K]} \cup \{ \boldsymbol{\xi}_{n,m} \}_{n\in[N],m\in[M]}$, the growing of $(\mathbf{W}_{Q}^{(t)}\boldsymbol{a}_{\hat{k}} )^{\top}(\mathbf{W}_{K}^{(t)}\boldsymbol{u}),(\mathbf{W}_{Q}^{(t)}\boldsymbol{v})^{\top}(\mathbf{W}_{K}^{(t)}\boldsymbol{v}), (\mathbf{W}_{Q}^{(t)}\boldsymbol{v})^{\top}(\mathbf{W}_{K}^{(t)}\boldsymbol{v}^{\prime})$ is relatively negligible.

    Now we check the lower bound of $(\mathbf{W}_{Q}^{(t)}\boldsymbol{a}_{\hat{k}} )^{\top}(\mathbf{W}_{K}^{(t)}\boldsymbol{a}_{\hat{k}})$, in the worse case before it starts to increase. By Lemma \ref{lem:evolution_aWa_norm_projection}, the $\boldsymbol{a}_{\hat{k}}^{\top}\mathbf{W}_{V}^{(t)}\boldsymbol{a}_{\hat{k}}$ would got positive after at most $t_{3} = (\eta q_{V})^{-1}C_{1}^{-1} \sigma_{1}^{2} d^{1/2} MK (  \sqrt{2\log(\frac{8(2K+K^{\prime})^{2}}{\delta})})$ iterations. During the period $0\leq t \leq t_{3}$, if (i). $(\mathbf{W}_{Q}^{(0)}\boldsymbol{a}_{\hat{k}} )^{\top}(\mathbf{W}_{K}^{(0)}\boldsymbol{a}_{\hat{k}})<0$, $(\mathbf{W}_{Q}^{(t)}\boldsymbol{a}_{\hat{k}} )^{\top}(\mathbf{W}_{K}^{(t)}\boldsymbol{a}_{\hat{k}})$ will increase, otherwise (ii) it will decrease. In situation (i), the lower bound of $(\mathbf{W}_{Q}^{(t_{3})}\boldsymbol{a}_{\hat{k}} )^{\top}(\mathbf{W}_{K}^{(t_{3})}\boldsymbol{a}_{\hat{k}})$ would be its lower bound of initialization (i.e. $-4\sqrt{ \log(16(2K+K^{\prime})^{2}/\delta)} \sigma_{0}^{2} d^{1/2}$ by Lemma \ref{lem:initialization of qk}); in situation (ii), when the $(\mathbf{W}_{Q}^{(t)}\boldsymbol{a}_{\hat{k}} )^{\top}(\mathbf{W}_{K}^{(t)}\boldsymbol{a}_{\hat{k}})$ decrease to be lower than $0$, the situation would be back to (i). Therefore, the lower bound of $(\mathbf{W}_{Q}^{(t_{3})}\boldsymbol{a}_{\hat{k}} )^{\top}(\mathbf{W}_{K}^{(t_{3})}\boldsymbol{a}_{\hat{k}})$ would be $-4\sqrt{ \log(16(2K+K^{\prime})^{2}/\delta)} \sigma_{0}^{2} d^{1/2}$. Worth noting that the $(\mathbf{W}_{Q}^{(t)}\boldsymbol{a}_{\hat{k}} )^{\top}(\mathbf{W}_{Q}^{(t)}\boldsymbol{a}_{\hat{k}})$ and $(\mathbf{W}_{K}^{(t)}\boldsymbol{a}_{\hat{k}} )^{\top}(\mathbf{W}_{K}^{(t)}\boldsymbol{a}_{\hat{k}})$ also decrease in the same period. By Lemma \ref{lem:ODE-3}, the lower bound would be $(1-b t_{3}) e_{0}$, where $b = 8\eta d^{-1/2}K^{-1} M^{-1}\sqrt{\log(\frac{8(2K+K^{\prime})^{2}}{\delta})})$, $e_{0}=\min_{X=Q,K}\{ (\mathbf{W}_{X}^{(0)}\boldsymbol{a}_{\hat{k}} )^{\top}(\mathbf{W}_{X}^{(0)}\boldsymbol{a}_{\hat{k}}) \}=\sigma_{0}^{2} d /2$. By $\sigma_{1} \leq O(q_V^{\frac{1}{2}}{(\log(\frac{8(2K+K^{\prime})^{2}}{\delta}))^{\frac{1}{2}}})$ in Condition \ref{con:main body}, we would have $(\mathbf{W}_{Q}^{(t)}\boldsymbol{a}_{\hat{k}} )^{\top}(\mathbf{W}_{Q}^{(t)}\boldsymbol{a}_{\hat{k}}), (\mathbf{W}_{K}^{(t)}\boldsymbol{a}_{\hat{k}} )^{\top}(\mathbf{W}_{K}^{(t)}\boldsymbol{a}_{\hat{k}})\geq \sigma_{0}^{2} d /4$.

    Next, we examine the increasing evolution of $(\mathbf{W}_{Q}^{(t)}\boldsymbol{a}_{\hat{k}} )^{\top}(\mathbf{W}_{K}^{(t)}\boldsymbol{a}_{\hat{k}})$. During the period $\boldsymbol{a}_{{k}}^{\top}\mathbf{W}_{V}^{(t)}\boldsymbol{a}_{{k}}\leq 0.01\sigma_{1} d^{1/2}$ after $\boldsymbol{a}_{\hat{k}}^{\top}\mathbf{W}_{V}^{(t)}\boldsymbol{a}_{\hat{k}}$ becomes strictly positive at $0\leq t_{4} \leq t_{3} = (\eta q_{V})^{-1}C_{1}^{-1} \sigma_{1}^{2} d^{1/2} MK (  \sqrt{2\log(\frac{8(2K+K^{\prime})^{2}}{\delta})})$ iterations (i.e., during $t_{4} \leq t \leq T_{1}$), by Eq.~(\ref{eq:bounds_of_aVa_1}), (\ref{eq:relation_aKQa_aKKa_aQQa}), Lemma~\ref{lem:ODE-4} as well as considering the positive estimate of the initial $(\mathbf{W}_{Q}^{(t_{4})}\boldsymbol{a}_{\hat{k}} )^{\top}(\mathbf{W}_{K}^{(t_{4})}\boldsymbol{a}_{\hat{k}}), (\mathbf{W}_{Q}^{(t_{4})}\boldsymbol{a}_{\hat{k}} )^{\top}(\mathbf{W}_{Q}^{(t_{4})}\boldsymbol{a}_{\hat{k}}),(\mathbf{W}_{K}^{(t_{4})}\boldsymbol{a}_{\hat{k}} )^{\top}(\mathbf{W}_{K}^{(t_{4})}\boldsymbol{a}_{\hat{k}})$ at $t_{4}$, we have
    \[
       (\mathbf{W}_{Q}^{(t)}\boldsymbol{a}_{\hat{k}} )^{\top}(\mathbf{W}_{K}^{(t)}\boldsymbol{a}_{\hat{k}}) \leq C_{4}( (4\sqrt{ \log(16(2K+K^{\prime})^{2}/\delta)} \sigma_{0}^{2} d^{1/2}+\frac{3\sigma_{0}^{2} d}{2}) + \frac{3\sigma_{0}^{2} d}{2} \exp(\dfrac{\hat{C_{2}} C_{2}\eta^{2} q_{V}  t^{2}}{\sigma_{1}^{2} d K^{2}} ) ),
    \]
    for some $ C_{4}>0$. Now we extend $T_{1} = \Theta((\eta q_{V})^{-1} \sigma_{1}^{2} d K)$ to this upper bound evolution, which is equivalent to solving the following system:
    \[
    \begin{aligned}
        & a + be^{c t^{2}},\\
        &
        \begin{cases}
        a &  = C_{4}( (4\sqrt{ \log(16(2K+K^{\prime})^{2}/\delta)} \sigma_{0}^{2} d^{1/2}+\dfrac{3\sigma_{0}^{2} d}{2})),\\
        b & = C_{4}\dfrac{3\sigma_{0}^{2} d}{2} ,\\
        c & = \dfrac{ C_{4}\hat{C_{2}} C_{2}\eta^{2} q_{V}}{\sigma_{1}^{2} d K^{2}}, \\
        t & = T_{1}.
    \end{cases}
    \end{aligned}
    \]
    We have 
    \[
    (\mathbf{W}_{Q}^{(T_{1})}\boldsymbol{a}_{\hat{k}} )^{\top}(\mathbf{W}_{K}^{(T_{1})}\boldsymbol{a}_{\hat{k}}) \leq \Theta(\sigma_{0}^{2} d (2+e^{ q_{V}^{-1}\sigma_{1}^{2}d})) \leq C_{5}  \sigma_{0}^{2} d e^{ q_{V}^{-1}\sigma_{1}^{2}d},
    \]
    for some $C_{5}>0$. That is, by $q_{V}=O(\frac{\sigma_{1}^{2}d}{\log(d^{-\frac{1}{2}}\sqrt{\log(\frac{{K^{\prime}}^{2}}{\delta})})})$ and $q_{V}=\Omega(\frac{\sigma_{1}^{2}d}{\log(\sigma_{0}^{-2}d^{-1}\log(\frac{M-1}{0.06}))})$, it holds that
    \[
        \begin{aligned}
        \pi_{n, m_{\mathbf{S}_{n}}}^{(T_{1})} &\leq \Theta(\frac{e^{0.1 C_{5}  \sigma_{0}^{2} d e^{ q_{V}^{-1}\sigma_{1}^{2}d} + 9\cdot 4\sqrt{\log(\frac{16(2K+K^{\prime})^{2}}{\delta})} \sigma_{0}^{2}d^{1/2} }}{e^{C_{5}  \sigma_{0}^{2} d e^{ q_{V}^{-1}\sigma_{1}^{2}d} + 9\cdot 4\sqrt{\log(\frac{16(2K+K^{\prime})^{2}}{\delta})} \sigma_{0}^{2}d^{1/2} } + (M-1)e^{-10\cdot 4\sqrt{\log(\frac{16(2K+K^{\prime})^{2}}{\delta})} \sigma_{0}^{2}d^{1/2} }})\\
        & \leq \Theta(\frac{1}{1+(M-1)e^{-\sigma_{0}^{2} d e^{ q_{V}^{-1}\sigma_{1}^{2}d}}})<0.95
        \end{aligned}
    \]

    Define $\triangle:=(1-(\frac{1}{1+(M-1)e^{-\sigma_{0}^{2} d e^{ q_{V}^{-1}\sigma_{1}^{2}d}}})^{2})(1-\frac{1}{1+(M-1)e^{-\sigma_{0}^{2} d e^{ q_{V}^{-1}\sigma_{1}^{2}d}}})\geq (0.0975)(0.05)=0.0048$, by Eq.~(\ref{eq:bounds_of_aVa_1}), (\ref{eq:bounds_of_hTh_m_3}), (\ref{eq:relation_aKQa_aKKa_aQQa}), and Lemma~\ref{lem:ODE-4}, it holds that
    \[
        \begin{aligned}
           (\mathbf{W}_{Q}^{(t)}\boldsymbol{a}_{\hat{k}} )^{\top}(\mathbf{W}_{K}^{(t)}\boldsymbol{a}_{\hat{k}})& \geq  \Theta(\dfrac{\hat{C_{1}} C_{1}\eta^{2} q_{V} \sigma_{0}^{2}  [(t-t_{4} - 1)^{2} - 1]}{\sigma_{1}^{2}  M^2K^{2} }  -4\sqrt{ \log(16(2K+K^{\prime})^{2}/\delta)} \sigma_{0}^{2} d^{1/2})\\
           & \geq \dfrac{C_{3} \eta^{2} q_{V} \sigma_{0}^{2} [(t-t_{4} - 1)^{2} - 1]}{\sigma_{1}^{2}  M^2K^{2} }   -4\sqrt{ \log(16(2K+K^{\prime})^{2}/\delta)} \sigma_{0}^{2} d^{1/2}
        \end{aligned}
    \]
    for some $C_{3} >0$. 
    
    Similarly, we drag the $T_{1}$ to the lower bound via the following system:
    \[
    \begin{aligned}
        & a + b{t^{2}},\\
        &
        \begin{cases}
        a &  = -\dfrac{C_{3}\eta^{2} q_{V} \sigma_{0}^{2}}{\sigma_{1}^{2}  M^2K^{2} }  -4C_{3}\sqrt{ \log(16(2K+K^{\prime})^{2}/\delta)} \sigma_{0}^{2} d^{1/2},\\
        b & = \dfrac{C_{3}\eta^{2} q_{V} \sigma_{0}^{2}}{\sigma_{1}^{2}  M^2K^{2} } ,\\
        t & = T_{1}-1.
    \end{cases}
    \end{aligned}
    \] 
    It holds that
    \[
    \begin{aligned}
         (\mathbf{W}_{Q}^{(t)}\boldsymbol{a}_{\hat{k}} )^{\top}(\mathbf{W}_{K}^{(t)}\boldsymbol{a}_{\hat{k}}) \geq & -\dfrac{C_{3}\eta^{2} q_{V} \sigma_{0}^{2}}{\sigma_{1}^{2}  M^2K^{2} }  -4C_{3}\sqrt{ \log(16(2K+K^{\prime})^{2}/\delta)} \sigma_{0}^{2} d^{1/2} + C_{3} \frac{\sigma_{0}^{2}\sigma_{1}^{2}d^{2}}{M^2 q_{V}}\\
         \geq & C_{6}  \frac{\sigma_{0}^{2}\sigma_{1}^{2}d^{2}}{M^2 q_{V}}-\dfrac{C_{3}\eta^{2} q_{V} \sigma_{0}^{2}}{\sigma_{1}^{2}  M^2K^{2} } -4C_{3}\sqrt{ \log(16(2K+K^{\prime})^{2}/\delta)} \sigma_{0}^{2} d^{1/2},
    \end{aligned}
    \]
    for some $C_{6}>0$.

\end{proof}

\begin{proof}

    \textit{Proof of Lemma \ref{lem:induction_first_phase}.} From Lemma \ref{lem:evolution_aWa_norm_projection} and the condition $T_{1} \leq t_{2}^{\star}$, we derive the first result in Eq.~(\ref{eq:induction_first_phase}) as well as the first lower bound in Eq.~(\ref{eq:upperbound_first_phase}). 

    The second result follows from Lemma \ref{lem:aWva_scale}, where the updates of $\lvert {\boldsymbol{a}_{k}}^{\top} {\mathbf{W}_{V}^{(t)}} \boldsymbol{u} \rvert$, $\lvert {\boldsymbol{v}}^{\top} {\mathbf{W}_{V}^{(t)}} \boldsymbol{v} \rvert$, and $\lvert {\boldsymbol{v}}^{\top} {\mathbf{W}_{V}^{(t)}} \boldsymbol{v}^{\prime} \rvert$ are effectively constrained by the low-noise condition, the symmetry of the low-level real-value label distribution, and the large dictionary assumption detailed in Condition \ref{con:main body}.

    Finally, the third and fourth results in Eq.~(\ref{eq:induction_first_phase}), as well as the second lower bound in Eq.~(\ref{eq:upperbound_first_phase}) are established using Lemma \ref{lem:update_attn_first_phase}, together with the choice of $T_{1}$. 

\end{proof}


\subsection{Phase 2: Decelerating Growth of MLP and Attention}

In this section, we would witness the decelerating growth of MLP and Attention.

Recall that the main gains in the first section is the growth separations denoted in Eq.(\ref{eq:induction_first_phase}) and (\ref{eq:upperbound_first_phase}) in Lemma \ref{lem:induction_first_phase}:
\[
    \begin{aligned}
     &\boldsymbol{a}_{{k}}^{\top}\mathbf{W}_{V}^{(T_{1})}\boldsymbol{a}_{{k}} \geq \dfrac{\hat{C} \sigma_{1} d^{1/2}}{ M} - \sqrt{2\log(\frac{8(2K+K^{\prime})^{2}}{\delta})}\sigma_{1}=\Theta(M^{-1}\sigma_{1} d^{1/2}),\\
    &\lvert{\boldsymbol{a}_{k}}^{\top} {\mathbf{W}_{V}^{(T_{1})}} \boldsymbol{u}\rvert, \ \lvert{\boldsymbol{v}}^{\top} {\mathbf{W}_{V}^{(T_{1})}} \boldsymbol{v}\rvert, \ \lvert{\boldsymbol{v}}^{\top} {\mathbf{W}_{V}^{(T_{1})}} \boldsymbol{v}^{\prime}\rvert=O(\sqrt{2\log(\frac{8(2K+K^{\prime})^{2}}{\delta})}\sigma_{1}),
    \end{aligned}
\]
for $\forall k \in [K], \boldsymbol{u} \in \{ \boldsymbol{a}_{s}\}_{s \neq {k} \in [K]} \cup\{\boldsymbol{b}_{s}\}_{s \in [K]} \cup\{\boldsymbol{\nu}_{k^{\prime}}\}_{k^{\prime}\in[K^{\prime}]} \cup \{ \boldsymbol{\xi}_{n,m} \}_{n\in[N],m\in[M]}$ and $\boldsymbol{v} \neq \boldsymbol{v}^{\prime} \in \{\boldsymbol{b}_{s}\}_{s \in [K]} \cup \{ \boldsymbol{\xi}_{n,m} \}_{n\in[N],m\in[M]}$. By $d=\Omega(M^{2}\log(\frac{{K^{\prime}}^2N^2M^2}{\delta}))$ in Condition \ref{con:main body}, the scale of $\boldsymbol{a}_{{k}}^{\top}\mathbf{W}_{V}^{(T_{1})}\boldsymbol{a}_{{k}}$ greatly surpass others. However, as we see in Lemma \ref{lem:firstphase_natural_bound} that the role of $\boldsymbol{a}_{{k}}^{\top}\mathbf{W}_{V}^{(t)}\boldsymbol{a}_{{k}}$ in the growth of $\|\mathbf{W}_{V}^{(t)}\mathbf{S}_{n}\boldsymbol{\pi}_{n}^{(t)}\|$ mainly come from
\[
({\pi}_{n, m_{\mathbf{S}_{n}}}^{(t)})^{2}  \cdot (\boldsymbol{a}_{{k}} \cdot \cos{\langle \mathbf{W}_{V}^{(t)} \boldsymbol{a}_{{k}}, \boldsymbol{a}_{{k}} \rangle} \|\mathbf{W}_{V}^{(t)} \boldsymbol{a}_{{k}}\|)^{\top} (\mathbf{W}_{V}^{(t)} \boldsymbol{a}_{{k}}) = ({\pi}_{n, m_{\mathbf{S}_{n}}}^{(t)})^{2}  \cdot  {(\boldsymbol{a}_{{k}}^{\top}\mathbf{W}_{V}^{(t)} \boldsymbol{a}_{{k}})^{2}} .
\]
Since $\boldsymbol{a}_{{k}}^{\top}\mathbf{W}_{V}^{(t)}\boldsymbol{a}_{{k}} = O(\sigma_{1} d^{1/2})$ during $t\leq T_{1}$, it follows from the above formula that $\boldsymbol{a}_{{k}}^{\top}\mathbf{W}_{V}^{(t)}\boldsymbol{a}_{{k}}$ has only a mild influence on the initial scale of $\|\mathbf{W}_{V}^{(t)}\mathbf{S}_{n}\boldsymbol{\pi}_{n}^{(t)}\| = \Theta(\sigma_{1}d^{-1/2})$, as shown in Lemma \ref{lem:firstphase_natural_bound}. Consequently, the small cosine similarity in Eq.(\ref{eq:natualbounds_first_phase}) holds:
\[
 \dfrac{\boldsymbol{a}_{k}^{\top}\mathbf{W}_{V}^{(t)}\mathbf{S}_{n}\boldsymbol{\pi}_{n}^{(t)}}{\|\mathbf{W}_{V}^{(t)}\mathbf{S}_{n}\boldsymbol{\pi}_{n}^{(t)}\|}  \leq  O(\frac{1}{1+(M-1)e^{-\sigma_{0}^{2} d e^{ q_{V}^{-1}\sigma_{1}^{2}d}}}) ,
\]
for $\forall k \in [K], \neg k \neq k \in [K], n \in  \mathcal{N}_{k} \subset [N], s\in [7], m_{\mathbf{S}_{n}} \neq m_{\mathbf{S}_{n}} \in [M].$ In this section we would witness the increasing role of $\mathbf{W}_{V}^{(t)} \boldsymbol{a}_{k}$ in the volume $\|\mathbf{W}_{V}^{(t)} \boldsymbol{a}_{k}\|$. To achieve this, based on the scale separations of  gradients shown in Lemma \ref{lem:aWva_scale}, we define

$$
\boldsymbol{i}_{V, \boldsymbol{a}_{k}^{\perp}}^{(t)} :=\mathbf{W}_{V}^{(t)}\boldsymbol{a}_{k}-\mathbf{W}_{V}^{(0)}\boldsymbol{a}_{k}-(\boldsymbol{a}_{k}^{\top}\mathbf{W}_{V}^{(t)}\boldsymbol{a}_{k} - \boldsymbol{a}_{k}^{\top}\mathbf{W}_{V}^{(0)}\boldsymbol{a}_{k})\boldsymbol{a}_{k} ,
$$
where $\boldsymbol{i}_{V, \boldsymbol{a}_{k}^{\perp}}^{(t)}$ denotes the residual components of $\mathbf{W}_{V}^{(t)}\boldsymbol{a}_{k}-\mathbf{W}_{V}^{(0)}\boldsymbol{a}_{k}$ orthogonal to $\boldsymbol{a}_{k}$. That is,
\begin{equation}
    \begin{aligned}
        \mathbf{W}_{V}^{(t)}\boldsymbol{a}_{k} = \mathbf{W}_{V}^{(0)}\boldsymbol{a}_{k} + (\boldsymbol{a}_{k}^{\top}\mathbf{W}_{V}^{(t)}\boldsymbol{a}_{k} - \boldsymbol{a}_{k}^{\top}\mathbf{W}_{V}^{(0)}\boldsymbol{a}_{k})\boldsymbol{a}_{k} + \boldsymbol{i}_{V, \boldsymbol{a}_{k}^{\perp}}^{(t)}.
    \end{aligned}
\end{equation}

Similar to Lemma~\ref{lem:induction_first_phase}, for all $k \in [K]$, the following lemma reveals that during the second phase ($T_{1} \leq t \leq T^{\star}$), the term $\boldsymbol{a}_{k}^{\top}\mathbf{W}_{V}^{(t)}\boldsymbol{a}_{k}$ exhibits substantially faster growth compared to other cross-terms of the form $\boldsymbol{u}^{\top}\mathbf{W}_{V}^{(t)}\boldsymbol{v}$, where $\{\boldsymbol{u}, \boldsymbol{v}\} \neq \{ \boldsymbol{a}_{s}, \boldsymbol{a}_{s}\}_{s\in[K]}$. This growth pattern establishes $\boldsymbol{a}_{k}^{\top}\mathbf{W}_{V}^{(t)}\boldsymbol{a}_{k}$ as the dominant component in determining the order of $\|\mathbf{W}_{V}^{(t)}\boldsymbol{a}_{k}\|$. Furthermore, ${\boldsymbol{i}_{V, \boldsymbol{a}_{k}^{\perp}}^{(t)}}^{\top}\boldsymbol{a}_{k}=0$, and the contributions from $\|\boldsymbol{i}_{V, \boldsymbol{a}_{k}^{\perp}}^{(t)} \|$ and ${\boldsymbol{i}_{V, \boldsymbol{a}_{k}^{\perp}}^{(t)}}^{\top}\mathbf{W}_{V}^{(0)}\boldsymbol{a}_{k}$ become asymptotically negligible, being bounded by the order of $\| \mathbf{W}_{V}^{(0)}\boldsymbol{a}_{k}\|$.

\begin{lemma}
    Under Condition \ref{con:main body}, during $t_{1}\leq t\leq T^{\star}=\Omega(\eta^{-1} q_{V}^{-1}\sigma_{1}^{2}KMd^{2}\log(\frac{1}{\epsilon}))$ where $t_{1}\leq T_{1}$, we have
\begin{equation}
    \begin{aligned}
        &\lvert{\boldsymbol{a}_{k}}^{\top} {\mathbf{W}_{V}^{(t)}} \boldsymbol{u}\rvert, \ \lvert{\boldsymbol{v}}^{\top} {\mathbf{W}_{V}^{(t)}} \boldsymbol{v}\rvert, \ \lvert{\boldsymbol{v}}^{\top} {\mathbf{W}_{V}^{(t)}} \boldsymbol{v}^{\prime}\rvert=O(\sqrt{2\log(\frac{8(2K+K^{\prime})^{2}}{\delta})}\sigma_{1}) ,\\
        & \|\boldsymbol{i}_{V, \boldsymbol{a}_{k}^{\perp}}^{(t)} \|, \ {\boldsymbol{i}_{V, \boldsymbol{a}_{k}^{\perp}}^{(t)}}^{\top}\mathbf{W}_{V}^{(0)}\boldsymbol{a}_{k}=o(\|\mathbf{W}_{V}^{(0)}\boldsymbol{a}_{k}\|)  ,\\
        & \boldsymbol{a}_{k}^{\top}\mathbf{W}_{V}^{(t)}\boldsymbol{a}_{k} \geq  \sqrt{C_{7} \frac{\eta q_{V} (t-t_{1})}{MK} + (\boldsymbol{a}_{k}^{\top}\mathbf{W}_{V}^{(t_{1})}\boldsymbol{a}_{k})^{2} },\\
        & \boldsymbol{a}_{k}^{\top}\mathbf{W}_{V}^{(t)}\boldsymbol{a}_{k}\leq \sqrt{ C_{8} \frac{\eta q_{V} (t-t_{1})}{  K} + (\boldsymbol{a}_{k}^{\top}\mathbf{W}_{V}^{(t_{1})}\boldsymbol{a}_{k})^{2}} + \frac{C_{8} \eta q_{V}}{K(\boldsymbol{a}_{k}^{\top}\mathbf{W}_{V}^{(t_{1})}\boldsymbol{a}_{k})},
    \end{aligned}
\end{equation}
for $\forall k \in [K], n \in [N], s\in [7], \neg m_{\mathbf{S}_{n}} \neq m_{\mathbf{S}_{n}} \in [M] ,\boldsymbol{u} \in \{ \boldsymbol{a}_{s}\}_{s \neq {k} \in [K]} \cup\{\boldsymbol{b}_{s}\}_{s \in [K]} \cup\{\boldsymbol{\nu}_{k^{\prime}}\}_{k^{\prime}\in[K^{\prime}]} \cup \{ \boldsymbol{\xi}_{n,m} \}_{n\in[N],m\in[M]}$ and $\boldsymbol{v} \neq \boldsymbol{v}^{\prime} \in \{\boldsymbol{b}_{s}\}_{s \in [K]} \cup \{ \boldsymbol{\xi}_{n,m} \}_{n\in[N],m\in[M]}$.

Furthermore, after $t\geq T^{\star}=\Omega(\eta^{-1} q_{V}^{-1}\sigma_{1}^{2}KMd^{2}\log(\frac{1}{\epsilon}))$, it holds that

\begin{equation}
    \begin{aligned}
        &\boldsymbol{a}_{k}^{\top}\mathbf{W}_{V}^{(t)}\boldsymbol{a}_{k}=\Omega(3(1+15\sigma_{p}^{\star}\sqrt{\log(\frac{4}{\varepsilon})})\sigma_{1}d^{1/2})\\
        &\|{\mathbf{W}_{V}^{(t)}}\|_F=\Theta(\sqrt{K}\boldsymbol{a}_{k}^{\top}\mathbf{W}_{V}^{(t)}\boldsymbol{a}_{k})
    \end{aligned}
\end{equation}
for $\forall k \in [K]$.
\label{lem:Wv_destiny}\end{lemma}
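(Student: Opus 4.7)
The plan is to treat Phase 2 as a continuation of the gradient flow analysis from Lemmas \ref{lem:evolution_aWa_norm_projection_mainbody}–\ref{lem:induction_first_phase}, but under the regime where $\boldsymbol{a}_{k}^{\top}\mathbf{W}_{V}^{(t)}\boldsymbol{a}_{k}$ has grown enough to become the dominant contribution to $\|\mathbf{W}_{V}^{(t)}\mathbf{S}_{n}\boldsymbol{\pi}_{n}^{(t)}\|$. First I would fix a crossover time $t_{1} \le T_{1}$ at which both (i) the attention softmax weight $\pi_{n,m_{\mathbf{S}_{n}}}^{(t_{1})}$ has reached a constant ($\ge 0.05$) by Lemma \ref{lem:update_attn_first_phase_mainbody}, and (ii) $\pi_{n,m_{\mathbf{S}_{n}}}^{(t_{1})}\cdot\boldsymbol{a}_{k}^{\top}\mathbf{W}_{V}^{(t_{1})}\boldsymbol{a}_{k}$ dominates every other summand inside $\|\mathbf{W}_{V}^{(t_{1})}\mathbf{S}_{n}\boldsymbol{\pi}_{n}^{(t_{1})}\|^{2}$. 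Under this regime, the denominator in the $\mathbf{W}_{V}$ gradient becomes $\Theta(\boldsymbol{a}_{k}^{\top}\mathbf{W}_{V}^{(t)}\boldsymbol{a}_{k})$ up to a $\pi_{n,m_{\mathbf{S}_{n}}}^{(t)}$ factor, while the numerator retains the structure of Eq.~(\ref{eq:gradients_of_aWa}). Matching constants via Lemma \ref{lem: enough data concept k} for $|\mathcal{N}_{k}|=\Theta(N/K)$, the scalar recurrence reduces to $a_{t+1} = a_{t} + \Theta(\eta q_{V}/(K a_{t}))$. Invoking Lemma \ref{lem:ODE-2} with $d=\Theta(\eta q_{V}/K)$ immediately delivers the two square-root bounds in the statement.

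Next I would establish the cross-term stability. The same gradient computations as Eq.~(\ref{eq:gradients_of_bWa}), (\ref{eq:gradients_of_other_concept_irrelevant}), (\ref{eq:gradients_of_nuWab}), (\ref{eq:gradients_of_abWnu}) apply, but the denominator $\|\mathbf{W}_{V}^{(t)}\mathbf{S}_{n}\boldsymbol{\pi}_{n}^{(t)}\|$ now grows like $a_{t}$, which further suppresses each per-step cross-term increment by a factor of $\Theta(a_{t}/(\sigma_{1}d^{1/2}))$ relative to Phase~1. Since in Phase~1 already $\lvert{\boldsymbol{u}}^{\top}{\mathbf{W}_{V}^{(t+1)}}\boldsymbol{v} - {\boldsymbol{u}}^{\top}{\mathbf{W}_{V}^{(t)}}\boldsymbol{v}\rvert$ carried a factor of $(0.01)$ or $(0.001)$ from the symmetry of $y \in \{\pm 1\}$, from balanced sample counts, and from the low-noise and large-$K^{\prime}$ assumptions, the telescoped increment over $t \in [t_{1}, T^{\star}]$ remains of order $o(\sqrt{\log(\cdot)}\sigma_{1})$, so these terms stay pinned to their initialization scale. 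Summing this control along any direction orthogonal to $\boldsymbol{a}_{k}$ and combining with the definition of $\boldsymbol{i}_{V,\boldsymbol{a}_{k}^{\perp}}^{(t)}$ yields $\|\boldsymbol{i}_{V,\boldsymbol{a}_{k}^{\perp}}^{(t)}\| = o(\sigma_{1}d^{1/2}) = o(\|\mathbf{W}_{V}^{(0)}\boldsymbol{a}_{k}\|)$; the bound on $\langle \boldsymbol{i}_{V,\boldsymbol{a}_{k}^{\perp}}^{(t)},\mathbf{W}_{V}^{(0)}\boldsymbol{a}_{k}\rangle$ follows by Cauchy--Schwarz together with the Gaussian initialization estimates in Lemma \ref{lem:initialization}.

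For the large-$t$ conclusions, I would plug $t = T^{\star} = \Omega(\eta^{-1}q_{V}^{-1}\sigma_{1}^{2}KMd^{2}\log(1/\varepsilon))$ into the lower bound $\boldsymbol{a}_{k}^{\top}\mathbf{W}_{V}^{(t)}\boldsymbol{a}_{k} \ge \sqrt{C_{7}\eta q_{V}(t-t_{1})/(MK) + (\boldsymbol{a}_{k}^{\top}\mathbf{W}_{V}^{(t_{1})}\boldsymbol{a}_{k})^{2}}$, obtaining a quantity of order $\sigma_{1}d\sqrt{\log(1/\varepsilon)}$. Since $\sigma_{p}^{\star} = O((K\log(L/\varepsilon))^{-1/2})$ in Proposition \ref{prop:main_task_vector} forces $\sigma_{p}^{\star}\sqrt{\log(4/\varepsilon)} = O(1)$, this exceeds $3(1+15\sigma_{p}^{\star}\sqrt{\log(4/\varepsilon)})\sigma_{1}d^{1/2}$ for $d$ large under Condition \ref{con:main body}. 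The Frobenius identity then follows from
\begin{equation*}
\|\mathbf{W}_{V}^{(t)}\|_{F}^{2} \;=\; \sum_{k\in[K]} \|\mathbf{W}_{V}^{(t)}\boldsymbol{a}_{k}\|^{2} + \sum_{\boldsymbol{e}\perp\{\boldsymbol{a}_{k}\}} \|\mathbf{W}_{V}^{(t)}\boldsymbol{e}\|^{2},
\end{equation*}
where the first sum equals $\sum_{k}\bigl((\boldsymbol{a}_{k}^{\top}\mathbf{W}_{V}^{(t)}\boldsymbol{a}_{k})^{2} + \|\boldsymbol{i}_{V,\boldsymbol{a}_{k}^{\perp}}^{(t)}+\mathbf{W}_{V}^{(0)}\boldsymbol{a}_{k}\|^{2}\bigr) = \Theta(K(\boldsymbol{a}_{k}^{\top}\mathbf{W}_{V}^{(t)}\boldsymbol{a}_{k})^{2})$ by the previous residual control, and the second sum remains at initialization scale $O(d^{2}\sigma_{1}^{2})$, which is subdominant by the choice of $T^{\star}$.

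The main obstacle I expect is controlling the orthogonal residual $\boldsymbol{i}_{V,\boldsymbol{a}_{k}^{\perp}}^{(t)}$ uniformly: unlike the scalar $\boldsymbol{a}_{k}^{\top}\mathbf{W}_{V}^{(t)}\boldsymbol{a}_{k}$, it is a high-dimensional vector whose direction can drift, and one must rule out it accumulating a non-negligible projection onto any fixed test token in the dictionary $\mathbf{U}$. Tracking this requires a union bound over all token directions together with a tight quantification of the gradient suppression factor $1/\|\mathbf{W}_{V}^{(t)}\mathbf{S}_{n}\boldsymbol{\pi}_{n}^{(t)}\|$ at each time step, and also care at the transition $t \approx t_{1}$ where the dominating contributor to the denominator is still changing. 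A clean book-keeping via the decomposition $\mathbf{W}_{V}^{(t)}\boldsymbol{a}_{k} = \mathbf{W}_{V}^{(0)}\boldsymbol{a}_{k} + (\boldsymbol{a}_{k}^{\top}\mathbf{W}_{V}^{(t)}\boldsymbol{a}_{k}-\boldsymbol{a}_{k}^{\top}\mathbf{W}_{V}^{(0)}\boldsymbol{a}_{k})\boldsymbol{a}_{k} + \boldsymbol{i}_{V,\boldsymbol{a}_{k}^{\perp}}^{(t)}$ should isolate the one direction that needs tight control from the many directions that can stay at initialization scale.
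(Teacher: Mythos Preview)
Your proposal is correct and follows essentially the same approach as the paper: identify the regime where $\|\mathbf{W}_{V}^{(t)}\mathbf{S}_{n}\boldsymbol{\pi}_{n}^{(t)}\| = \Theta(\boldsymbol{a}_{k}^{\top}\mathbf{W}_{V}^{(t)}\boldsymbol{a}_{k})$, reduce the diagonal update to the recurrence $a_{t+1}=a_{t}+\Theta(\eta q_{V}/(Ka_{t}))$ (with an extra $1/M$ for the lower bound via $\pi_{n,m_{\mathbf{S}_{n}}}^{(t)}\ge\Theta(1/M)$), apply Lemma~\ref{lem:ODE-2}, carry over the Phase~1 cross-term suppression with the now-growing denominator, and read off the large-$t$ bounds and the Frobenius identity by plugging in $T^{\star}$. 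The paper additionally records an explicit lower bound $c>0$ on the projected numerator $\boldsymbol{a}_{\hat{k}}^{\top}\mathbf{\Pi}_{(\mathbf{W}_{V}\mathbf{S}\boldsymbol{\pi})^{\perp}}^{\cdots}$ to ensure the increment stays strictly positive throughout Phase~2, which you implicitly assume when you say ``the numerator retains the structure of Eq.~(\ref{eq:gradients_of_aWa})''.
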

\begin{proof}
    The first two results can be derived through analogous reasoning as in Phase 1. Specifically, the updates of ${\boldsymbol{a}_{k}}^{\top} {\mathbf{W}_{V}^{(t)}} \boldsymbol{u}$, ${\boldsymbol{v}}^{\top} {\mathbf{W}_{V}^{(t)}} \boldsymbol{v}$, and ${\boldsymbol{v}}^{\top} {\mathbf{W}_{V}^{(t)}} \boldsymbol{v}^{\prime}$ become progressively weaker. This weakening is attributed to the diminishing magnitude of ${\boldsymbol{u}}^{\top}\mathbf{W}_{V}^{(t)}\mathbf{S}_{n}\boldsymbol{\pi}_{n}^{(t)} / {\|\mathbf{W}_{V}^{(t)}\mathbf{S}_{n}\boldsymbol{\pi}_{n}^{(t)}\|}$ for $n \in \mathcal{N}_{k}$, which results from the increasing dominance of ${\boldsymbol{a}_{k}}^{\top}\mathbf{W}_{V}^{(t)}\mathbf{S}_{n}\boldsymbol{\pi}_{n}^{(t)}$. Additionally, the already weak updates of ${\boldsymbol{a}_{k}}^{\top} {\mathbf{W}_{V}^{(t)}} \boldsymbol{u}$, ${\boldsymbol{v}}^{\top} {\mathbf{W}_{V}^{(t)}} \boldsymbol{v}$, and ${\boldsymbol{v}}^{\top} {\mathbf{W}_{V}^{(t)}} \boldsymbol{v}^{\prime}$ in Phase 1 further contribute to this effect. Consequently, the component norms and products $\|\boldsymbol{i}_{V, \boldsymbol{a}_{k}^{\perp}}^{(t)} \|, \ {\boldsymbol{i}_{V, \boldsymbol{a}_{k}^{\perp}}^{(t)}}^{\top}\mathbf{W}_{V}^{(0)}\boldsymbol{a}_{k}$ are comparatively feeble with initialization.

    After $t\geq T_{1}$, the terms in Eq.(\ref{eq:other_concept_irrelevant}) maintain negligible due to the increasing of ${\pi}_{n^{\prime}, m_{\mathbf{S}_{n^{\prime},}}}^{(t)}$, $\boldsymbol{a}_{\hat{k}}^{\top}\mathbf{W}_{V}^{(t)}\boldsymbol{a}_{\hat{k}}$, ${\|\mathbf{W}_{V}^{(t)}\mathbf{S}_{n^{\prime}}\boldsymbol{\pi}_{n^{\prime}}^{(t)}\|}\geq \Theta({\|\mathbf{W}_{V}^{(0)}\mathbf{S}_{n^{\prime}}\boldsymbol{\pi}_{n^{\prime}}^{(0)}\|})$ as well as $\boldsymbol{a}_{\hat{k}}\mathbf{W}_{V}^{(t)}\boldsymbol{a}_{k^{\prime}}=O(\boldsymbol{a}_{\hat{k}}\mathbf{W}_{V}^{(0)}\boldsymbol{a}_{k^{\prime}})$.

    Besides, as $\mathbf{W}_{V}^{(t)}\boldsymbol{u}, \forall \boldsymbol{u} \in \{ \boldsymbol{a}_{s}\}_{s \in [K]} \cup\{\boldsymbol{b}_{s}\}_{s \in [K]} \cup\{\boldsymbol{\nu}_{k^{\prime}}\}_{k^{\prime}\in[K^{\prime}]} \cup \{ \boldsymbol{\xi}_{n,m} \}_{n\in[N],m\in[M]}$ will always preserve some volume over some directions after initialization, it holds that $\operatorname{cos}\langle\boldsymbol{a}_{\hat{k}},  \mathbf{W}_{V}^{(t)}\boldsymbol{u}\rangle\leq \text{constant}<1$ during the entire iterations $T^{\star}$. Therefore, there exists some $c>0$, such that  
    \[
    \boldsymbol{a}_{\hat{k}}^{\top}((\mathbf{u}_{k_{\mathbf{y}_{\hat{n}}}}-\sum_{k \in [7K+K^{\prime}]}\omega_{\hat{n},k}\mathbf{u}_{k})- (\frac{{(\mathbf{u}_{k_{\mathbf{y}_{\hat{n}}}}-\sum_{k \in [7K+K^{\prime}]}\omega_{\hat{n},k}\mathbf{u}_{k})^{\top}(\mathbf{W}_{V}^{(t)}\mathbf{S}_{\hat{n}}\boldsymbol{\pi}_{\hat{n}}^{(t)})}}{{\|\mathbf{W}_{V}^{(t)}\mathbf{S}_{\hat{n}}\boldsymbol{\pi}_{\hat{n}}^{(t)}\|}})\frac{\mathbf{W}_{V}^{(t)}\mathbf{S}_{\hat{n}}\boldsymbol{\pi}_{\hat{n}}^{(t)}}{{\|\mathbf{W}_{V}^{(t)}\mathbf{S}_{\hat{n}}\boldsymbol{\pi}_{\hat{n}}^{(t)}\|}})\geq c>0,
    \]
    for all the iterations $t\leq T^{\star}$.
    Similar to Eq.(\ref{eq:gradients_of_aWa}), we then have
    \begin{equation}
        \begin{aligned}
            \boldsymbol{a}_{\hat{k}}^{\top}\mathbf{W}_{V}^{(t+1)}\boldsymbol{a}_{\hat{k}} -  \boldsymbol{a}_{\hat{k}}^{\top}\mathbf{W}_{V}^{(t)}\boldsymbol{a}_{\hat{k}} & = \dfrac{\eta q_{V}}{N}  \sum_{y \in [\pm 1]} \sum_{{\hat{n}} \in \mathcal{N}_{\hat{k}}^{y}}\Theta([ \frac{\boldsymbol{a}_{\hat{k}}^{\top}\mathbf{\Pi}_{(\mathbf{W}_{V}\mathbf{S}_{{\hat{n}}}\boldsymbol{\pi}_{{\hat{n}}}^{(t)})^{\perp}}^{\mathbf{u}_{k_{\mathbf{y}_{{\hat{n}}}}}-\sum_{k \in [7K+K^{\prime}]}\omega_{\hat{n},k}\mathbf{u}_{k}}(\mathbf{S}\boldsymbol{\pi}_{{\hat{n}}}^{(t)})^{\top}\boldsymbol{a}_{\hat{k}}}{\|\mathbf{W}_{V}^{(t)}\mathbf{S}_{{\hat{n}}}\boldsymbol{\pi}_{{\hat{n}}}^{(t)}\|}])\\
            &= \dfrac{\eta q_{V}}{N}  \sum_{\substack{y \in [\pm 1]\\\hat{n} \in \mathcal{N}_{\hat{k}}^{y}}}\Theta([ \frac{((\pi_{\hat{n}, m_{\mathbf{S}_{\hat{n}}}}^{(t)}\boldsymbol{a}_{\hat{k}}+\sum_{m}(\pi_{\hat{n}, m}^{(t)}\boldsymbol{\xi}_{\hat{n},m}))^{\top}\boldsymbol{a}_{\hat{k}})}{\|\mathbf{W}_{V}^{(t)}\mathbf{S}_{\hat{n}}\boldsymbol{\pi}_{\hat{n}}^{(t)}\|}])\\
            & = \dfrac{\eta q_{V}}{N}  \sum_{\hat{n} \in \mathcal{N}_{\hat{k}} }\Theta([ \frac{M \pi_{\hat{n}, m_{\mathbf{S}_{\hat{n}}}}^{(t)}+\sigma_{p} M\cdot(M\pi_{\hat{n}, \neg m_{\mathbf{S}_{\hat{n}}}}^{(t)}) \sqrt{2 \log ({3 (2K+K^{\prime}) N M}{\delta}^{-1})}}{M\|\mathbf{W}_{V}^{(t)}\mathbf{S}_{\hat{n}}\boldsymbol{\pi}_{\hat{n}}^{(t)}\|}])\\
            & = \Theta([\sum_{\hat{n} \in \mathcal{N}_{\hat{k}}} \frac{{\eta q_{V}}  (M \pi_{\hat{n}, m_{\mathbf{S}_{\hat{n}}}}^{(t)})}{2{N}M\|\mathbf{W}_{V}^{(t)}\mathbf{S}_{\hat{n}}\boldsymbol{\pi}_{\hat{n}}^{(t)}\|}]). \\
        \end{aligned}
    \label{eq:gradients_of_aWa_phase2}\end{equation}
    In this phase $t\geq T_{1}$, the $\boldsymbol{a}_{k}^{\top}\mathbf{W}_{V}^{(t)}\boldsymbol{a}_{k}$ is comparable to the potential maximum norm of the initialized $\|\mathbf{W}_{V}^{(0)}\mathbf{S}_{n}\boldsymbol{\pi}_{n}^{(0)}\|$, and its order is larger than those $\boldsymbol{\xi}_{n,m}^{\top}\mathbf{W}_{V}^{(t)}\boldsymbol{a}_{k}, \boldsymbol{v}_{n,m}^{\top}\mathbf{W}_{V}^{(t)}\boldsymbol{a}_{k}$.

    Therefore, 
    \begin{equation}
    \begin{aligned}
         \|\mathbf{W}_{V}^{(t)}\mathbf{S}_{\hat{n}}\boldsymbol{\pi}_{\hat{n}}^{(t)}\|^{2} &= ({\pi}_{\hat{n}, m_{\mathbf{S}_{\hat{n}}}}^{(t)})^{2}(\boldsymbol{a}_{\hat{k}}^{\top}\mathbf{W}_{V}^{(t)}\boldsymbol{a}_{\hat{k}})^{2}+ 2({\pi}_{\hat{n}, m_{\mathbf{S}_{\hat{n}}}}^{(t)})(\boldsymbol{a}_{\hat{k}}^{\top}\mathbf{W}_{V}^{(t)}\boldsymbol{a}_{\hat{k}})\|\mathbf{W}_{V}^{(0)}\mathbf{S}_{\hat{n}}\boldsymbol{\pi}_{\hat{n}}^{(0)}\| + \Theta(\|\mathbf{W}_{V}^{(0)}\mathbf{S}_{\hat{n}}\boldsymbol{\pi}_{\hat{n}}^{(0)}\|^{2})\\
         &= \Theta((\boldsymbol{a}_{\hat{k}}^{\top}\mathbf{W}_{V}^{(t)}\boldsymbol{a}_{\hat{k}})^{2}).
    \end{aligned}    
    \label{eq:norm_equal_aWva}\end{equation}
    Then by $\pi_{n, m_{\mathbf{S}_{n}}}^{(t)} \geq \Theta (\frac{1}{M}) $, it holds that
    \[
        \begin{aligned}
            \boldsymbol{a}_{k}^{\top}\mathbf{W}_{V}^{(t+1)}\boldsymbol{a}_{k} -  \boldsymbol{a}_{k}^{\top}\mathbf{W}_{V}^{(t)}\boldsymbol{a}_{k}& \geq \Theta( \dfrac{\eta q_{V}}{2MK (\boldsymbol{a}_{k}^{\top}\mathbf{W}_{V}^{(t)}\boldsymbol{a}_{k})}),
        \end{aligned}
    \]
    for $\forall k \in [K]$.
    By the positiveness and monotonicity of $\boldsymbol{a}_{k}^{\top}\mathbf{W}_{V}^{(t)}\boldsymbol{a}_{k}$ as well as Lemma \ref{lem:ODE-2}, for $\forall k \in [K]$ and some $C_{7}>0$, we have
    \[
    \boldsymbol{a}_{k}^{\top}\mathbf{W}_{V}^{(t)}\boldsymbol{a}_{k} \geq  \sqrt{C_{7} \frac{\eta q_{V} (t-t_{1})}{MK} + (\boldsymbol{a}_{k}^{\top}\mathbf{W}_{V}^{(t_{1})}\boldsymbol{a}_{k})^{2} },
    \]
    
    during $ t_{1} \leq T_{1} \leq t \leq T^{\star}$. Define $T_{2}=C_{7}^{-1}(\eta q_{V})^{-1}\sigma_{1}K(\sigma_{1}Md+15\sigma_{1}Md\sigma_{p}\sqrt{\log(\frac{4}{\varepsilon})}+M\sqrt{2\log(\frac{8(2K+K^{\prime})^{2}}{\delta})}+\sigma_{1}d)\leq T^{\star}=\Omega(\eta^{-1} q_{V}^{-1}\sigma_{1}^{2}KMd^{2}\log(\frac{1}{\epsilon}))$, we can choose $t\geq T_{2}$. Subsequently, extending $t_{1}=T_{1}$ and $\boldsymbol{a}_{{k}}^{\top}\mathbf{W}_{V}^{(T_{1})}\boldsymbol{a}_{{k}} \geq \dfrac{\bar{C}_{1} \sigma_{1} d^{1/2}}{ M} - \sqrt{2\log(\frac{8(2K+K^{\prime})^{2}}{\delta})}\sigma_{1}$ in this formula, such that for some $t \geq T^{\star}$, we have
    \[
    \boldsymbol{a}_{k}^{\top}\mathbf{W}_{V}^{(t)}\boldsymbol{a}_{k} \geq \Omega(3(1+15\sigma_{p}^{\star}\sqrt{\log(\frac{4}{\varepsilon})})\|\mathbf{W}_{V}^{(0)}\boldsymbol{a}_{k_{\mathbf{T}}}\|)=\Omega(3(1+15\sigma_{p}^{\star}\sqrt{\log(\frac{4}{\varepsilon})})\sigma_{1}d^{1/2}),
    \]
    for $\forall k \in [K]$. Similarly, we define $T_{3}=C_{7}^{-1}(\eta q_{V})^{-1}\sigma_{1}K(\sigma_{1}d^{2}K^{-3}+M\sqrt{2\log(\frac{8(2K+K^{\prime})^{2}}{\delta})}+\sigma_{1}d)$, and choose $T^{\star}\geq T_{3}\leq T^{\star}=\Omega(\eta^{-1} q_{V}^{-1}\sigma_{1}^{2}KMd^{2}\log(\frac{1}{\epsilon}))$. By $\|{\mathbf{W}_{V}^{(0)}}\|_F^2 \leq 2d^2 \sigma_1^2$ in Lemma \ref{lem:initialization}, for $\forall t \geq T_{3}$, it holds that 
    \[
    \boldsymbol{a}_{k}^{\top}\mathbf{W}_{V}^{(t)}\boldsymbol{a}_{k} \geq \Omega(\frac{\sigma_{1}d}{\sqrt{K}}) \Rightarrow \sum_{k\in[K]} (\boldsymbol{a}_{k}^{\top}\mathbf{W}_{V}^{(t)}\boldsymbol{a}_{k})^{2}=\Omega(\|{\mathbf{W}_{V}^{(0)}}\|_F^2).
    \]
    As we see that $\boldsymbol{u}\mathbf{W}_{V}^{(t)}\boldsymbol{v}$ only grow if $\boldsymbol{u}=\boldsymbol{v}=\boldsymbol{a}_{k}, \forall k \in [K]$, we have $\|{\mathbf{W}_{V}^{(t)}}\|_F=\Theta(\sqrt{K}\boldsymbol{a}_{k}^{\top}\mathbf{W}_{V}^{(t)}\boldsymbol{a}_{k}), \forall k \in [K]$.

    Similarly, by Lemma \ref{lem:ODE-2}, for some $C_{8}>0$ we also have
    \[
        \begin{aligned}
            \boldsymbol{a}_{k}^{\top}\mathbf{W}_{V}^{(t)}\boldsymbol{a}_{k}& \leq \sqrt{ C_{8} \frac{\eta q_{V} (t-t_{1})}{  K} + (\boldsymbol{a}_{k}^{\top}\mathbf{W}_{V}^{(t_{1})}\boldsymbol{a}_{k})^{2}} + \frac{C_{8} \eta q_{V}}{K(\boldsymbol{a}_{k}^{\top}\mathbf{W}_{V}^{(t_{1})}\boldsymbol{a}_{k})},\\
        \end{aligned}
    \]
    for $t_{1}\leq T_{1} \leq t \leq T^{\star}$.
\end{proof}

\begin{lemma}
    Under Condition \ref{con:main body}, during $t_{1}\leq t\leq T^{\star}=\Omega(\eta^{-1} q_{V}^{-1}\sigma_{1}^{2}KMd^{2}\log(\frac{1}{\epsilon}))$ where $t_{1}\leq T_{1}$, we denote $\triangle=\min_{t}{{\pi}_{n,m_{\mathbf{S}_{n}}}^{(t)}} = \min\{\frac{1}{M}, \frac{1}{1+(M-1)e^{-\sigma_{0}^{2} d e^{ q_{V}^{-1}\sigma_{1}^{2}d}(d \log(\frac{1}{\varepsilon}))^{\frac{M}{16q_{V}}}}} \})$. Then it holds that
\begin{equation}
    \begin{aligned}
        (\mathbf{W}_{Q}^{(t)}\boldsymbol{a}_{\hat{k}} )^{\top}(\mathbf{W}_{K}^{(t)}\boldsymbol{u}),&\ (\mathbf{W}_{Q}^{(t)}\boldsymbol{v})^{\top}(\mathbf{W}_{K}^{(t)}\boldsymbol{v}),\  (\mathbf{W}_{Q}^{(t)}\boldsymbol{v})^{\top}(\mathbf{W}_{K}^{(t)}\boldsymbol{v}^{\prime}) = O (\sqrt{ \log(16(2K+K^{\prime})^{2}/\delta)} \sigma_{0}^{2} d^{1/2})\\
         (\mathbf{W}_{Q}^{(t)}\boldsymbol{a}_{\hat{k}} )^{\top}(\mathbf{W}_{K}^{(t)}\boldsymbol{a}_{\hat{k}})   \leq &C_{9} \sigma_{0}^{2} d e^{ q_{V}^{-1}\sigma_{1}^{2}d}[\frac{\eta q_{V} t}{\sigma_{1}^{2}dKM} ]^{\frac{M}{16q_{V}}},\\
        (\mathbf{W}_{Q}^{(t)}\boldsymbol{a}_{\hat{k}} )^{\top}(\mathbf{W}_{K}^{(t)}\boldsymbol{a}_{\hat{k}}) \geq& C_{10} \frac{\sigma_{0}^{2}\sigma_{1}^{2}d^{2}}{M^2 q_{V}}e^{2 q_{V}^{-1} K\triangle^{2}(1-\triangle)^{2}(\frac{-1 }{\frac{\eta q_{V}t}{2\sigma_{1}^{2}dK} + 1}+  1 )} \\
        &-\frac{C_{3}\eta^{2} q_{V} \sigma_{0}^{2}}{\sigma_{1}^{2}  M^2K^{2} } -4C_{3}\sqrt{ \log(16(2K+K^{\prime})^{2}/\delta)} \sigma_{0}^{2} d^{1/2},
    \end{aligned}
\end{equation}
for $\forall k \in [K], n \in [N], s\in [7], \neg m_{\mathbf{S}_{n}} \neq m_{\mathbf{S}_{n}} \in [M] ,\boldsymbol{u} \in \{ \boldsymbol{a}_{s}\}_{s \neq {k} \in [K]} \cup\{\boldsymbol{b}_{s}\}_{s \in [K]} \cup\{\boldsymbol{\nu}_{k^{\prime}}\}_{k^{\prime}\in[K^{\prime}]} \cup \{ \boldsymbol{\xi}_{n,m} \}_{n\in[N],m\in[M]}$ and $\boldsymbol{v} \neq \boldsymbol{v}^{\prime} \in \{\boldsymbol{b}_{s}\}_{s \in [K]} \cup \{ \boldsymbol{\xi}_{n,m} \}_{n\in[N],m\in[M]}$.

Furthermore, after $t\geq T^{\star}=\Omega(\eta^{-1} q_{V}^{-1}\sigma_{1}^{2}KMd^{2}\log(\frac{1}{\epsilon}))$, it holds that

\begin{equation}
    \begin{aligned}
        &(\mathbf{W}_{Q}^{(t)}\boldsymbol{a}_{\hat{k}} )^{\top}(\mathbf{W}_{K}^{(t)}\boldsymbol{a}_{\hat{k}}) \leq O( \sigma_{0}^{2} d e^{ q_{V}^{-1}\sigma_{1}^{2}d}(d \log(\frac{1}{\varepsilon}))^{\frac{M}{16q_{V}}}),\\
        &\pi_{n, m_{\mathbf{S}_{n}}}^{(T^{\star})}\leq \Theta(\frac{1}{1+(M-1)e^{-\sigma_{0}^{2} d e^{ q_{V}^{-1}\sigma_{1}^{2}d}(d \log(\frac{1}{\varepsilon}))^{\frac{M}{16q_{V}}}}}),\\
    \end{aligned}
\end{equation}
for $\forall k \in [K]$.
\label{lem:WQK_destiny}\end{lemma}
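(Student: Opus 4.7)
The plan is to extend the Phase 1 analysis of $(\mathbf{W}_Q^{(t)}\boldsymbol{a}_{\hat{k}})^\top(\mathbf{W}_K^{(t)}\boldsymbol{a}_{\hat{k}})$ into the regime where the MLP has grown large enough that $\|\mathbf{W}_V^{(t)}\mathbf{S}_n\boldsymbol{\pi}_n^{(t)}\| = \Theta(\boldsymbol{a}_{\hat{k}}^\top\mathbf{W}_V^{(t)}\boldsymbol{a}_{\hat{k}})$, and then invoke the surrogate ODEs from Lemmas \ref{lem:ODE-5} and \ref{lem:ODE-6} in place of Lemma \ref{lem:ODE-4}. First I would recycle Eq.~\eqref{eq:update_deltaWqWk}--\eqref{eq:update_aQKa} but with the Phase 2 scaling: by Lemma \ref{lem:Wv_destiny} the denominator $\|\mathbf{W}_V^{(t)}\mathbf{S}_n\boldsymbol{\pi}_n^{(t)}\|$ now equals $\Theta(\boldsymbol{a}_{\hat{k}}^\top\mathbf{W}_V^{(t)}\boldsymbol{a}_{\hat{k}})$ rather than $\Theta(\sigma_1 d^{1/2})$, so that $\boldsymbol{a}_{\hat{k}}^\top\mathbf{W}_V^{(t)}\mathbf{S}_n\pi_{n,m_{\mathbf{S}_n}}^{(t)}\boldsymbol{e}_{m_{\mathbf{S}_n}}/\|\mathbf{W}_V^{(t)}\mathbf{S}_n\boldsymbol{\pi}_n^{(t)}\| \asymp \pi_{n,m_{\mathbf{S}_n}}^{(t)}$, the $(1 - (\cdot)/\|\cdot\|)$ factor becomes $\Theta(1-\pi_{n,m_{\mathbf{S}_n}}^{(t)})$, and the useful portion of the logit margin $(\mathbf{u}_{k_{\mathbf{y}_n}}-\sum_k\omega_{n,k}\mathbf{u}_k)^\top\mathbf{W}_V^{(t)}\mathbf{S}_n\pi_{n,m_{\mathbf{S}_n}}^{(t)}\boldsymbol{e}_{m_{\mathbf{S}_n}}/\|\mathbf{W}_V^{(t)}\mathbf{S}_n\boldsymbol{\pi}_n^{(t)}\|$ is itself of order $\Theta(\pi_{n,m_{\mathbf{S}_n}}^{(t)})$.

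Next I would collect the negligible cross-term bounds: by the same case analysis from Lemma \ref{lem:update_attn_first_phase} together with the $\sigma_p = O(d^{-1/2})$ near-orthogonality that continues to hold in Phase 2 (Lemma \ref{lem:Wv_destiny}), every non-$(\boldsymbol{a}_{\hat{k}},\boldsymbol{a}_{\hat{k}})$ projection receives only $\sigma_p$-suppressed updates and stays at its $O(\sqrt{\log(16(2K+K^\prime)^2/\delta)}\,\sigma_0^2 d^{1/2})$ initialization scale. Writing $f_t := (\mathbf{W}_Q^{(t)}\boldsymbol{a}_{\hat{k}})^\top(\mathbf{W}_K^{(t)}\boldsymbol{a}_{\hat{k}})$ and using $\boldsymbol{a}_{\hat{k}}^\top((\mathbf{W}_Q^{(t)})^\top\mathbf{W}_Q^{(t)} + (\mathbf{W}_K^{(t)})^\top\mathbf{W}_K^{(t)})\boldsymbol{a}_{\hat{k}} \asymp f_t$ (via Eq.~\eqref{eq:evolve_of_QQ_KK} and the Phase 1 end-state in Lemma \ref{lem:induction_first_phase} which already places $f_{T_1}$ above the initialization norms), the recurrence becomes
\begin{equation}
    f_{t+1} - f_t \;=\; \Theta\!\Bigl(\frac{\eta\,\pi_{n,m_{\mathbf{S}_n}}^{(t)}(1-\pi_{n,m_{\mathbf{S}_n}}^{(t)})}{\boldsymbol{a}_{\hat{k}}^\top\mathbf{W}_V^{(t)}\boldsymbol{a}_{\hat{k}}}\,f_t\Bigr),\nonumber
\end{equation}
and by Lemma \ref{lem:Wv_destiny} we have $\boldsymbol{a}_{\hat{k}}^\top\mathbf{W}_V^{(t)}\boldsymbol{a}_{\hat{k}} = \Theta(\sqrt{\eta q_V t/(MK)})$ once $t \gtrsim T_1$, so the coefficient in front of $f_t$ is of order $1/(ct+d)$ for the upper envelope and $1/(ct+d)^2$ for the lower envelope (after pessimistically absorbing the extra $\sqrt{t}$ into a quadratic denominator via $\pi(1-\pi)\geq\triangle^2(1-\triangle)^2$).

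The upper bound then follows directly from Lemma \ref{lem:ODE-5} applied with $a/c = M/(16 q_V)$, yielding $f_t \leq C_9 \sigma_0^2 d\, e^{q_V^{-1}\sigma_1^2 d}\,[\eta q_V t/(\sigma_1^2 d K M)]^{M/(16 q_V)}$ after substituting the Phase 1 endpoint $f_{T_1} \leq C_5 \sigma_0^2 d\, e^{q_V^{-1}\sigma_1^2 d}$; the lower bound is obtained by plugging the worst-case softmax weight $\triangle$ into Lemma \ref{lem:ODE-6} and adding the base offset from Lemma \ref{lem:induction_first_phase}. Finally, for $t \geq T^\star = \Omega(\eta^{-1}q_V^{-1}\sigma_1^2 K M d^2\log(1/\varepsilon))$, direct substitution into the upper envelope gives $f_{T^\star} \leq O(\sigma_0^2 d\,e^{q_V^{-1}\sigma_1^2 d}(d\log(1/\varepsilon))^{M/(16q_V)})$, and plugging this into the softmax formula yields the claimed $\pi_{n,m_{\mathbf{S}_n}}^{(T^\star)}$ bound. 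The main obstacle I expect is rigorously closing the feedback loop between $\pi_{n,m_{\mathbf{S}_n}}^{(t)}$ and $f_t$: the gradient multiplier $\pi(1-\pi)$ depends implicitly on $f_t$ through the softmax, so the differential-inequality derivation requires a self-consistent two-sided invariant (i.e. an a-priori upper bound on $\pi_{n,m_{\mathbf{S}_n}}^{(t)}$ that is consistent with the upper envelope being proved, and a matching lower bound $\triangle$ for the lower envelope), which must be propagated inductively over $[T_1, T^\star]$ rather than proved in one shot.
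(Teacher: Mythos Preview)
Your overall strategy---recycle the Phase-1 update formula Eq.~\eqref{eq:update_aQKa}, replace the Phase-1 norm estimates by the Phase-2 scaling from Lemma~\ref{lem:Wv_destiny}, and then feed the resulting recurrence into the ODE surrogates of Lemmas~\ref{lem:ODE-5} and~\ref{lem:ODE-6}---is exactly the paper's route. However, the recurrence you write down is wrong in a way that breaks the invocation of those ODE lemmas.

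The error is in your estimate of the layer-norm projection factor. You claim the factor $\bigl(1 - \tfrac{(\mathbf{W}_V^{(t)}\mathbf{S}_n\boldsymbol{\pi}_n^{(t)})^\top}{\|\cdot\|}\tfrac{\mathbf{W}_V^{(t)}\mathbf{S}_n\pi_{n,m_{\mathbf{S}_n}}^{(t)}\boldsymbol{e}_{m_{\mathbf{S}_n}}}{\|\cdot\|}\bigr)$ becomes $\Theta(1-\pi_{n,m_{\mathbf{S}_n}}^{(t)})$ in Phase~2. It does not. In Phase~1 this factor was bounded away from zero because $\boldsymbol{a}_k^\top\mathbf{W}_V^{(t)}\boldsymbol{a}_k$ was still small relative to $\|\mathbf{W}_V^{(0)}\mathbf{S}_n\boldsymbol{\pi}_n^{(0)}\|$; in Phase~2, once $\boldsymbol{a}_k^\top\mathbf{W}_V^{(t)}\boldsymbol{a}_k$ dominates the norm, both $\mathbf{W}_V^{(t)}\mathbf{S}_n\boldsymbol{\pi}_n^{(t)}$ and $\mathbf{W}_V^{(t)}\mathbf{S}_n\pi_{n,m_{\mathbf{S}_n}}^{(t)}\boldsymbol{e}_{m_{\mathbf{S}_n}}$ are nearly aligned with $\boldsymbol{a}_k$, and the factor collapses to
\[
\Theta\!\Bigl(\frac{\pi_{n,m_{\mathbf{S}_n}}^{(t)}\bigl(1-\pi_{n,m_{\mathbf{S}_n}}^{(t)}\bigr)\,\sigma_1^2 d}{(\boldsymbol{a}_k^\top\mathbf{W}_V^{(t)}\boldsymbol{a}_k)^2}\Bigr),
\]
i.e.\ it decays with an extra $(\boldsymbol{a}_k^\top\mathbf{W}_V^{(t)}\boldsymbol{a}_k)^{-2}$ (this is the paper's Eq.~\eqref{eq:1_component_QK_second_phase}). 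Combining all factors, the correct recurrence is
\[
f_{t+1}-f_t \;=\; \Theta\!\Bigl(\frac{\eta\,\pi^2(1-\pi)^2}{K\,(\boldsymbol{a}_{\hat{k}}^\top\mathbf{W}_V^{(t)}\boldsymbol{a}_{\hat{k}})^2}\Bigr)\cdot\boldsymbol{a}_{\hat{k}}^\top\bigl((\mathbf{W}_Q^{(t)})^\top\mathbf{W}_Q^{(t)}+(\mathbf{W}_K^{(t)})^\top\mathbf{W}_K^{(t)}\bigr)\boldsymbol{a}_{\hat{k}},
\]
not your $\eta\pi(1-\pi)/(\boldsymbol{a}_{\hat{k}}^\top\mathbf{W}_V^{(t)}\boldsymbol{a}_{\hat{k}})$. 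The squared $\pi^2(1-\pi)^2$ is what produces the $1/16$ in the exponent (its maximum on $(0,1)$), and the squared denominator $(\boldsymbol{a}_{\hat{k}}^\top\mathbf{W}_V^{(t)}\boldsymbol{a}_{\hat{k}})^2$ is what makes the coefficient genuinely of order $1/(ct+d)$: by Lemma~\ref{lem:Wv_destiny} it is $(\boldsymbol{a}_{\hat{k}}^\top\mathbf{W}_V^{(t)}\boldsymbol{a}_{\hat{k}})^2$ that is linear in $t$, not $\boldsymbol{a}_{\hat{k}}^\top\mathbf{W}_V^{(t)}\boldsymbol{a}_{\hat{k}}$ itself. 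With your recurrence the coefficient would scale like $1/\sqrt{t}$, whose integrated solution is $\exp(C\sqrt{t})$---stretched-exponential, not polynomial---so Lemma~\ref{lem:ODE-5} does not apply and you cannot recover the $[\,\cdot\,]^{M/(16q_V)}$ envelope. Your sentence ``pessimistically absorbing the extra $\sqrt{t}$ into a quadratic denominator'' is not a repair; the missing $1/(\boldsymbol{a}^\top\mathbf{W}_V\boldsymbol{a})^2$ must come from the layer-norm factor, not from bookkeeping.

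For the lower bound, the paper additionally linearizes the \emph{upper} bound on $\boldsymbol{a}_{\hat{k}}^\top\mathbf{W}_V^{(t)}\boldsymbol{a}_{\hat{k}}$ (via concavity of $\sqrt{\cdot}$) before squaring, so that the denominator in the recurrence matches the $(bt+c)^2$ form required by Lemma~\ref{lem:ODE-6}; you would need the same step once the recurrence is corrected. Your closing remark about the $\pi\!\leftrightarrow\! f_t$ feedback loop is valid, and the paper handles it as you anticipate: the upper envelope on $f_t$ caps $\pi_{n,m_{\mathbf{S}_n}}^{(t)}$, which in turn justifies the constant $\triangle$ used in the lower-envelope derivation.
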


\begin{proof}
    First, observing that
    \begin{equation}
    \resizebox{0.98\textwidth}{!}{$\begin{aligned}
        (1- \dfrac{(\mathbf{W}_{V}^{(t)}\mathbf{S}_{n}\boldsymbol{\pi}_{n}^{(t)})^{\top}}{\|\mathbf{W}_{V}^{(t)}\mathbf{S}_{n}\boldsymbol{\pi}_{n}^{(t)}\|} \dfrac{\mathbf{W}_{V}^{(t)}\mathbf{S}_{n}{\pi}_{n,m_{\mathbf{S}_{n}}}^{(t)}\boldsymbol{e}_{n,m_{\mathbf{S}_{n}}}}{\|\mathbf{W}_{V}^{(t)}\mathbf{S}_{n}\boldsymbol{\pi}_{n}^{(t)}\|} )&=\frac{\sum_{m \neq m_{\mathbf{S}_{n}}}(\mathbf{W}_{V}^{(t)}\mathbf{S}_{n}{\pi}_{n,m}^{(t)}\boldsymbol{e}_{n,m})^{\top}(\mathbf{W}_{V}^{(t)}\mathbf{S}_{n}{\pi}_{n,m_{\mathbf{S}_{n}}}^{(t)}\boldsymbol{e}_{n,m_{\mathbf{S}_{n}}})}{{\|\mathbf{W}_{V}^{(t)}\mathbf{S}_{n}\boldsymbol{\pi}_{n}^{(t)}\|}^{2}}\\
        &=\frac{\sum_{m \neq m_{\mathbf{S}_{n}}}(\mathbf{W}_{V}^{(t)}\mathbf{S}_{n}{\pi}_{n,m}^{(t)}\boldsymbol{e}_{n,m})^{\top}(\mathbf{W}_{V}^{(t)}\mathbf{S}_{n}{\pi}_{n,m_{\mathbf{S}_{n}}}^{(t)}\boldsymbol{e}_{n,m_{\mathbf{S}_{n}}})}{\Theta((\boldsymbol{a}_{k}^{\top}\mathbf{W}_{V}^{(t)}\boldsymbol{a}_{k})^{2})}\\
        &=\frac{\Theta((\sum_{m \neq m_{\mathbf{S}_{n}}}{\pi}_{n,m}^{(t)})({\pi}_{n,m_{\mathbf{S}_{n}}}^{(t)})( \sigma_{1}^{2}d))}{\Theta((\boldsymbol{a}_{k}^{\top}\mathbf{W}_{V}^{(t)}\boldsymbol{a}_{k})^{2})}=\frac{\Theta((1-{\pi}_{n,m_{\mathbf{S}_{n}}}^{(t)})({\pi}_{n,m_{\mathbf{S}_{n}}}^{(t)})( \sigma_{1}^{2}d))}{\Theta((\boldsymbol{a}_{k}^{\top}\mathbf{W}_{V}^{(t)}\boldsymbol{a}_{k})^{2})}
    \end{aligned}$}
    \label{eq:1_component_QK_second_phase}\end{equation}
    for $\forall {n} \in \mathcal{N}_{{k}}$. Here, the first equality is by definition; the second equality is by Eq.(\ref{eq:norm_equal_aWva}); the third equality is by the feeble growths of task-irrelevant components denoted in Lemma \ref{lem:Wv_destiny} and $\|\mathbf{W}_{V}^{(0)}\boldsymbol{a}_{k}\|^{2}=\Theta(\sigma_{1}^{2}d)$. Therefore, similar to Eq.(\ref{eq:update_aQKa}), for $\forall \hat{k} \in [K]$ we have
    \begin{equation}
        \begin{aligned}
            (\mathbf{W}_{Q}^{(t+1)}\boldsymbol{a}_{\hat{k}} )^{\top}(\mathbf{W}_{K}^{(t+1)}\boldsymbol{a}_{\hat{k}}) -  (\mathbf{W}_{Q}^{(t)}\boldsymbol{a}_{\hat{k}} )^{\top}(\mathbf{W}_{K}^{(t)}\boldsymbol{a}_{\hat{k}}) 
            = & \Theta( \dfrac{\eta\sum_{\hat{n} \in \mathcal{N}_{\hat{k}}}}{N}\dfrac{\boldsymbol{a}_{\hat{k}}^{\top}\mathbf{W}_{V}^{(t)}\mathbf{S}_{n}{\pi}_{n, m_{\mathbf{S}_{n}}}^{(t)}\boldsymbol{e}_{m_{\mathbf{S}_{n}}}}{\|\mathbf{W}_{V}^{(t)}\mathbf{S}_{n}\boldsymbol{\pi}_{n}^{(t)}\|} (1- \dfrac{(\mathbf{W}_{V}^{(t)}\mathbf{S}_{n}\boldsymbol{\pi}_{n}^{(t)})^{\top}}{\|\mathbf{W}_{V}^{(t)}\mathbf{S}_{n}\boldsymbol{\pi}_{n}^{(t)}\|} \\
            & \dfrac{\mathbf{W}_{V}^{(t)}\mathbf{S}_{n}{\pi}_{n,m_{\mathbf{S}_{n}}}^{(t)}\boldsymbol{e}_{n,m_{\mathbf{S}_{n}}}}{\|\mathbf{W}_{V}^{(t)}\mathbf{S}_{n}\boldsymbol{\pi}_{n}^{(t)}\|} )  \cdot (1-\pi_{{n}, m_{\mathbf{S}_{{n}}}}^{(t)})\boldsymbol{a}_{\hat{k}}^{\top}((\mathbf{W}_{Q}^{(t)})^{\top}\mathbf{W}_{Q}^{(t)}\\
            & +(\mathbf{W}_{K}^{(t)})^{\top}\mathbf{W}_{K}^{(t)})\boldsymbol{a}_{\hat{k}})\\
            = & \Theta( \dfrac{\eta (1-{\pi}_{n,m_{\mathbf{S}_{n}}}^{(t)})^{2}({\pi}_{n,m_{\mathbf{S}_{n}}}^{(t)})^{2}}{K(\boldsymbol{a}_{\hat{k}}^{\top}\mathbf{W}_{V}^{(t)}\boldsymbol{a}_{\hat{k}})^{2}}) \boldsymbol{a}_{\hat{k}}^{\top}((\mathbf{W}_{Q}^{(t)})^{\top}\mathbf{W}_{Q}^{(t)}+\\
            &(\mathbf{W}_{K}^{(t)})^{\top}\mathbf{W}_{K}^{(t)})\boldsymbol{a}_{\hat{k}})\\
            = & \Theta(\frac{(\mathbf{W}_{Q}^{(t+1)}\boldsymbol{a}_{\hat{k}} )^{\top}(\mathbf{W}_{Q}^{(t+1)}\boldsymbol{a}_{\hat{k}}) -  (\mathbf{W}_{Q}^{(t)}\boldsymbol{a}_{\hat{k}} )^{\top}(\mathbf{W}_{Q}^{(t)}\boldsymbol{a}_{\hat{k}})}{2}\\
            &+ \frac{(\mathbf{W}_{K}^{(t+1)}\boldsymbol{a}_{\hat{k}} )^{\top}(\mathbf{W}_{K}^{(t+1)}\boldsymbol{a}_{\hat{k}}) -  (\mathbf{W}_{K}^{(t)}\boldsymbol{a}_{\hat{k}} )^{\top}(\mathbf{W}_{K}^{(t)}\boldsymbol{a}_{\hat{k}})}{2} ),
        \end{aligned}
    \label{eq:final_QK_evolution}\end{equation}
    where the first equality is by Eq.(\ref{eq:update_aQKa}); the second inequality is by Eq.(\ref{eq:1_component_QK_second_phase}) and Eq.(\ref{eq:1_component_QK_second_phase}); the last equality is by Eq.(\ref{eq:relation_aKQa_aKKa_aQQa}). Note that $g(x)=x^{2}(1-x)^{2}, x \in (0,1)$ has a pole at $x=1/2,y=1/16$, and is symmetry around the axis $x=1/2$. Therefore, by Lemma \ref{lem:ODE-5}, Lemma \ref{lem:Wv_destiny} as well as the upper bound of $(\mathbf{W}_{Q}^{(t)}\boldsymbol{a}_{\hat{k}} )^{\top}(\mathbf{W}_{K}^{(t)}\boldsymbol{a}_{\hat{k}})$ in Lemma \ref{lem:update_attn_first_phase}, the upper bound of the evolution of $(\mathbf{W}_{Q}^{(t)}\boldsymbol{a}_{\hat{k}} )^{\top}(\mathbf{W}_{K}^{(t)}\boldsymbol{a}_{\hat{k}}), \forall \hat{k} \in [K]$ is the following system:
    \[
    \begin{aligned}
        & (\mathbf{W}_{Q}^{(t)}\boldsymbol{a}_{\hat{k}} )^{\top}(\mathbf{W}_{K}^{(t)}\boldsymbol{a}_{\hat{k}})\leq \Theta((\frac{b_{t_{1}}}{(t_{1}+\frac{d}{c})^{\frac{a}{c}}})(t+\frac{d}{c})^{\frac{a}{c}}),\\
        &
        \begin{cases}
        a &  :=(\frac{\eta}{16K}),\\
        b_{t_{1}} &  =   C_{5}  \sigma_{0}^{2} d e^{ q_{V}^{-1}\sigma_{1}^{2}d},\\
        c & = C_{7} \frac{\eta q_{V} }{MK} , \\
        d & := C_{7} \frac{\eta q_{V}((\eta q_{V})^{-1} \sigma_{1}^{2} d K) }{MK}+ \sigma_{1}^{2}d \\
        t_{1} & := (\eta q_{V})^{-1} \sigma_{1}^{2} d K .
    \end{cases}
    \end{aligned}
    \]
    Here, obviously $a<c$ by $M\geq 2$ and $q_{V}\leq \sigma_{1}d^{1/2}<1$ by Condition \ref{con:main body}. we take $b_{t_{1}}$ as the upper bound of $(\mathbf{W}_{Q}^{(t)}\boldsymbol{a}_{\hat{k}} )^{\top}(\mathbf{W}_{K}^{(t)}\boldsymbol{a}_{\hat{k}}), \forall \hat{k} \in [K]$ during the first phase; $d$ is defined by the lower bound of $\boldsymbol{a}_{\hat{k}}^{\top}\mathbf{W}_{V}^{(t)}\boldsymbol{a}_{\hat{k}}$ in the second phase by Lemma \ref{lem:Wv_destiny}, with $\boldsymbol{a}_{\hat{k}}^{\top}\mathbf{W}_{V}^{(t_{1})}\boldsymbol{a}_{\hat{k}}:=\sigma_{1}d^{1/2}$ be the upper bound. 
    Therefore, we have
    \[
    \begin{aligned}
         (\mathbf{W}_{Q}^{(t)}\boldsymbol{a}_{\hat{k}} )^{\top}(\mathbf{W}_{K}^{(t)}\boldsymbol{a}_{\hat{k}}) & \leq C_{9} \sigma_{0}^{2} d e^{ q_{V}^{-1}\sigma_{1}^{2}d}[\frac{\eta q_{V} t}{\sigma_{1}^{2}dK(M+1)} + \frac{M+1}{M+2}]^{\frac{M}{16q_{V}}},\\
        &  \leq C_{9} \sigma_{0}^{2} d e^{ q_{V}^{-1}\sigma_{1}^{2}d}[\frac{\eta q_{V} t}{\sigma_{1}^{2}dKM} ]^{\frac{M}{16q_{V}}},
    \end{aligned} 
    \]
    for $\hat{k}\in[K]$.
    Here $C_{9}>0$ is some constant. Extend $T^{\star}=\Omega(\eta^{-1} q_{V}^{-1}\sigma_{1}^{2}KMd^{2}\log(\frac{1}{\epsilon}))$ in the formula, we obtain
    \[
    (\mathbf{W}_{Q}^{(t)}\boldsymbol{a}_{\hat{k}} )^{\top}(\mathbf{W}_{K}^{(t)}\boldsymbol{a}_{\hat{k}}) \leq O( \sigma_{0}^{2} d e^{ q_{V}^{-1}\sigma_{1}^{2}d}(d \log(\frac{1}{\varepsilon}))^{\frac{M}{16q_{V}}}).
    \]
    As such, it holds that
    \[
    \pi_{n, m_{\mathbf{S}_{n}}}^{(T^{\star})}\leq \Theta(\frac{1}{1+(M-1)e^{-\sigma_{0}^{2} d e^{ q_{V}^{-1}\sigma_{1}^{2}d}(d \log(\frac{1}{\varepsilon}))^{\frac{M}{16q_{V}}}}}).
    \]
    To examine the lower bound, we first denote $\triangle=\min_{t}{{\pi}_{n,m_{\mathbf{S}_{n}}}^{(t)}} = \min\{\frac{1}{M}, \frac{1}{1+(M-1)e^{-\sigma_{0}^{2} d e^{ q_{V}^{-1}\sigma_{1}^{2}d}(d \log(\frac{1}{\varepsilon}))^{\frac{M}{16q_{V}}}}} \})$. Secondly, to alleviate the complexity, we choose a linear upper bound of $\boldsymbol{a}_{k}^{\top}\mathbf{W}_{V}^{(t)}\boldsymbol{a}_{k}\leq \sqrt{ C_{8} \frac{\eta q_{V} (t-t_{1})}{  K} + (\boldsymbol{a}_{k}^{\top}\mathbf{W}_{V}^{(t_{1})}\boldsymbol{a}_{k})^{2}} + \frac{C_{8} \eta q_{V}}{K(\boldsymbol{a}_{k}^{\top}\mathbf{W}_{V}^{(t_{1})}\boldsymbol{a}_{k})}$. By the concavity of $g(x)=\sqrt{ax+b}+c, x\geq x_1$, the linear upper bound after $x_{1}$ is $\hat{g}(x) = \frac{1}{2}(ax_{1}+b)^{-1/2}(x-x_{1})+g(x_{1})$. That is
    \begin{equation}
        \begin{aligned}
            \boldsymbol{a}_{k}^{\top}\mathbf{W}_{V}^{(t)}\boldsymbol{a}_{k} &\leq \frac{C_{8} \eta q_{V}\sigma_{1}^{-1}d^{-1/2}(t-t_{1})}{2K} + \sigma_{1}d^{\frac{1}{2}}\\
            &\leq \frac{C_{8} \eta q_{V}\sigma_{1}^{-1}d^{-1/2}t}{2K} + \sigma_{1}d^{\frac{1}{2}}.
        \end{aligned}
    \end{equation}
    Therefore, by Lemma \ref{lem:ODE-6}, Eq.(\ref{eq:first_phase_evolution_QK}), (\ref{eq:evolve_of_QQ_KK}), (\ref{eq:relation_aKQa_aKKa_aQQa}), (\ref{eq:final_QK_evolution}), the lower bound of the evolution of $(\mathbf{W}_{Q}^{(t)}\boldsymbol{a}_{\hat{k}} )^{\top}(\mathbf{W}_{K}^{(t)}\boldsymbol{a}_{\hat{k}}), \forall \hat{k} \in [K]$ is the following system:
    \[
    \begin{aligned}
        & (\mathbf{W}_{Q}^{(t)}\boldsymbol{a}_{\hat{k}} )^{\top}(\mathbf{W}_{K}^{(t)}\boldsymbol{a}_{\hat{k}})\geq \Theta(g_{t_{1}}  e^{-\frac{a}{b(bt+c)}+\frac{a}{b(bt_{1}+c)}}) -\frac{C_{3}\eta^{2} q_{V} \sigma_{0}^{2}}{\sigma_{1}^{2}  M^2K^{2} } -4C_{3}\sqrt{ \log(16(2K+K^{\prime})^{2}/\delta)} \sigma_{0}^{2} d^{1/2},\\
        &
        \begin{cases}
        a &  :=\eta \triangle^{2}(1-\triangle)^{2},\\
        b &  =   \frac{C_{8} \eta q_{V}\sigma_{1}^{-1}d^{-1/2}}{2K},\\
        c & = \sigma_{1}d^{\frac{1}{2}} , \\
        g_{t_{1}} & := C_{6}  \frac{\sigma_{0}^{2}\sigma_{1}^{2}d^{2}}{M^2 q_{V}},\\
        t_{1} & := 0.
    \end{cases}
    \end{aligned}
    \]
    Here, $a<2c$ is by $\triangle^{2}(1-\triangle)^{2} \leq \frac{(M-1)^{2}}{M^{4}}$ and the minor gradient step $\eta =o(\frac{\sigma_{1}d^{\frac{1}{2}}M^{4}}{(M-1)^{2}})$ by Condition \ref{con:main body}. Therefore, it holds that
    \[
    \begin{aligned}
        (\mathbf{W}_{Q}^{(t)}\boldsymbol{a}_{\hat{k}} )^{\top}(\mathbf{W}_{K}^{(t)}\boldsymbol{a}_{\hat{k}}) \geq& C_{10} \frac{\sigma_{0}^{2}\sigma_{1}^{2}d^{2}}{M^2 q_{V}}e^{\frac{-\eta \triangle^{2}(1-\triangle)^{2}}{\frac{ \eta q_{V}\sigma_{1}^{-1}d^{-1/2}}{2K}(\frac{ \eta q_{V}\sigma_{1}^{-1}d^{-1/2}}{2K}t+\sigma_{1}d^{\frac{1}{2}})}+\frac{\eta \triangle^{2}(1-\triangle)^{2}}{\frac{ \eta q_{V}\sigma_{1}^{-1}d^{-1/2}}{2K}\sigma_{1}d^{\frac{1}{2}}}} \\
        &-\frac{C_{3}\eta^{2} q_{V} \sigma_{0}^{2}}{\sigma_{1}^{2}  M^2K^{2} } -4C_{3}\sqrt{ \log(16(2K+K^{\prime})^{2}/\delta)} \sigma_{0}^{2} d^{1/2}\\
        \geq& C_{10} \frac{\sigma_{0}^{2}\sigma_{1}^{2}d^{2}}{M^2 q_{V}}e^{\frac{-2 q_{V}^{-1} K \triangle^{2}(1-\triangle)^{2}}{\frac{\eta q_{V}t}{2\sigma_{1}^{2}dK} + 1}+  2 q_{V}^{-1} K \triangle^{2}(1-\triangle)^{2}} \\
        &-\frac{C_{3}\eta^{2} q_{V} \sigma_{0}^{2}}{\sigma_{1}^{2}  M^2K^{2} } -4C_{3}\sqrt{ \log(16(2K+K^{\prime})^{2}/\delta)} \sigma_{0}^{2} d^{1/2},\\
    \end{aligned}
    \]
    for some positive constant $C_{10}$.
\end{proof}

\section{Training Dynamics: ICL-type Data}

This section also assumes the results in Appendix \ref{app:concentration} all hold with high probability. This section examines scenarios involving training on ICL data, using notations for $\mathbf{T}_{n}$ consistent with Section \ref{sec:dynamics_QA}. The proof strategies for QA-ICL data are identical to those in this section. Therefore, we focus solely on the training dynamics for $\mathbf{T}_{n}, n \in [N]$.

The projection evolution dynamics remain largely the same as described in Section \ref{sec:dynamics_QA}, with the key distinction being the evolution of $\boldsymbol{b}_{{k}}^{\top}\mathbf{W}_{V}^{(t)}\boldsymbol{b}_{{k}}$. This difference arises inherently from the gradient forms, where the unavoidable imbalance between positive and negative data for some concept $k \in [K]$ can drive the growth of either $\boldsymbol{b}_{{k}}^{\top}\mathbf{W}_{V}^{(t)}\boldsymbol{b}_{{k}}$ or $-\boldsymbol{b}_{{k}}^{\top}\mathbf{W}_{V}^{(t)}\boldsymbol{b}_{{k}}$, as suggested by Lemma \ref{lem: enough data concept k}.

\begin{lemma}
    Under Condition \ref{con:main body}, during $t\leq T_{1}=\Theta((\eta q_{V})^{-1} \sigma_{1}^{2} d K)$, for $\forall y \in \{\pm 1\}$, there exists $k_{y} \in [K]$, it holds that
    \begin{equation}
        \begin{aligned}
             \lvert\boldsymbol{b}_{{k_{+}}}^{\top}\mathbf{W}_{V}^{(t)}\boldsymbol{b}_{{k_{+}}}\rvert, \ \lvert\boldsymbol{b}_{{k_{-}}}^{\top}\mathbf{W}_{V}^{(t)}\boldsymbol{b}_{{k_{-}}}\rvert,\ \boldsymbol{a}_{{k}}^{\top}\mathbf{W}_{V}^{(t)}\boldsymbol{a}_{{k}} &= o(\sigma_{1} d^{1/2}) , \\ 
             \lvert{\boldsymbol{b}_{k_{+}}}^{\top} {\mathbf{W}_{V}^{(t)}} \boldsymbol{b}_{k_{-}}\rvert, \ \lvert{\boldsymbol{b}_{k_{+}}}^{\top} {\mathbf{W}_{V}^{(t)}} \boldsymbol{a}_{k}\rvert, \ \lvert{\boldsymbol{b}_{k_{-}}}^{\top} {\mathbf{W}_{V}^{(t)}} \boldsymbol{a}_{k}\rvert&=O(\sqrt{2\log(\frac{8(2K+K^{\prime})^{2}}{\delta})}\sigma_{1}) ,\\
            \lvert{\boldsymbol{b}_{k_{+}}}^{\top} {\mathbf{W}_{V}^{(t)}} \boldsymbol{u}\rvert, \ \lvert{\boldsymbol{b}_{k_{-}}}^{\top} {\mathbf{W}_{V}^{(t)}} \boldsymbol{u}\rvert, \ \lvert{\boldsymbol{a}_{k}}^{\top} {\mathbf{W}_{V}^{(t)}} \boldsymbol{u}\rvert, \ \lvert{\boldsymbol{v}}^{\top} {\mathbf{W}_{V}^{(t)}} \boldsymbol{v}\rvert, \ \lvert{\boldsymbol{v}}^{\top} {\mathbf{W}_{V}^{(t)}} \boldsymbol{v}^{\prime}\rvert&=O(\sqrt{2\log(\frac{8(2K+K^{\prime})^{2}}{\delta})}\sigma_{1}) ,\\ 
             \lvert(\mathbf{W}_{Q}^{(t)} \boldsymbol{a}_{k_{{\mathbf{T}_{n}}}})^{\top}\mathbf{W}_{K}^{(t)} (\boldsymbol{a}_{k_{{\mathbf{T}_{n}}}})\rvert & =   O(\sigma_{0}^{2} d e^{ q_{V}^{-1}\sigma_{1}^{2}d}), \\
            \lvert(\mathbf{W}_{Q}^{(t)} \mathbf{x}_{n})^{\top}\mathbf{W}_{K}^{(t)} ({\boldsymbol{\nu}_{n,{\neg m_{\mathbf{T}_{n}}}}}+\boldsymbol{\xi}_{n,{\neg m_{\mathbf{T}_{n}}}})\rvert & =  O(\sqrt{ \log(\frac{16(2K+K^{\prime})^{2}}{\delta})} \sigma_{0}^{2} d^{1/2}),\\
        \end{aligned}
    \label{eq:induction_first_phase_ICLTr}\end{equation}
    for $\forall k \in [K], n \in [N], s\in [7], \neg m_{\mathbf{T}_{n}} \neq m_{\mathbf{T}_{n}} \in [M] ,\boldsymbol{u} \in \{ \boldsymbol{a}_{s}\}_{s \neq {k} \in [K]} \cup\{\boldsymbol{b}_{s}\}_{s \neq k_{\pm} \in [K]} \cup\{\boldsymbol{\nu}_{k^{\prime}}\}_{k^{\prime}\in[K^{\prime}]} \cup \{ \boldsymbol{\xi}_{n,m} \}_{n\in[N],m\in[M]}$ and $\boldsymbol{v} \neq \boldsymbol{v}^{\prime} \in \{\boldsymbol{b}_{s}\}_{s \neq k_{\pm} \in [K]} \cup \{ \boldsymbol{\xi}_{n,m} \}_{n\in[N],m\in[M]}$. At the end of this stage, we have
    \begin{equation}
        \begin{aligned}
        \boldsymbol{a}_{{k}}^{\top}\mathbf{W}_{V}^{(T_{1})}\boldsymbol{a}_{{k}} \geq& \dfrac{\underline{C}_{1} \sigma_{1} d^{1/2}}{ M} - \sqrt{2\log(\frac{8(2K+K^{\prime})^{2}}{\delta})}\sigma_{1},\\
        (\mathbf{W}_{Q}^{(T_{1})}\boldsymbol{a}_{\hat{k}} )^{\top}(\mathbf{W}_{K}^{(T_{1})}\boldsymbol{a}_{\hat{k}}) \geq &  \underline{C}_{2}  \frac{\sigma_{0}^{2}\sigma_{1}^{2}d^{2}}{M^2 q_{V}}-\underline{C}_{3}(\dfrac{\eta^{2} q_{V} \sigma_{0}^{2}}{\sigma_{1}^{2}  M^2K^{2} } +4\sqrt{ \log(16(2K+K^{\prime})^{2}/\delta)} \sigma_{0}^{2} d^{1/2}),\\
        \lvert\boldsymbol{b}_{{k}_{+}}^{\top}\mathbf{W}_{V}^{(T_{1})}\boldsymbol{b}_{{k}}\rvert \geq& \dfrac{\underline{C}_{4} \sigma_{1} d^{1/2}}{ M} - \sqrt{2\log(\frac{8(2K+K^{\prime})^{2}}{\delta})}\sigma_{1},\\
        \lvert\boldsymbol{b}_{{k}_{-}}^{\top}\mathbf{W}_{V}^{(T_{1})}\boldsymbol{b}_{{k}_{-}}\rvert \geq& \dfrac{\underline{C}_{5} \sigma_{1} d^{1/2}}{ M} - \sqrt{2\log(\frac{8(2K+K^{\prime})^{2}}{\delta})}\sigma_{1},\\
       \end{aligned}
\label{eq:upperbound_first_phase_ICLTr}\end{equation}
    for some positive constants $\underline{C}_{1-5}$.
\label{lem:induction_first_phase_ICLTr}\end{lemma}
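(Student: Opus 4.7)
The plan is to adapt the induction framework of Lemma \ref{lem:induction_first_phase} to the ICL training distribution, showing first that every bound stated for QA training carries over verbatim once the prompt structure is accounted for, and then isolating the one genuinely new mechanism: a non-cancelling linear accumulation on $\boldsymbol{b}_{k_\pm}^{\top}\mathbf{W}_{V}^{(t)}\boldsymbol{b}_{k_\pm}$ induced by the label-sign imbalance in Lemma \ref{lem: enough data concept k}. I would run the usual forward induction on $t$, verifying at each step that Eq.~(\ref{eq:induction_first_phase_ICLTr}) remains valid. At $t=0$ the base case follows from Lemmas \ref{lem:initialization} and \ref{lem:initialization of qk} together with Condition \ref{con:main body}, exactly as in the QA case. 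For the inductive step, I reuse the decomposition in Lemma~\ref{lem: gradients of QKV} but now with $\mathbf{S}_n = \mathbf{T}_n$, and import the natural-scale facts of Lemma~\ref{lem:firstphase_natural_bound} (softmax concentration, layer-norm bounds, and the cosine bounds), which depend only on the projection magnitudes, not on whether the prompt is QA or ICL.

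Next, I would rerun the MLP-gradient derivation of Lemma~\ref{lem:aWva_scale} in the ICL setting. The only structural difference is that $(\mathbf{T}_n\boldsymbol{\pi}_n^{(t)})^{\top}\boldsymbol{b}_k$ is now non-negligible for $k=k_{\mathbf{T}_n}$ because demonstration pairs inject $\pm\boldsymbol{b}_k$ at multiple positions. For any fixed $k$, the contribution to $\boldsymbol{b}_k^{\top}\mathbf{W}_V^{(t+1)}\boldsymbol{b}_k-\boldsymbol{b}_k^{\top}\mathbf{W}_V^{(t)}\boldsymbol{b}_k$ from sample $n\in\mathcal{N}_k$ scales as
\begin{equation*}
\frac{\eta q_V}{N}\cdot\Theta\!\left(\frac{M\pi_{n,m_{\mathbf{T}_n}}^{(t)}}{\|\mathbf{W}_V^{(t)}\mathbf{T}_n\boldsymbol{\pi}_n^{(t)}\|}\right)\cdot y_{k,J+1}\!\Big(y_{k,J+1}\pi_{n,2J+1}^{(t)}+\sum_{l\le J}\bigl(y_{k,l}\pi_{n,2l-1}^{(t)}+y_{k,l}\pi_{n,2l}^{(t)}\bigr)\Bigr),
\end{equation*}
whose sign and magnitude depend on the empirical imbalance of $y_{k,\cdot}$ across positions. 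I would then sum over $n\in\mathcal{N}_k$ using Lemma~\ref{lem: enough data concept k}, which guarantees the existence of $k_+$ and $k_-$ for which the deviation $\lvert\mathcal{N}_{k_\pm}^{\pm}-N/(2K)\rvert\gtrsim N/K$ is a constant fraction of the class size. This yields an \emph{uncancelled} drift of order $\Theta(\eta q_V/(MK\sigma_1 d^{1/2}))$ per step, of matching order to the drift of $\boldsymbol{a}_k^{\top}\mathbf{W}_V^{(t)}\boldsymbol{a}_k$ from Eq.~(\ref{eq:gradients_of_aWa}). Applying Lemma~\ref{lem:ODE-1} with $T_1=\Theta((\eta q_V)^{-1}\sigma_1^2 d K)$ delivers the two lower bounds on $\lvert\boldsymbol{b}_{k_\pm}^{\top}\mathbf{W}_V^{(T_1)}\boldsymbol{b}_{k_\pm}\rvert$ in Eq.~(\ref{eq:upperbound_first_phase_ICLTr}).

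The remaining projection bounds in Eq.~(\ref{eq:induction_first_phase_ICLTr}) follow from three routine checks that mirror Eqs.~(\ref{eq:gradients_of_bWa})–(\ref{eq:gradients_of_abWnu}): (i) cross-term projections like $\boldsymbol{b}_{k_\pm}^{\top}\mathbf{W}_V^{(t)}\boldsymbol{b}_{k_\mp}$ or $\boldsymbol{b}_{k_\pm}^{\top}\mathbf{W}_V^{(t)}\boldsymbol{a}_k$ stay $O(\sqrt{\log(\cdot)}\sigma_1)$ because the $y_{k_+,\cdot}$ vs.\ $y_{k_-,\cdot}$ signs are independent across distinct $k$, so the relevant inner product with $(\mathbf{T}_n\boldsymbol{\pi}_n^{(t)})$ carries a $\Theta(1/\sqrt{N/K})$ symmetry-preserving factor; (ii) projections onto tokens $\boldsymbol{u}$ absent from $\mathbf{T}_n$ stay $O(\sqrt{\log(\cdot)}\sigma_1)$ by near-orthogonality, as in Eq.~(\ref{eq:other_concept_irrelevant}); (iii) the attention-side bounds $(\mathbf{W}_Q^{(t)}\boldsymbol{a}_{k_{\mathbf{T}_n}})^{\top}(\mathbf{W}_K^{(t)}\boldsymbol{a}_{k_{\mathbf{T}_n}})$ and the irrelevant-position attention scores inherit the Lemma~\ref{lem:update_attn_first_phase} argument because the $\boldsymbol{a}_{k_{\mathbf{T}_n}}$ anchor still sits at exactly one high-signal position ($m_{\mathbf{T}_n}=2J+1$ for the query and shared across pairs via $x_a$), so the ODE surrogates of Lemmas~\ref{lem:ODE-3}–\ref{lem:ODE-4} give the same accelerating-then-bounded growth. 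Combining these with the analogue of Lemma~\ref{lem:evolution_aWa_norm_projection} yields the last three bounds on $\boldsymbol{a}_k^{\top}\mathbf{W}_V^{(T_1)}\boldsymbol{a}_k$ and $(\mathbf{W}_Q^{(T_1)}\boldsymbol{a}_{\hat k})^{\top}(\mathbf{W}_K^{(T_1)}\boldsymbol{a}_{\hat k})$ in Eq.~(\ref{eq:upperbound_first_phase_ICLTr}).

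The principal obstacle is step (i) combined with the consistency of the induction: I must show that the newly activated $\boldsymbol{b}_{k_\pm}^{\top}\mathbf{W}_V^{(t)}\boldsymbol{b}_{k_\pm}$ terms do not themselves leak into the bounds on $\|\mathbf{W}_V^{(t)}\mathbf{T}_n\boldsymbol{\pi}_n^{(t)}\|$, the cosine $\boldsymbol{a}_{k}^{\top}\mathbf{W}_V^{(t)}\mathbf{T}_n\boldsymbol{\pi}_n^{(t)}/\|\cdot\|$, or the softmax weights that drive the attention ODEs. This requires verifying that even when $\lvert\boldsymbol{b}_{k_\pm}^{\top}\mathbf{W}_V^{(T_1)}\boldsymbol{b}_{k_\pm}\rvert$ reaches $\Theta(\sigma_1 d^{1/2}/M)$, the contribution $\pi_{n,\cdot}^{(t)}\boldsymbol{b}_{k_\pm}^{\top}\mathbf{W}_V^{(t)}\boldsymbol{b}_{k_\pm}$ to $\mathbf{W}_V^{(t)}\mathbf{T}_n\boldsymbol{\pi}_n^{(t)}$ along the $\boldsymbol{b}_{k_\pm}$ direction is of the same order as that along $\boldsymbol{a}_{k_\pm}$, so the norm $\|\mathbf{W}_V^{(t)}\mathbf{T}_n\boldsymbol{\pi}_n^{(t)}\|=\Theta(\sigma_1 d^{1/2})$ is preserved and all gradient-scale estimates remain self-consistent. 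Once this self-consistency is in hand, the parallel ODE comparisons of Lemmas~\ref{lem:ODE-1}, \ref{lem:ODE-3} and \ref{lem:ODE-4} close the induction and deliver Eqs.~(\ref{eq:induction_first_phase_ICLTr}) and (\ref{eq:upperbound_first_phase_ICLTr}) simultaneously.
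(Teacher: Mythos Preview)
Your proposal is correct and takes essentially the same approach as the paper's proof: both reduce to the QA-case induction of Lemma~\ref{lem:induction_first_phase}, isolate the growth of $\boldsymbol{b}_{k_\pm}^{\top}\mathbf{W}_V^{(t)}\boldsymbol{b}_{k_\pm}$ as the sole new feature arising because ICL prompts (unlike $\mathbf{x}^{\text{QA}}$) contain $\pm\boldsymbol{b}_k$, invoke the imbalance in Lemma~\ref{lem: enough data concept k} to produce a non-cancelling drift of the same order as the $\boldsymbol{a}_k^{\top}\mathbf{W}_V^{(t)}\boldsymbol{a}_k$ update (cf.\ the paper's Eq.~(\ref{eq:gradients_of_bWa_ICLTr})), and then apply Lemma~\ref{lem:ODE-1}. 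One minor correction: in your displayed gradient expression the term $y_{k,J+1}\pi_{n,2J+1}^{(t)}$ should be dropped since $\pi_{n,2J+1}^{(t)}=0$ (the query position carries zero attention weight in $\boldsymbol{\pi}$), so the non-cancellation must come entirely from the demo-pair positions as in the paper's derivation.
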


\begin{proof}
    The proof strategies follows Lemma \ref{lem:induction_first_phase}, despite differences in the growing of $\lvert\boldsymbol{b}_{{k}_{+}}^{\top}\mathbf{W}_{V}^{(T_{1})}\boldsymbol{b}_{{k}}\rvert$ and $\lvert\boldsymbol{b}_{{k}_{-}}^{\top}\mathbf{W}_{V}^{(T_{1})}\boldsymbol{b}_{{k}_{-}}\rvert$ as follows.

    By Lemma \ref{lem: enough data concept k}, we see that for $\forall y \in \{ \pm 1\}$, there exists $k_{y}\in [K]$, such that $\lvert \mathcal{N}_{k_{y}}^{y} - N/(2K) \rvert \geq 5\cdot 10^{-3} N/(2K)$. Therefore, similar to Eq.(\ref{eq:gradients_of_bWa}), we have
    \begin{equation}
        \begin{aligned}
            \lvert \boldsymbol{b}_{k_{y}}^{\top}\mathbf{W}_{V}^{(t+1)}\boldsymbol{b}_{k_{y}} -  \boldsymbol{b}_{k_{y}}^{\top}\mathbf{W}_{V}^{(t)}\boldsymbol{b}_{k_{y}} \rvert & = \eta q_{V} \lvert\mathbb{E}_{\mathcal{P}_{\text{QA}}^{\text{tr}}}[ \frac{\boldsymbol{b}_{k_{y}}^{\top}\mathbf{\Pi}_{(\mathbf{W}_{V}^{(t)}\mathbf{S}_{n}\boldsymbol{\pi}_{n}^{(t)})^{\perp}}^{\mathbf{u}_{k_{\mathbf{y}_{n}}}-\sum_{k \in [7K+K^{\prime}]}\omega_{n,k}\mathbf{u}_{k}}(\mathbf{S}_{n}\boldsymbol{\pi}_{n}^{(t)})^{\top}\boldsymbol{b}_{k_{y}}}{\|\mathbf{W}_{V}^{(t)}\mathbf{S}_{n}\boldsymbol{\pi}_{n}^{(t)}\|}]\rvert\\
            &= \dfrac{\eta q_{V}}{N}  \sum_{y \in [\pm 1]} \sum_{{\hat{n}} \in \mathcal{N}_{k_{y}}^{y}}\Theta(\lvert[ \frac{\boldsymbol{b}_{k_{y}}^{\top}\mathbf{\Pi}_{(\mathbf{W}_{V}\mathbf{S}_{{\hat{n}}}\boldsymbol{\pi}_{{\hat{n}}}^{(t)})^{\perp}}^{\mathbf{u}_{k_{\mathbf{y}_{{\hat{n}}}}}-\sum_{k \in [7K+K^{\prime}]}\omega_{\hat{n},k}\mathbf{u}_{k}}(\mathbf{S}\boldsymbol{\pi}_{{\hat{n}}}^{(t)})^{\top}\boldsymbol{b}_{k_{y}}}{\|\mathbf{W}_{V}^{(t)}\mathbf{S}_{{\hat{n}}}\boldsymbol{\pi}_{{\hat{n}}}^{(t)}\|}]\rvert)\\
            & \leq  \Theta([ \sum_{\hat{n} \in \mathcal{N}_{k_{y}}^{y} \setminus \mathcal{N}_{k_{y}}^{-y}}\frac{\eta q_{V} \pi_{\hat{n}, m_{\mathbf{S}_{\hat{n}}}}^{(t)}}{2NM\|\mathbf{W}_{V}^{(t)}\mathbf{S}_{\hat{n}}\boldsymbol{\pi}_{\hat{n}}^{(t)}\|}]) = \Theta(\boldsymbol{a}_{k_{y}}^{\top}\mathbf{W}_{V}^{(t+1)}\boldsymbol{a}_{k_{y}} -  \boldsymbol{a}_{k_{y}}^{\top}\mathbf{W}_{V}^{(t)}\boldsymbol{a}_{k_{y}}).\\
        \end{aligned}\label{eq:gradients_of_bWa_ICLTr}\end{equation}
        The inequality is inherently due to the existence of $\boldsymbol{b}_{k_{y}}$ in $\mathbf{T}_{\hat{n}}$, while QA data did not possess $\boldsymbol{b}_{k_{y}}$ in its sentence. Since we see that $\mathcal{N}_{k_{y}}^{y}$ is guaranteed to not balanced with probability $1-\delta$, the growing of $ \lvert \boldsymbol{b}_{k_{y}}^{\top}\mathbf{W}_{V}^{(t+1)}\boldsymbol{b}_{k_{y}} -  \boldsymbol{b}_{k_{y}}^{\top}\mathbf{W}_{V}^{(t)}\boldsymbol{b}_{k_{y}} \rvert$ is inevitable. The remaining proofs follow Lemma \ref{lem:firstphase_natural_bound}.
\end{proof}

\begin{lemma}
    Under Condition \ref{con:main body}, suppose Eq.~(\ref{eq:induction_first_phase}) holds at iteration \( t \leq T_{1} \), then for $\forall \hat{k} \in [K], \boldsymbol{u} \in \{ \boldsymbol{a}_{s}\}_{s \neq \hat{k} \in [K]} \cup\{\boldsymbol{b}_{s}\}_{s \in [K]} \cup \{ \boldsymbol{\xi}_{n,m} \}_{n\in[N],m\in[M]}$ and $\boldsymbol{v} \neq \boldsymbol{v}^{\prime} \in \{\boldsymbol{b}_{s}\}_{s \in [K]} \cup \{ \boldsymbol{\xi}_{n,m} \}_{n\in[N],m\in[M]}$, there exist some positive constants $\underline{C}_{3}, \underline{C}_{4}, \underline{C}_{5}, \underline{C}_{6}$, such that
    \begin{itemize}
        \item If $\boldsymbol{a}_{\hat{k}}^{\top}\mathbf{W}_{V}^{(t)}\boldsymbol{a}_{\hat{k}}\leq 0 $ at $t=0$, after at most $t_{3} = (\eta q_{V})^{-1}\underline{C}_{1}^{-1} \sigma_{1}^{2} d^{1/2} MK (  \sqrt{2\log(\frac{8(2K+K^{\prime})^{2}}{\delta})})$ iterations, $\boldsymbol{a}_{\hat{k}}^{\top}\mathbf{W}_{V}^{(t)}\boldsymbol{a}_{\hat{k}}$ would get positive. During this period, we have
        \begin{equation}
            \lvert (\mathbf{W}_{Q}^{(t)}\boldsymbol{a}_{\hat{k}} )^{\top}(\mathbf{W}_{K}^{(t)}\boldsymbol{a}_{\hat{k}})  \rvert \leq 4\sqrt{ \log(16(2K+K^{\prime})^{2}/\delta)} \sigma_{0}^{2} d^{1/2}.
        \end{equation}
        \item During $0\leq \boldsymbol{a}_{{k}}^{\top}\mathbf{W}_{V}^{(t)}\boldsymbol{a}_{{k}}\leq O(\sigma_{1} d^{1/2})$ within $[t_{4}, T_{1}]$, where $t_{4}$ satisfying $0\leq t_{4} \leq t_{3}$, it holds that
        \begin{equation}
        \begin{aligned}
           (\mathbf{W}_{Q}^{(t)}\boldsymbol{a}_{\hat{k}} )^{\top}(\mathbf{W}_{K}^{(t)}\boldsymbol{a}_{\hat{k}})& \geq \dfrac{\underline{C}_{3} \eta^{2} q_{V} \sigma_{0}^{2} [(t-t_{4} - 1)^{2} - 1]}{\sigma_{1}^{2}  M^2K^{2} }   -4\sqrt{ \log(16(2K+K^{\prime})^{2}/\delta)} \sigma_{0}^{2} d^{1/2},\\
            (\mathbf{W}_{Q}^{(t)}\boldsymbol{a}_{\hat{k}} )^{\top}(\mathbf{W}_{K}^{(t)}\boldsymbol{a}_{\hat{k}}) &\leq \underline{C}_{4}( (4\sqrt{ \log(16(2K+K^{\prime})^{2}/\delta)} \sigma_{0}^{2} d^{1/2}+\frac{3\sigma_{0}^{2} d}{2}) + \frac{3\sigma_{0}^{2} d}{2} \exp(\dfrac{\hat{\underline{C}_{2}} \underline{C}_{2}\eta^{2} q_{V}  t^{2}}{\sigma_{1}^{2} d K^{2}} ) ),\\
            (\mathbf{W}_{Q}^{(t)}\boldsymbol{a}_{\hat{k}} )^{\top}(\mathbf{W}_{K}^{(t)}\boldsymbol{u}),&\ (\mathbf{W}_{Q}^{(t)}\boldsymbol{v})^{\top}(\mathbf{W}_{K}^{(t)}\boldsymbol{v}),\  (\mathbf{W}_{Q}^{(t)}\boldsymbol{v})^{\top}(\mathbf{W}_{K}^{(t)}\boldsymbol{v}^{\prime}) = O (\sqrt{ \log(16(2K+K^{\prime})^{2}/\delta)} \sigma_{0}^{2} d^{1/2}).
        \end{aligned}
        \label{eq:first_phase_evolution_QK_ICLTr}\end{equation} 
        \item Furthermore, it holds that
        \begin{equation}
        \begin{aligned}
        (\mathbf{W}_{Q}^{(T_{1})}\boldsymbol{a}_{\hat{k}} )^{\top}(\mathbf{W}_{K}^{(T_{1})}\boldsymbol{a}_{\hat{k}}) \geq &  \underline{C}_{6}  \frac{\sigma_{0}^{2}\sigma_{1}^{2}d^{2}}{M^2 q_{V}}-\dfrac{\underline{C}_{3}\eta^{2} q_{V} \sigma_{0}^{2}}{\sigma_{1}^{2}  M^2K^{2} } -4C_{3}\sqrt{ \log(16(2K+K^{\prime})^{2}/\delta)} \sigma_{0}^{2} d^{1/2},\\
        (\mathbf{W}_{Q}^{(T_{1})}\boldsymbol{a}_{\hat{k}} )^{\top}(\mathbf{W}_{K}^{(T_{1})}\boldsymbol{a}_{\hat{k}})& \leq \underline{C}_{5}  \sigma_{0}^{2} d e^{ q_{V}^{-1}\sigma_{1}^{2}d}.
        \end{aligned}
        \end{equation}
    \end{itemize}
\label{lem:update_attn_first_phase_ICLTr}\end{lemma}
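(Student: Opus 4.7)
The plan is to mirror the proof of Lemma \ref{lem:update_attn_first_phase} from the QA setting, exploiting the fact that Lemma \ref{lem:induction_first_phase_ICLTr} already gives the analogous scale bounds for $\boldsymbol{a}_{\hat{k}}^{\top}\mathbf{W}_{V}^{(t)}\boldsymbol{a}_{\hat{k}}$, for $\|\mathbf{W}_{V}^{(t)}\mathbf{S}_{n}\boldsymbol{\pi}_{n}^{(t)}\|$, and for the softmax weights under ICL-type training. The key observation is that although ICL data induces nontrivial growth of $|\boldsymbol{b}_{k_{\pm}}^{\top}\mathbf{W}_{V}^{(t)}\boldsymbol{b}_{k_{\pm}}|$, the hierarchical orthogonality $\boldsymbol{a}_{\hat{k}}\perp \boldsymbol{b}_{k}$ ensures this growth does not feed into the attention gradient direction we track.

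First I would write down the ICL analogues of Eq.~(\ref{eq:gradient_of_Wqa})-(\ref{eq:update_aQKa}). The prompt $\mathbf{T}_{n}$ differs from the QA sentence only in content, but the anchor $x_{a}\boldsymbol{a}_{k_{\mathbf{T}_{n}}}$ in the query column $\mathbf{S}_{n}\boldsymbol{e}_{L}$ still yields the near-orthogonal collapse $(\mathbf{S}_{n}\boldsymbol{e}_{L})^{\top}\boldsymbol{a}_{\hat{k}}\approx x_{a}\mathbf{1}\{k_{\mathbf{T}_{n}}=\hat{k}\}$ plus $O(\sigma_{p}\sqrt{\log(\cdot)})$ noise, so the attention-gradient sum restricts to $\mathcal{N}_{\hat{k}}$ exactly as in the QA case. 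Then, applying the decomposition from Eq.~(\ref{eq:relation_aKQa_aKKa_aQQa}), the update of $(\mathbf{W}_{Q}^{(t)}\boldsymbol{a}_{\hat{k}})^{\top}(\mathbf{W}_{K}^{(t)}\boldsymbol{a}_{\hat{k}})$ reduces to the sum of the $\|\mathbf{W}_{Q}^{(t)}\boldsymbol{a}_{\hat{k}}\|^{2}$ and $\|\mathbf{W}_{K}^{(t)}\boldsymbol{a}_{\hat{k}}\|^{2}$ updates, whose common scaling factor is governed by $\boldsymbol{a}_{\hat{k}}^{\top}\mathbf{W}_{V}^{(t)}\mathbf{S}_{n}\pi_{n,m_{\mathbf{T}_{n}}}^{(t)}\boldsymbol{e}_{m_{\mathbf{T}_{n}}}/\|\mathbf{W}_{V}^{(t)}\mathbf{S}_{n}\boldsymbol{\pi}_{n}^{(t)}\|$.

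For the first bullet, while $\boldsymbol{a}_{\hat{k}}^{\top}\mathbf{W}_{V}^{(t)}\boldsymbol{a}_{\hat{k}}\leq 0$, the above scaling factor is nonpositive (up to the $O(\sigma_{1}\sqrt{\log(1/\delta)})$ noise floor from Lemma \ref{lem:induction_first_phase_ICLTr}), so the incremental update of $(\mathbf{W}_{Q}^{(t)}\boldsymbol{a}_{\hat{k}})^{\top}(\mathbf{W}_{K}^{(t)}\boldsymbol{a}_{\hat{k}})$ cannot drive it beyond the initialization bound $4\sqrt{\log(16(2K+K^{\prime})^{2}/\delta)}\sigma_{0}^{2}d^{1/2}$ from Lemma \ref{lem:initialization of qk}; a standard worst-case sign argument (identical to the one in Lemma \ref{lem:update_attn_first_phase}) yields the stated bound within $t_{3}=(\eta q_{V})^{-1}\underline{C}_{1}^{-1}\sigma_{1}^{2}d^{1/2}MK\sqrt{2\log(8(2K+K^{\prime})^{2}/\delta)}$. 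For the second bullet, on $[t_{4},T_{1}]$ the factor is strictly positive and, by Eq.~(\ref{eq:bounds_of_aVa_1}) (which holds in the ICL regime by Lemma \ref{lem:induction_first_phase_ICLTr}), scales linearly in $t-t_{4}$. This produces the recurrence $f_{t+1}=f_{t}+a(t-t_{4})f_{t}$ for the attention product, whose integral both ways is controlled by Lemma \ref{lem:ODE-4}, giving the quadratic-in-$t$ lower bound and $\exp(at^{2})$ upper bound, while the orthogonal cross-terms $(\mathbf{W}_{Q}^{(t)}\boldsymbol{a}_{\hat{k}})^{\top}(\mathbf{W}_{K}^{(t)}\boldsymbol{u}),(\mathbf{W}_{Q}^{(t)}\boldsymbol{v})^{\top}(\mathbf{W}_{K}^{(t)}\boldsymbol{v}^{\prime})$ pick up only $O(\sigma_{p})$ or $O(d^{-1/2})$ contributions from $(\mathbf{S}_{n}\boldsymbol{e}_{L})^{\top}\boldsymbol{u}$, remaining at the initialization scale. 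For the third bullet, I substitute $t=T_{1}=\Theta((\eta q_{V})^{-1}\sigma_{1}^{2}dK)$ into these two bounds and simplify, producing $\underline{C}_{5}\sigma_{0}^{2}d e^{q_{V}^{-1}\sigma_{1}^{2}d}$ on top and $\underline{C}_{6}\sigma_{0}^{2}\sigma_{1}^{2}d^{2}/(M^{2}q_{V})$ on the bottom after subtracting the bounded noise term.

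The main obstacle will be verifying that the nontrivial growth of $|\boldsymbol{b}_{k_{\pm}}^{\top}\mathbf{W}_{V}^{(t)}\boldsymbol{b}_{k_{\pm}}|$ (the new feature in ICL-type training, per Eq.~(\ref{eq:upperbound_first_phase_ICLTr})) does not contaminate the $\boldsymbol{a}_{\hat{k}}$-directional attention dynamics. Concretely, one must check that in $\|\mathbf{W}_{V}^{(t)}\mathbf{S}_{n}\boldsymbol{\pi}_{n}^{(t)}\|$ the $\boldsymbol{b}_{k_{\pm}}$-channel is still $O(\sigma_{1}d^{1/2})$ (since $|\boldsymbol{b}_{k_{\pm}}^{\top}\mathbf{W}_{V}^{(T_{1})}\boldsymbol{b}_{k_{\pm}}|$ inherits the same $O(\sigma_{1}d^{1/2}/M)$ bound from Lemma \ref{lem:induction_first_phase_ICLTr}), and that $\boldsymbol{a}_{\hat{k}}^{\top}\mathbf{W}_{V}^{(t)}\boldsymbol{b}_{k_{\pm}}$ (which would couple them) stays at its initialization scale $O(\sigma_{1}\sqrt{\log(1/\delta)})$. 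Both are handled by the orthogonality $\boldsymbol{a}_{\hat{k}}\perp \boldsymbol{b}_{k}$ combined with the coordinate-wise gradient decomposition established earlier; once this independence is verified, the rest of the proof is a routine translation of Lemma \ref{lem:update_attn_first_phase}, with each constant $C_{i}$ replaced by its ICL-indexed counterpart $\underline{C}_{i}$.
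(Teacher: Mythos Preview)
Your proposal is correct and takes essentially the same approach as the paper: the paper's own proof of this lemma consists of a single sentence stating that the proof strategy follows Lemma~\ref{lem:update_attn_first_phase}, and your plan to mirror that lemma while invoking Lemma~\ref{lem:induction_first_phase_ICLTr} for the ICL-specific scale bounds is exactly this. Your additional discussion of why the nontrivial growth of $|\boldsymbol{b}_{k_{\pm}}^{\top}\mathbf{W}_{V}^{(t)}\boldsymbol{b}_{k_{\pm}}|$ does not contaminate the $\boldsymbol{a}_{\hat{k}}$-directional attention dynamics (via the orthogonality $\boldsymbol{a}_{\hat{k}}\perp\boldsymbol{b}_{k}$ and the preserved initialization scale of the cross-term $\boldsymbol{a}_{\hat{k}}^{\top}\mathbf{W}_{V}^{(t)}\boldsymbol{b}_{k_{\pm}}$) supplies detail the paper leaves implicit.
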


\begin{proof}
    The proof strategies follows Lemma \ref{lem:update_attn_first_phase}. 
\end{proof}

\begin{lemma}
    Under Condition \ref{con:main body}, during $t_{1}\leq t\leq T^{\star}=\Omega(\eta^{-1} q_{V}^{-1}\sigma_{1}^{2}KMd^{2}\log(\frac{1}{\epsilon}))$ where $t_{1}\leq T_{1}$, we have
\begin{equation}
    \begin{aligned}
        &\lvert{\boldsymbol{b}_{k_{+}}}^{\top} {\mathbf{W}_{V}^{(t)}} \boldsymbol{b}_{k_{-}}\rvert, \ \lvert{\boldsymbol{b}_{k_{+}}}^{\top} {\mathbf{W}_{V}^{(t)}} \boldsymbol{a}_{k}\rvert, \ \lvert{\boldsymbol{b}_{k_{-}}}^{\top} {\mathbf{W}_{V}^{(t)}} \boldsymbol{a}_{k}\rvert=O(\sqrt{2\log(\frac{8(2K+K^{\prime})^{2}}{\delta})}\sigma_{1}) ,\\
        &\lvert{\boldsymbol{b}_{k_{+}}}^{\top} {\mathbf{W}_{V}^{(t)}} \boldsymbol{u}\rvert, \ \lvert{\boldsymbol{b}_{k_{-}}}^{\top} {\mathbf{W}_{V}^{(t)}} \boldsymbol{u}\rvert, \ \lvert{\boldsymbol{a}_{k}}^{\top} {\mathbf{W}_{V}^{(t)}} \boldsymbol{u}\rvert, \ \lvert{\boldsymbol{v}}^{\top} {\mathbf{W}_{V}^{(t)}} \boldsymbol{v}\rvert, \ \lvert{\boldsymbol{v}}^{\top} {\mathbf{W}_{V}^{(t)}} \boldsymbol{v}^{\prime}\rvert=O(\sqrt{2\log(\frac{8(2K+K^{\prime})^{2}}{\delta})}\sigma_{1}) ,\\
        & \|\boldsymbol{i}_{V, \boldsymbol{a}_{k}^{\perp}}^{(t)} \|, \ {\boldsymbol{i}_{V, \boldsymbol{a}_{k}^{\perp}}^{(t)}}^{\top}\mathbf{W}_{V}^{(0)}\boldsymbol{a}_{k}=o(\|\mathbf{W}_{V}^{(0)}\boldsymbol{a}_{k}\|)  ,\\
        & \boldsymbol{a}_{k}^{\top}\mathbf{W}_{V}^{(t)}\boldsymbol{a}_{k} \geq  \sqrt{\underline{C}_{7} \frac{\eta q_{V} (t-t_{1})}{MK} + (\boldsymbol{a}_{k}^{\top}\mathbf{W}_{V}^{(t_{1})}\boldsymbol{a}_{k})^{2} },\\
        & \boldsymbol{a}_{k}^{\top}\mathbf{W}_{V}^{(t)}\boldsymbol{a}_{k}\leq \sqrt{ \underline{C}_{8} \frac{\eta q_{V} (t-t_{1})}{  K} + (\boldsymbol{a}_{k}^{\top}\mathbf{W}_{V}^{(t_{1})}\boldsymbol{a}_{k})^{2}} + \frac{\underline{C}_{8} \eta q_{V}}{K(\boldsymbol{a}_{k}^{\top}\mathbf{W}_{V}^{(t_{1})}\boldsymbol{a}_{k})},
    \end{aligned}
\end{equation}
for $\forall k \in [K], n \in [N], s\in [7], \neg m_{\mathbf{T}_{n}} \neq m_{\mathbf{T}_{n}} \in [M] ,\boldsymbol{u} \in \{ \boldsymbol{a}_{s}\}_{s \neq {k} \in [K]} \cup\{\boldsymbol{b}_{s}\}_{s \neq k_{\pm} \in [K]} \cup\{\boldsymbol{\nu}_{k^{\prime}}\}_{k^{\prime}\in[K^{\prime}]} \cup \{ \boldsymbol{\xi}_{n,m} \}_{n\in[N],m\in[M]}$ and $\boldsymbol{v} \neq \boldsymbol{v}^{\prime} \in \{\boldsymbol{b}_{s}\}_{s \in [K]} \cup \{ \boldsymbol{\xi}_{n,m} \}_{n\in[N],m\in[M]}$.

Furthermore, after $t\geq T^{\star}=\Omega(\eta^{-1} q_{V}^{-1}\sigma_{1}^{2}KMd^{2}\log(\frac{1}{\epsilon}))$, it holds that

\begin{equation}
    \begin{aligned}
        &\boldsymbol{a}_{k}^{\top}\mathbf{W}_{V}^{(t)}\boldsymbol{a}_{k}=\Omega(3(1+15\sigma_{p}^{\star}\sqrt{\log(\frac{4}{\varepsilon})})\sigma_{1}d^{1/2})\\
        &\|{\mathbf{W}_{V}^{(t)}}\|_F=\Theta(\sqrt{K}\boldsymbol{a}_{k}^{\top}\mathbf{W}_{V}^{(t)}\boldsymbol{a}_{k}),\\
        &\lvert\boldsymbol{b}_{{k}_{+}}^{\top}\mathbf{W}_{V}^{(t)}\boldsymbol{b}_{{k}}\rvert,\ \lvert\boldsymbol{b}_{{k}_{-}}^{\top}\mathbf{W}_{V}^{(t)}\boldsymbol{b}_{{k}_{-}}\rvert = \Theta(\boldsymbol{a}_{k}^{\top}\mathbf{W}_{V}^{(t)}\boldsymbol{a}_{k}).\\
    \end{aligned}
\end{equation}
for $\forall k \in [K]$.
\label{lem:Wv_destiny_ICLTr}\end{lemma}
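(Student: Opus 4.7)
}
The plan is to mirror the structure of Lemma \ref{lem:Wv_destiny} from the QA-training analysis, replacing the first-phase initial conditions with those from Lemma \ref{lem:induction_first_phase_ICLTr} and Lemma \ref{lem:update_attn_first_phase_ICLTr}, and then treating the extra direction $\boldsymbol{b}_{k_\pm}$ on equal footing with $\boldsymbol{a}_k$. Concretely, I would first introduce the orthogonal decomposition $\mathbf{W}_V^{(t)}\boldsymbol{a}_k = \mathbf{W}_V^{(0)}\boldsymbol{a}_k + (\boldsymbol{a}_k^\top\mathbf{W}_V^{(t)}\boldsymbol{a}_k - \boldsymbol{a}_k^\top\mathbf{W}_V^{(0)}\boldsymbol{a}_k)\boldsymbol{a}_k + \boldsymbol{i}_{V,\boldsymbol{a}_k^\perp}^{(t)}$ exactly as in Section \ref{sec:dynamics_QA}, and use the scale-separation in the per-coordinate gradient formulas (Eq.~\eqref{eq:gradients_of_aWa}, \eqref{eq:gradients_of_bWa}, \eqref{eq:gradients_of_other_concept_irrelevant}) to argue that the updates of $\boldsymbol{a}_k^\top \mathbf{W}_V^{(t)}\boldsymbol{u}$, $\boldsymbol{v}^\top \mathbf{W}_V^{(t)}\boldsymbol{v}$, $\boldsymbol{v}^\top \mathbf{W}_V^{(t)}\boldsymbol{v}'$ continue to be suppressed by the $1/\|\mathbf{W}_V^{(t)}\mathbf{S}_n\boldsymbol{\pi}_n^{(t)}\|$ factor; since this norm only grows in the second phase, the bound $O(\sqrt{2\log(8(2K+K')^2/\delta)}\sigma_1)$ is preserved, as is $\|\boldsymbol{i}_{V,\boldsymbol{a}_k^\perp}^{(t)}\| = o(\|\mathbf{W}_V^{(0)}\boldsymbol{a}_k\|)$.

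Next, for the growth of $\boldsymbol{a}_k^\top \mathbf{W}_V^{(t)}\boldsymbol{a}_k$ I would re-derive the analogue of Eq.~\eqref{eq:gradients_of_aWa_phase2}: in the ICL setting, $\boldsymbol{a}_{k_\mathbf{T}}$ still appears (with the anchoring factor $x_a$) in both demo-pairs and queries, so the positive lower bound $\mathrm{cos}\langle \boldsymbol{a}_{\hat k}, \mathbf{W}_V^{(t)}\boldsymbol{u}\rangle < 1$ and the projection $\mathbf{\Pi}_{(\mathbf{W}_V\mathbf{S}\boldsymbol{\pi})^\perp}^{\mathbf{u}_{k_\mathbf{y}} - \sum \omega_k \mathbf{u}_k}$ retain a constant $c>0$ component along $\boldsymbol{a}_{\hat k}$. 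Once $\|\mathbf{W}_V^{(t)}\mathbf{S}_n\boldsymbol{\pi}_n^{(t)}\|^2 = \Theta((\boldsymbol{a}_k^\top\mathbf{W}_V^{(t)}\boldsymbol{a}_k)^2 + (\boldsymbol{b}_{k_\pm}^\top\mathbf{W}_V^{(t)}\boldsymbol{b}_{k_\pm})^2)$, and both are $\Theta$ of each other (see below), we again land in the regime $a_{t+1} = a_t + \Theta(\eta q_V/(MK a_t))$, so Lemma~\ref{lem:ODE-2} yields the stated $\sqrt{\cdot}$ bounds; extending to $t \geq T^\star$ then gives $\boldsymbol{a}_k^\top \mathbf{W}_V^{(t)}\boldsymbol{a}_k = \Omega(3(1+15\sigma_p^\star \sqrt{\log(4/\varepsilon)})\sigma_1 d^{1/2})$, and summing over $k\in[K]$ of $K$ such growing diagonal entries against the vanishing cross-terms yields $\|\mathbf{W}_V^{(t)}\|_F = \Theta(\sqrt{K}\,\boldsymbol{a}_k^\top \mathbf{W}_V^{(t)}\boldsymbol{a}_k)$.

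The central new ingredient, and the main obstacle, is the claim $\lvert\boldsymbol{b}_{k_+}^\top \mathbf{W}_V^{(t)}\boldsymbol{b}_k\rvert, \lvert\boldsymbol{b}_{k_-}^\top \mathbf{W}_V^{(t)}\boldsymbol{b}_{k_-}\rvert = \Theta(\boldsymbol{a}_k^\top \mathbf{W}_V^{(t)}\boldsymbol{a}_k)$ throughout phase 2. Starting from Lemma \ref{lem:induction_first_phase_ICLTr}, at $t = T_1$ the diagonal $\boldsymbol{b}_{k_\pm}$ entries are already $\Omega(\sigma_1 d^{1/2}/M)$, the same order as the $\boldsymbol{a}_k$ entries. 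The ICL gradient for $\boldsymbol{b}_{k_y}^\top \mathbf{W}_V^{(t)}\boldsymbol{b}_{k_y}$ (derived as in Eq.~\eqref{eq:gradients_of_bWa_ICLTr}) has an imbalance-driven non-cancelling contribution of magnitude proportional to $|\mathcal{N}_{k_y}^y \setminus \mathcal{N}_{k_y}^{-y}|/N \geq 5\times 10^{-3}/K$ from Lemma~\ref{lem: enough data concept k}, multiplied by the same $\pi_{\hat{n},m_{\mathbf{T}_{\hat n}}}^{(t)}/(M\|\mathbf{W}_V^{(t)}\mathbf{S}_{\hat n}\boldsymbol{\pi}_{\hat n}^{(t)}\|)$ factor that drives the $\boldsymbol{a}_k$ growth. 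Hence the $\boldsymbol{b}_{k_\pm}$ diagonal satisfies the same ODE $b_{t+1} = b_t + \Theta(\eta q_V/(MK b_t))$ up to a constant factor, and Lemma~\ref{lem:ODE-2} gives matching $\Theta$-bounds. The subtlety is that the norm $\|\mathbf{W}_V^{(t)}\mathbf{S}_n\boldsymbol{\pi}_n^{(t)}\|$ in the denominator now couples the $\boldsymbol{a}_k$ and $\boldsymbol{b}_{k_\pm}$ dynamics through a shared factor $\Theta((\boldsymbol{a}_k^\top \mathbf{W}_V^{(t)}\boldsymbol{a}_k)^2 + (\boldsymbol{b}_{k_\pm}^\top \mathbf{W}_V^{(t)}\boldsymbol{b}_{k_\pm})^2)^{1/2}$; I would close this coupling by an inductive argument. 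Assume $\boldsymbol{b}_{k_\pm}^\top \mathbf{W}_V^{(t)}\boldsymbol{b}_{k_\pm}/(\boldsymbol{a}_k^\top \mathbf{W}_V^{(t)}\boldsymbol{a}_k) \in [\alpha, \beta]$ with $0 < \alpha < \beta$ at step $t$; then the shared denominator is the same (up to a $\Theta(1)$ constant depending only on $[\alpha,\beta]$) in both gradients, so the ratio of increments is a bounded constant, and an ODE-comparison via Lemma~\ref{lem:ODE-2} (applied to $(b_t/a_t)^2$) keeps the ratio in a slightly enlarged interval, closing the induction.

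Finally, I would verify the off-diagonal claims $|\boldsymbol{b}_{k_+}^\top \mathbf{W}_V^{(t)}\boldsymbol{b}_{k_-}|, |\boldsymbol{b}_{k_\pm}^\top \mathbf{W}_V^{(t)}\boldsymbol{a}_k| = O(\sqrt{2\log(8(2K+K')^2/\delta)}\sigma_1)$ by noting that these receive no aligned directional drive from the symmetry-breaking imbalance (the $\omega_k$ weights and $\mathbf{u}_{k_\mathbf{y}}$ projections cancel across the averaged sample population in those directions), and hence obey the same suppressed update as in Eq.~\eqref{eq:gradients_of_other_concept_irrelevant}. Combining all pieces gives the stated phase-2 bounds, and the consequence $\lvert\boldsymbol{b}_{k_\pm}^\top \mathbf{W}_V^{(t)}\boldsymbol{b}_{k_\pm}\rvert = \Theta(\boldsymbol{a}_k^\top\mathbf{W}_V^{(t)}\boldsymbol{a}_k)$ at $t \geq T^\star$, which is precisely the feature-memorization responsible for the constant test error observed in Theorem \ref{thm:main_thm} under ICL-type training.
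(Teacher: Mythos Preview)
Your proposal is correct and follows essentially the same approach as the paper. The paper's own proof is a single line---``Based on Lemma \ref{lem:induction_first_phase_ICLTr}, the proof strategy follows Lemma \ref{lem:Wv_destiny}'' (the self-reference in the paper is evidently a typo for Lemma \ref{lem:Wv_destiny})---and you have correctly unpacked what that entails: carry over the Phase-2 analysis from the QA case, replace the Phase-1 endpoints with those from Lemma \ref{lem:induction_first_phase_ICLTr}, and track the additional $\boldsymbol{b}_{k_\pm}$ diagonal via the imbalance-driven gradient in Eq.~\eqref{eq:gradients_of_bWa_ICLTr}, whose increments share the same $\Theta(\eta q_V / (MK\,\|\mathbf{W}_V^{(t)}\mathbf{S}_n\boldsymbol{\pi}_n^{(t)}\|))$ scale as the $\boldsymbol{a}_k$ diagonal and hence obey the same Lemma~\ref{lem:ODE-2} square-root growth. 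Your inductive ratio-control argument for the coupled denominator is more explicit than anything the paper writes down, but it is exactly the detail one would need to make the one-line proof rigorous.
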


\begin{proof}
    Based on Lemma \ref{lem:induction_first_phase_ICLTr}, the proof strategy follows Lemma \ref{lem:Wv_destiny_ICLTr}.
\end{proof}

\begin{lemma}
    Under Condition \ref{con:main body}, during $t_{1}\leq t\leq T^{\star}=\Omega(\eta^{-1} q_{V}^{-1}\sigma_{1}^{2}KMd^{2}\log(\frac{1}{\epsilon}))$ where $t_{1}\leq T_{1}$, we denote $\triangle=\min_{t}{{\pi}_{n,m_{\mathbf{T}_{n}}}^{(t)}} = \min\{\frac{1}{M}, \frac{1}{1+(M-1)e^{-\sigma_{0}^{2} d e^{ q_{V}^{-1}\sigma_{1}^{2}d}(d \log(\frac{1}{\varepsilon}))^{\frac{M}{16q_{V}}}}} \})$. Then it holds that
\begin{equation}
    \begin{aligned}
        (\mathbf{W}_{Q}^{(t)}\boldsymbol{a}_{\hat{k}} )^{\top}(\mathbf{W}_{K}^{(t)}\boldsymbol{u}),&\ (\mathbf{W}_{Q}^{(t)}\boldsymbol{v})^{\top}(\mathbf{W}_{K}^{(t)}\boldsymbol{v}),\  (\mathbf{W}_{Q}^{(t)}\boldsymbol{v})^{\top}(\mathbf{W}_{K}^{(t)}\boldsymbol{v}^{\prime}) = O (\sqrt{ \log(16(2K+K^{\prime})^{2}/\delta)} \sigma_{0}^{2} d^{1/2})\\
         (\mathbf{W}_{Q}^{(t)}\boldsymbol{a}_{\hat{k}} )^{\top}(\mathbf{W}_{K}^{(t)}\boldsymbol{a}_{\hat{k}})   \leq &\underline{C}_{9} \sigma_{0}^{2} d e^{ q_{V}^{-1}\sigma_{1}^{2}d}[\frac{\eta q_{V} t}{\sigma_{1}^{2}dKM} ]^{\frac{M}{16q_{V}}},\\
        (\mathbf{W}_{Q}^{(t)}\boldsymbol{a}_{\hat{k}} )^{\top}(\mathbf{W}_{K}^{(t)}\boldsymbol{a}_{\hat{k}}) \geq& \underline{C}_{10} \frac{\sigma_{0}^{2}\sigma_{1}^{2}d^{2}}{M^2 q_{V}}e^{2 q_{V}^{-1} K\triangle^{2}(1-\triangle)^{2}(\frac{-1 }{\frac{\eta q_{V}t}{2\sigma_{1}^{2}dK} + 1}+  1 )} \\
        &-\frac{\underline{C}_{3}\eta^{2} q_{V} \sigma_{0}^{2}}{\sigma_{1}^{2}  M^2K^{2} } -4C_{3}\sqrt{ \log(16(2K+K^{\prime})^{2}/\delta)} \sigma_{0}^{2} d^{1/2},
    \end{aligned}
\end{equation}
for $\forall k \in [K], n \in [N], s\in [7], \neg m_{\mathbf{T}_{n}} \neq m_{\mathbf{T}_{n}} \in [M] ,\boldsymbol{u} \in \{ \boldsymbol{a}_{s}\}_{s \neq {k} \in [K]} \cup\{\boldsymbol{b}_{s}\}_{s \in [K]} \cup\{\boldsymbol{\nu}_{k^{\prime}}\}_{k^{\prime}\in[K^{\prime}]} \cup \{ \boldsymbol{\xi}_{n,m} \}_{n\in[N],m\in[M]}$ and $\boldsymbol{v} \neq \boldsymbol{v}^{\prime} \in \{\boldsymbol{b}_{s}\}_{s \in [K]} \cup \{ \boldsymbol{\xi}_{n,m} \}_{n\in[N],m\in[M]}$.

Furthermore, after $t\geq T^{\star}=\Omega(\eta^{-1} q_{V}^{-1}\sigma_{1}^{2}KMd^{2}\log(\frac{1}{\epsilon}))$, it holds that

\begin{equation}
    \begin{aligned}
        &(\mathbf{W}_{Q}^{(t)}\boldsymbol{a}_{\hat{k}} )^{\top}(\mathbf{W}_{K}^{(t)}\boldsymbol{a}_{\hat{k}}) \leq O( \sigma_{0}^{2} d e^{ q_{V}^{-1}\sigma_{1}^{2}d}(d \log(\frac{1}{\varepsilon}))^{\frac{M}{16q_{V}}}),\\
        &\pi_{n, m_{\mathbf{T}_{n}}}^{(T^{\star})}\leq \Theta(\frac{1}{1+(M-1)e^{-\sigma_{0}^{2} d e^{ q_{V}^{-1}\sigma_{1}^{2}d}(d \log(\frac{1}{\varepsilon}))^{\frac{M}{16q_{V}}}}}),\\
    \end{aligned}
\end{equation}
for $\forall k \in [K]$.
\label{lem:WQK_destiny_ICLTr}\end{lemma}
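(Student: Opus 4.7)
The plan is to mirror the proof of Lemma \ref{lem:WQK_destiny} (the QA counterpart) by feeding in the ICL-trained Phase 1 endpoints from Lemma \ref{lem:update_attn_first_phase_ICLTr} and the Phase 2 value-matrix dynamics from Lemma \ref{lem:Wv_destiny_ICLTr}. The skeleton of the argument is identical: derive a one-step recursion for $(\mathbf{W}_{Q}^{(t)}\boldsymbol{a}_{\hat{k}})^{\top}(\mathbf{W}_{K}^{(t)}\boldsymbol{a}_{\hat{k}})$ from Lemma \ref{lem: gradients of QKV}, couple it through the orthogonality relation in Eq.(\ref{eq:relation_aKQa_aKKa_aQQa}) to the growth of $\boldsymbol{a}_{\hat{k}}^{\top}(\mathbf{W}_Q^{(t)})^{\top}\mathbf{W}_Q^{(t)}\boldsymbol{a}_{\hat{k}}+\boldsymbol{a}_{\hat{k}}^{\top}(\mathbf{W}_K^{(t)})^{\top}\mathbf{W}_K^{(t)}\boldsymbol{a}_{\hat{k}}$, and then apply the continuous-flow surrogates Lemma \ref{lem:ODE-5} (upper bound) and Lemma \ref{lem:ODE-6} (lower bound) with the initial value transplanted from Lemma \ref{lem:update_attn_first_phase_ICLTr}.

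The genuinely new checkpoint is the ICL analogue of Eq.(\ref{eq:1_component_QK_second_phase}): I would verify that the ``orthogonal fraction''
\[
1-\frac{(\mathbf{W}_{V}^{(t)}\mathbf{T}_{n}\boldsymbol{\pi}_{n}^{(t)})^{\top}(\mathbf{W}_{V}^{(t)}\mathbf{T}_{n}\pi_{n,m_{\mathbf{T}_{n}}}^{(t)}\boldsymbol{e}_{m_{\mathbf{T}_{n}}})}{\|\mathbf{W}_{V}^{(t)}\mathbf{T}_{n}\boldsymbol{\pi}_{n}^{(t)}\|^{2}}=\frac{\Theta((1-\pi_{n,m_{\mathbf{T}_{n}}}^{(t)})\pi_{n,m_{\mathbf{T}_{n}}}^{(t)}\sigma_{1}^{2}d)}{\Theta((\boldsymbol{a}_{k}^{\top}\mathbf{W}_{V}^{(t)}\boldsymbol{a}_{k})^{2})}
\]
still holds on ICL prompts. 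The subtlety is that, by Lemma \ref{lem:Wv_destiny_ICLTr}, some $|\boldsymbol{b}_{k_{\pm}}^{\top}\mathbf{W}_{V}^{(t)}\boldsymbol{b}_{k_{\pm}}|$ grow to the same order $\Theta(\boldsymbol{a}_{k}^{\top}\mathbf{W}_{V}^{(t)}\boldsymbol{a}_{k})$ as the co-task projection, so they contribute non-negligibly to $\|\mathbf{W}_{V}^{(t)}\mathbf{T}_{n}\boldsymbol{\pi}_{n}^{(t)}\|$. I would use (i) mutual orthogonality $\boldsymbol{a}_{k}\perp\boldsymbol{b}_{k'}$ and Lemma \ref{lem:Wv_destiny_ICLTr}'s bound $|\boldsymbol{a}_{k}^{\top}\mathbf{W}_{V}^{(t)}\boldsymbol{b}_{k'}|=O(\sqrt{\log}\sigma_{1})$ to argue that cross terms $(\mathbf{W}_V\boldsymbol{b}_{k_{\pm}})^{\top}(\mathbf{W}_V\boldsymbol{a}_{k})$ are subleading, and (ii) the ICL demo-pair structure together with the symmetry of $y_{k,l}\sim\operatorname{Unif}\{\pm 1\}$ inside demo positions $m\ne m_{\mathbf{T}_{n}}$ to guarantee that the $\boldsymbol{b}_{k_{\pm}}$ contributions add to both numerator and denominator of the fraction in parallel, preserving the $\Theta$-structure above. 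Once that is confirmed, the recursion
\[
(\mathbf{W}_{Q}^{(t+1)}\boldsymbol{a}_{\hat{k}})^{\top}(\mathbf{W}_{K}^{(t+1)}\boldsymbol{a}_{\hat{k}})-(\mathbf{W}_{Q}^{(t)}\boldsymbol{a}_{\hat{k}})^{\top}(\mathbf{W}_{K}^{(t)}\boldsymbol{a}_{\hat{k}})=\Theta\Bigl(\frac{\eta(1-\pi_{n,m_{\mathbf{T}_{n}}}^{(t)})^{2}(\pi_{n,m_{\mathbf{T}_{n}}}^{(t)})^{2}}{K(\boldsymbol{a}_{\hat{k}}^{\top}\mathbf{W}_{V}^{(t)}\boldsymbol{a}_{\hat{k}})^{2}}\bigl((\mathbf{W}_{Q}^{(t)}\boldsymbol{a}_{\hat{k}})^{\top}(\mathbf{W}_{Q}^{(t)}\boldsymbol{a}_{\hat{k}})+(\mathbf{W}_{K}^{(t)}\boldsymbol{a}_{\hat{k}})^{\top}(\mathbf{W}_{K}^{(t)}\boldsymbol{a}_{\hat{k}})\bigr)\Bigr)
\]
is identical to the QA case, so applying Lemma \ref{lem:ODE-5} with $a:=\eta/(16K)$, $c:=\underline{C}_{7}\eta q_{V}/(MK)$ and initial value $\underline{C}_{5}\sigma_{0}^{2}de^{q_{V}^{-1}\sigma_{1}^{2}d}$ gives the stated upper bound with exponent $M/(16q_{V})$, while using the concave linear upper bound on $\boldsymbol{a}_{k}^{\top}\mathbf{W}_{V}^{(t)}\boldsymbol{a}_{k}$ and Lemma \ref{lem:ODE-6} with initial value $\underline{C}_{6}\sigma_{0}^{2}\sigma_{1}^{2}d^{2}/(M^{2}q_{V})$ gives the lower bound, with the small drift corrections absorbed from Lemma \ref{lem:update_attn_first_phase_ICLTr}.

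The off-direction projections $(\mathbf{W}_{Q}^{(t)}\boldsymbol{a}_{\hat{k}})^{\top}(\mathbf{W}_{K}^{(t)}\boldsymbol{u})$, $(\mathbf{W}_{Q}^{(t)}\boldsymbol{v})^{\top}(\mathbf{W}_{K}^{(t)}\boldsymbol{v})$, and $(\mathbf{W}_{Q}^{(t)}\boldsymbol{v})^{\top}(\mathbf{W}_{K}^{(t)}\boldsymbol{v}^{\prime})$ are handled exactly as in Phase 1: the relevant factors $(\mathbf{T}_{n}\boldsymbol{e}_{M+1})^{\top}\boldsymbol{u}$ or $(\mathbf{T}_{n}(\boldsymbol{e}_{m_{\mathbf{T}_{n}}}-\boldsymbol{\pi}_{n}^{(t)}))^{\top}\boldsymbol{u}$ are $O(\sigma_{p}\sqrt{\log})$ by cross-concept near-orthogonality and Lemma \ref{lem:noise scale}, and by Condition \ref{con:main body} the cumulative increments stay within the Phase 1 order $O(\sqrt{\log(16(2K+K^{\prime})^{2}/\delta)}\sigma_{0}^{2}d^{1/2})$. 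The post-$T^{\star}$ statements then follow by substituting $T^{\star}=\Omega(\eta^{-1}q_{V}^{-1}\sigma_{1}^{2}KMd^{2}\log(1/\varepsilon))$ into the upper bound and converting the resulting logit scale back into a softmax mass via $q_{V}\ge C\sigma_{1}^{2}d/\log(\sigma_{0}^{-2}d^{-1}\log((M-1)/0.06))$. The main obstacle remains the check in paragraph two: one must ensure that the ICL-specific growth of $\pm\boldsymbol{b}_{k_{\pm}}$ components in $\mathbf{W}_{V}$ does not perturb the $\Theta$-scaling of the orthogonal fraction, which is what keeps the attention dynamics structurally identical to the QA case even though the downstream test error behaves very differently.
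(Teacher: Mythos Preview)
Your proposal is correct and follows essentially the same approach as the paper: the paper's proof of Lemma~\ref{lem:WQK_destiny_ICLTr} is the single sentence ``The proof strategies follow Lemma~\ref{lem:WQK_destiny},'' and your plan to transplant the Phase~1 endpoints from Lemma~\ref{lem:update_attn_first_phase_ICLTr} and the Phase~2 value dynamics from Lemma~\ref{lem:Wv_destiny_ICLTr} into the QA argument is exactly what that sentence encodes. Your explicit identification of the $\boldsymbol{b}_{k_{\pm}}$ growth as the ICL-specific checkpoint for the orthogonal-fraction estimate is more detailed than anything the paper writes out, but it is the right concern to flag and your handling of it is consistent with how the paper treats the parallel $\boldsymbol{a}_k$/$\boldsymbol{b}_{k_{\pm}}$ growth in Lemma~\ref{lem:Wv_destiny_ICLTr}.
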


\begin{proof}
    The proof strategies follow Lemma \ref{lem:WQK_destiny}.
\end{proof}

\section{Test Loss Convergence}
This section also assumes the results in Appendix \ref{app:concentration} all hold with high probability.

\begin{proof}
    \textit{Proof of Theorem \ref{thm:main_thm} and Theorem \ref{prop:main_task_vector}}. For the ICL task, given the prompt $\mathbf{T}:=[\mathbf{x}_1^{\mathbf{T}}, \mathbf{y}_1^{\mathbf{T}}, \cdots, \mathbf{x}_J^{\mathbf{T}}, \mathbf{y}_J^{\mathbf{T}}, \mathbf{x}_{J+1}^{\mathbf{T}}] \in \mathbb{R}^{d \times (2J+1)}$, we denote $k_{\mathbf{y}_{J+1}^{\mathbf{T}}}=7k_{\mathbf{T}} + \frac{y_{\mathbf{T},J+1}+1}{2}$ as the dictionary index for $\mathbf{y}_{J+1}^{\mathbf{T}}$, $k_{\mathbf{T}}$ as the task concept index, $y_{\mathbf{T},j}, j \in [J+1]$ as the label indicator of $\mathbf{y}_j^{\mathbf{T}}$. Also we define 
    $$
    {\pi}_{j,\mathbf{T}}^{(t)}= \dfrac{\exp(\boldsymbol{e}_{J+1}^{\top}\mathbf{T}^{\top}{(\mathbf{W}_{Q}^{(t)})}^{\top}\mathbf{W}_{K}^{(t)}  \mathbf{T} \boldsymbol{e}_{j})}{\sum_{j\in[2J]}\exp(\boldsymbol{e}_{J+1}^{\top}\mathbf{T}^{\top}{(\mathbf{W}_{Q}^{(t)})}^{\top}\mathbf{W}_{K}^{(t)}  \mathbf{T} \boldsymbol{e}_{j})}, \ \forall j \in [2J],
    $$
    If $\mathcal{P}^{\star}=\mathcal{P}_{\mathbf{T}}$, then
    \begin{equation}
        \begin{aligned}
        L_{\mathcal{P}^{\star}}&= \mathbb{E}_{{\mathcal{P}^{\star}}} [
        \mathbf{1}(k_{\mathbf{y}_{J+1}}\neq\argmax_{{k}}( \dfrac{\exp{({\mathbf{u}_{k}}^{\top} \mathbf{h}_{\boldsymbol{\theta}})}}{\sum_{k \in [7K+K^{\prime}]}\exp{(\mathbf{u}_{k}^{\top} \mathbf{h}_{\boldsymbol{\theta}})}}))] \\
        &  = \mathbb{E}_{{\mathcal{P}^{\star}}} [
        \mathbf{1}(k_{\mathbf{y}_{J+1}^{\mathbf{T}}}\neq\argmax_{{k}}\mathbf{u}_{k}^{\top}\mathbf{h}_{\boldsymbol{\theta}})],\\
        & = \mathbb{E}_{{\mathcal{P}^{\star}}} [ \mathbf{1}(k_{\mathbf{y}_{J+1}^{\mathbf{T}}}\neq\argmax_{{k}}\mathbf{u}_{k}^{\top}(\mathbf{x}_{J+1}^{\mathbf{T}}+\dfrac{\sum_{j\in [2J]} \mathbf{W}_{V}^{(t)}\mathbf{T}{\pi}_{j,\mathbf{T}}^{(t)}\boldsymbol{e}_{j}}{\|\sum_{j\in [2J]} \mathbf{W}_{V}^{(t)}\mathbf{T}{\pi}_{j,\mathbf{T}}^{(t)}\boldsymbol{e}_{j}\|})) ]\\
        & = \mathbb{E}_{{\mathcal{P}^{\star}}} [ \mathbf{1}(k_{\mathbf{y}_{J+1}^{\mathbf{T}}}\neq\argmax_{{k}}\mathbf{u}_{k}^{\top}(\mathbf{x}_{J+1}^{\mathbf{T}}+\dfrac{\sum_{j\in [J]} \mathbf{W}_{V}^{(t)}({\pi}_{2j-1,\mathbf{T}}^{(t)}\mathbf{x}_{j}^{\mathbf{T}}+{\pi}_{2j,\mathbf{T}}^{(t)}\mathbf{y}_{j}^{\mathbf{T}})}{\|\sum_{j\in [J]} \mathbf{W}_{V}^{(t)}({\pi}_{2j-1,\mathbf{T}}^{(t)}\mathbf{x}_{j}^{\mathbf{T}}+{\pi}_{2j,\mathbf{T}}^{(t)}\mathbf{y}_{j}^{\mathbf{T}})\|})) ]\\
        & = \mathbb{E}_{{\mathcal{P}^{\star}}} [ \mathbf{1}(7k_{\mathbf{T}} + \frac{y_{\mathbf{T},J+1}+1}{2}\neq\argmax_{{k}}(\mathbf{u}_{k})^{\top}((\sum_{k_{J+1} \in \mathcal{X}_{\mathbf{T},J+1}}x_a \cdot \boldsymbol{a}_{k_{J+1}} + y_{k_{J+1}} \cdot \boldsymbol{b}_{k_{J+1}}) + \boldsymbol{\xi}_{J+1,\mathbf{T}}\\
        & \phantom{= \mathbb{E}_{{\mathcal{P}^{\star}}} [ \mathbf{1}(7k_{\mathbf{T}} + \frac{y_{\mathbf{T},J+1}+1}{2}\neq\argmax_{{k}}(\mathbf{u}_{k})^{\top}(}+\dfrac{\sum_{j\in [J]} \mathbf{W}_{V}^{(t)}({\pi}_{2j-1,\mathbf{T}}^{(t)}\mathbf{x}_{j}^{\mathbf{T}}+{\pi}_{2j,\mathbf{T}}^{(t)}\mathbf{y}_{j}^{\mathbf{T}})}{\|\sum_{j\in [J]} \mathbf{W}_{V}^{(t)}({\pi}_{2j-1,\mathbf{T}}^{(t)}\mathbf{x}_{j}^{\mathbf{T}}+{\pi}_{2j,\mathbf{T}}^{(t)}\mathbf{y}_{j}^{\mathbf{T}})\|})) ]\\ 
        \end{aligned}
    \end{equation}
    where the equalities are by definition.

    By the isotropic probability property in Eq.(\ref{eq:def_of_x_y_distri}) and the definition of $\mathbf{U}$ in Eq.(\ref{eq:def_of_U}), we have
    \begin{equation}
        \resizebox{0.98\textwidth}{!}{$\begin{aligned}
            L_{\mathcal{P}^{\star}}& = \mathbb{E}_{{\mathcal{P}^{\star}}} [ \mathbf{1}(1+\frac{(x_a \boldsymbol{a}_{k_{\mathbf{T}}}+y_{\mathbf{T},J+1}\boldsymbol{b}_{k_{\mathbf{T}}})^{\top}( \boldsymbol{\xi}_{J+1,\mathbf{T}})}{\sqrt{1.01}}+\frac{(x_a \boldsymbol{a}_{k_{\mathbf{T}}}+y_{\mathbf{T},J+1}\boldsymbol{b}_{k_{\mathbf{T}}})^{\top}}{\sqrt{1.01}}\dfrac{\sum_{j\in [J]} \mathbf{W}_{V}^{(t)}({\pi}_{2j-1,\mathbf{T}}^{(t)}\mathbf{x}_{j}^{\mathbf{T}}+{\pi}_{2j,\mathbf{T}}^{(t)}\mathbf{y}_{j}^{\mathbf{T}})}{\|\sum_{j\in [J]} \mathbf{W}_{V}^{(t)}({\pi}_{2j-1,\mathbf{T}}^{(t)}\mathbf{x}_{j}^{\mathbf{T}}+{\pi}_{2j,\mathbf{T}}^{(t)}\mathbf{y}_{j}^{\mathbf{T}})\|} \\
            &\phantom{=\mathbb{E}_{{\mathcal{P}^{\star}}} [ \mathbf{1}(}> \frac{1.1}{\sqrt{2}}+\frac{( \boldsymbol{a}_{k_{\mathbf{T}}}+y_{\mathbf{T},J+1}\boldsymbol{b}_{k_{\mathbf{T}}})^{\top}(\boldsymbol{\xi}_{J+1,\mathbf{T}})}{\sqrt{2}}+\frac{( \boldsymbol{a}_{k_{\mathbf{T}}}+y_{\mathbf{T},J+1}\boldsymbol{b}_{k_{\mathbf{T}}})^{\top}}{\sqrt{2}}\dfrac{\sum_{j\in [J]} \mathbf{W}_{V}^{(t)}({\pi}_{2j-1,\mathbf{T}}^{(t)}\mathbf{x}_{j}^{\mathbf{T}}+{\pi}_{2j,\mathbf{T}}^{(t)}\mathbf{y}_{j}^{\mathbf{T}})}{\|\sum_{j\in [J]} \mathbf{W}_{V}^{(t)}({\pi}_{2j-1,\mathbf{T}}^{(t)}\mathbf{x}_{j}^{\mathbf{T}}+{\pi}_{2j,\mathbf{T}}^{(t)}\mathbf{y}_{j}^{\mathbf{T}})\|}) ]\\
            &\leq\mathbb{E}_{{\mathcal{P}^{\star}}} [ \mathbf{1}(1.42+\boldsymbol{\xi}_{J+1,\mathbf{T}}^{\top}(-0.585\boldsymbol{a}_{k_{\mathbf{T}}}+0.415y_{\mathbf{T},J+1}\boldsymbol{b}_{k_{\mathbf{T}}})\\
            &\phantom{=\mathbb{E}_{{\mathcal{P}^{\star}}} [ \mathbf{1}(}+1.41{(x_a \boldsymbol{a}_{k_{\mathbf{T}}}+y_{\mathbf{T},J+1}\boldsymbol{b}_{k_{\mathbf{T}}})^{\top}}\dfrac{\sum_{j\in [J]} \mathbf{W}_{V}^{(t)}({\pi}_{2j-1,\mathbf{T}}^{(t)}\mathbf{x}_{j}^{\mathbf{T}}+{\pi}_{2j,\mathbf{T}}^{(t)}\mathbf{y}_{j}^{\mathbf{T}})}{\|\sum_{j\in [J]} \mathbf{W}_{V}^{(t)}({\pi}_{2j-1,\mathbf{T}}^{(t)}\mathbf{x}_{j}^{\mathbf{T}}+{\pi}_{2j,\mathbf{T}}^{(t)}\mathbf{y}_{j}^{\mathbf{T}})\|} \\
            &\phantom{=\mathbb{E}_{{\mathcal{P}^{\star}}} [ \mathbf{1}(}> {1} + {( \boldsymbol{a}_{k_{\mathbf{T}}}+y_{\mathbf{T},J+1}\boldsymbol{b}_{k_{\mathbf{T}}})^{\top}} \dfrac{\sum_{j\in [J]} \mathbf{W}_{V}^{(t)}({\pi}_{2j-1,\mathbf{T}}^{(t)}\mathbf{x}_{j}^{\mathbf{T}}+{\pi}_{2j,\mathbf{T}}^{(t)}\mathbf{y}_{j}^{\mathbf{T}})}{\|\sum_{j\in [J]} \mathbf{W}_{V}^{(t)}({\pi}_{2j-1,\mathbf{T}}^{(t)}\mathbf{x}_{j}^{\mathbf{T}}+{\pi}_{2j,\mathbf{T}}^{(t)}\mathbf{y}_{j}^{\mathbf{T}})\|}) ]\\
            & = \mathbb{E}_{{\mathcal{P}^{\star}}} [ \mathbf{1}(0.42+\boldsymbol{\xi}_{J+1,\mathbf{T}}^{\top}(-0.585\boldsymbol{a}_{k_{\mathbf{T}}}+0.415y_{\mathbf{T},J+1}\boldsymbol{b}_{k_{\mathbf{T}}}) \\
            &\phantom{=\mathbb{E}_{{\mathcal{P}^{\star}}} [ \mathbf{1}(}>{[0.86\boldsymbol{a}_{k_{\mathbf{T}}}-0.41y_{\mathbf{T},J+1}\boldsymbol{b}_{k_{\mathbf{T}}}]^{\top}} \dfrac{\sum_{j\in [J]} \mathbf{W}_{V}^{(t)}({\pi}_{2j-1,\mathbf{T}}^{(t)}\mathbf{x}_{j}^{\mathbf{T}}+{\pi}_{2j,\mathbf{T}}^{(t)}\mathbf{y}_{j}^{\mathbf{T}})}{\|\sum_{j\in [J]} \mathbf{W}_{V}^{(t)}({\pi}_{2j-1,\mathbf{T}}^{(t)}\mathbf{x}_{j}^{\mathbf{T}}+{\pi}_{2j,\mathbf{T}}^{(t)}\mathbf{y}_{j}^{\mathbf{T}})\|}) ],\\
            & \leq \mathbb{E}_{{\mathcal{P}^{\star}}} [ \mathbf{1}(0.5+\boldsymbol{\xi}_{J+1,\mathbf{T}}^{\top}(-0.7\boldsymbol{a}_{k_{\mathbf{T}}}+0.5y_{\mathbf{T},J+1}\boldsymbol{b}_{k_{\mathbf{T}}}) \\
            &\phantom{=\mathbb{E}_{{\mathcal{P}^{\star}}} [ \mathbf{1}(}>{[ \boldsymbol{a}_{k_{\mathbf{T}}}-0.5y_{\mathbf{T},J+1}\boldsymbol{b}_{k_{\mathbf{T}}}]^{\top}} \dfrac{\sum_{j\in [J]} \mathbf{W}_{V}^{(t)}({\pi}_{2j-1,\mathbf{T}}^{(t)}\mathbf{x}_{j}^{\mathbf{T}}+{\pi}_{2j,\mathbf{T}}^{(t)}\mathbf{y}_{j}^{\mathbf{T}})}{\|\sum_{j\in [J]} \mathbf{W}_{V}^{(t)}({\pi}_{2j-1,\mathbf{T}}^{(t)}\mathbf{x}_{j}^{\mathbf{T}}+{\pi}_{2j,\mathbf{T}}^{(t)}\mathbf{y}_{j}^{\mathbf{T}})\|}) ],\\
        \end{aligned}$}
    \label{eq:ICLloss_first_exam}\end{equation}
    where the first equality is by the definition of dictionary $\mathbf{U}$ as well as the mutual-orthogonality of $\{ \boldsymbol{a}_{s}\}_{s \neq {k} \in [K]} \cup\{\boldsymbol{b}_{s}\}_{s \in [K]} \cup\{\boldsymbol{\nu}_{k^{\prime}}\}_{k^{\prime}\in[K^{\prime}]}$, which implies that the main competitor of the label-related dictionary token, namely $\frac{ \boldsymbol{a}_{k_{\mathbf{T}}}+y_{\mathbf{T},J+1}\boldsymbol{b}_{k_{\mathbf{T}}}}{\|\boldsymbol{a}_{k_{\mathbf{T}}}+y_{\mathbf{T},J+1}\boldsymbol{b}_{k_{\mathbf{T}}}\|}=\frac{ \boldsymbol{a}_{k_{\mathbf{T}}}+y_{\mathbf{T},J+1}\boldsymbol{b}_{k_{\mathbf{T}}}}{\sqrt{2}}$, is the word-related dictionary token, namely $\frac{ x_a\boldsymbol{a}_{k_{\mathbf{T}}}+y_{\mathbf{T},J+1}\boldsymbol{b}_{k_{\mathbf{T}}}}{\|x_a\boldsymbol{a}_{k_{\mathbf{T}}}+y_{\mathbf{T},J+1}\boldsymbol{b}_{k_{\mathbf{T}}}\|}=\frac{ x_a\boldsymbol{a}_{k_{\mathbf{T}}}+y_{\mathbf{T},J+1}\boldsymbol{b}_{k_{\mathbf{T}}}}{\sqrt{2}}$; the second inequality is by $\sqrt{2} \leq 1.415, \sqrt{2(1.01)} \leq 1.43, \sqrt{2/(1.01)}\leq 1.41, \sqrt{1.01}>1, 1.1\sqrt{1.01}, \sqrt{1.01}<1$; the last inequality and equality are by direct calculations.

    Then denotes $\pi_{\boldsymbol{a}_{k_{\mathbf{T}}}}^{(t)}:=\sum_{j\in [J]}(0.1{\pi}_{2j-1,\mathbf{T}}^{(t)}+{\pi}_{2j,\mathbf{T}}^{(t)})$ as the accumulated softmax weights assigned on $\boldsymbol{a}_{k_{\mathbf{T}}}$ in $\mathbf{T}$, $\mathcal{X}_{\mathbf{T}}=\bigcup_{j\in[J]}\mathcal{X}_{\mathbf{T},j} $, $\pi_{\boldsymbol{a}_{k_{l}}}^{(t)}$ as the accumulated softmax weights assigned on those current prompts' co-task-irrelevant task vectors $\boldsymbol{a}_{k_{l}}$ in $\mathbf{T}$, and ${i}_{\mathbf{T}}:=\sum_{j\in [J]} ( (\sum_{k_{j} \in \mathcal{X}_{\mathbf{T},j}} y_{k_{j}} \cdot \boldsymbol{b}_{k_{j}}) + y_{k_{\mathbf{T}},j} \cdot \boldsymbol{b}_{k_{\mathbf{T}}} )$, we have
    \begin{equation}
        \sum_{j\in [J]} ({\pi}_{2j-1,\mathbf{T}}^{(t)}\mathbf{x}_{j}^{\mathbf{T}}+{\pi}_{2j,\mathbf{T}}^{(t)}\mathbf{y}_{j}^{\mathbf{T}}) = \pi_{\boldsymbol{a}_{k_{\mathbf{T}}}}^{(t)} \boldsymbol{a}_{k_{\mathbf{T}}} + (\sum_{k_{l} \in \mathcal{X}_{\mathbf{T}}\setminus \{k_{\mathbf{T}}\}} \pi_{\boldsymbol{a}_{k_{l}}}^{(t)}\boldsymbol{a}_{k_{l}}) + {i}_{\mathbf{T}} + \boldsymbol{\xi}_{T}.
    \label{eq:divided_T}\end{equation}
    Here, $\boldsymbol{\xi}_{T}:=\sum_{j\in [J]} ({\pi}_{2j-1,\mathbf{T}}^{(t)} \boldsymbol{\xi}_{j,\mathbf{x}} + {\pi}_{2j,\mathbf{T}}^{(t)} \boldsymbol{\xi}_{j,\mathbf{y}} )\sim\mathcal{N}(\mathbf{0},{\sigma_p^{\star}}^2 \sum_{j \in [J]} \left((\pi_{2j-1,\mathbf{T}}^{(t)})^2 + (\pi_{2j,\mathbf{T}}^{(t)})^2\right) \mathbb{I})$. Let $\sigma_{\mathbf{T}}^{2}:={\sigma_p^{\star}}^2 \sum_{j \in [J]} \left((\pi_{2j-1,\mathbf{T}}^{(t)})^2 + (\pi_{2j,\mathbf{T}}^{(t)})^2\right)$. It is straightforward to observe that the upper and lower bounds of $\sigma_{\mathbf{T}}^{2}$ are $1$ and $\frac{1}{2J}$, respectively, corresponding to the cases of extreme and uniform weights. 
    
    Consider the training distribution is $\mathcal{P}_{\text{QA}}$, then we have
    \begin{equation}
        \begin{aligned}
            L_{\mathcal{P}^{\star}}& \leq \mathbb{E}_{{\mathcal{P}^{\star}}} [ \mathbf{1}(\boldsymbol{\xi}_{J+1,\mathbf{T}}^{\top}(-0.7\boldsymbol{a}_{k_{\mathbf{T}}}+0.5y_{\mathbf{T},J+1}\boldsymbol{b}_{k_{\mathbf{T}}}) \\
            &\phantom{=\mathbb{E}_{{\mathcal{P}^{\star}}} [ \mathbf{1}(}>{[ \boldsymbol{a}_{k_{\mathbf{T}}}-0.5y_{\mathbf{T},J+1}\boldsymbol{b}_{k_{\mathbf{T}}}]^{\top}} \dfrac{ \mathbf{W}_{V}^{(t)}( \pi_{\boldsymbol{a}_{k_{\mathbf{T}}}}^{(t)} \boldsymbol{a}_{k_{\mathbf{T}}} + (\sum_{k_{l} \in \mathcal{X}_{\mathbf{T}}\setminus \{k_{\mathbf{T}}\}} \pi_{\boldsymbol{a}_{k_{l}}}^{(t)}\boldsymbol{a}_{k_{l}}) + {i}_{\mathbf{T}} + \boldsymbol{\xi}_{T})}{\| \mathbf{W}_{V}^{(t)}( \pi_{\boldsymbol{a}_{k_{\mathbf{T}}}}^{(t)} \boldsymbol{a}_{k_{\mathbf{T}}} + (\sum_{k_{l} \in \mathcal{X}_{\mathbf{T}}\setminus \{k_{\mathbf{T}}\}} \pi_{\boldsymbol{a}_{k_{l}}}^{(t)}\boldsymbol{a}_{k_{l}}) + {i}_{\mathbf{T}} + \boldsymbol{\xi}_{T})\|}) -0.5],\\
            & \leq \mathbb{E}_{{\mathcal{P}^{\star}}} [ \mathbf{1}(\boldsymbol{\xi}_{J+1,\mathbf{T}}^{\top}(-0.7\boldsymbol{a}_{k_{\mathbf{T}}}+0.5y_{\mathbf{T},J+1}\boldsymbol{b}_{k_{\mathbf{T}}}) \\
            &\phantom{=\mathbb{E}_{{\mathcal{P}^{\star}}} [ \mathbf{1}(}>   \dfrac{\pi_{\boldsymbol{a}_{k_{\mathbf{T}}}}^{(t)}\boldsymbol{a}_{k_{\mathbf{T}}}^{\top}\mathbf{W}_{V}^{(t)}\boldsymbol{a}_{k_{\mathbf{T}}} - \lvert\boldsymbol{a}_{k_{\mathbf{T}}}^{\top}\mathbf{W}_{V}^{(t)} \boldsymbol{\xi}_{T}\rvert}{\Theta(\pi_{\boldsymbol{a}_{k_{\mathbf{T}}}}^{(t)}\|\mathbf{W}_{V}^{(t)}\boldsymbol{a}_{k_{\mathbf{T}}}\|+\sum_{k_{l} \in \mathcal{X}_{\mathbf{T}}\setminus \{k_{\mathbf{T}}\}}\pi_{\boldsymbol{a}_{k_{l}}}^{(t)}\| \mathbf{W}_{V}^{(t)}\boldsymbol{a}_{k_{l}}\|)+\|\mathbf{W}_{V}^{(t)}\boldsymbol{\xi}_{T}\|}-0.5]\\
            & \leq \mathbb{E}_{{\mathcal{P}^{\star}}} [ \mathbf{1}(\boldsymbol{\xi}_{J+1,\mathbf{T}}^{\top}(-0.7\boldsymbol{a}_{k_{\mathbf{T}}}+0.5y_{\mathbf{T},J+1}\boldsymbol{b}_{k_{\mathbf{T}}}) \\
            &\phantom{=\mathbb{E}_{{\mathcal{P}^{\star}}} [ \mathbf{1}(}> \Theta(  \dfrac{\pi_{\boldsymbol{a}_{k_{\mathbf{T}}}}^{(t)}\boldsymbol{a}_{k_{\mathbf{T}}}^{\top}\mathbf{W}_{V}^{(t)}\boldsymbol{a}_{k_{\mathbf{T}}}- \lvert\boldsymbol{a}_{k_{\mathbf{T}}}^{\top}\mathbf{W}_{V}^{(t)} \boldsymbol{\xi}_{T}\rvert}{(\pi_{\boldsymbol{a}_{k_{\mathbf{T}}}}^{(t)}+\sum_{k_{l} \in \mathcal{X}_{\mathbf{T}}\setminus \{k_{\mathbf{T}}\}}\pi_{\boldsymbol{a}_{k_{l}}}^{(t)})\Theta(\|\mathbf{W}_{V}^{(t)}\boldsymbol{a}_{k_{\mathbf{T}}}\|) +\|\mathbf{W}_{V}^{(t)}\boldsymbol{\xi}_{T}\|})-0.5]\\
            & \leq \mathbb{E}_{{\mathcal{P}^{\star}}} [ \mathbf{1}(\boldsymbol{\xi}_{J+1,\mathbf{T}}^{\top}(-0.7\boldsymbol{a}_{k_{\mathbf{T}}}+0.5y_{\mathbf{T},J+1}\boldsymbol{b}_{k_{\mathbf{T}}}) \\
            &\phantom{=\mathbb{E}_{{\mathcal{P}^{\star}}} [ \mathbf{1}(}>   \dfrac{\pi_{\boldsymbol{a}_{k_{\mathbf{T}}}}^{(t)}\boldsymbol{a}_{k_{\mathbf{T}}}^{\top}\mathbf{W}_{V}^{(t)}\boldsymbol{a}_{k_{\mathbf{T}}}- \lvert\boldsymbol{a}_{k_{\mathbf{T}}}^{\top}\mathbf{W}_{V}^{(t)} \boldsymbol{\xi}_{T}\rvert}{(\pi_{\boldsymbol{a}_{k_{\mathbf{T}}}}^{(t)}+\sum_{k_{l} \in \mathcal{X}_{\mathbf{T}}\setminus \{k_{\mathbf{T}}\}}\pi_{\boldsymbol{a}_{k_{l}}}^{(t)})(\Theta(\|\mathbf{W}_{V}^{(0)}\boldsymbol{a}_{k_{\mathbf{T}}}\|)+\boldsymbol{a}_{k_{\mathbf{T}}}^{\top}\mathbf{W}_{V}^{(t)}\boldsymbol{a}_{k_{\mathbf{T}}})+\|\mathbf{W}_{V}^{(t)}\boldsymbol{\xi}_{T}\|}-0.5]\\
            & \leq \mathbb{E}_{{\mathcal{P}^{\star}}} [ \mathbf{1}(\boldsymbol{\xi}_{J+1,\mathbf{T}}^{\top}(-0.7\boldsymbol{a}_{k_{\mathbf{T}}}+0.5y_{\mathbf{T},J+1}\boldsymbol{b}_{k_{\mathbf{T}}}) \\
            &\phantom{=\mathbb{E}_{{\mathcal{P}^{\star}}} [ \mathbf{1}(}>  \dfrac{1- \frac{\lvert\boldsymbol{a}_{k_{\mathbf{T}}}^{\top}\mathbf{W}_{V}^{(t)} \boldsymbol{\xi}_{T}\rvert}{\pi_{\boldsymbol{a}_{k_{\mathbf{T}}}}^{(t)}\boldsymbol{a}_{k_{\mathbf{T}}}^{\top}\mathbf{W}_{V}^{(t)}\boldsymbol{a}_{k_{\mathbf{T}}}}}{(1+\frac{\sum_{k_{l} \in \mathcal{X}_{\mathbf{T}}\setminus \{k_{\mathbf{T}}\}}\pi_{\boldsymbol{a}_{k_{l}}}^{(t)}}{\pi_{\boldsymbol{a}_{k_{\mathbf{T}}}}^{(t)}})(\Theta(\frac{\|\mathbf{W}_{V}^{(0)}\boldsymbol{a}_{k_{\mathbf{T}}}\|}{\boldsymbol{a}_{k_{\mathbf{T}}}^{\top}\mathbf{W}_{V}^{(t)}\boldsymbol{a}_{k_{\mathbf{T}}}})+1)+\frac{\|\mathbf{W}_{V}^{(t)}\boldsymbol{\xi}_{T}\|}{\pi_{\boldsymbol{a}_{k_{\mathbf{T}}}}^{(t)}\boldsymbol{a}_{k_{\mathbf{T}}}^{\top}\mathbf{W}_{V}^{(t)}\boldsymbol{a}_{k_{\mathbf{T}}}}}-0.5]\\
        \end{aligned}
    \label{eq:ICL_loss_2}\end{equation}

    Here, the first inequality is by Eq.(\ref{eq:ICLloss_first_exam}) and (\ref{eq:divided_T}); the second inequality is by triangle inequality of norm as well as Lemma \ref{lem:Wv_destiny} denoting the dominant growing of $\boldsymbol{a}_{k_{\mathbf{T}}}^{\top}\mathbf{W}_{V}^{(t)}\boldsymbol{a}_{k_{\mathbf{T}}}$ compared to other projections; the third inequality is by the balanced growing of all $\|\mathbf{W}_{V}^{(t)}\boldsymbol{a}_{k}\|$ denoted in Lemma \ref{lem:Wv_destiny}; the forth inequality is also by traingle inequality as well as Lemma \ref{lem:Wv_destiny} showing the main contributors of norm. Subsequently, by Gaussian tail bounds as well as $\sigma_{p}^{\star}=O({({K\log(\frac{2J+1}{\varepsilon})})}^{-\frac{1}{2}})$ we see that 
    
    \begin{equation}
        \mathbb{P}_{{\mathcal{P}^{\star}}}(\boldsymbol{\xi}_{J+1,\mathbf{T}}^{\top}(-0.7\boldsymbol{a}_{k_{\mathbf{T}}}+0.5y_{\mathbf{T},J+1}\boldsymbol{b}_{k_{\mathbf{T}}}) \geq 1.5\sigma_{p}^{\star}\sqrt{\log(\frac{4}{\varepsilon})})\leq \varepsilon/4.
    \label{eq:noise_J_1}\end{equation}
    Additionally, recall that $\pi_{\boldsymbol{a}_{k_{\mathbf{T}}}}^{(t)} = \sum_{j \in [J]}(0.1{\pi}_{2j-1,\mathbf{T}}^{(t)} + {\pi}_{2j,\mathbf{T}}^{(t)})$, and $B_i \sim \operatorname{Ber}(K^{-1})$. We first analyze the worst-case scenario, where the number of task concepts outside the prompt's co-task concept may exceed $1.1J$ (the number of $\boldsymbol{a}_{k_{\mathbf{T}}}$ in the prompt). Given $J = \Omega\left(\log\left(\frac{1}{\varepsilon}\right) / (2\log(K))\right)$, the probability of this event is bounded by $\varepsilon/8$. By Lemma~\ref{lem:update_attn_first_phase} and Lemma~\ref{lem:WQK_destiny}, which characterize the progressive learning of task vectors and the limited learning of non-task vectors, we have $\sigma_{p}^{\star} = O\left({\left(K\log\left(\frac{2J+1}{\varepsilon}\right)\right)}^{-\frac{1}{2}}\right)$. This implies that the probability of noise vectors exhibiting significant components along the axes of any $\boldsymbol{a}_{k_{l}}$ or $-\boldsymbol{a}_{k_{\mathbf{T}}}$ is constrained to less than $\varepsilon/8$. It follows that:

    \begin{equation}
    \mathbb{P}_{{\mathcal{P}^{\star}}}\left(\frac{\sum_{k_{l} \in \mathcal{X}_{\mathbf{T}} \setminus \{k_{\mathbf{T}}\}} \pi_{\boldsymbol{a}_{k_{l}}}^{(t)}}{\pi_{\boldsymbol{a}_{k_{\mathbf{T}}}}^{(t)}} = o(0.1)\right) \geq 1-\varepsilon/4.
    \label{eq:few_a_kl}
    \end{equation}

   Besides, by Eq.~\eqref{eq:noise_J_1}, a sufficient condition for the event in the last inequality of Eq.~\eqref{eq:ICL_loss_2} is that the right-hand side 
    \begin{equation}
    \frac{1- \frac{\lvert\boldsymbol{a}_{k_{\mathbf{T}}}^{\top}\mathbf{W}_{V}^{(t)} \boldsymbol{\xi}_{T}\rvert}{\pi_{\boldsymbol{a}_{k_{\mathbf{T}}}}^{(t)}\boldsymbol{a}_{k_{\mathbf{T}}}^{\top}\mathbf{W}_{V}^{(t)}\boldsymbol{a}_{k_{\mathbf{T}}}}}{1.1+ 1.1 \Theta\left(\frac{\|\mathbf{W}_{V}^{(0)}\boldsymbol{a}_{k_{\mathbf{T}}}\|}{\boldsymbol{a}_{k_{\mathbf{T}}}^{\top}\mathbf{W}_{V}^{(t)}\boldsymbol{a}_{k_{\mathbf{T}}}} \right) + \frac{\|\mathbf{W}_{V}^{(t)}\boldsymbol{\xi}_{T}\|}{\pi_{\boldsymbol{a}_{k_{\mathbf{T}}}}^{(t)}\boldsymbol{a}_{k_{\mathbf{T}}}^{\top}\mathbf{W}_{V}^{(t)}\boldsymbol{a}_{k_{\mathbf{T}}}}} < 0.5 + 1.5\sigma_{p}^{\star}\sqrt{\log\left(\frac{4}{\varepsilon}\right)}.
    \label{eq:sufficient_event_mediate}\end{equation}
    This requires the denominator to be no less than \(1-\frac{\lvert\boldsymbol{a}_{k_{\mathbf{T}}}^{\top}\mathbf{W}_{V}^{(t)} \boldsymbol{\xi}_{T}\rvert}{\pi_{\boldsymbol{a}_{k_{\mathbf{T}}}}^{(t)}\boldsymbol{a}_{k_{\mathbf{T}}}^{\top}\mathbf{W}_{V}^{(t)}\boldsymbol{a}_{k_{\mathbf{T}}}}/\left(0.5 + 1.5\sigma_{p}^{\star}\sqrt{\log\left(\frac{4}{\varepsilon}\right)}\right)\). By Lemma \ref{lem:Wv_destiny}, as well as $\sigma_{\mathbf{T}}^{2}\leq \sigma_{p}^{\star}$ we have $\frac{\lvert\boldsymbol{a}_{k_{\mathbf{T}}}^{\top}\mathbf{W}_{V}^{(t)} \boldsymbol{\xi}_{T}\rvert}{\pi_{\boldsymbol{a}_{k_{\mathbf{T}}}}^{(t)}\boldsymbol{a}_{k_{\mathbf{T}}}^{\top}\mathbf{W}_{V}^{(t)}\boldsymbol{a}_{k_{\mathbf{T}}}} \leq \frac{\|\mathbf{W}_{V}^{(t)}\boldsymbol{\xi}_{T}\|}{\boldsymbol{a}_{k_{\mathbf{T}}}^{\top}\mathbf{W}_{V}^{(t)}\boldsymbol{a}_{k_{\mathbf{T}}}}\leq \frac{\|\mathbf{W}_{V}^{(t)}\|_F\|\boldsymbol{\xi}_{T}\|}{\boldsymbol{a}_{k_{\mathbf{T}}}^{\top}\mathbf{W}_{V}^{(t)}\boldsymbol{a}_{k_{\mathbf{T}}}} \sim \mathcal{N}(\mathbf{0}, \Theta({\sigma_{p}^{\star}}^{2}K \mathbf{I}_{d\times d}))$. Again, by Gaussian tail bounds as well as ${\sigma_{p}^{\star}}^{\star}=O({({K\log(\frac{2J+1}{\varepsilon})})}^{-\frac{1}{2}})$ we have
    \begin{equation}
    \mathbb{P}_{{\mathcal{P}^{\star}}}(\frac{\|\mathbf{W}_{V}^{(t)}\boldsymbol{\xi}_{T}\|}{\boldsymbol{a}_{k_{\mathbf{T}}}^{\top}\mathbf{W}_{V}^{(t)}\boldsymbol{a}_{k_{\mathbf{T}}}} \leq 2 {\sigma_{p}^{\star}} \sqrt{K\log(\frac{4}{\varepsilon})} <  \frac{1}{2} \frac{0.9-7.3{\sigma_{p}^{\star}}\sqrt{K\log(\frac{4}{\varepsilon})}}{1+3{\sigma_{p}^{\star}}\sqrt{K\log(\frac{4}{\varepsilon})}}) \leq \varepsilon/4.
    \label{eq:noise_control_of_prompt}\end{equation}
    Note that $\frac{1-2 {\sigma_{p}^{\star}} \sqrt{K\log(\frac{4}{\varepsilon})})}{\left(0.5 + 1.5{\sigma_{p}^{\star}}\sqrt{\log\left(\frac{4}{\varepsilon}\right)}\right)} - 1.1 = \frac{0.9-3.3{\sigma_{p}^{\star}}\sqrt{\log(\frac{4}{\varepsilon})}-4{\sigma_{p}^{\star}}\sqrt{K\log(\frac{4}{\varepsilon})}}{1+3{\sigma_{p}^{\star}}\sqrt{\log(\frac{4}{\varepsilon})}} \geq \frac{0.9-7.3{\sigma_{p}^{\star}}\sqrt{K\log(\frac{4}{\varepsilon})}}{1+3{\sigma_{p}^{\star}}\sqrt{K\log(\frac{4}{\varepsilon})}}$. Therefore, by Eq.(\ref{eq:noise_control_of_prompt}) we bound the $\frac{\lvert\boldsymbol{a}_{k_{\mathbf{T}}}^{\top}\mathbf{W}_{V}^{(t)} \boldsymbol{\xi}_{T}\rvert}{\pi_{\boldsymbol{a}_{k_{\mathbf{T}}}}^{(t)}\boldsymbol{a}_{k_{\mathbf{T}}}^{\top}\mathbf{W}_{V}^{(t)}\boldsymbol{a}_{k_{\mathbf{T}}}}$ in the numerator to be less than $2 {\sigma_{p}^{\star}} \sqrt{K\log(\frac{4}{\varepsilon})})$ and the $\frac{\|\mathbf{W}_{V}^{(t)}\boldsymbol{\xi}_{T}\|}{\boldsymbol{a}_{k_{\mathbf{T}}}^{\top}\mathbf{W}_{V}^{(t)}\boldsymbol{a}_{k_{\mathbf{T}}}}$ in the dominator to be less than $\frac{1}{2}(\frac{1-2 {\sigma_{p}^{\star}} \sqrt{K\log(\frac{4}{\varepsilon})})}{\left(0.5 + 1.5{\sigma_{p}^{\star}}\sqrt{\log\left(\frac{4}{\varepsilon}\right)}\right)} - 1.1)$. Therefore, our last job could be showing $1.1 \Theta\left(\frac{\|\mathbf{W}_{V}^{(0)}\boldsymbol{a}_{k_{\mathbf{T}}}\|}{\boldsymbol{a}_{k_{\mathbf{T}}}^{\top}\mathbf{W}_{V}^{(t)}\boldsymbol{a}_{k_{\mathbf{T}}}} \right)\leq \frac{1}{2}(\frac{1-2 {\sigma_{p}^{\star}} \sqrt{K\log(\frac{4}{\varepsilon})})}{\left(0.5 + 1.5{\sigma_{p}^{\star}}\sqrt{\log\left(\frac{4}{\varepsilon}\right)}\right)} - 1.1)$, which can then ensure Eq.(\ref{eq:sufficient_event_mediate}) holds with probability less than $\varepsilon$.
    
    By Lemma \ref{lem:Wv_destiny}, ${\sigma_{p}^{\star}}=O({({K\log(\frac{2J+1}{\varepsilon})})}^{-\frac{1}{2}})$ and the Taylor expansion of \( g(x) = \frac{1 + 3x}{0.9 - 7.3x} = \frac{10}{9} + \frac{1000}{81}x + O(x) \) at zero, we conclude that
    \begin{equation}
        \begin{aligned}
            \boldsymbol{a}_{k_{\mathbf{T}}}^{\top}\mathbf{W}_{V}^{(t)}\boldsymbol{a}_{k_{\mathbf{T}}} & =\Omega (3(1+15{\sigma_{p}^{\star}}^{2}\sqrt{\log(\frac{4}{\varepsilon})})\|\mathbf{W}_{V}^{(0)}\boldsymbol{a}_{k_{\mathbf{T}}}\|)\\
            &\geq 2.2(\frac{10}{9} + \frac{1000}{81}(1.5{\sigma_{p}^{\star}}\sqrt{\log(\frac{4}{\varepsilon})}))\|\mathbf{W}_{V}^{(0)}\boldsymbol{a}_{k_{\mathbf{T}}}\|\\
            \Rightarrow (1.1)\frac{\|\mathbf{W}_{V}^{(0)}\boldsymbol{a}_{k_{\mathbf{T}}}\|}{\boldsymbol{a}_{k_{\mathbf{T}}}^{\top}\mathbf{W}_{V}^{(t)}\boldsymbol{a}_{k_{\mathbf{T}}}} &\leq \frac{1}{2}(\frac{1-2 {\sigma_{p}^{\star}} \sqrt{K\log(\frac{4}{\varepsilon})})}{\left(0.5 + 1.5{\sigma_{p}^{\star}}\sqrt{\log\left(\frac{4}{\varepsilon}\right)}\right)} - 1.1).
        \end{aligned}
    \label{eq:final_aWva_geq_inital_norm}\end{equation}

    By Eq.(\ref{eq:ICL_loss_2}), (\ref{eq:noise_J_1}), (\ref{eq:few_a_kl}), (\ref{eq:noise_control_of_prompt}), (\ref{eq:final_aWva_geq_inital_norm}), by union bound we have
    \[
    L_{\mathcal{P}^{\star}} \leq \varepsilon.
    \]
    
    Similarly, for QA Sentence distribution $\mathbf{S} \sim \mathcal{P}_{\text{QA}}$  as well as QA Sentence based Prompt Dsitribution $\mathbf{T}_{\text{QA}} \sim \mathcal{P}_{\text{QA}}^{\mathbf{T}}$, the population loss convergence could be shown with the same strategy
    \[
    L_{\mathcal{P}_{\text{QA}}}, \  L_{\mathcal{P}_{\text{QA}}^{\mathbf{T}}} \leq \varepsilon.
    \]
    Separately, when the training distribution is on $\mathcal{P}_{\mathbf{T}}$ or $\mathcal{P}_{\text{QA}}^{\mathbf{T}}$, for prompts whose task concepts are $k_{y}$, with probability $1/2 >> \delta$, $\mathbf{T}$ has more words with $\boldsymbol{b}_{k_{y}}$ than $\boldsymbol{b}_{-k_{y}}$. This would lead $\boldsymbol{b}_{k_{\mathbf{T}}}^{\top}\mathbf{W}_{V}^{(t)}{i}_{\mathbf{T}}$ ignorable by Lemma \ref{lem:Wv_destiny_ICLTr}. In addition, $\|{i}_{\mathbf{T}}\|$'s contribution to $\|\sum_{j\in [J]} \mathbf{W}_{V}^{(t)}({\pi}_{2j-1,\mathbf{T}}^{(t)}\mathbf{x}_{j}^{\mathbf{T}}+{\pi}_{2j,\mathbf{T}}^{(t)}\mathbf{y}_{j}^{\mathbf{T}})\|$ is also ignorable by Lemma \ref{lem:Wv_destiny_ICLTr}. Therefore, it's safe to draw the conclusion that with probability at least $1-\delta$, 
    \[
    L_{\mathcal{P}_{\mathbf{T}}}(\boldsymbol{\theta}^{(t)}) = \Theta(1).
    \]
    Through similar approaches we have $L_{\mathcal{P}_{\text{QA}}^{\mathbf{T}}}(\boldsymbol{\theta}^{(t)}) = \Theta(1)$.

\begin{proof}
    \textbf{Proof of Proposition \ref{prop:OOD_main}}. The proof follows proof of Theorem 3.2 and Theorem 3.3, grounded on the learned knowledge on task vector $\boldsymbol{a}_{k}, \forall k \in [K]$. We here omit the proofs for brevity.
\end{proof}
    
\end{proof}


\end{document}